\begin{document}


\title{Quadratic Decomposable Submodular Function Minimization: Theory and Practice}

\author{\name Pan Li \email panli@purdue.edu\\
       \addr Department of Computer Science\\
       Purdue University\\
       West Lafayette, IN 47907, USA
       \AND
       \name Niao He \email niaohe@illinois.edu\\
       \addr Department of Industrial and Enterprise Systems Engineering\\
       University of Illinois at Urbana-Champaign\\
       Champaign, IL 61820, USA
       \AND
       \name Olgica Milenkovic \email milenkov@illinois.edu \\
       \addr Department of Electrical and Computer Engineering\\
       University of Illinois at Urbana-Champaign\\
       Champaign, IL 61820, USA\\
       }
       
\editor{Andreas Krause}

\maketitle

\begin{abstract}
We introduce a new convex optimization problem, termed \emph{quadratic decomposable submodular function minimization} (QDSFM), which allows to model a number of learning tasks on graphs and hypergraphs. The problem exhibits close ties to decomposable submodular function minimization (DSFM) yet is much more challenging to solve. We approach the problem via a new dual strategy and formulate an objective that can be optimized through a number of double-loop algorithms. The outer-loop uses either random coordinate descent (RCD) or alternative projection (AP) methods, for both of which we prove linear convergence rates. The inner-loop computes projections onto cones generated by base polytopes of the submodular functions via the modified min-norm-point or Frank-Wolfe algorithms. We also describe two new applications of QDSFM: hypergraph-adapted PageRank and semi-supervised learning. The proposed hypergraph-based PageRank algorithm can be used for local hypergraph partitioning and comes with provable performance guarantees. For hypergraph-adapted semi-supervised learning, we provide numerical experiments demonstrating the efficiency of our QDSFM solvers and their significant improvements on prediction accuracy when compared to state-of-the-art methods.
\end{abstract}

\begin{keywords}
Submodular functions,  Lov{\'a}sz extensions, Random coordinate descent, Frank-Wolfe method, PageRank.
\end{keywords}

\section{Introduction}
Given $[N]=\{1,2,...,N\}$, a submodular function $F: 2^{[N]} \rightarrow \mathbb{R}$ is a set function that for any $S_1, S_2 \subseteq [N]$ satisfies $F(S_1) + F(S_2) \geq F(S_1\cup S_2) + F(S_1 \cap S_2)$~\citep{edmonds2003submodular}. Submodular functions are ubiquitous in machine learning as they capture rich combinatorial properties of set functions and provide useful regularizers for supervised and unsupervised learning objectives~\citep{bach2013learning}. Submodular functions are endowed with convex Lov{\'a}sz extensions~\citep{lovasz1983submodular}, which provide the means to establish connections between combinatorial and continuous optimization problems~\citep{fujishige2005submodular}. 

Due to their versatility, submodular functions and their Lov{\'a}sz extensions are frequently used in applications such as learning on directed/undirected graphs and hypergraphs~\citep{zhu2003semi,zhou2004learning,hein2013total}, image denoising via total variation regularization~\citep{osher2005iterative, chambolle2009total}, and maximum a posteriori probability (MAP) inference in high-order Markov random fields~\citep{kumar2003discriminative}. 
In many optimization settings involving submodular functions, one encounters the convex program
\begin{align} \label{eq:problem}
\min_{x} \sum_{i\in [N]}(x_i - a_i)^2 + \sum_{r\in[R]} \left[f_r(x)\right]^p,
\end{align}
where $a\in \mathbb{R}^N$, $p\in\{1,2\}$, and for all $r$ in some index set $[R]$, $f_r$ is the Lov{\'a}sz extension of a submodular function $F_r$ that describes some combinatorial structure over the set $[N]$. For example, in image denoising, the parameter $a_i$ may correspond to the observed value of pixel $i$, while the functions $\left[f_r(x)\right]^p$ may be used to impose smoothness constraints on the pixel neighborhoods. One of the main difficulties in solving the above optimization problem comes from the nondifferentiability of the second term: a direct application of subgradient methods leads to convergence rates as slow as $1/\sqrt{k}$, where $k$ denotes the number of iterations~\citep{Shor:1985:MMN:3585}. 

In recent years, the above optimization problem for $p=1$, referred to as \emph{decomposable submodular function minimization (DSFM)~\citep{stobbe2010efficient}}, has received significant attention. The motivation for studying this particular setup is two-fold: first, solving the convex optimization problem directly recovers the combinatorial solution to the submodular min-cut problem~\citep{fujishige2005submodular} which takes the form $\min_{S\subseteq [N]} F(S)$, where
$$F(S)=\sum_{r\in [R]} F_r(S) - 2\sum_{i\in S}a_{i}.$$
Second, minimizing a submodular function decomposed into a sum of simpler components  $F_r$, $r\in [R],$ is significantly easier than minimizing an unrestricted submodular function $F$ over a large set $[N]$.  
There are several milestone results for solving the DSFM problem:~\citet{jegelka2013reflection} first tackled the problem by considering its dual and proposed a solver based on Douglas-Rachford splitting.~\citet{nishihara2014convergence} established linear convergence rates of alternating projection methods for solving the dual problem.~\citet{ene2015random,ene2017decomposable} presented linear convergence rates of coordinate descent methods and subsequently tightened known results via the use of submodular flows. The work in~\citet{li2018revisiting} improved previous methods by leveraging incidence relations between the arguments of submodular functions and their constituent parts.

We focus on the case $p=2$ and refer to the underlying optimization problem as \emph{quadratic DSFM} (QDSFM). QDSFM appears naturally in a wide spectrum of applications, including learning on graphs and hypergraphs, and in particular, semi-supervised learning and PageRank analysis, to be described in more detail later. Moreover, it has been demonstrated both theoretically~\citep{johnson2007effectiveness} and empirically~\citep{zhou2004learning,hein2013total} that using quadratic regularizers in~\eqref{eq:problem} improves the predictive performance for semi-supervised learning when compared to the case $p=1$. 
Despite the importance of QDSFM, it has not received the same level of attention as DSFM, neither from the theoretical nor  algorithmic perspective. To our best knowledge, only a few reported works~\citep{hein2013total,zhang2017re} provided solutions for \emph{specific instances} of QDSFM with sublinear convergence guarantees. For the general QDSFM problem, no analytical results are currently known.

Our work takes a substantial step towards solving the QDSFM problem in its most general form by developing a family of algorithms with \emph{linear convergence rates} and \emph{small iteration cost}, including the randomized coordinate descent (RCD) and alternative projection (AP) algorithms. 
Our contributions are as follows. 
\begin{enumerate}
\item First, we derive a new dual formulation for the QDSFM problem since an analogue of the dual transformation for the DSFM problem is not applicable. Interestingly, the dual QDSFM problem requires one to find the best approximation of a hyperplane via a product of cones as opposed to a product of polytopes, encountered in the dual DSFM problem. 
\item Second, we develop linearly convergent RCD and AP algorithms for solving the dual QDSFM problem. Because of the special underlying conic structure, a new analytic approach is needed to prove the weak strong-convexity of the dual QDSFM problem, which essentially guarantees linear convergence. 
\item Third, we develop generalized Frank-Wolfe (FW) and min-norm-point methods for efficiently computing the conic projection required in each step of RCD and AP and provide a $1/k$-rate convergence analysis. These FW-type algorithms and their corresponding analyses do not rely on the submodularity assumption and apply to general conic projections, and are therefore of independent interest.
\item Fourth, we introduce a novel application of the QDSFM problem, related to the PageRank (PR) process for hypergraphs. The main observation is that PR state vectors can be efficiently computed by solving QDSFM problems. We also show that many important properties of the PR process for graphs carry over to the hypergraph setting, including mixing and local hypergraph partitioning results.
\item Finally, we evaluate our methods on semi-supervised learning over hypergraphs using synthetic and real datasets, and demonstrate superior performance both in convergence rate and prediction accuracy compared to existing general-purpose methods.
\end{enumerate}

\section{Problem Formulations}
We first introduce some terminology and notation useful for our subsequent exposition.

\paragraph{Lov{\'a}sz extension and base polytope.} For a submodular function $F$ defined over the ground set $[N]$, the \emph{Lov{\'a}sz extension} is a convex function $f: \mathbb{R}^{N} \rightarrow \mathbb{R},$ defined for all $x\in\mathbb{R}^{N}$ according to
\begin{align}\label{lovasext}
f(x) = \sum_{k=1}^{N-1} F(\{i_1,...,i_k\})(x_{i_k} - x_{i_{k+1}})  + F([N])x_{i_{N}},
\end{align} 
where $x_{i_1} \geq x_{i_2} \geq \cdots \geq x_{i_{N}}$, and $i_1,...,i_N$ are distinct indices in $[N]$.  
For a vector $x\in\mathbb{R}^N$ and a set $S\subseteq[N]$, define $x(S) = \sum_{i\in S} x_i$, where $x_i$ stands for the value of the $i$-th dimension of $x$. The \emph{base polytope} of $F$, denoted by $B$, is defined as 
\begin{align}\label{basepolytope}
B = \{y\in \mathbb{R}^{N}| y(S) \leq F(S),\; \forall S\subset [N],\,y([N]) = F([N])\}.
\end{align} 
Using the base polytope, the Lov{\'a}sz extension can also be written as $f(x) =\max_{y\in B} \langle y, x\rangle.$ We say that an element $i\in [N]$ is \emph{incident} to $F$ if there exists a $S\subset [N]$ such that $F(S) \neq F(S \cup \{i\})$. 

\paragraph{Notations.} Given a positive diagonal matrix $W\in \mathbb{R}^{N\times N}$ and a vector $x\in \mathbb{R}^{N}$, we define the $W$-norm according to $\|x\|_{W} = \sqrt{\sum_{i=1}^N W_{ii} \, x_i^2},$ and simply use $\|\cdot\|$ when $W=I$, the identity matrix. 
 For an index set $[R]$, we denote the $R$-product of $N$-dimensional Euclidean spaces by $\otimes_{r\in[R]} \mathbb{R}^{N}$. A vector $y\in \otimes_{r\in[R]} \mathbb{R}^{N}$ is written as $(y_1, y_2,...,y_R),$ where $y_r\in \mathbb{R}^N$ for all $r \in [R]$. The $W$-norm induced on $\otimes_{r\in[R]} \mathbb{R}^{N}$ equals $\|y\|_{I(W)}= \sqrt{\sum_{r=1}^R \|y_r\|_W^2}$. We reserve the symbol $\rho$ for $\max_{y_r\in B_r, \forall r} \sqrt{\sum_{r\in [R]}\|y_r\|_1^2}$. Furthermore, we use $(x)_+$ to denote the function $\max\{x, 0\}$. 

\subsection{The QDSFM Problem}

Consider a collection of submodular functions $\{F_r\}_{r\in[R]}$ defined over the ground set $[N]$, and denote their Lov{\'a}sz extensions and base polytopes by $\{f_r\}_{r\in[R]}$ and $\{B_r\}_{r\in[R]}$, respectively. Let $S_r \subseteq [N]$ denote the set of variables incident to $F_r$ and assume that the functions $F_r$ are normalized and nonnegative, i.e., that $F_r(\emptyset) = 0$ and $F_r \geq 0$. These two mild constraints are satisfied by almost all submodular functions that arise in practice. Note that we do not impose monotonicity constraints which are frequently assumed and used in submodular function optimization; nevertheless, our setting naturally applies to the special case of monotonously increasing submodular functions.

The QDSFM problem is formally stated as follows:

\begin{align}\label{QDSFM}
\text{QDSFM:} \quad\quad \min_{x\in \mathbb{R}^N}\|x - a\|_W^2 + \sum_{r\in[R]}\left[f_r(x)\right]^2,
\end{align}
where $a\in \mathbb{R}^N$ is a given vector and $W\in\mathbb{R}^{N\times N}$ is a positive diagonal matrix. The QDSFM problem is convex because the  Lov{\'a}sz extensions $f_r$ are convex, and nonnegative. In fact, it is also strongly convex, implying that there exists a unique optimal solution, denoted by $x^*$. The quadratic form $\left[f_r(x)\right]^2$ may be generalized to $q(f_r(x)),$ where the mapping $q(\cdot): \mathbb{R}_{\geq 0} \rightarrow \mathbb{R}_{\geq 0}$ satisfies the monotonicity, strong convexity and smoothness conditions. In this case, subsequent results such as those pertaining to the transformation~\eqref{conjugate} may be obtained by using the convex conjugate of $q(\cdot)$, leading to similar conclusions as those described in Section 3 for the quadratic form. Our focus is on quadratic forms as they correspond to important applications described in Sections 5 and 6.

As a remark, in contrast to the DSFM problem, we are not aware of any discrete optimization problems that directly correspond to the QDSFM problem. However, as we will show in Section 5, QDSFM can be used to compute the PageRank vectors for hypergraphs which itself is a relevant discrete optimization question. Moreover, as will be demonstrated in subsequent sections, the obtained PageRank vectors are helpful for approximating the hypergraph conductance, another important discrete optimization objective.

\subsection{Dual Formulation}
Despite being convex, the objective is in general nondifferentiable. This implies that only sublinear convergence can be obtained when directly applying the subgradient method. To address this issue, we revert to the dual formulation of the QDSFM problem. A natural idea is to mimic the approach used for DSFM by exploiting the characterization of the Lov{\'a}sz extension, $f_r(x) = \max_{y_r\in B_r} \langle y_r, x\rangle$, $\forall r$. However, this leads to a semidefinite programing problem for the dual variables $\{y_r\}_{r\in[R]}$, which is too expensive to solve for large problems. Instead, we establish a new dual formulation that overcomes this obstacle. 

The dual formulation hinges upon the following simple yet key observation: 
\begin{align}\label{conjugate} 
[f_r(x)]^2 =  \max_{\phi_r\geq 0} \phi_r f_r(x) -  \frac{ \phi_r^2}{4}  = \max_{\phi_r\geq 0}\max_{y_r\in \phi_r B_r}\; \langle y_r, x\rangle - \frac{\phi_r^2}{4}.
\end{align}
Let $y = (y_{1}, y_2,...,y_R)$ and $\phi = (\phi_1, \phi_2, ..., \phi_R)$. For each $r$, we define a convex cone 
$$C_r = \{(y_r, \phi_r) | \phi_r \geq 0, y_r\in \phi_r B_r \}$$ 
which represents the feasible set of the variables 
$(y_r, \phi_r)$. Furthermore, we denote the product of cones by
$$\mathcal{C} = \bigotimes_{r\in [R]} C_r:=\{(y,\phi): \phi_r\geq0,y_r\in\phi_rB_r,\forall r\in[R]\}.$$ 
Invoking equation~\eqref{conjugate}, we arrive at the following two dual formulations for the original QDSFM problem in Lemma~\ref{dualform}.

\begin{lemma} \label{dualform} The following optimization problem is dual to \eqref{QDSFM}:
\begin{align}\label{CDform}
 \min_{y, \phi}\;g(y, \phi): = \|\sum_{r\in[R]} y_r - 2Wa\|_{W^{-1}}^2 + \sum_{r\in[R]} \phi_r^2, \;\text{s.t.}\; (y, \phi)\in \mathcal{C}.
\end{align}
By introducing $\Lambda = (\lambda_r) \in \bigotimes_{r\in [R]} \mathbb{R}^{N}$, the previous optimization problem takes the form  
\begin{align} \label{APform}
 \min_{y, \phi, \Lambda}\sum_{r\in [R]}\left[\| y_r - \frac{\lambda_r}{\sqrt{R}}\|_{W^{-1}}^2 + \phi_r^2\right], \;\text{s.t.}\; (y, \phi)\in \mathcal{C} ,\sum_{r\in [R]} \lambda_r = 2Wa. 
\end{align}
The primal variables in both cases may be computed as $x = a - \frac{1}{2}W^{-1}\sum_{r\in[R]} y_r$.
\end{lemma}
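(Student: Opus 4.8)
The plan is to derive the dual via Lagrangian duality, starting from the key identity~\eqref{conjugate}. First I would rewrite the QDSFM objective~\eqref{QDSFM} by substituting the variational form $[f_r(x)]^2 = \max_{(y_r,\phi_r)\in C_r} \langle y_r, x\rangle - \phi_r^2/4$ for each $r$. This turns~\eqref{QDSFM} into a min-max problem $\min_x \max_{(y,\phi)\in\mathcal{C}} \|x-a\|_W^2 + \sum_{r\in[R]}\left(\langle y_r, x\rangle - \phi_r^2/4\right)$. Since the inner problem is concave in $(y,\phi)$ over the convex set $\mathcal{C}$ and the outer is strongly convex in $x$, I would invoke a minimax/Sion-type swap to exchange $\min_x$ and $\max_{(y,\phi)}$; strong convexity in $x$ plus convexity of $\mathcal{C}$ makes this routine.

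After swapping, the inner minimization over $x$ is an unconstrained smooth quadratic: $\min_x \|x-a\|_W^2 + \langle \sum_r y_r, x\rangle$. Setting the gradient to zero gives $2W(x-a) + \sum_r y_r = 0$, i.e. $x = a - \tfrac12 W^{-1}\sum_r y_r$, which is exactly the claimed primal recovery formula. Substituting this back and simplifying the resulting quadratic in $\sum_r y_r$ — completing the square with respect to the $W^{-1}$-norm and collecting the $\phi_r^2/4$ terms — should yield, up to an overall additive constant and an immaterial positive scaling, the objective $\|\sum_r y_r - 2Wa\|_{W^{-1}}^2 + \sum_r \phi_r^2$ of~\eqref{CDform}. (The factor bookkeeping — the $1/4$ on $\phi_r^2$ becoming $1$, and the cross terms — is the one place to be careful, but it is just algebra.) This establishes~\eqref{CDform}.

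For the second form~\eqref{APform}, the idea is to decouple the single coupled term $\|\sum_r y_r - 2Wa\|_{W^{-1}}^2$ across the $R$ blocks by introducing auxiliary variables $\lambda_r$ constrained by $\sum_r \lambda_r = 2Wa$. Concretely, one uses the fact that for any fixed $y$,
\[
\Big\|\sum_{r\in[R]} y_r - 2Wa\Big\|_{W^{-1}}^2 \;=\; R \min_{\sum_r \lambda_r = 2Wa}\; \sum_{r\in[R]} \Big\| y_r - \frac{\lambda_r}{\sqrt R}\Big\|_{W^{-1}}^2,
\]
or an appropriately normalized variant of it — this is a standard consensus/averaging identity whose minimizer is $\lambda_r = \tfrac{1}{\sqrt R}\big(\text{something}\big)$, obtained by a one-line Lagrange-multiplier computation. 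Plugging this into~\eqref{CDform} and absorbing the constant $R$ gives~\eqref{APform}, and since the minimization over $\Lambda$ does not affect the optimal $(y,\phi)$, the primal recovery formula is unchanged. The main obstacle, such as it is, is getting every constant right through the minimax swap and the two completions of the square; conceptually the argument is a textbook Fenchel/Lagrange duality computation once~\eqref{conjugate} is in hand.
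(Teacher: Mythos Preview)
Your proposal is correct and follows essentially the same route as the paper. Part~1 matches the paper verbatim: plug in~\eqref{conjugate}, swap $\min_x$ and $\max_{(y,\phi)}$ (the paper cites a minimax proposition from Ekeland--Temam rather than Sion, but the content is identical), minimize the resulting quadratic in $x$ to obtain the stated recovery formula, and simplify to~\eqref{CDform}. For Part~2 the paper takes the Lagrangian route---introduce a multiplier for the constraint $\sum_r\lambda_r=2Wa$, swap, eliminate $\lambda_r$, then maximize out the multiplier---whereas you appeal directly to the consensus/averaging identity; these are two phrasings of the same one-line computation, so there is no genuine methodological difference. Your caution about the constants (``an appropriately normalized variant'', ``absorbing the constant $R$'') is well placed: the $\sqrt{R}$ normalization in~\eqref{APform} makes the bookkeeping delicate, and the paper's own intermediate display also requires care to read.
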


The proof of Lemma~\ref{dualform} is presented in Section~\ref{proof:DF}. The dual formulations for the DSFM problem were described in~Lemma 2 of~\citep{jegelka2013reflection}. Similarly to the DSFM problem, the dual formulations~(\ref{CDform}) and~(\ref{APform}) are simple, nearly separable in the dual variables, and may be solved by algorithms such as those used for solving dual DSFM, including the Douglas-Rachford splitting method (DR)~\citep{jegelka2013reflection}, the alternative projection method (AP)~\citep{nishihara2014convergence}, and the random coordinate descent method (RCD)~\citep{ene2015random}. 

However, there are also some notable differences. Our dual formulations for the QDSFM problem are defined on a product of cones constructed from the base polytopes of the submodular functions.  The optimization problem~\eqref{APform} essentially asks for the best approximation of an affine space in terms of the product of cones, whereas in the DSFM problem one seeks an approximation in terms of a product of polytopes. In the next section, we propose to solve the dual problem~\eqref{CDform} using the random coordinate descent method (RCD), and to solve~\eqref{APform} using the alternative projection method (AP), both tailored to the conic structure. The conic structures make the convergence analysis and the computations of projections for these algorithms challenging.

Before proceeding with the description of our algorithms, we first provide a relevant geometric property of the product of cones $\bigotimes_{r\in [R]} C_r$, as stated in the following lemma. This property is essential in establishing the linear convergence of RCD and AP algorithms.
\begin{lemma}\label{submodularcone}
Consider a feasible solution of the problem $(y,\phi)\in \bigotimes_{r\in [R]} C_r$ and a nonnegative vector $\phi' = (\phi_{r}') \in \bigotimes_{r\in [R]}\mathbb{R}_{\geq 0}$. Let $s$ be an arbitrary point in the base polytope of $\sum_{r\in [R]} \phi_r' F_r$, and let 
$W^{(1)}, W^{(2)}$ be two positive diagonal matrices. Then, there exists a $y' \in \bigotimes_{r\in [R]} \phi_r'B_r$ satisfying
$\sum_{r\in [R]} y_r' = s$ and
$$\|y- y'\|_{I(W^{(1)})}^2 +\|\phi - \phi'\|^2 \leq \mu(W^{(1)}, W^{(2)})\left[\|\sum_{r\in [R]} y_r - s\|_{W^{(2)}}^2 + \|\phi -\phi'\|^2\right],$$ \vspace{-0.3cm} where 
\begin{align}\label{defmu}
\mu(W^{(1)}, W^{(2)}) = \max\left\{\sum_{i\in[N]}W_{ii}^{(1)}\sum_{j\in [N]}1/W_{jj}^{(2)},\;\frac{9}{4}\rho^2\sum_{i\in [N]}W_{ii}^{(1)}+1\right\},
\end{align}
and $\rho=\max_{y_r\in B_r, \forall r\in[R]} \sqrt{\sum_{r\in [R]}\|y_r\|_1^2}$.
\end{lemma}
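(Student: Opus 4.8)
The plan is to construct the required $y'$ in two stages — first correcting the mismatch between the scalings $\phi$ and $\phi'$, then redistributing the resulting vector so that its components sum to $s$ — and to bound the cost of each stage in the appropriate norm. The starting observation is the standard fact that the base polytope of a nonnegative combination of submodular functions is the Minkowski sum of the corresponding base polytopes, so $B(\phi_r'F_r)=\phi_r'B_r$ and $B(\sum_{r}\phi_r'F_r)=\sum_{r}\phi_r'B_r$; hence the hypothesis $s\in B(\sum_r\phi_r'F_r)$ already guarantees that \emph{some} feasible $y'\in\bigotimes_r\phi_r'B_r$ with $\sum_r y_r'=s$ exists, and the whole content of the lemma is to exhibit a \emph{good} one. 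Since $\mu\geq1$, it suffices to produce $y'$ with $\|y-y'\|_{I(W^{(1)})}^2\leq\mu\,\|\sum_r y_r-s\|_{W^{(2)}}^2+(\mu-1)\|\phi-\phi'\|^2$.

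\emph{Stage 1 (scaling correction).} For each $r$ with $\phi_r>0$ set $\hat y_r=(\phi_r'/\phi_r)y_r$, which lies in $\phi_r'B_r$ since $y_r/\phi_r\in B_r$; when $\phi_r=0$ we have $y_r=0$, and we fix any $b_r\in B_r$ and set $\hat y_r=\phi_r'b_r\in\phi_r'B_r$. In both cases $\|y_r-\hat y_r\|_1\leq|\phi_r-\phi_r'|\max_{b\in B_r}\|b\|_1$, so Cauchy--Schwarz and the definition of $\rho$ give $\sum_r\|y_r-\hat y_r\|_{W^{(1)}}^2\leq\rho^2(\sum_iW_{ii}^{(1)})\|\phi-\phi'\|^2$ and $\|\sum_r(\hat y_r-y_r)\|_1\leq\rho\|\phi-\phi'\|$. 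The key point is that this displacement scales as $\|\phi-\phi'\|^2$ and can be absorbed into the slack $(\mu-1)\|\phi-\phi'\|^2$ on the right-hand side, whose coefficient is at least $\tfrac94\rho^2\sum_iW_{ii}^{(1)}$.

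\emph{Stage 2 (redistribution onto $s$ — the crux).} Now $\hat y\in\bigotimes_r\phi_r'B_r$ is a decomposition of $\hat s:=\sum_r\hat y_r\in B(\sum_r\phi_r'F_r)$, and $s$ lies in the same base polytope, so $\hat s-s$ has zero coordinate sum. I would pass from $\hat y$ to the sought $y'$ by an exchange/flow argument: write the zero-sum vector $s-\hat s$ as a combination of elementary transfers between coordinates and realize them one base polytope at a time, using the exchange capacities that submodularity provides for each $\phi_r'B_r$; the total capacity available across $r$ is sufficient precisely because $s\in\sum_r\phi_r'B_r$. Tracking the accumulated lengths in the $W^{(1)}$-norm, and converting the transport requirement $\hat s-s$ out of the $W^{(2)}$-norm, should yield $\sum_r\|\hat y_r-y_r'\|_{W^{(1)}}^2\leq(\sum_iW_{ii}^{(1)}\sum_j 1/W_{jj}^{(2)})\|\hat s-s\|_{W^{(2)}}^2$, together with a companion estimate in terms of $\|\hat s-s\|_1$ for later use. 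This is the main obstacle: it is where submodularity enters quantitatively, and making the weighted bookkeeping tight enough to land exactly on the first term of $\mu$ (rather than picking up a spurious factor of $N$) requires routing the transfers carefully rather than greedily.

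\emph{Stage 3 (assembling the bound).} Finally take $y'$ from Stage 2 and expand $\|y-y'\|_{I(W^{(1)})}^2=\sum_r\|y_r-y_r'\|_{W^{(1)}}^2$ through $\hat y_r$ using $(a+b)^2\leq(1+\varepsilon)a^2+(1+\varepsilon^{-1})b^2$, while bounding $\|\hat s-s\|$ by the sum of the Stage-1 displacement $\|\hat s-\sum_r y_r\|$ (of order $\|\phi-\phi'\|$) and the quantity $\|\sum_r y_r-s\|$ that appears on the right-hand side of the claim. Substituting the Stage 1 and Stage 2 estimates, the term $\|\sum_r y_r-s\|_{W^{(2)}}^2$ ends up with coefficient $\sum_iW_{ii}^{(1)}\sum_j 1/W_{jj}^{(2)}$, while all contributions proportional to $\|\phi-\phi'\|^2$ collect into a constant multiple of $\rho^2\sum_iW_{ii}^{(1)}$; choosing $\varepsilon$ to balance the two pieces yields the factor $\tfrac94$ and the additive $+1$, and taking the larger of the two coefficients gives $\mu(W^{(1)},W^{(2)})$. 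Adding $\|\phi-\phi'\|^2$ to both sides then produces the stated inequality.
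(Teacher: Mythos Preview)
Your proposal is correct and follows essentially the same route as the paper: both proofs first rescale $y_r\mapsto(\phi_r'/\phi_r)y_r$ (your Stage~1), then invoke a product-base-polytope redistribution estimate to move the rescaled point onto a decomposition of $s$ (your Stage~2), and finally combine the two displacements by a triangle-type inequality (your Stage~3). The paper packages Stage~2 as a cited lemma from \citet{li2018revisiting} (their Lemma~\ref{submodularprop}: $\|y'-\hat y\|_{I(W)}\le\sqrt{\tfrac12\sum_iW_{ii}}\,\|\hat s-s\|_1$) rather than re-deriving it via the exchange/flow argument you sketch, and it bounds $\|\hat s-s\|_1$ via a short gradient computation (their Lemma~\ref{rescale}) that yields $\tfrac{\rho}{2}\|\phi-\phi'\|$ instead of the $\rho\|\phi-\phi'\|$ your direct triangle inequality gives; the paper then works at the level of unsquared norms and applies $(a+b)^2\le 2a^2+2b^2$ at the very end, which is how the specific constant $\tfrac94$ falls out.
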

The proof of Lemma~\ref{submodularcone} is relegated to Section~\ref{proof:GS}. 

\section{Linearly Convergent Algorithms for Solving the QDSFM Problem}
Next, we introduce and analyze the random coordinate descent method (RCD) for solving the dual problem~\eqref{CDform}, and the alternative projection method (AP) for solving~\eqref{APform}. Both methods exploit the separable structure of the feasible set. It is worth mentioning that our results may be easily extended to the Douglas-Rachford splitting method as well as accelerated and parallel variants of the RCD method used to solve DSFM problems~\citep{jegelka2013reflection,ene2015random, li2018revisiting}.

We define the projection $\Pi$ onto a convex cone $C_r$ as follows: for a point $b$ in $\mathbb{R}^{N}$ and a positive diagonal matrix $W'$ in $\mathbb{R}^{N\times N}$, we set 
$$\Pi_{C_r}(b; W') = \underset{(y_r, \phi_r)\in C_r}{\operatorname{argmin}}\|y_r - b\|_{W'}^2 + \phi_r^2.$$
Throughout the remainder of this section, we treat the projections as provided by an oracle. Later in Section~\ref{innerloop}, we provide some efficient methods for computing the projections. 

\subsection{The Random Coordinate Descent (RCD) Algorithm} 
Consider the dual formulation~\eqref{CDform}. For each coordinate $r$, optimizing over the dual variables $(y_r, \phi_r)$ is equivalent to computing a projection onto the cone $C_r$. This gives rise to the RCD method of Algorithm 1. 
\begin{table}[t]
\centering
\begin{tabular}{l}
\hline
\label{RCDMalg}
\textbf{Algorithm 1:} \textbf{The RCD method for Solving~\eqref{CDform}} \\
\hline
\ 0: For all $r$, initialize $y_r^{(0)}\leftarrow 0$, $\phi_r^{(0)}$ and $k\leftarrow 0$\\
\ 1: In iteration $k$:\\
\ 2: \; Uniformly at random pick an $r\in [R]$.\\
\ 3: \; $(y_{r}^{(k+1)}, \phi_{r}^{(k+1)})\leftarrow  \Pi_{C_{r}}(2Wa - \sum_{r'\neq r} y_{r'}^{(k)}; W^{-1}) $\\
\ 4: \; Set $y_{r'}^{(k+1)}\leftarrow y_{r'}^{(k)}$ for $r'\not = {r}$ \\
\hline
\end{tabular}
\end{table}

The objective $g(y,\phi)$ described in~\eqref{CDform} is not strongly convex in general. However, with some additional work, 
Lemma~\ref{submodularcone} can be used to establish the weak strong convexity of $g(y,\phi)$; this essentially guarantees a linear convergence rate of the RCD algorithm. To proceed, we need some additional notation. Denote the set of solutions of~\eqref{CDform} by 
\begin{align}\label{solutionset}
\Xi= \{(y,\,\phi)| \sum_{r\in [R]}y_r = 2W(a - x^*), \phi_r = \inf_{y_r\in \theta B_r} \theta, \forall r\}.
\end{align}
Note that the optimal $\phi_r$, as defined in~\eqref{solutionset}, is closely related to gauge functions of polytopes~\citep{bach2013learning}.
This representation is a consequence of the relationship between the optimal primal and dual solutions stated in Lemma~\ref{dualform}. For convenience, we denote the optimal value of the objective function of interest over $(y,\,\phi)\in \Xi$ by $g^* = g (y,\phi)$, and also define a distance function 
$$d((y,\phi), \Xi) = \sqrt{\min\limits_{(y',\phi')\in\Xi} \|y-y'\|_{I(W^{-1})}^2 + \|\phi -\phi'\|^2}.$$ 
\begin{lemma}[Weak Strong Convexity]\label{strongconv} Suppose that $(y, \phi)\in \bigotimes_{r\in [R]} C_r$ and that $(y^*, \phi^*)\in\Xi$ minimizes 
$\|y-y'\|_{I(W^{-1})}^2 + \|\phi-\phi'\|^2$ over $(y',\phi')\in\Xi$, i.e., $(y^*, \phi^*)$ is the projection of $(y,\phi)$ onto $\Xi$.
Then, 
\begin{align}\label{eq:WSC}
\|\sum_{r\in [R]} (y_r- y_r^*)\|_{W^{-1}}^2+  \|\phi-\phi^*\|^2\geq  \frac{d^2((y,\phi), \Xi)}{\mu(W^{-1}, W^{-1})}.
\end{align}
\end{lemma}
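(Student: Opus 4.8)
The plan is to derive \eqref{eq:WSC} directly from Lemma~\ref{submodularcone} by choosing the auxiliary parameters in that lemma to point toward the solution set $\Xi$. Concretely, given $(y,\phi)\in\bigotimes_{r}C_r$, let $(y^*,\phi^*)\in\Xi$ be its projection onto $\Xi$ in the $\|\cdot\|_{I(W^{-1})}^2+\|\cdot\|^2$ metric. I would apply Lemma~\ref{submodularcone} with $W^{(1)}=W^{(2)}=W^{-1}$, with $\phi'=\phi^*$, and with $s=\sum_{r\in[R]}y_r^*=2W(a-x^*)$. The first thing to check is that $s$ indeed lies in the base polytope of $\sum_{r\in[R]}\phi_r^*F_r$: this is immediate from the definition of $\Xi$ in \eqref{solutionset}, since $\phi_r^*=\inf_{y_r\in\theta B_r}\theta$ forces $y_r^*\in\phi_r^*B_r$, and a sum of points in $\phi_r^*B_r$ is a point in the base polytope of $\sum_r\phi_r^*F_r$ (the base polytope of a sum of submodular functions is the Minkowski sum of the base polytopes).

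With these choices, Lemma~\ref{submodularcone} produces a point $y'\in\bigotimes_{r}\phi_r^*B_r$ with $\sum_{r}y_r'=s=\sum_r y_r^*$, satisfying
\begin{align*}
\|y-y'\|_{I(W^{-1})}^2+\|\phi-\phi^*\|^2 \;\leq\; \mu(W^{-1},W^{-1})\left[\Big\|\sum_{r\in[R]}y_r - s\Big\|_{W^{-1}}^2 + \|\phi-\phi^*\|^2\right].
\end{align*}
The right-hand side bracket is exactly the left-hand side of \eqref{eq:WSC} after substituting $s=\sum_r y_r^*$, so $\sum_{r}(y_r-y_r^*)=\sum_r y_r - s$. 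For the left-hand side, I claim $(y',\phi^*)\in\Xi$: indeed $\sum_r y_r' = 2W(a-x^*)$ and $y_r'\in\phi_r^*B_r$ with $\phi_r^* = \inf_{y_r'\in\theta B_r}\theta$ (here one must note $\phi_r^*$ is still the infimal gauge value witnessed by $y_r'$, or more carefully, that $y_r'\in\phi_r^*B_r$ already implies $\inf\{\theta: y_r'\in\theta B_r\}\le\phi_r^*$, and combined with feasibility this keeps $(y',\phi^*)$ in $\Xi$ — if the infimum were strictly smaller one could still pick the representative $\phi_r^*$, since $\Xi$ fixes $\phi_r$ to the gauge value of whatever $y_r$ is chosen; I would state $\Xi$ as the set of all such pairs and it suffices that $(y',\phi^*)$ agrees with the gauge-value convention). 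Then by definition of $d((y,\phi),\Xi)$ as a minimum over $\Xi$, we get $d^2((y,\phi),\Xi)\le\|y-y'\|_{I(W^{-1})}^2+\|\phi-\phi^*\|^2$. Chaining the two inequalities and dividing by $\mu(W^{-1},W^{-1})$ yields \eqref{eq:WSC}.

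The main obstacle I anticipate is the bookkeeping around the $\phi$-coordinate and the definition of $\Xi$: one needs $(y',\phi^*)$ to genuinely be a member of $\Xi$, which requires reconciling the gauge-value constraint $\phi_r=\inf_{y_r\in\theta B_r}\theta$ with the constructed $y'$ — a point produced by Lemma~\ref{submodularcone} only guaranteed to satisfy $y_r'\in\phi_r^*B_r$, not $\phi_r^*=\inf\{\theta:y_r'\in\theta B_r\}$. The clean fix is to observe that once $y'$ is fixed, the pair $(y',\phi)$ with $\phi_r$ set to the gauge value of $y_r'$ lies in $\Xi$ and has $\phi_r\le\phi_r^*$ coordinatewise, so replacing $\phi^*$ by this smaller $\phi$ only decreases $\|\phi-\phi^*\|^2$... but this breaks the matching with the right-hand side. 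The safer route, which I would take, is to argue at the level of the distance function directly: $d^2((y,\phi),\Xi)\le \|y-y'\|_{I(W^{-1})}^2 + \|\phi-\tilde\phi\|^2$ where $\tilde\phi$ is the gauge vector of $y'$, and separately observe $\|\phi-\tilde\phi\|\le\|\phi-\phi^*\|$ is \emph{not} what we want — rather, we want $\tilde\phi=\phi^*$, which holds because $(y^*,\phi^*)\in\Xi$ already has $\phi^*$ as a gauge vector and the construction in Lemma~\ref{submodularcone} with $\phi'=\phi^*$ returns $y'$ feasible for $\phi^*$; if $\phi^*$ is the componentwise-minimal feasible scaling for \emph{some} point of $B_r$ then any $y_r'\in\phi_r^*B_r$ that is itself extreme enough has gauge value exactly $\phi_r^*$. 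I would resolve this by showing $\Xi$ can be taken as $\{(y,\phi):\sum_r y_r=2W(a-x^*),\ y_r\in\phi_r B_r,\ \phi_r=f_r(x^*)/\text{(appropriate normalization)}\}$ — i.e. $\phi_r^*$ is determined solely by $x^*$ and equals $\tfrac{1}{2}\,\text{(something)}$, hence is the same for every admissible $y$, making the identification $\tilde\phi=\phi^*$ automatic. This reduces the obstacle to correctly reading off the KKT relation between $\phi_r^*$ and $f_r(x^*)$ from \eqref{conjugate}, which is routine.
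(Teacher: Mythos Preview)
Your approach is exactly the paper's: apply Lemma~\ref{submodularcone} with $\phi'=\phi^*$, $s=2W(a-x^*)$, $W^{(1)}=W^{(2)}=W^{-1}$, obtain $y'\in\bigotimes_r\phi_r^*B_r$ with $\sum_r y_r'=s$, and then use that $(y^*,\phi^*)$ is the projection onto $\Xi$ to replace $\|y-y'\|_{I(W^{-1})}^2$ by $\|y-y^*\|_{I(W^{-1})}^2$ on the left.

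Your ``obstacle'' about whether $(y',\phi^*)\in\Xi$ is a genuine point the paper's proof glosses over, but it resolves much more simply than the KKT route you sketch. Since $\sum_r y_r' = 2W(a-x^*)=\sum_r y_r^*$ and $\phi'=\phi^*$, the pair $(y',\phi^*)$ is feasible for \eqref{CDform} and satisfies $g(y',\phi^*)=g(y^*,\phi^*)=g^*$; hence $(y',\phi^*)$ is optimal and lies in $\Xi$ by definition. (Equivalently: if some gauge$(y_r')<\phi_r^*$, then $(y',\text{gauge}(y'))$ would be feasible with strictly smaller objective, contradicting optimality of $\phi^*$.) No need to unpack the gauge condition in \eqref{solutionset} or invoke $\phi_r^*=2f_r(x^*)$ explicitly.
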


Lemma~\ref{strongconv} can be proved by applying Lemma~\ref{submodularcone} with $\phi'=\phi^*, s = 2W(a - x^*),\;W^{(1)}, W^{(2)} = W^{-1}$ and the definition of $y^*$. Note that the claim in~(\ref{eq:WSC}) is significantly weaker than the usual strong convexity condition. We show that such a condition is sufficient to ensure linear convergence of the RCD algorithm and provides the results in the following theorem. 

\begin{theorem}[Linear Convergence of RCD] \label{linearconv}
Running $k$ iterations of Algorithm 1 produces a pair $(y^{(k)}, \phi^{(k)})$ that satisfies 
\begin{align*}
&\mathbb{E}\left[g(y^{(k)},  \phi^{(k)})- g^* + d^2((y^{(k)},  \phi^{(k)}), \Xi) \right]  \\
&\leq \left[1- \frac{2}{R[1+\mu(W^{-1}, W^{-1})]}\right]^{k}\left[g(y^{(0)},  \phi^{(0)})- g^* + d^2((y^{(0)}, r^{(0)}), \Xi) \right].
\end{align*}
\end{theorem}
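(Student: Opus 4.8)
The plan is to analyze the RCD algorithm as a randomized block-coordinate descent step on the dual objective $g(y,\phi)$ and combine the standard per-step decrease estimate with the weak strong convexity bound of Lemma~\ref{strongconv}. First I would fix an iterate $(y^{(k)},\phi^{(k)})$ and compute the expected one-step decrease: since coordinate $r$ is picked uniformly and step~3 performs an exact minimization over $(y_r,\phi_r)$ (a projection onto $C_r$ in the $W^{-1}$-norm), the decrease in $g$ from updating block $r$ equals $g(y^{(k)},\phi^{(k)}) - \min_{(y_r,\phi_r)\in C_r} g(\ldots)$; averaging over $r$ gives
\begin{align*}
\mathbb{E}\left[g(y^{(k+1)},\phi^{(k+1)})\mid (y^{(k)},\phi^{(k)})\right] = g(y^{(k)},\phi^{(k)}) - \frac{1}{R}\sum_{r\in[R]}\Delta_r,
\end{align*}
where $\Delta_r\ge 0$ is the decrease from optimizing block $r$ alone.

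Next I would lower-bound $\sum_r \Delta_r$. The natural comparison point is $(y^*,\phi^*)$, the projection of $(y^{(k)},\phi^{(k)})$ onto the solution set $\Xi$. A key structural fact is that moving block $r$ of the current iterate toward $(y_r^*,\phi_r^*)$ is a feasible move (both lie in $C_r$, which is convex), so $\Delta_r$ is at least the decrease obtained along the segment from $(y_r^{(k)},\phi_r^{(k)})$ to $(y_r^*,\phi_r^*)$. Using that $g$ restricted to block $r$ is a quadratic with the identity as its Hessian in the relevant coordinates — concretely $g(y,\phi) = \|\sum_r y_r - 2Wa\|_{W^{-1}}^2 + \sum_r \phi_r^2$ is $1$-smooth and the quadratic-in-one-block structure lets the exact-minimization decrease be bounded below by a multiple of $\|\sum_r(y_r^{(k)}-y_r^*)\|_{W^{-1}}^2 + \|\phi^{(k)}-\phi^*\|^2$ after summing over $r$ and using $g(y^*,\phi^*)=g^*$ together with optimality of $(y^*,\phi^*)$ (so the cross term vanishes or is sign-favorable). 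This is where one gets a bound of the form $\sum_r\Delta_r \ge c\big(g(y^{(k)},\phi^{(k)})-g^* + \|\sum_r(y_r^{(k)}-y_r^*)\|_{W^{-1}}^2+\|\phi^{(k)}-\phi^*\|^2\big)$ for an absolute constant $c$ (I expect $c = 1$, giving the factor $2$ in the final rate after the next step).

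Then I would invoke Lemma~\ref{strongconv} to replace $\|\sum_r(y_r^{(k)}-y_r^*)\|_{W^{-1}}^2 + \|\phi^{(k)}-\phi^*\|^2$ by $d^2((y^{(k)},\phi^{(k)}),\Xi)/\mu(W^{-1},W^{-1})$, and note that $\|\phi^{(k)}-\phi^*\|^2 \le d^2$ trivially, so that the one-step inequality can be written in the self-referential Lyapunov form
\begin{align*}
\mathbb{E}\left[\delta^{(k+1)}\right] \le \left(1 - \frac{2}{R[1+\mu(W^{-1},W^{-1})]}\right)\delta^{(k)}, \quad \delta^{(k)} := g(y^{(k)},\phi^{(k)}) - g^* + d^2((y^{(k)},\phi^{(k)}),\Xi),
\end{align*}
using that the distance to $\Xi$ can only decrease (or be controlled) along the RCD update since it projects onto a superset-compatible constraint. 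Taking total expectations and iterating gives the claim. The main obstacle I anticipate is the bookkeeping in the second step: carefully showing that the exact block-minimization decrease dominates $g(y^{(k)},\phi^{(k)}) - g^*$ plus the squared residual $\|\sum_r(y_r^{(k)}-y_r^*)\|_{W^{-1}}^2$ with the right constant requires exploiting that the blocks are decoupled in $g$ except through the single linear aggregate $\sum_r y_r$, and that $(y^*,\phi^*)\in\Xi$ makes $\sum_r y_r^* = 2W(a-x^*)$ a minimizer of the quadratic part — the cross terms must be handled via the first-order optimality of $x^*$ (equivalently, via Lemma~\ref{dualform}'s primal-dual relation), and tracking that $d^2$ rather than the weaker residual appears on both sides of the recursion is the delicate point that forces the use of Lemma~\ref{strongconv} exactly where it is.
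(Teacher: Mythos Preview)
Your overall architecture---compare the exact block minimization to moving block $r$ toward $(y_r^*,\phi_r^*)$, then invoke Lemma~\ref{strongconv}---is the same as the paper's. But the step where you control $d^2$ through the iteration is a genuine gap, and your stated reason for it (``the distance to $\Xi$ can only decrease \ldots since it projects onto a superset-compatible constraint'') is not correct: the RCD step projects onto $C_r$, not onto $\Xi$, and $C_r$ being a \emph{superset} of the $r$-th slice of $\Xi$ gives you nothing about contraction toward $\Xi$.

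Concretely, your lower bound $\Delta_r \ge g(y^{(k)},\phi^{(k)}) - g(y^{(k)}_{r\to y_r^*},\phi^{(k)}_{r\to\phi_r^*})$ is valid, and summing over $r$ for the specific quadratic $g$ gives
\[
\sum_r \Delta_r \;\ge\; -\bigl\langle \nabla g(y^{(k)},\phi^{(k)}),\,(y^*-y^{(k)},\phi^*-\phi^{(k)})\bigr\rangle \;-\; d^2\bigl((y^{(k)},\phi^{(k)}),\Xi\bigr).
\]
Bounding the gradient term via weak strong convexity (this is the paper's inequality~(\ref{sc4}), a consequence of Lemma~\ref{strongconv}) yields
\[
\sum_r \Delta_r \;\ge\; \tfrac{2}{\mu+1}\bigl[g(y^{(k)},\phi^{(k)})-g^*+d^2\bigr] \;-\; d^2,
\]
which is \emph{not} of the form $c\,[g-g^*+\text{residual}]$ with a positive $c$ when $\mu>1$. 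The stray $-d^2$ cannot be absorbed by function values alone; you must track the distance through the update.

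The missing ingredient is the optimality condition of the block projection (the paper's Lemma~\ref{optcond}), equivalently firm nonexpansiveness of $\Pi_{C_r}$: since $(y_r^*,\phi_r^*)\in C_r$, the exact projection step gives a coupled inequality in which $-\|y_r^{(k+1)}-y_r^*\|_{W^{-1}}^2-(\phi_r^{(k+1)}-\phi_r^*)^2 + \|y_r^{(k)}-y_r^*\|_{W^{-1}}^2+(\phi_r^{(k)}-\phi_r^*)^2$ appears on the right-hand side. Combined with the three-point identity (Lemma~\ref{threepoints}) and then~(\ref{sc4}), this yields the joint recursion for $g-g^*+d^2$ with contraction factor $1-\tfrac{2}{R(\mu+1)}$. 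Your sketch throws this information away by replacing the exact minimizer with the suboptimal point $(y_r^*,\phi_r^*)$ too early; restoring it fixes the argument and makes it coincide with the paper's proof.
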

 Theorem~\ref{linearconv} asserts that at most $O(R\mu(W^{-1}, W^{-1})\log\frac{1}{\epsilon})$ iterations are required to obtain an $\epsilon$-optimal solution in expectation for the QDSFM problem. The proof of Lemma~\ref{strongconv} and Theorem~\ref{linearconv} are postponed to Section~\ref{proof:RCD}.

\subsection{The Alternative Projection (AP) Algorithm}\label{appsec:AP}
The AP method can be used to solve the dual problem~\eqref{APform}, which is of the form of a \emph{best-approximation problem}, by alternatively performing projections between the product of cones and a hyperplane. Furthermore, for some incidence relations, $S_r$ may be a proper subset of $[N]$, which consequently requires the $i$th component of $y_r$, i.e., $y_{r,i}$ to be zero if $i\not\in S_r$. Enforcing $\lambda_{r, i} = 0$ for $i\not\in S_r$ allows the AP method to avoid redundant computations and achieve better convergence rates. This phenomenon has also been observed for the DSFM problem in~\citep{djolonga2015scalable,li2018revisiting}. To avoid redundant computations, we use the AP approach to solve the following dual problem:
\begin{align} \label{APcompactform}
 &\min_{y,\phi, \Lambda}\quad \sum_{r\in [R]}\left[\| y_r - \lambda_r\|_{\Psi W^{-1}}^2 + \phi_r^2\right], \\
&\text{s.t. $(y, \phi)\in \mathcal{C}$, $\sum_{r\in [R]} \lambda_r = 2Wa$, and $\lambda_{r,i} = 0$ for all $i\not\in S_r$}.  \nonumber
\end{align}
Here, $\Psi\in\mathbb{R}^{N\times N}$ is a positive diagonal matrix in which $\Psi_{ii} = |\{r\in[R]| i\in S_r\}|$ equals the number of submodular functions that $i$ is incident to. 
\begin{lemma}\label{APequaldual}
Problem~\eqref{APcompactform} is equivalent to problem~\eqref{CDform}.
\end{lemma}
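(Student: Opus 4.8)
The plan is to show equivalence by eliminating the auxiliary variables $\Lambda$ from~\eqref{APcompactform} and checking that what remains is exactly~\eqref{CDform}. First I would fix a feasible $(y,\phi)\in\mathcal{C}$ and optimize~\eqref{APcompactform} over $\Lambda$ alone. The objective $\sum_{r\in[R]}\|y_r-\lambda_r\|_{\Psi W^{-1}}^2$ (the $\phi_r^2$ terms do not involve $\Lambda$) is separable coordinatewise in $[N]$: for each $i\in[N]$, the relevant quantity is $\sum_{r: i\in S_r}\Psi_{ii}W^{-1}_{ii}(y_{r,i}-\lambda_{r,i})^2$ subject to $\sum_{r:i\in S_r}\lambda_{r,i}=2(Wa)_i$ (the constraint $\lambda_{r,i}=0$ for $i\notin S_r$ reduces the sum to $r$ with $i\in S_r$). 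This is a standard equality-constrained least squares problem: minimizing $\sum_{r}c(y_{r,i}-\lambda_{r,i})^2$ with $c=\Psi_{ii}W^{-1}_{ii}$ fixed over $r$ and $m_i:=\Psi_{ii}=|\{r:i\in S_r\}|$ terms, subject to a linear constraint on $\sum_r\lambda_{r,i}$, is solved by shifting each $\lambda_{r,i}$ off $y_{r,i}$ by the common amount $(2(Wa)_i-\sum_{r:i\in S_r}y_{r,i})/m_i$.

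Second, I would substitute this optimal $\lambda_{r,i}$ back and simplify. With the optimal shift equal for all $r$ incident to $i$, each squared residual $\Psi_{ii}W^{-1}_{ii}(y_{r,i}-\lambda^*_{r,i})^2$ equals $\Psi_{ii}W^{-1}_{ii}\cdot\big(2(Wa)_i-\sum_{r':i\in S_{r'}}y_{r',i}\big)^2/\Psi_{ii}^2$; summing over the $\Psi_{ii}=m_i$ indices $r$ incident to $i$ multiplies by $m_i=\Psi_{ii}$, cancelling one factor of $\Psi_{ii}$ and one of $m_i$, leaving $W^{-1}_{ii}\big(2(Wa)_i-\sum_r y_{r,i}\big)^2$. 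Here I use $\sum_r y_{r,i}=\sum_{r:i\in S_r}y_{r,i}$ because $y_{r,i}=0$ whenever $i\notin S_r$ (this is forced by $y_r\in\phi_r B_r$ together with the fact that non-incident coordinates of a base polytope vector vanish — I should state this cleanly, perhaps recalling that $i$ incident to $F_r$ is exactly the condition allowing $y_{r,i}\neq0$). Summing over $i\in[N]$ gives $\|\sum_r y_r-2Wa\|_{W^{-1}}^2$, and adding back $\sum_r\phi_r^2$ recovers $g(y,\phi)$ from~\eqref{CDform} exactly. Conversely, any $(y,\phi)\in\mathcal{C}$ feasible for~\eqref{CDform} can be extended to a feasible point of~\eqref{APcompactform} by taking these optimal $\lambda_{r,i}$, and the objectives agree, so the two minima coincide; moreover the minimizing $(y,\phi)$ is the same.

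The main obstacle I anticipate is bookkeeping with the weight matrix $\Psi W^{-1}$ and the incidence sets: getting the cancellation of $\Psi_{ii}$ right requires being careful that the weight $\Psi_{ii}W^{-1}_{ii}$ is the same for every $r$ contributing to coordinate $i$ (it is, since it does not depend on $r$), that the number of active terms is also $\Psi_{ii}$, and that the "inactive" coordinates genuinely drop out on both sides. A secondary subtlety is justifying that restricting to $\lambda_{r,i}=0$ for $i\notin S_r$ does not lose any optimal solutions — i.e., that an unconstrained-in-$\Lambda$ version would place no mass on those coordinates anyway because the corresponding $y_{r,i}$ are zero and increasing $|\lambda_{r,i}|$ only increases the objective while the constraint $\sum_r\lambda_{r,i}=2(Wa)_i$ can be met among incident $r$ alone (assuming, as is implicit, that every $i$ is incident to at least one $F_r$, so $\Psi_{ii}\geq1$). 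Once these points are nailed down, the equivalence is immediate. I would present this as: (i) partial minimization over $\Lambda$ reduces~\eqref{APcompactform} to~\eqref{CDform}; (ii) the reduction is objective-preserving and constraint-preserving in $(y,\phi)$; hence the problems are equivalent.
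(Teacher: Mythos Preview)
Your argument is correct and the bookkeeping with $\Psi_{ii}$ works out exactly as you describe. However, the route differs from the paper's. The paper proceeds via Lagrangian duality: it introduces a multiplier $\alpha\in\mathbb{R}^N$ for the constraint $\sum_r\lambda_r=2Wa$, swaps the order of $\min_{\Lambda}$ and $\max_{\alpha}$, solves the (now unconstrained) inner minimization over $\lambda_r$ in closed form to get $\lambda_r=y_r-\tfrac12 A_r\Psi^{-1}W\alpha$ (with $A_r$ the diagonal indicator of $S_r$), uses $\sum_r A_r=\Psi$ to collapse the sum, and finally maximizes the resulting concave quadratic over $\alpha$ to recover $\|\sum_r y_r-2Wa\|_{W^{-1}}^2+\sum_r\phi_r^2$. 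Your approach instead performs a direct primal elimination of $\Lambda$, coordinate by coordinate, solving an equality-constrained least-squares problem for each $i$ and exploiting that the weight $\Psi_{ii}W_{ii}^{-1}$ is common to all $r$ incident to $i$. Your method is arguably more elementary since it avoids the min--max interchange (which the paper must justify), while the paper's Lagrangian route is more mechanical and makes the role of $\Psi=\sum_r A_r$ transparent in a single algebraic step. Both rely on the same structural fact---that only incident coordinates matter and that $\Psi_{ii}$ counts them---so the distinction is one of presentation rather than substance.
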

The proof of Lemma~\ref{APequaldual} is presented in Section~\ref{proof:APequaldual}. The AP method for solving~\eqref{APcompactform} is listed in Algorithm 2. Observe that Step 5 is a projection onto cones defined based on the positive diagonal matrix $\Psi W^{-1}$ which differs from the one used in the RCD method. Note that compared to the RCD algorithm, AP requires one to compute projections onto \emph{all} cones $C_r$ in each iteration. Thus, in most cases, AP requires larger computation cost than the RCD algorithm. On the other hand, AP naturally lends itself to parallelization since the projections can be decoupled and implemented very efficiently.  
\begin{table}[t]
\centering
\begin{tabular}{l}
\hline
\label{APM}
\textbf{Algorithm 2: } \textbf{The AP Method for Solving ~\eqref{APcompactform}} \\
\hline
\ 0: For all $r$, initialize $y_r^{(0)} \leftarrow 0, \phi_r^{(0)}\leftarrow 0$, and $k\leftarrow 0$\\
\ 1: In iteration $k$:\\
\ 2: \; $\alpha^{(k+1)} \leftarrow 2W^{-1}\sum_r y_r^{(k)} -4a$. \\
\ 3: \; For all $r\in [R]$:\\
\ 4: \quad \; $\lambda_{r,i}^{(k+1)} \leftarrow y_{r,i}^{(k)} - \frac{1}{2}(\Psi^{-1}W\alpha^{(k+1)})_i$ for $i\in S_r$\\
\ 5: \quad \; $(y_r^{(k+1)}, \phi_{r}^{(k+1)})\leftarrow  \Pi_{C_r}(\lambda_r^{(k+1)}; \Psi W^{-1})$\\ 
\hline
\end{tabular}
\end{table}

Before moving to the convergence analysis, we first prove the uniqueness of optimal $\phi$ in the following lemma.
\begin{lemma}\label{lem:phiunique}
The optimal value of $\phi$ to problem~\eqref{APcompactform} is unique.
\end{lemma}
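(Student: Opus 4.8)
The plan is to prove uniqueness of the optimal $\phi$ purely from convexity. The objective of~\eqref{APcompactform}, viewed as a function of the full tuple $(y,\phi,\Lambda)$, splits as a convex function of $(y,\Lambda)$ plus a \emph{strictly} convex function of $\phi$, and the feasible set is convex; for any convex program with this structure every optimal solution must share the same $\phi$-block. (Equivalently, by Lemmas~\ref{APequaldual} and~\ref{dualform} the problem is dual to the strongly convex problem~\eqref{QDSFM}, whose optimum $x^*$ is unique, and the saddle-point relations force $\phi_r=\arg\max_{\phi_r\ge0}\{\phi_r f_r(x^*)-\phi_r^2/4\}=2f_r(x^*)$ for every optimal dual point; I would present the first, self-contained route.)

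First I would record the two structural facts. The objective $G(y,\phi,\Lambda)=\sum_{r\in[R]}\big[\|y_r-\lambda_r\|_{\Psi W^{-1}}^2+\phi_r^2\big]$ is convex and decomposes as $G=G_1(y,\Lambda)+G_2(\phi)$, where $G_1=\sum_r\|y_r-\lambda_r\|_{\Psi W^{-1}}^2$ is convex and $\phi$-free and $G_2=\sum_r\phi_r^2$ has Hessian $2I\succ0$, hence is strictly convex. The constraints $\sum_r\lambda_r=2Wa$ and $\lambda_{r,i}=0$ for $i\notin S_r$ are affine, so convexity of the feasible set reduces to convexity of each cone $C_r=\{(y_r,\phi_r):\phi_r\ge0,\,y_r\in\phi_rB_r\}$; and $C_r$ is convex because for $(y_r,\phi_r),(y_r',\phi_r')\in C_r$ with $\phi_r,\phi_r'>0$ one writes $y_r=\phi_r b$ and $y_r'=\phi_r'b'$ with $b,b'\in B_r$, whence $(1-t)y_r+ty_r'$ equals $\phi_r''=(1-t)\phi_r+t\phi_r'$ times a convex combination of $b,b'\in B_r$, so it lies in $\phi_r''B_r$ (the case $\phi_r=0$, which forces $y_r=0$, being immediate). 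I would also note that a minimizer exists, either from the identification with the dual of~\eqref{QDSFM} via Lemmas~\ref{APequaldual} and~\ref{dualform}, or directly from coercivity of $G$ on the feasible set (since $\|y_r\|\le\phi_r\operatorname{diam}(B_r)$, any divergence of $y$, $\phi$, or $\Lambda$ forces a summand of $G$ to diverge).

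The conclusion is then immediate. Suppose $(y^{(1)},\phi^{(1)},\Lambda^{(1)})$ and $(y^{(2)},\phi^{(2)},\Lambda^{(2)})$ both attain the optimal value $G^\star$ but $\phi^{(1)}\neq\phi^{(2)}$. Their midpoint $(\bar y,\bar\phi,\bar\Lambda)$ is feasible by convexity of the feasible set, and
\[
G(\bar y,\bar\phi,\bar\Lambda)=G_1(\bar y,\bar\Lambda)+G_2(\bar\phi)\;\le\;\tfrac12 G_1(y^{(1)},\Lambda^{(1)})+\tfrac12 G_1(y^{(2)},\Lambda^{(2)})+G_2(\bar\phi)\;<\;G^\star,
\]
where the first inequality is convexity of $G_1$ and the strict one combines strict convexity of $G_2$ at $\phi^{(1)}\neq\phi^{(2)}$ with $\tfrac12 G_1(y^{(i)},\Lambda^{(i)})+\tfrac12 G_2(\phi^{(i)})=\tfrac12 G^\star$ for $i=1,2$. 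This contradicts the optimality of $G^\star$, so all optimal solutions have the same $\phi$.

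The main obstacle is purely the setup bookkeeping of the second paragraph — checking that the conic feasible set is convex and closed and that the minimum is attained; the strict-convexity step that actually delivers uniqueness is trivial once the problem is framed as a convex program with a strictly convex $\phi$-block.
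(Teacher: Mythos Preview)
Your proof is correct and uses the same core idea as the paper: take two optimal solutions with different $\phi$, form their midpoint, and use strict convexity of $\phi\mapsto\sum_r\phi_r^2$ to derive a contradiction. The difference is only in the setup. The paper first passes to the equivalent problem~\eqref{CDform} via Lemma~\ref{APequaldual} and then invokes uniqueness of the primal optimum $x^*$ (from strong convexity of~\eqref{QDSFM}) to conclude that $\sum_r y_r$ is \emph{identical} at both optima, so the first term of $g$ is exactly equal and the strict drop comes entirely from the $\phi$-term. You instead work directly on~\eqref{APcompactform} with the full triple $(y,\phi,\Lambda)$ and observe that mere convexity of the $(y,\Lambda)$-block suffices; no reduction and no appeal to $x^*$ are needed. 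Your route is therefore slightly more self-contained, at the cost of the extra bookkeeping (convexity of the cones $C_r$, existence of a minimizer) that the paper takes for granted.
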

\begin{proof}
We know~\eqref{APcompactform} is equivalent to~\eqref{CDform}. Suppose there are two optimal solutions $(\bar{y}, \bar{\phi})$ and $(\tilde{y}, \tilde{\phi})$. As the optimal $x^*$ is unique, $\sum_{r\in [R]}\bar{y}_r = \sum_{r\in [R]}\tilde{y}_r=  2W(a - x^*)=\sum_{r\in [R]}\frac{\bar{y}_r+\tilde{y}_r}{2} $. If $\bar{\phi}\neq \tilde{\phi}$, we have $\sum_{r\in [R]} \left(\frac{\bar{\phi}_r+\tilde{\phi}_r}{2}\right)^2 < \sum_{r\in [R]}\bar{\phi}_r^2 = \sum_{r\in [R]}\tilde{\phi}_r^2$, which makes $g(\frac{\bar{y}+\tilde{y}}{2}, \frac{\bar{\phi}+\tilde{\phi}}{2})$ smaller than $g(\bar{y}, \bar{\phi})=g(\tilde{y}, \tilde{\phi})$ and thus causes contradiction.
\end{proof}

To determine the convergence rate of the AP method, we adapt the result of~\citep{nishihara2014convergence} on the convergence rate of APs between two convex bodies. In our setting, the two convex bodies of interest are the the cone $\mathcal{C}$ and the hyperplane  
\begin{align*}
\mathcal{Z} = \{(y,\phi)| \sum_{r\in[R]} y_r = 2W(a - x^*),  \phi_r = \phi_r^*, y_{r,i} = 0, \forall \, i\not\in S_r\},
\end{align*}
where $\phi^* = (\phi_r^*)_{r\in [R]}$ is the unique optimal solution of~\eqref{APcompactform}.
\begin{lemma}[\cite{nishihara2014convergence}] 
Let $\Xi$ be as defined in~\eqref{solutionset}. In addition, define the distance function 
$$d_{\Psi W^{-1}}((y,\phi), \Xi) = \sqrt{\min\limits_{(y',\phi')\in\Xi} \|y-y'\|_{I(\Psi W^{-1})}^2 + \|\phi -\phi'\|^2}.$$ 
In the $k$-th iteration of Algorithm 5, the pair $(y^{(k)}, \phi^{(k)})$ satisfies
\begin{align*}
d_{\Psi W^{-1}}((y^{(k)}, \phi^{(k)}), \Xi) \leq 2d_{\Psi W^{-1}}((y^{(0)}, \phi^{(0)}), \Xi)(1 - \frac{1}{\kappa_*^2})^k,
\end{align*}
where 
$$\kappa_* = \sup_{(y, \phi) \in \mathcal{C} \cup \mathcal{Z} / \Xi} \frac{d_{\Psi W^{-1}}((y, \phi), \Xi)}{\max\{d_{\Psi W^{-1}}((y, \phi), \mathcal{C}), d_{\Psi W^{-1}}((y, \phi), \mathcal{Z})\}}.$$
\end{lemma}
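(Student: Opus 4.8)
# Proof Proposal for the AP Convergence Lemma

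The plan is to deduce this lemma directly from the general alternating-projection convergence theorem of \citet{nishihara2014convergence}, which states that for two closed convex sets whose intersection has nonempty relative interior (or more precisely, for which a suitable linear regularity / bounded-angle condition holds), alternating projections converge linearly with rate governed by the quantity $\kappa_*$ measuring the worst-case ``angle'' between the two sets relative to their intersection. The two convex bodies in our instantiation are the product of cones $\mathcal{C}$ and the affine subspace $\mathcal{Z}$; their intersection is exactly the solution set $\Xi$ of problem~\eqref{APcompactform}, a fact that follows by combining Lemma~\ref{APequaldual} (equivalence of~\eqref{APcompactform} and~\eqref{CDform}), the characterization of $\Xi$ in~\eqref{solutionset}, and Lemma~\ref{lem:phiunique} (uniqueness of the optimal $\phi$, which is what makes $\mathcal{Z}$ a genuine affine subspace on which $\phi$ is pinned down rather than free).

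First I would verify that Algorithm~2 is literally the alternating-projection iteration between $\mathcal{C}$ and $\mathcal{Z}$ under the inner product induced by the weight $\Psi W^{-1}$ (and the Euclidean inner product on the $\phi$-block). Steps~2--4 of Algorithm~2 compute the projection of the current iterate onto $\mathcal{Z}$: the update $\lambda_{r,i}^{(k+1)} \leftarrow y_{r,i}^{(k)} - \frac{1}{2}(\Psi^{-1}W\alpha^{(k+1)})_i$ with $\alpha^{(k+1)} = 2W^{-1}\sum_r y_r^{(k)} - 4a$ is exactly the closed-form minimum-$\Psi W^{-1}$-norm correction that enforces $\sum_r \lambda_r = 2Wa$ while leaving $\lambda_{r,i}=0$ off the incidence sets; and the $\phi$-coordinate of the projection onto $\mathcal{Z}$ simply resets $\phi_r$ to $\phi_r^*$, which does not affect the subsequent cone projection since $\Pi_{C_r}$ re-optimizes over $\phi_r$ anyway. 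Step~5 is the projection onto $\mathcal{C}=\bigotimes_r C_r$, which decouples across $r$ into the individual conic projections $\Pi_{C_r}(\cdot;\Psi W^{-1})$. So the iteration is a bona fide two-set alternating projection, and the cited theorem applies with the stated $d_{\Psi W^{-1}}$ distance and the stated $\kappa_*$.

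The remaining point is to confirm that the hypotheses of the \citet{nishihara2014convergence} theorem are met, i.e., that $\kappa_* < \infty$, so that the contraction factor $1 - 1/\kappa_*^2$ is genuinely less than $1$. This is where the conic structure makes things delicate: unlike the DSFM case where one intersects a polytope (bounded) with an affine subspace, here $\mathcal{C}$ is an unbounded cone, and one must argue that the ``angle'' between $\mathcal{C}$ and $\mathcal{Z}$ stays bounded away from zero uniformly over $(\mathcal{C}\cup\mathcal{Z})\setminus\Xi$. I expect this finiteness to be the main obstacle, and the natural route is to invoke Lemma~\ref{submodularcone} (the geometric property of the product of cones), which is precisely designed to control how far a feasible $(y,\phi)\in\mathcal{C}$ can be from $\Xi$ in terms of its residual $\|\sum_r y_r - s\|$ plus $\|\phi-\phi'\|$ — this is exactly a linear-regularity-type bound relating $d(\cdot,\Xi)$ to $\max\{d(\cdot,\mathcal{C}),d(\cdot,\mathcal{Z})\}$, yielding an explicit finite bound on $\kappa_*$ in terms of $\mu(\Psi W^{-1}, \Psi W^{-1})$. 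Concretely, for a point on $\mathcal{Z}$ one already has $\phi=\phi^*$ and the incidence constraints, and the distance to $\mathcal{C}$ measures the residual to the nearest cone point; Lemma~\ref{submodularcone} (applied with $\phi'=\phi^*$, $s = 2W(a-x^*)$, and $W^{(1)}=W^{(2)}=\Psi W^{-1}$) then bounds $d_{\Psi W^{-1}}((y,\phi),\Xi)^2$ by $\mu$ times the squared residual, and symmetrically one handles points on $\mathcal{C}$ by projecting onto $\mathcal{Z}$ first. Since Lemma~\ref{submodularcone} is stated as available earlier in the paper, invoking it to bound $\kappa_*$ and then quoting the alternating-projection theorem verbatim completes the argument; the lemma as stated here is essentially just this translation, so no further computation beyond identifying the iteration and checking $\kappa_*<\infty$ is needed.
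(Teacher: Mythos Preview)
Your approach is correct and aligns with the paper's treatment: this lemma is not proved in the paper at all but is simply quoted from \citet{nishihara2014convergence}, so your plan to instantiate their general alternating-projection convergence theorem with the two convex bodies $\mathcal{C}$ and $\mathcal{Z}$ is exactly what is intended.

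Two remarks. First, you conflate this lemma with the next one: establishing $\kappa_*<\infty$ via Lemma~\ref{submodularcone} is the content of Lemma~\ref{upperboundkappa}, not of the present statement, which holds formally regardless of whether $\kappa_*$ is finite (the bound is just vacuous otherwise). When you do sketch that argument, note that the paper applies Lemma~\ref{submodularcone} with $W^{(1)}=\Psi W^{-1}$ and $W^{(2)}=W^{-1}$, giving $\mu(\Psi W^{-1},W^{-1})$, not $\mu(\Psi W^{-1},\Psi W^{-1})$ as you write; the choice $W^{(2)}=W^{-1}$ is what matches the $W^{-1}$-norm that appears once the affine constraint $\sum_r y_r=2W(a-x^*)$ is eliminated (see Section~\ref{proof:upperboundkappa}). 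Second, your justification that the $\phi$-component of the $\mathcal{Z}$-projection ``does not affect the subsequent cone projection'' is slightly off: Algorithm~2 does not project onto $\mathcal{Z}$ (which has $\phi=\phi^*$) but onto the original affine set with $\phi$-target $0$; the set $\mathcal{Z}$ is that affine set translated by the optimal displacement so that $\mathcal{C}\cap\mathcal{Z}=\Xi$, and it is the translated set that enters the definition of $\kappa_*$ in the Nishihara framework for best approximation between non-intersecting sets.
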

The next Lemma establishes a finite upper bound on $\kappa_*$. This guaratees linear convergence rates for the AP algorithm. 
\begin{lemma}\label{upperboundkappa}
One has $\kappa_*^2 \leq 1+ \mu(\Psi W^{-1}, W^{-1})$.
\end{lemma}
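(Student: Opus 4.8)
The plan is to bound $\kappa_*^2$ by exhibiting, for every $(y,\phi) \in \mathcal{C} \cup \mathcal{Z} \setminus \Xi$, an inequality of the form
$d_{\Psi W^{-1}}^2((y,\phi),\Xi) \le \left(1 + \mu(\Psi W^{-1}, W^{-1})\right) \max\{d_{\Psi W^{-1}}^2((y,\phi),\mathcal{C}),\, d_{\Psi W^{-1}}^2((y,\phi),\mathcal{Z})\}$, and then take the supremum. By symmetry of the expression defining $\kappa_*$ it suffices to treat the two cases $(y,\phi)\in\mathcal{C}$ and $(y,\phi)\in\mathcal{Z}$ separately, in each case comparing the distance to $\Xi$ against the distance to the \emph{other} body.

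First I would handle the case $(y,\phi) \in \mathcal{C}$. Here $d_{\Psi W^{-1}}((y,\phi),\mathcal{C}) = 0$, so we must show $d_{\Psi W^{-1}}^2((y,\phi),\Xi) \le (1+\mu)\, d_{\Psi W^{-1}}^2((y,\phi),\mathcal{Z})$. The point is that $\Xi \subseteq \mathcal{C}$ (indeed $\Xi = \mathcal{C} \cap \mathcal{Z}$ essentially, up to the incidence constraints $\lambda_{r,i}=0$, which is where Lemma~\ref{APequaldual} is needed to ensure the optimal $y_r$ are supported on $S_r$), so projecting $(y,\phi)$ onto $\mathcal{Z}$ gives a point whose $\phi$-part is already the unique optimal $\phi^*$ and whose $y$-part $s' := \sum_r y_r'$ equals $2W(a-x^*)$; but that projected point need not lie in $\mathcal{C}$. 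This is exactly the situation Lemma~\ref{submodularcone} is built for: apply it with $\phi' = \phi^*$, with $s = 2W(a-x^*)$ (which lies in the base polytope of $\sum_r \phi_r^* F_r$ by the characterization of $\Xi$ via gauge functions), $W^{(1)} = \Psi W^{-1}$, and $W^{(2)} = W^{-1}$. The lemma produces $y' \in \bigotimes_r \phi_r^* B_r$ with $\sum_r y_r' = s$ — hence $(y',\phi^*) \in \Xi$ — and with $\|y - y'\|_{I(\Psi W^{-1})}^2 + \|\phi - \phi^*\|^2 \le \mu(\Psi W^{-1}, W^{-1})\left[\|\sum_r y_r - s\|_{W^{-1}}^2 + \|\phi - \phi^*\|^2\right]$. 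The left side upper-bounds $d_{\Psi W^{-1}}^2((y,\phi),\Xi)$. For the right side, I need to recognize that $\|\sum_r y_r - s\|_{W^{-1}}^2 + \|\phi-\phi^*\|^2$ is controlled by $d_{\Psi W^{-1}}^2((y,\phi),\mathcal{Z})$; here one uses that the projection onto the hyperplane $\mathcal{Z}$ in the $\Psi W^{-1}$-norm spreads the residual $\sum_r y_r - 2W(a-x^*)$ across coordinates with the $\Psi$-weighting, and $\|\sum_r y_r - s\|_{W^{-1}}^2 \le \sum_r \|y_r - \lambda_r'\|_{\Psi W^{-1}}^2$ for the optimal $\lambda'$ — a Cauchy–Schwarz / averaging argument using $\Psi_{ii} = |\{r : i \in S_r\}|$ together with the support constraints. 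So in this case one even gets the bound $\mu(\Psi W^{-1}, W^{-1})$, better than claimed.

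Second, the case $(y,\phi) \in \mathcal{Z}$: now $d_{\Psi W^{-1}}((y,\phi),\mathcal{Z})=0$ and we need $d_{\Psi W^{-1}}^2((y,\phi),\Xi) \le (1+\mu)\,d_{\Psi W^{-1}}^2((y,\phi),\mathcal{C})$. Since $(y,\phi)\in\mathcal{Z}$, already $\sum_r y_r = 2W(a-x^*)$ and $\phi = \phi^*$; the only obstruction to membership in $\Xi$ is that $y_r \notin \phi_r^* B_r$. Let $(\bar y, \bar\phi) = \Pi_{\mathcal{C}}(y,\phi)$ be the $\Psi W^{-1}$-projection onto $\mathcal{C}$, so $d_{\Psi W^{-1}}^2((y,\phi),\mathcal{C}) = \|y-\bar y\|_{I(\Psi W^{-1})}^2 + \|\phi - \bar\phi\|^2$. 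Now apply Lemma~\ref{submodularcone} starting from the feasible point $(\bar y,\bar\phi)\in\mathcal{C}$, with $\phi' = \phi^* = \phi$, $s = 2W(a-x^*) = \sum_r y_r$, $W^{(1)} = \Psi W^{-1}$, $W^{(2)} = W^{-1}$: it yields $y'' \in \bigotimes_r \phi_r^* B_r$ with $\sum_r y_r'' = 2W(a-x^*)$, hence $(y'',\phi^*)\in\Xi$, and $\|\bar y - y''\|_{I(\Psi W^{-1})}^2 + \|\bar\phi - \phi^*\|^2 \le \mu\left[\|\sum_r \bar y_r - \sum_r y_r\|_{W^{-1}}^2 + \|\bar\phi - \phi^*\|^2\right]$. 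Bounding $\|\sum_r \bar y_r - \sum_r y_r\|_{W^{-1}}^2 \le \|\bar y - y\|_{I(\Psi W^{-1})}^2$ (same Cauchy–Schwarz with the $\Psi$-weighting) and then using the triangle inequality $d_{\Psi W^{-1}}((y,\phi),\Xi) \le \|(y,\phi)-(\bar y,\bar\phi)\|_{\Psi W^{-1}} + \|(\bar y,\bar\phi) - (y'',\phi^*)\|_{\Psi W^{-1}}$, squaring, and collecting terms gives a factor of the form $(1 + \sqrt{\mu})^2$ or, after a slightly sharper bookkeeping, $1 + \mu$. Taking the max over the two cases yields $\kappa_*^2 \le 1 + \mu(\Psi W^{-1}, W^{-1})$.

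The main obstacle I anticipate is the bookkeeping in the second case: converting the two one-sided bounds (distance to $\mathcal{C}$ controls the projection residual, and Lemma~\ref{submodularcone} controls the distance from the $\mathcal{C}$-projection to $\Xi$) into the clean constant $1 + \mu$ rather than the cruder $(1+\sqrt{\mu})^2 = 1 + 2\sqrt\mu + \mu$. This likely requires not a blind triangle inequality but rather choosing $(\bar y,\bar\phi)$ and $y''$ so that the vectors $(y,\phi)-(\bar y,\bar\phi)$ and $(\bar y,\bar\phi)-(y'',\phi^*)$ are nearly orthogonal in the $\Psi W^{-1}$-inner product — using the variational (obtuse-angle) characterization of the projection $\Pi_{\mathcal{C}}$, namely $\langle (y,\phi)-(\bar y,\bar\phi),\, (y'',\phi^*)-(\bar y,\bar\phi)\rangle_{\Psi W^{-1}} \le 0$ since $(y'',\phi^*) \in \mathcal{C}$ — which turns the squared triangle inequality into an honest Pythagorean-type bound and absorbs the cross term. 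A secondary technical point worth care is verifying that $s = 2W(a-x^*)$ genuinely lies in the base polytope of $\sum_r \phi_r^* F_r$, which follows from the description of $\Xi$ in~\eqref{solutionset} together with $\sum_r y_r = 2W(a-x^*)$ and $y_r \in \phi_r^* B_r$ summing to a point of the Minkowski-sum base polytope.
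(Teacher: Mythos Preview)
Your overall strategy matches the paper's proof exactly: split into the two cases $(y,\phi)\in\mathcal{C}$ and $(y,\phi)\in\mathcal{Z}$; in the first case apply Lemma~\ref{submodularcone} directly and identify the right-hand side with $d_{\Psi W^{-1}}^2((y,\phi),\mathcal{Z})$ (the paper in fact obtains this as an \emph{equality} by explicitly eliminating the $\lambda_r$'s as in the proof of Lemma~\ref{APequaldual}, so your Cauchy--Schwarz bound there is not even needed); in the second case project onto $\mathcal{C}$, apply Lemma~\ref{submodularcone} from the projected point, bound $\|\sum_r(\bar y_r-y_r)\|_{W^{-1}}^2\le\|\bar y-y\|_{I(\Psi W^{-1})}^2$ via Cauchy--Schwarz with the $\Psi$-weights, and combine.

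There is, however, a genuine gap in your Case~2 bookkeeping, precisely at the point you flag as the main obstacle. You correctly state the obtuse-angle inequality $\langle (y,\phi)-(\bar y,\bar\phi),\,(y'',\phi^*)-(\bar y,\bar\phi)\rangle \le 0$, but this has the \emph{wrong sign} for your purpose: expanding
\[
\|(y,\phi)-(y'',\phi^*)\|^2 = \|(y,\phi)-(\bar y,\bar\phi)\|^2 + \|(\bar y,\bar\phi)-(y'',\phi^*)\|^2 - 2\langle (y,\phi)-(\bar y,\bar\phi),\,(y'',\phi^*)-(\bar y,\bar\phi)\rangle,
\]
the obtuse-angle bound makes the last term \emph{nonnegative}, so you only obtain the lower bound $\|(y,\phi)-(y'',\phi^*)\|^2 \ge \|(y,\phi)-(\bar y,\bar\phi)\|^2 + \|(\bar y,\bar\phi)-(y'',\phi^*)\|^2$, not the upper bound needed to reach $1+\mu$. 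The paper's inequality~\eqref{ap2} asserts exactly this upper bound with the same justification (``because of the way we chose $(y',\phi')$ and due to the fact that $\mathcal{C}$ is convex''), and it suffers from the identical sign issue; as written, neither argument recovers the sharp constant $1+\mu$, and the plain triangle inequality only delivers $(1+\sqrt{\mu})^2$. So your proposal faithfully reproduces the paper's argument, including this problematic step.
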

The proof of Lemma~\ref{upperboundkappa} may be found in Section~\ref{proof:upperboundkappa}. 

Lemma~\ref{upperboundkappa} implies that running the AP algorithm with $O(\mu(\Psi W^{-1}, W^{-1}) \log \frac{1}{\epsilon})$ iterations guarantee that $d_{\Psi W^{-1}}((y^{(k)}, \phi^{(k)}), \Xi)\leq \epsilon$. Moreover, as $\Psi_{ii} \leq R$, the iteration complexity of the AP method is smaller than that of the RCD algorithm. However, each iteration of the AP solver requires performing projections on all cones $C_r, \, r\in [R]$, while each iteration of the RCD method requires computing only one projection onto a single cone. Other methods such as the DR and primal-dual hybrid gradient descent (PDHG)~\citep{chambolle2011first} used in semi-supervised learning on hypergraphs, which is a specific QDSFM problem~\citep{hein2013total}, also require computing a total of $R$  projections during each iteration. Thus, from the perspective of iteration cost, RCD is significantly more efficient, especially when $R$ is large and computing $\Pi(\cdot)$ is costly. This phenomenon is also observed in practice, as illustrated by the experiments in Section~\ref{sec:exp}. 

The following corollary summarizes the previous discussion and will be used in our subsequent derivations in Section~\ref{sec:PR}.
\begin{corollary}\label{specialcase}
Suppose that $W = \beta D$, where $\beta$ is a hyper-parameter, and $D$ is a diagonal matrix such that 
$$D_{ii} = \sum_{r\in[R]:\, i\in S_r} \max_{S\subseteq V} [F_r(S)]^2.$$ 
Recall that $\Psi_{ii} = |\{r\in[R]| i\in S_r\}|$. Then, Algorithm 1 (RCD)  requires an expected number of 
$$O\left(N^2R\max\{1,9\beta^{-1}\}\max_{i,j\in[N]}\frac{D_{ii}}{D_{jj}}\log\frac{1}{\epsilon}\right)$$ 
iterations to return a solution $(y, \phi)$ that satisfies $d((y, \phi), \Xi) \leq \epsilon$. Algorithm 2  (AP)  requires a number of 
$$O\left(N^2R\max\{1,9\beta^{-1}\}\max_{i,j\in[N]}\frac{\Psi_{jj}}{R}\frac{D_{ii}}{D_{jj}}\log\frac{1}{\epsilon}\right)$$ 
iterations to return a solution $(y, \phi)$ that satisfies $d_{\Psi W^{-1}}((y, \phi), \Xi) \leq \epsilon$. 
\end{corollary}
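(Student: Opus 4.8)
The plan is to specialize the two general iteration--complexity bounds already established --- Theorem~\ref{linearconv} for RCD and Lemma~\ref{upperboundkappa} together with the convergence lemma of~\citet{nishihara2014convergence} stated just above it for AP --- to the particular choice $W=\beta D$, and then to estimate the quantities $\mu(W^{-1},W^{-1})$ and $\mu(\Psi W^{-1},W^{-1})$ that control those bounds. Recall from~\eqref{defmu} that $\mu(W^{(1)},W^{(2)})$ is the maximum of a ``spectral'' term $\bigl(\sum_i W^{(1)}_{ii}\bigr)\bigl(\sum_j 1/W^{(2)}_{jj}\bigr)$ and a term $\tfrac{9}{4}\rho^2\sum_i W^{(1)}_{ii}+1$. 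So the two ingredients needed are (i) a clean upper bound on $\rho$ in terms of the diagonal entries $D_{ii}$, and (ii) the elementary inequality $\bigl(\sum_i a_i\bigr)\bigl(\sum_j b_j\bigr)=\sum_{i,j}a_ib_j\le N^2\max_{i,j}a_ib_j$, applied to suitable ratios of the $D_{ii}$ (and, for AP, with an extra factor $\Psi_{ii}$).

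For ingredient (i): I would first show that for any $r$ and any $y_r\in B_r$, the normalization and nonnegativity of $F_r$ give $\|y_r\|_1\le 2\max_{S}F_r(S)$. Writing $S_+=\{i: y_{r,i}\ge 0\}$, one has $\sum_{i\in S_+}y_{r,i}=y_r(S_+)\le F_r(S_+)$, and $\sum_{i\notin S_+}(-y_{r,i})=y_r(S_+)-y_r([N])=y_r(S_+)-F_r([N])\le F_r(S_+)$ where the last step uses $F_r([N])\ge 0$; adding these gives $\|y_r\|_1\le 2F_r(S_+)\le 2\max_S F_r(S)$. Squaring and summing over $r$, $\rho^2=\max_{y_r\in B_r,\forall r}\sum_r\|y_r\|_1^2\le 4\sum_r\max_S[F_r(S)]^2$. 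Finally, since $|S_r|\ge 1$ whenever $F_r\not\equiv 0$ (and such $r$ contribute nothing on either side otherwise), $\sum_i D_{ii}=\sum_r|S_r|\max_S[F_r(S)]^2\ge\sum_r\max_S[F_r(S)]^2$, so $\rho^2\le 4\sum_i D_{ii}$.

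Now I would combine. For RCD take $W^{(1)}=W^{(2)}=W^{-1}=(\beta D)^{-1}$: the spectral term equals $\bigl(\sum_i\tfrac{1}{\beta D_{ii}}\bigr)\bigl(\sum_j\beta D_{jj}\bigr)=\sum_{i,j}\tfrac{D_{jj}}{D_{ii}}\le N^2\max_{i,j}\tfrac{D_{ii}}{D_{jj}}$, and the second term is $\tfrac{9}{4}\rho^2\sum_i\tfrac{1}{\beta D_{ii}}+1\le 9\beta^{-1}\bigl(\sum_i D_{ii}\bigr)\bigl(\sum_j\tfrac{1}{D_{jj}}\bigr)+1\le 9\beta^{-1}N^2\max_{i,j}\tfrac{D_{ii}}{D_{jj}}+1$; hence $\mu(W^{-1},W^{-1})=O\bigl(N^2\max\{1,9\beta^{-1}\}\max_{i,j}\tfrac{D_{ii}}{D_{jj}}\bigr)$, and inserting this into the $O(R\,\mu(W^{-1},W^{-1})\log\tfrac{1}{\epsilon})$ bound of Theorem~\ref{linearconv} (using $d^2\le g-g^\ast+d^2$ so the bound also controls $d^2$, and absorbing the initial-gap constant into the logarithm) yields the claimed RCD complexity. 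For AP take $W^{(1)}=\Psi W^{-1}$, $W^{(2)}=W^{-1}$: the same two estimates, now with $\sum_i W^{(1)}_{ii}=\sum_i\tfrac{\Psi_{ii}}{\beta D_{ii}}$, give $\mu(\Psi W^{-1},W^{-1})=O\bigl(N^2 R\max\{1,9\beta^{-1}\}\max_{i,j}\tfrac{\Psi_{ii}}{R}\tfrac{D_{jj}}{D_{ii}}\bigr)$; combining with $\kappa_\ast^2\le 1+\mu(\Psi W^{-1},W^{-1})$ from Lemma~\ref{upperboundkappa} and the $O(\kappa_\ast^2\log\tfrac{1}{\epsilon})$ rate of the preceding lemma then gives the claimed AP complexity after renaming indices.

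The only genuinely non-routine step is ingredient (i) --- the bound $\|y_r\|_1\le 2\max_S F_r(S)$, hence $\rho^2\le 4\sum_i D_{ii}$ --- which is exactly where the standing hypotheses $F_r(\emptyset)=0$ and $F_r\ge 0$ enter; everything after that is bookkeeping with~\eqref{defmu} and the scalar inequality $\sum_{i,j}a_ib_j\le N^2\max_{i,j}a_ib_j$. One minor caveat to keep in mind is that Theorem~\ref{linearconv} controls $\mathbb{E}[g-g^\ast+d^2]$ rather than $d$ itself, so the RCD half of the corollary should be read in expectation, consistent with the phrasing of that theorem.
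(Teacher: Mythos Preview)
Your proposal is correct and follows essentially the same route as the paper: first bounding $\rho^2\le 4\sum_i D_{ii}$ via $\|y_r\|_1\le 2\max_S F_r(S)$ (the paper argues this by sorting the entries of $y_r$ rather than partitioning by sign, but the two arguments are interchangeable), and then plugging into~\eqref{defmu} with the elementary estimate $\sum_{i,j}a_ib_j\le N^2\max_{i,j}a_ib_j$. Your caveat that the RCD bound is in expectation is also exactly the right reading.
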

The proof of Corollary~\ref{specialcase} is provided in Section~\ref{proof:specialcase}. Note that the term $N^2R$ also appears in the expression for the complexity of the RCD and the AP methods for solving the DSFM problem~\citep{ene2017decomposable}. The term $\max\{1,9\beta^{-1}\}$ implies that whenever $\beta$ is small, the convergence rate is slow. In Section~\ref{sec:PR}, we provide a case-specific analysis showing that $\beta$ is related to the mixing time of the underlying Markov process in the PageRank application. Depending on the application domain, other interpretations of $\beta$ could exist, but will not be discussed here to avoid clutter. It is also worth pointing out that the variable $D$ in Corollary~\ref{specialcase} is closely related to the notion of  degree in hypergraphs~\citep{li2018submodular,yoshida2019cheeger}. 

\section{Computing the Conic Projections}\label{innerloop}
In this section, we describe efficient routines for computing a projection onto the conic set $\Pi_{C_r}(\cdot)$. As the procedure works for all values of $r\in[R]$, we drop the subscript $r$ for simplicity of notation. 
Let $C=\{(y,\phi)| y\in \phi B, \phi\geq 0\}$, where $B$ denotes the base polytope of the submodular function $F$. Recall that the conic projection onto $C$ is defined as  
\begin{align}\label{projection}
\Pi_{C}(a; \tilde{W})  = \arg\min_{(y,\phi)} h(y,\phi) \triangleq \|y- a\|_{\tilde{W}}^2 + \phi^2 \quad \text{s.t. $(y,\phi)\in C$}.
\end{align} 
 Let $h^*$ and $(y^*, \phi^*)$ be the optimal value of the objective function and the optimal solution, respectively. When performing projections, one only needs to consider the variables incident to $F$, and set all other variables to zero. For simplicity, we assume that all variables in $[N]$ are incident to $F$. The results can easily extend to the case that the incidences are of a general form. 

Unlike the QDSFM problem, solving the DSFM involves the computation of projections onto the base polytopes of submodular functions. Two algorithms, the Frank-Wolfe (FW) method~\citep{frank1956algorithm} and the Fujishige-Wolfe minimum norm algorithm (MNP)~\citep{fujishige2011submodular}, are used for this purpose. Both methods assume inexpensive linear minimization oracles on polytopes and guarantee a $1/k$-convergence rate. The MNP algorithm is more sophisticated yet empirically more efficient. Nonetheless, neither of these methods can be applied directly to conic projections. To this end, we modify these two methods by adjusting them to the conic structure in~\eqref{projection}  and show that they still achieve a $1/k$-convergence rate. We refer to the procedures as \emph{the conic FW method} and \emph{the conic MNP method}, respectively. 

\subsection{The Conic MNP Algorithm}
Detailed steps of the conic MNP method are presented in Algorithm 3. 
\begin{table}[t]
\label{generalalg-MNP}
\centering
\begin{tabular}{l}
\hline
\textbf{Algorithm 3: } \textbf{The Conic MNP Method for Solving~\eqref{projection}} \\
\hline
\textbf{Input}:  $\tilde{W}$, $a$, $B$ and a small positive constant $\delta$. \textbf{Maintain} $\phi^{(k)} = \sum_{q_i\in S^{(k)}}\lambda_i^{(k)}$\\
Choose an arbitrary $q_1\in B$. Set $S^{(0)} \leftarrow \{q_1\}$, $\lambda_1^{(0)}\leftarrow \frac{\langle a, q_1\rangle_{\tilde{W}}}{1+ \|q_1\|_{\tilde{W}}^2}$, $y^{(0)} \leftarrow  \lambda_1^{(0)}q_1$, $k\leftarrow 0$ \\
1. Iteratively execute (\textbf{MAJOR LOOP}): \\
2. \; $q^{(k)} \leftarrow \arg\min_{q\in B} \langle  \nabla_{y} h(y^{(k)}, \phi^{(k)}), q \rangle_{\tilde{W}}$\\
3. \; \textbf{If} $\langle y^{(k)} - a, q^{(k)} \rangle_{\tilde{W}} + \phi^{(k)}\geq -\delta$, \textbf{break}; \\
4. \; \textbf{Else} $S^{(k)} \leftarrow S^{(k)}\cup \{q^{(k)}\}$.  \\
5. \;\;\;\;\;\;\; Iteratively execute (\textbf{MINOR LOOP}):\\
6. \;\;\;\;\;\;\;\;\;\; $\alpha \leftarrow  \arg\min_{\alpha} \|\sum_{q_i^{(k)}\in S^{(k)}} \alpha_i q_i^{(k)} - a\|_{\tilde{W}}^2 + (\sum_{q_i^{(k)}\in S} \alpha_i)^2$, \\
7. \;\;\;\;\;\;\;\;\;\; $z^{(k)} \leftarrow \sum_{q_i^{(k)}\in S} \alpha_i q_i^{(k)}$ \\
8. \;\;\;\;\;\;\;\;\;\;  \textbf{If} $\alpha_i \geq 0$ for all $i$, \textbf{break}; \\
9. \;\;\;\;\;\;\;\;\;\;  \textbf{Else} $\theta = \min_{i: \alpha_i < 0} \lambda_i^{(k)}/(\lambda_i^{(k)} - \alpha_i)$, \; $\lambda_i^{(k+1)} \leftarrow \theta\alpha_i + (1-\theta)\lambda_i^{(k)}$,\\
10.\;\;\;\;\;\;\;\;\;\;\;\;\;\;\; $y^{(k+1)} \leftarrow \theta z^{(k)} + (1-\theta)y^{(k)}$, \; $S^{(k+1)}\leftarrow\{i: \lambda^{(k+1)}> 0\}$,  \;$k\leftarrow k+1$ \\
11.\; $y^{(k+1)} \leftarrow z^{(k)}$, $\lambda^{(k+1)} \leftarrow \alpha$, $S^{(k+1)}\leftarrow\{i: \lambda^{(k+1)} > 0\}$, $k\leftarrow k+1$\\
\hline
\end{tabular}
\end{table}
The conic MNP algorithm keeps track of an \emph{active set} $S =\{q_1, q_2,...\}$ and searches for the best solution in its conic hull. Denote the cone of an active set $S$ by $\text{cone}(S) = \{\sum_{q_i\in S}\alpha_i q_i| \alpha_i \geq 0\}$ and its linear set by $\text{lin}(S) = \{\sum_{q_i\in S}\alpha_i q_i|\alpha_i\in\mathbb{R}\}$. 
Similar to the original MNP algorithm,  
Algorithm 3 also includes two level-loops: the MAJOR and MINOR loop. In the MAJOR loop, one greedily add a new active point $q^{(k)}$ to the set $S$ obtained from the linear minimization oracle w.r.t. the base polytope (Step 2). By the end of the MAJOR loop, we obtain  $y^{(k+1)}$ that minimizes $h(y,\phi)$ over $\text{cone}(S)$ (Step 3-11).  The MINOR loop is activated when $\text{lin}(S)$ contains some point $z$ that guarantees a smaller value of the objective function than the optimal point in $cone(S)$, provided that some active points from $S$ may be removed. It is important to point out that compared to the original MNP method, Steps 2 and 6 as well as the termination Step 3 are specialized for the conic structure. 

Theorem~\ref{Wolfegaurantee} below shows that the conic MNP algorithm preserves the $1/k$-convergence rate of the original MNP method.

\begin{theorem}\label{Wolfegaurantee}
Let $B$ be an arbitrary polytope in $\mathbb{R}^{N}$ and let $C = \{(y,\phi)| y\in \phi B, \phi\geq 0\}$ be the cone induced by the polytope. For any positive diagonal matrix $\tilde{W}$, define $Q = \max_{q\in B} \|q\|_{\tilde{W}}$. Algorithm 3 produces a sequence of pairs $(y^{(k)}, \phi^{(k)})_{k=1,2,...}$ such that $h(y^{(k)}, \phi^{(k)})$ decreases monotonically. It terminates when $k= O(N\|a\|_{\tilde{W}}\max\{Q^2,1\}/\delta),$ with 
$$h(y^{(k)}, \phi^{(k)}) \leq h^* + \delta\|a\|_{\tilde{W}}.$$
\end{theorem}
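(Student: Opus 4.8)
The plan is to mirror the classical analysis of the Fujishige–Wolfe min-norm-point algorithm, adapting each ingredient to the conic objective $h(y,\phi) = \|y-a\|_{\tilde W}^2 + \phi^2$ and the product structure $(y,\phi)$ with $\phi = \sum_{q_i\in S}\lambda_i$. The core observations are: (1) the MINOR loop strictly decreases $h$ and, more importantly, terminates after finitely many steps because each of its iterations drops at least one point from the active set $S$ while the set stays affinely independent; (2) the MAJOR loop decreases $h$ by a quantifiable amount each time a new point $q^{(k)}$ is added; and (3) the termination test in Step 3 is exactly a duality/optimality certificate: if $\langle y^{(k)}-a, q^{(k)}\rangle_{\tilde W} + \phi^{(k)} \geq -\delta$ for the minimizer $q^{(k)}$ of the linear oracle, then $(y^{(k)},\phi^{(k)})$ is $\delta\|a\|_{\tilde W}$-suboptimal. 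I would first establish (3), since it tells us what the stopping quantity controls, then establish (2) to bound the number of MAJOR iterations, and finally handle (1) to argue the MINOR loop cannot run forever and to control the total work.

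For the optimality certificate, I would write down the first-order optimality condition for $\min_{(y,\phi)\in C} h(y,\phi)$. Since $C$ is a cone, $(y^*,\phi^*)$ is optimal iff $\langle \nabla_y h, q\rangle_{\tilde W} + \partial_\phi h \cdot 1 \geq 0$ for every generator direction $(q,1)$ with $q\in B$, i.e. $\langle y^*-a, q\rangle_{\tilde W} + \phi^* \geq 0$ for all $q\in B$, together with complementary slackness $\langle y^*-a, y^*\rangle_{\tilde W} + (\phi^*)^2 = 0$. The quantity in Step 3 is precisely $\min_{q\in B}[\langle y^{(k)}-a, q\rangle_{\tilde W}] + \phi^{(k)}$ once we note $q^{(k)}$ is the arg-min; so the break condition says this is $\geq -\delta$. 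I would then bound $h(y^{(k)},\phi^{(k)}) - h^*$ by a convexity inequality: using that $h$ is convex and that the minimizer of the linearization of $h$ over $C$ at $(y^{(k)},\phi^{(k)})$ is attained at the ray through $(q^{(k)},1)$ (or at the origin), one gets $h(y^{(k)},\phi^{(k)}) - h^* \leq -\min\{0,\, t^* (\langle y^{(k)}-a,q^{(k)}\rangle_{\tilde W} + \phi^{(k)})\}$ for the optimal step length $t^*$ along that ray, which is $O(\|a\|_{\tilde W})$ since $\|y^*\|_{\tilde W} = O(\|a\|_{\tilde W})$ and $\phi^* = O(\|a\|_{\tilde W})$ follow from $h^* \leq h(0,0) = \|a\|_{\tilde W}^2$. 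This yields the bound $h(y^{(k)},\phi^{(k)}) \leq h^* + \delta\|a\|_{\tilde W}$ at termination.

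For the iteration count, I would follow the standard "geometric decrease per MAJOR step" argument. Between consecutive MAJOR iterations $h$ does not increase (the MINOR loop only decreases it, Step 6 re-optimizes exactly over $\mathrm{lin}(S)$). When a fresh $q^{(k)}$ with $\langle y^{(k)}-a, q^{(k)}\rangle_{\tilde W} + \phi^{(k)} < -\delta$ is added, taking an explicit small step from $(y^{(k)},\phi^{(k)})$ toward the ray through $(q^{(k)},1)$ decreases $h$ by at least order $\delta^2/\max\{Q^2,1\}$ — here $Q = \max_{q\in B}\|q\|_{\tilde W}$ controls the curvature of $h$ restricted to that segment (the $\phi$-component contributes a bounded extra quadratic term, handled by the $\max\{Q^2,1\}$). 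Since $h$ starts at most $\|a\|_{\tilde W}^2$ and is bounded below by $0$, the number of MAJOR steps in which the active set strictly grows is $O(\|a\|_{\tilde W}^2 \max\{Q^2,1\}/\delta^2)$. To upgrade this to the stated $O(N\|a\|_{\tilde W}\max\{Q^2,1\}/\delta)$ I would invoke the finer potential argument (as in Lacoste-Julien–Jaggi / Chakrabarty et al.): the active set has size at most $N+1$, each MINOR loop removes at least one point, so the total number of MINOR iterations is at most the number of MAJOR iterations, and a sharper drop estimate combined with the dimension bound gives the linear-in-$N$, linear-in-$1/\delta$ count.

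The main obstacle I anticipate is (1): proving the MINOR loop is well-defined and terminates, and that the affine-independence/"at most $N+1$ active points" invariant is preserved. In the classical MNP this is the delicate combinatorial part — one must show Step 6's unconstrained minimizer over $\mathrm{lin}(S)$ is unique (so $\alpha$ is well-defined), that the convex-combination update in Steps 9–10 keeps $y^{(k+1)}\in\mathrm{cone}(S^{(k+1)})$ with a strictly smaller active set, and that when the MINOR loop breaks we genuinely have the min-norm point of $\mathrm{cone}(S)$. The conic twist is that the relevant quadratic form is $\|\sum_i\alpha_i q_i - a\|_{\tilde W}^2 + (\sum_i\alpha_i)^2$, i.e. the Gram matrix is $[\langle q_i,q_j\rangle_{\tilde W} + 1]_{ij}$; I would need to check this is positive definite on affinely-independent $\{q_i\}$ (it is, since it equals $\langle q_i,q_j\rangle_{\tilde W}$ plus the rank-one all-ones matrix, and the latter is PSD with kernel orthogonal to $\mathbf 1$, which affine independence rules out), so uniqueness and the cycling-prevention argument go through verbatim. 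I would also double-check the edge cases where the optimal $\phi^*=0$ (projection collapses to the origin) and where the oracle $q^{(k)}$ coincides with a point already in $S$, confirming the break condition fires in exactly those degenerate situations.
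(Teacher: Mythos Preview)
Your architecture matches the paper's: monotonicity (Lemma~\ref{strictly}), a termination certificate (Lemma~\ref{terminatingcond}), and a per-step decrease argument in the style of Chakrabarty--Jain--Kothari. The certificate part is essentially right once you use that the current iterate is a \emph{good triple} (the paper's Lemma~\ref{perpendicular}), so that $\langle y^{(k)}-a,y^{(k)}\rangle_{\tilde W}+(\phi^{(k)})^2=0$; together with $\phi^*\le \|a\|_{\tilde W}/2$ (Lemma~\ref{boundonr}) this gives $h(y^{(k)},\phi^{(k)})-h^*\le \|a\|_{\tilde W}\delta$.

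The genuine gap is in the iteration count. Your first estimate---a drop of order $\delta^2/\max\{Q^2,1\}$ per MAJOR step, hence $O(\|a\|_{\tilde W}^2\max\{Q^2,1\}/\delta^2)$ steps---is correct but is \emph{not} the stated bound, and your ``upgrade'' paragraph does not supply the missing idea. The paper's mechanism is: define $\triangle(y)=-\min_{q\in B}\langle y-a,q\rangle_{\tilde W}-\phi$ and prove $\triangle(y)\ge \mathrm{err}(y)/\|a\|_{\tilde W}$ (Lemma~\ref{errorconnect}---the same inequality as your certificate, but read as a lower bound on $\triangle$). Then the drop is $\ge \triangle^2(y)/(Q^2+1)\ge \mathrm{err}^2(y)/(\|a\|_{\tilde W}^2(Q^2+1))$, a self-referential recursion whose solution decays like $1/k$; setting $\epsilon=\delta\|a\|_{\tilde W}$ gives $O(\|a\|_{\tilde W}\max\{Q^2,1\}/\delta)$ ``good'' MAJOR loops. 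The factor $N$ then enters through the counting lemma (Lemma~\ref{decreaserate} via the cited Lemma~1 of Chakrabarty et al.): every $3N+1$ consecutive iterations contain at least one MAJOR loop with at most one MINOR loop, and the quantified drop (Lemmas~\ref{Wolfeconv3} and~\ref{Wolfeconv4}) is proved \emph{only} for such loops---for MAJOR loops with many MINOR loops only monotonicity is available. Your bookkeeping ``total MINOR $\le$ total MAJOR'' is not the right accounting and by itself yields neither the $N$ factor nor the $1/\delta$ rate.
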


Note that the result is essentially independent on the submodularity assumption and applies to general cones induced by arbitrary polytopes. Proving Theorem~\ref{Wolfegaurantee} requires a careful modification of the arguments in~\citep{chakrabarty2014provable} to handle the conic structure. We provide a detailed proof in Section~\ref{proof:Wolfegaurantee}.

\subsection{The Conic FW Algorithm}
\label{appsec:FW}
We introduce next the conic FW method, summarized in Algorithm 4. Note that Steps 2, 4 are specialized for cones. The main difference between the FW and MNP methods is the size of active set: FW only maintains two active points, while MNP may maintain as many as $N$ points. A similar idea was proposed in~\citep{harchaoui2015conditional} to deal with the conic constraints, with the following minor difference: our method directly finds the variable $y^{(k+1)}$ from the cone $(\{y^{(k)}, q^{(k)}\})$ (Steps 4, 5) while \citep{harchaoui2015conditional} first determines finite upper bounds $(\tilde{\gamma}_1, \tilde{\gamma}_2)$ on the coefficients $(\gamma_1, \gamma_2)$ (see the notation in Steps 4, 5) and then searches for the update $y^{(k+1)}$ in conv$(\{\tilde{\gamma}_1y^{(k)},  \tilde{\gamma}_2q^{(k)}, 0\})$. Our method simplifies this procedure by avoiding computations of the upper bounds $(\tilde{\gamma}_1, \tilde{\gamma}_2)$ and removing the constraints $\gamma_1 \leq \tilde{\gamma}_1, \gamma_2 \leq \tilde{\gamma}_2$ in Step 4.

For completeness, we establish a $1/k$-convergence rate for the conic FW algorithm in Theorem~\ref{FWalgCR}.
 
 \begin{table}[t]
\centering{}
\begin{tabular}{l}
\hline
\textbf{Algorithm 4:} \textbf{The Conic FW Algorithm for Solving~\eqref{projection}} \\
\hline
\ \textbf{Input}:  $\tilde{W}$, $a$, $B$ and a small positive $\delta$\\
Initialize $y^{(0)} \leftarrow 0 $, $ \phi^{(0)}\leftarrow 0$ and $k\leftarrow 0$\\ 
1. \textbf{Iteratively execute the following steps:} \\
2. \;\; $q^{(k)} \leftarrow \arg\min_{q\in B} \langle \nabla_{y}h(y^{(k)}, \phi^{(k)}) , q \rangle $\\
3. \;\;  \textbf{If} $\langle y^{(k)} - a, q^{(k)} \rangle_{\tilde{W}} + \phi^{(k)}\geq -\delta$, \textbf{break}.\\
4. \;\; \textbf{Else}: $(\gamma_1^{(k)}, \gamma_2^{(k)}) \leftarrow  \arg\min_{\gamma_1 \geq 0,\gamma_2\geq 0} h(\gamma_1y^{(k)} + \gamma_2 q^{(k)}, \gamma_1^{(k)} \phi^{(k)}  + \gamma_2^{(k)})$ \\
5. \;\;\;\;\;\;\;\; \;\;  $y^{(k+1)} \leftarrow \gamma_1^{(k)} y^{(k)} + \gamma_2^{(k)} q^{(k)}, \, \phi^{(k+1)}\leftarrow  \gamma_1^{(k)} \phi^{(k)}  + \gamma_2^{(k)}$, \, $k\leftarrow k+1$.\\
\hline
\end{tabular}
\end{table}

\begin{theorem}\label{FWalgCR}
Let $B$ be an arbitrary polytope in $\mathbb{R}^{N}$ and let $C = \{(y,\phi)| y\in \phi B, \phi\geq 0\}$ denote its corresponding cone. For some positive diagonal matrix $\tilde{W}$, define $Q = \max_{q\in B} \|q\|_{\tilde{W}}$. Then, the point $(y^{(k)}, \phi^{(k)})$ generated by Algorithm 4 satisfies $$h(y^{(k)}, \phi^{(k)})  \leq h^* + \frac{2\|a\|_{\tilde{W}}^2 Q^2}{k+2}.$$
\end{theorem}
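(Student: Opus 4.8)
The plan is to run the standard Frank–Wolfe descent-lemma argument, but carried out on the conic objective $h(y,\phi) = \|y-a\|_{\tilde W}^2 + \phi^2$ over the cone $C = \{(y,\phi)\,:\,y\in\phi B,\ \phi\ge 0\}$, tracking the optimality gap $e_k := h(y^{(k)},\phi^{(k)}) - h^*$. First I would record the basic smoothness/quadratic structure: $h$ is a quadratic with Hessian $\mathrm{diag}(2\tilde W, 2)$, so for any feasible $(y,\phi)$ and any direction $(u,\psi)$ and step $\gamma\in[0,1]$,
\begin{align*}
h\big((1-\gamma)(y,\phi) + \gamma(u,\psi)\big) &= h(y,\phi) + \gamma\,\big\langle \nabla h(y,\phi),\,(u-y,\psi-\phi)\big\rangle \\
&\quad + \gamma^2\big(\|u-y\|_{\tilde W}^2 + (\psi-\phi)^2\big).
\end{align*}
Because $h^* \ge 0$ and $(y^{(k)},\phi^{(k)})$ stays in a bounded region (I will need to argue $\|y^{(k)}\|_{\tilde W}$ and $\phi^{(k)}$ are controlled — this follows since $h$ decreases monotonically from $h(0,0)=\|a\|_{\tilde W}^2$, so $\|y^{(k)}-a\|_{\tilde W}^2 + (\phi^{(k)})^2 \le \|a\|_{\tilde W}^2$), the curvature term is bounded: $\|u-y^{(k)}\|_{\tilde W}^2 + (\psi-\phi^{(k)})^2 \le$ something like $(2\|a\|_{\tilde W} + \gamma\cdot\text{stuff})^2$ when $(u,\psi)$ is the relevant FW atom. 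The clean way is to take as the "target" atom the pair $(\gamma_2^{(k)} q^{(k)}, \gamma_2^{(k)})$ rescaled appropriately; since Step~4 optimizes $(\gamma_1,\gamma_2)$ exactly, $h(y^{(k+1)},\phi^{(k+1)})$ is at most $h$ evaluated at any competitor of the form $(\gamma_1 y^{(k)} + \gamma_2 q^{(k)}, \gamma_1\phi^{(k)} + \gamma_2)$.

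Second, the key step: construct a good competitor using the optimal solution $(y^*,\phi^*)$. Write $y^* = \phi^* \bar q$ with $\bar q \in B$ (if $\phi^*>0$; the case $\phi^*=0$ forces $y^*=0$, $h^*=\|a\|_{\tilde W}^2$, handled separately and trivially). The linear-minimization step guarantees $\langle \nabla_y h(y^{(k)},\phi^{(k)}), q^{(k)}\rangle \le \langle \nabla_y h(y^{(k)},\phi^{(k)}), \bar q\rangle$. Choose the competitor $(\,(1-\gamma)y^{(k)} + \gamma\phi^* q^{(k)},\ (1-\gamma)\phi^{(k)} + \gamma\phi^*\,)$, which is feasible in $C$ and is of the form optimized in Step~4 (with $\gamma_1 = 1-\gamma$, $\gamma_2 = \gamma\phi^*$). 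Plugging into the quadratic identity and using the LMO inequality plus convexity of $h$ gives
\begin{align*}
e_{k+1} \le e_k - \gamma\big(h(y^{(k)},\phi^{(k)}) - h^*\big) + \gamma^2 C_0 = (1-\gamma)e_k + \gamma^2 C_0,
\end{align*}
where $C_0$ absorbs $\|y^* - y^{(k)}\|_{\tilde W}^2 + (\phi^*-\phi^{(k)})^2$; using the boundedness above and $\|q\|_{\tilde W}\le Q$, one checks $C_0 \le \|a\|_{\tilde W}^2 Q^2$ (up to constants; I expect a clean bound $\le \|a\|_{\tilde W}^2 Q^2$ after noting $\phi^* \le \|a\|_{\tilde W}$ and $\|\phi^* q^{(k)} - y^{(k)}\|_{\tilde W} \le \phi^* Q + \|y^{(k)}\|_{\tilde W} \lesssim \|a\|_{\tilde W}(Q+1)$, then being slightly more careful to land the stated constant). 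Third, with the recursion $e_{k+1}\le(1-\gamma)e_k + \gamma^2 C_0$ holding for \emph{all} $\gamma\in[0,1]$ (since Step~4 does exact minimization, it beats the choice $\gamma = 2/(k+2)$), the standard induction yields $e_k \le \frac{2C_0}{k+2}$, which is the claim with $C_0 = \|a\|_{\tilde W}^2 Q^2$.

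The main obstacle I anticipate is pinning down the curvature constant so that it comes out exactly as $2\|a\|_{\tilde W}^2 Q^2/(k+2)$ rather than a looser multiple. This requires (i) the a priori bound $\|y^{(k)} - a\|_{\tilde W}^2 + (\phi^{(k)})^2 \le \|a\|_{\tilde W}^2$ from monotone decrease started at $(0,0)$, which also gives $\phi^* \le \|a\|_{\tilde W}$ and $\|y^{(k)}\|_{\tilde W} \le 2\|a\|_{\tilde W}$, and (ii) choosing the competitor direction carefully — possibly scaling $q^{(k)}$ by $\phi^{(k)}$ rather than $\phi^*$, or comparing against $0$ as a third atom as in the Harchaoui-style construction — so the squared direction-norm telescopes to $Q^2\|a\|_{\tilde W}^2$ cleanly. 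A secondary subtlety is the degenerate case $\phi^*=0$ and the first iteration $k=0$ where $(y^{(0)},\phi^{(0)})=(0,0)$ makes the gradient direction and the "decrease" term behave specially; these are checked directly. I do not expect the exact line search in Step~4 to cause trouble — it only helps, since any fixed-$\gamma$ bound is automatically inherited.
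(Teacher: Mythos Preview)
Your outline is essentially the same Frank--Wolfe descent-lemma argument the paper uses: exploit that Step~4 exactly minimizes over $\gamma_1,\gamma_2\ge 0$, so $h(y^{(k+1)},\phi^{(k+1)})$ is dominated by $h$ at any convex-combination competitor; then choose a competitor that makes the linear term collapse to $-\gamma e_k$ via the LMO and convexity, leaving the recursion $e_{k+1}\le(1-\gamma)e_k+\gamma^2 C_0$ and the standard induction with $\gamma=2/(k+2)$.

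The gap is precisely where you suspect it: the curvature constant. Your monotone-decrease bound $\|y^{(k)}-a\|_{\tilde W}^2+(\phi^{(k)})^2\le\|a\|_{\tilde W}^2$ only yields $\|y^{(k)}\|_{\tilde W}\le 2\|a\|_{\tilde W}$ with no $Q$ factor, so $\|\phi^* q^{(k)}-y^{(k)}\|_{\tilde W}^2$ comes out as $\|a\|_{\tilde W}^2(Q+2)^2$ or similar, not $\|a\|_{\tilde W}^2 Q^2$. The paper gets the clean constant from a different observation: because Step~4 optimizes $\phi$ exactly along a ray in $B$, one has $\phi^{(k)}=\frac{\langle a,\hat y^{(k)}\rangle_{\tilde W}}{1+\|\hat y^{(k)}\|_{\tilde W}^2}\le\|a\|_{\tilde W}/2$ for every $k\ge 1$ (and likewise $\phi^*\le\|a\|_{\tilde W}/2$). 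Since $y^{(k)}\in\phi^{(k)}B$, this immediately gives $\|y^{(k)}\|_{\tilde W}\le\phi^{(k)}Q\le\|a\|_{\tilde W}Q/2$. With the competitor scaled by any $\tilde\phi\le\|a\|_{\tilde W}/2$ (the paper takes the minimizer $\tilde\phi^{(k)}$ of the linear lower bound, but your choice $\tilde\phi=\phi^*$ works equally well), the curvature term satisfies $\|\tilde\phi\, q^{(k)}-y^{(k)}\|_{\tilde W}^2\le(\|a\|_{\tilde W}Q/2+\|a\|_{\tilde W}Q/2)^2=\|a\|_{\tilde W}^2 Q^2$. The $(\tilde\phi-\phi^{(k)})^2$ part of the curvature is dropped using $\gamma^2-\gamma\le 0$ after expanding the quadratic exactly, which you should also record explicitly. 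Once you substitute these two bounds for your monotone-decrease estimate, your argument goes through with the stated constant.
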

The proof of Theorem~\ref{FWalgCR} is deferred to Section~\ref{proof:FWalgCR}.
%

\subsection{Analysis of Time Complexity and Further Discussion}

Let $\text{EO}$ stand for the time needed to query the function value of $F(\cdot)$. The linear program in Step 2 of both the MNP and FW algorithms requires $O(N\log N + N \times EO)$ operations if implemented as a greedy method. Step 6 of the MNP method requires solving a quadratic program with no constraints, and the solution may be obtained in closed form using $O(N|S|^2)$ operations. The remaining operations in the MNP method introduce a $O(N)$ complexity term. Hence, the execution of each MAJOR or MINOR loop requires $O(N\log N + N\times EO + N|S|^2)$ operations. In the FW method, the optimization problem in Step 4 is relatively simple, since it reduces to solving a nonnegative quadratic program with only two variables and consequently introduces a complexity term $O(N)$. Therefore, the complexity of the FW algorithm is dominated by Step 2, which requires $O(N\log N + N\times EO)$ operations. 

Below we discuss the distinctions of our method from several relevant methods for computing projections on cones. Compared to the linearly convergent away-steps Frank-Wolfe algorithm~\citep{wolfe1970convergence} and its generalized versions for the conic case, such as the away-steps/pairwise non-negative matching pursuit (AMP/PWMP) techniques recently proposed in~\citep{locatello2017greedy}, the size of the active set $S$ in the (conic) MNP algorithm is always bounded by $N$ instead of the (increasing) number of iterations. Particularly, when applied to the QDSFM problems, the size of the active set $S$ in the (conic) MNP algorithm can  be even smaller in practice as it  is bounded by the number of elements that affects the value of the submodular function, which is usually smaller than $N$~\citep{li2018revisiting}. Moreover, in terms of base polytopes of submodular functions, AMP/PWMP may not perform as well as MNP  because the steepest ascent over the active set may not be executed as efficiently as the descent over polytopes obtained via the greedy approach (Step 2, which generally appears in FW-based methods).

Another benefit of the (conic) MNP method is that it can compute exact projections for the problem~\eqref{projection} in finitely many steps. Because of the structure of the cones generated from polytopes, the exact projections are expressed as linear combinations of the extreme points of the polytopes. As the number of such combinations is finite, MNP can find the exact projections in finitely many iterations, thereby inheriting a similar property of the MNP method pertaining to polytopes~\citep{fujishige2011submodular}. While this property also holds for the fully corrective non-negative matching pursuit (FCMP)~\citep{locatello2017greedy}, FCMP requires solving an additional nonnegative quadratic program in the inner loop,  and is thus much less efficient; MNP does not require an explicit solution of the nonnegative quadratic program as it allows for an efficient update of the set of atoms, $S^{(k)},$ described in Algorithm 3, through proper leveraging of the problem structure (minor loops of Algorithm 3). In contrast, the AMP/PWMP methods in~\citep{locatello2017greedy} as well as our conic FW method introduced in Section~\ref{appsec:FW} may not ensure exact projections. As the proposed algorithms for QDSFM depend highly on precise projections for the problem~\eqref{projection}, using MNP as the subroutine for computing projections has the potential to yield better accuracy than other methods discussed above. This is further evidenced in our experiments, to be discussed below.


\subsection{Numerical Illustration I: Convergence of QRCD-MNP and QRCD-FW}\label{sec:mnp-fw}
We  illustrate the convergence speeds of the RCD algorithm when using the conic MNP and the conic FW methods as subroutines for computing the conic projections, referred to as QRCD-MNP and QRCD-FW, respectively. We also compare them with two other methods, QRCD-AMP and QRCD-PWMP, where the subroutine for computing the conic projection is replaced by AMP and PWMP~\citep{locatello2017greedy}, respectively. For this purpose, we construct a synthetic QDSFM problem~\eqref{QDSFM} by fixing $N = 100$, $R = 100$, and $W_{ii} = 1,$ for all $i\in[N]$. We generate each incidence set $S_r$ by choosing uniformly at random a subset of $[N]$ of cardinality $10$, and then set the entries of $a$ to be i.i.d. standard Gaussian. 

We consider the following submodular function: for $r\in[R]$, $S\subseteq S_r$,
\begin{align}\label{arbi-submodular}
F_r(S) = \frac{\min\{|S|, |S_r/S|\}^\theta}{ (|S_r|/2)^{\theta}}, \quad \quad \theta\in \{0.25,\,0.5,\,1\}.
\end{align}

\begin{figure*}[t]
\includegraphics[trim={0cm 0cm 0cm 0},clip,width=.32\textwidth]{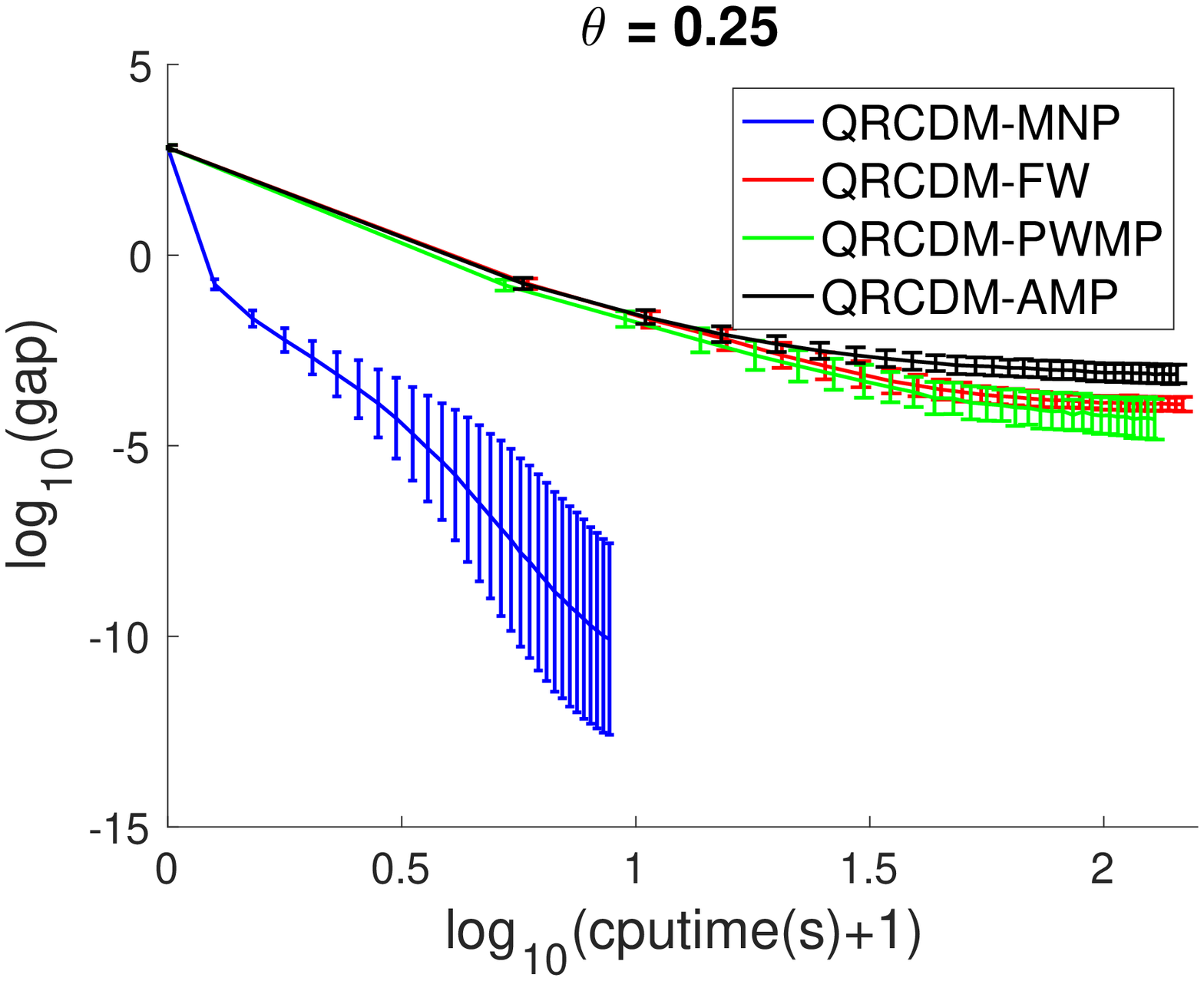}
\includegraphics[trim={0cm 0cm 0cm 0},clip,width=.32\textwidth]{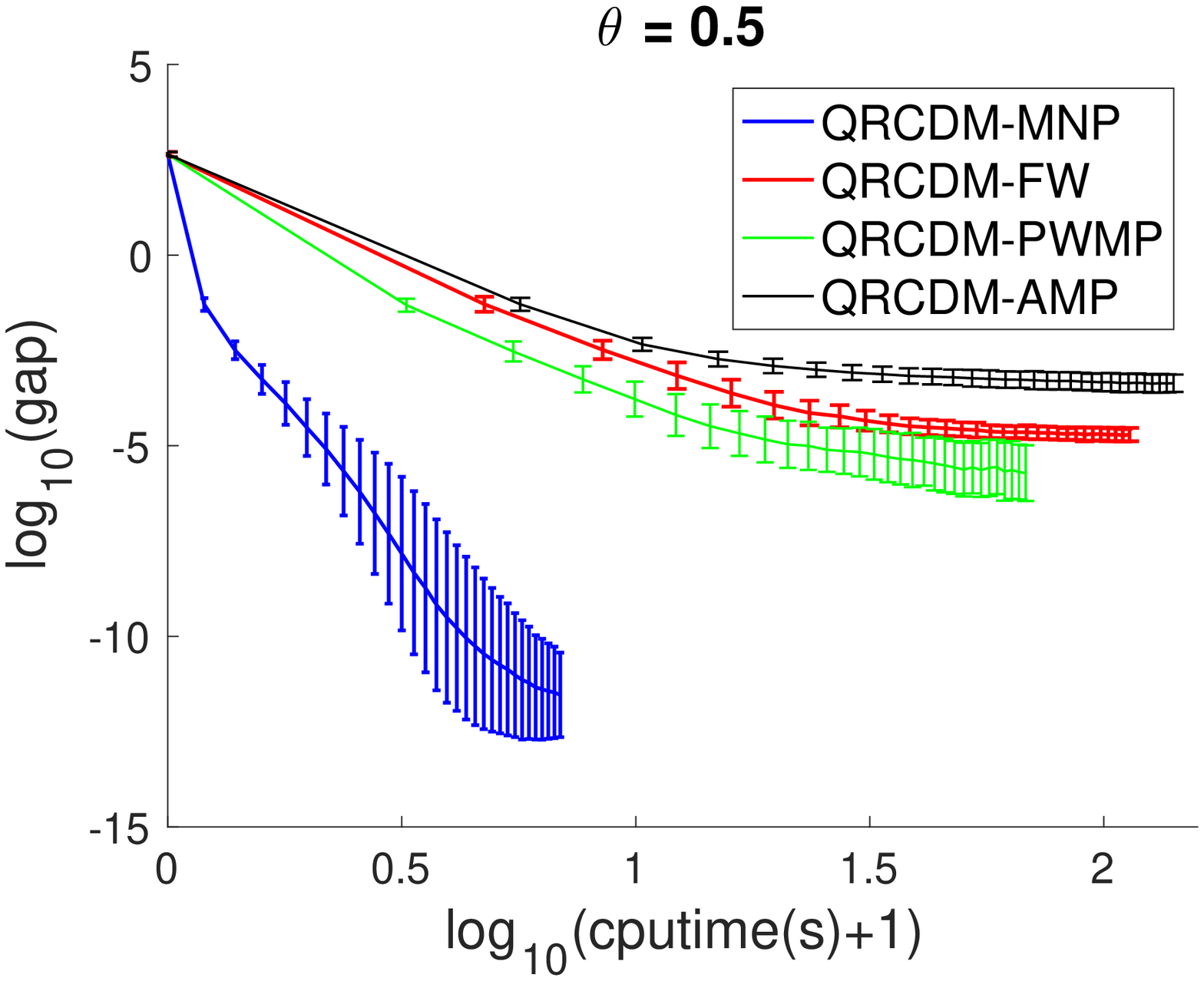}
\includegraphics[trim={0cm 0cm 0cm 0},clip,width=.32\textwidth]{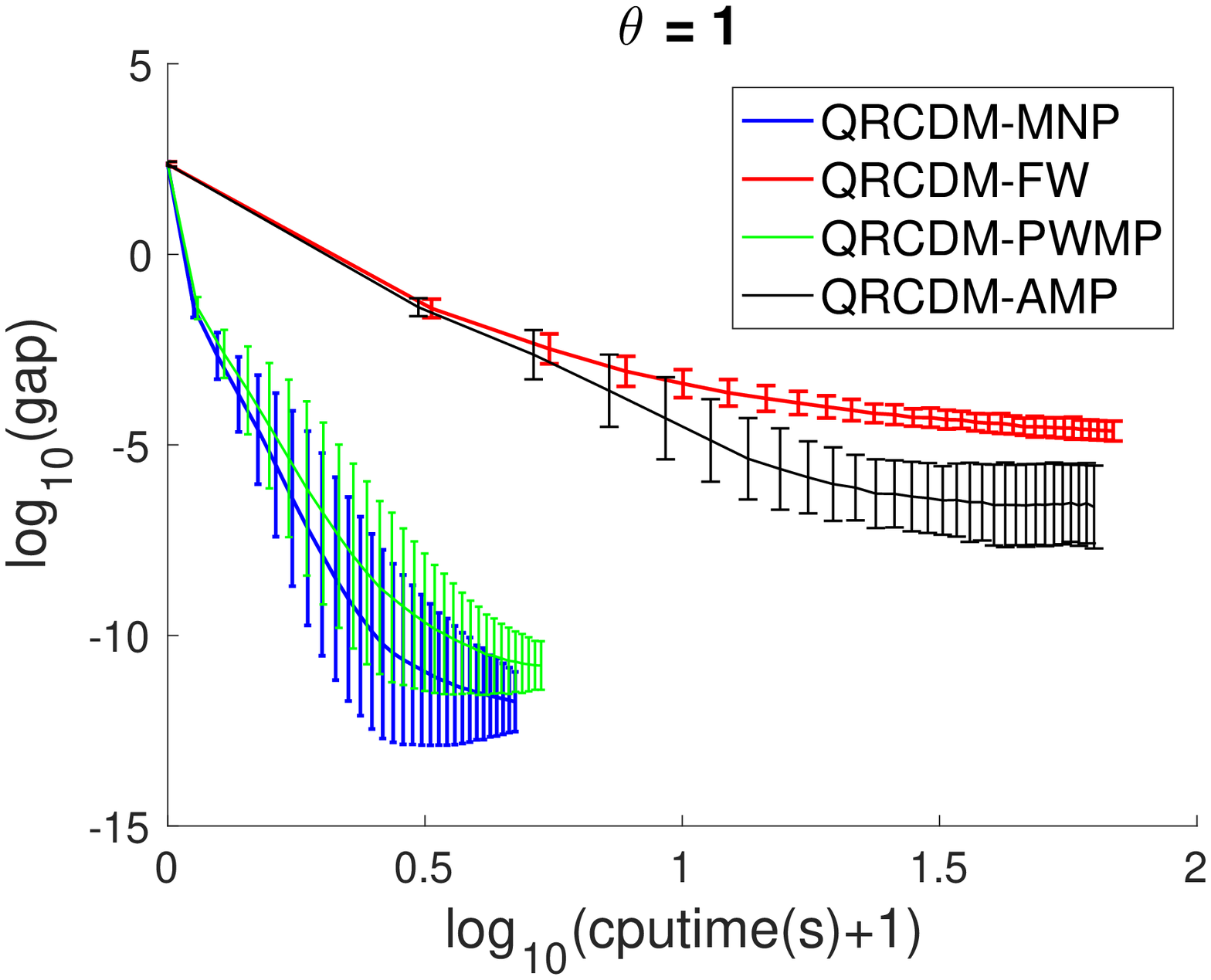}
\caption{Convergence results ($\log_{10}(\text{gap})$: mean $\pm$ std) for the QRCD-MNP, QRCD-FW, QRCD-AMP, QRCD-PWMP algorithms for general submodular functions.}
\label{fig:MNP-FW}
\end{figure*}

The number of RCD iterations is set to $300R = 3\times 10^4$. For the projections performed in the inner loop, as only the conic MNP method is guaranteed to find the exact solution in finite time, we constrain the number of iterations used by competing methods to ensure a fair comparison. To this end, note that for the AMP and PWMP methods~\citep{locatello2017greedy}, a parameter $\delta$ that controls the precision is computed as $-\gamma \|\mathbf{d}_t\|^2$ (here we use the notation from Algorithm 3 of~\citep{locatello2017greedy}). For all algorithms, we terminate the inner loop by setting $\delta = 10^{-12}$ or a maximum number of iterations $|S_r|^3 = 1000$.  All algorithms were executed on a 3.2GHz Intel Core i5 machine. For each algorithm, the average and variance of the primal gap are obtained via $100$ independent tests. The convergence results are shown in Figure~\ref{fig:MNP-FW}, which demonstrates that QRCD-MNP outperforms the other three methods.The performance of QRCD-PWMP is satisfactory when $\theta = 1$, but it deteriorates for smaller values of $\theta$, which may be a consequence of increasing the number of extreme points of cones/polytopes corresponding to the submodular functions~\eqref{arbi-submodular}. QRCD-FW and QRCD-AMP exhibit similar convergence behaviors but fail to provide high-precision solutions for reasonable CPU times.

\subsection{Numerical Illustration II: Dependency on the Scale of the Submodular Functions}

As shown in Theorems~\ref{Wolfegaurantee} and ~\ref{FWalgCR}, the complexity of the conic MNP/FW methods depends on the scale of the submodular functions $F$, specifically, $(h(y^{(k)}, \phi^{(k)}) - h^*)/\|a\|_{\tilde{W}}^2 \sim O(F^2)$. We would like to evaluate such dependency. Since we are only concerned with the computations of projection, we focus on a single submodular function $F(\cdot)$ that once again takes the form~\eqref{arbi-submodular} with parameter $\theta = 0.25$ and a ground set of cardinality $N = |S_r| = 100$. The submodular function is further scaled as $s*F(\cdot)$, where $s$ is a constant in the set $\{1, 2, 5, 10, 20\}$. We generate the entries of $a$ in~\eqref{projection} at the same scale as the submodular function, i.e., $a_v \sim \mathcal{N}(0, s^2)$. The proposed MNP, FW, and the AMP/PWMP methods from~\citep{locatello2017greedy} were tested, and their CPU execution time reported in Figure~\ref{fig:scaleprojection}. 
All methods terminate when $(h(y^{(k)}, \phi^{(k)}) - h^*)/\|a\|_{\tilde{W}}^2 \leq 10^{-5}$ over independent runs.

As can be seen from Figure~\ref{fig:scaleprojection}, 
the complexities of all methods increase as the submodular function scales up, while the conic MNP method tends to be the most robust with respect to scale changes. PWMP appears to be the most sensitive to scaling due to the growing size of the active set with increasing $s$. 


\begin{figure*}[t]
\centering
\includegraphics[trim={0cm 0cm 0cm 0},clip,width=.5\textwidth]{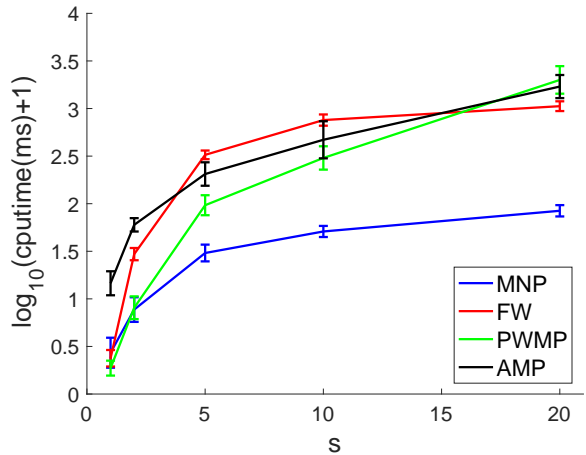}
\caption{The required CPU time for different methods used to compute the projection~\eqref{projection} versus the scale of the submodular function $s$.}
\label{fig:scaleprojection}
\end{figure*}

\section{Applications to PageRank }

In what follows, we introduce one important application of the QDSFM problem - PageRank (PR). Our treatment of the PR process relies on diffusion 
processes over hypergraphs~\citep{chan2017diffusion,chan2018spectral}. We demonstrate that what is known as a \emph{PR vector} can be efficiently computed using our proposed QDSFM solver. We also show that the newly introduced PR retains important properties of the standard PR over graphs, pertaining both to mixing and local partitioning properties~\citep{andersen2006local}.  

\subsection{Background: PageRank(PR)}\label{sec:PR}

PR is a well-known learning algorithm over graphs, first introduced in the context of webpage search~\citep{page1999pagerank}. Given a graph $G=(V, E)$, with $V = [N]$ and $E$ denoting the edge set over $V$, we let $A$ and $D$ denote the adjacency matrix and diagonal degree matrix of $G$, respectively. PR essentially reduces to finding a fixed point $p\in \mathbb{R}^N$ of the iterative process 
$$p^{(t+1)} = \alpha p^{(0)} + (1-\alpha) AD^{-1} p^{(t)},$$ 
where $p^{(0)} \in \mathbb{R}^N$ is a fixed vector and $\alpha \in (0,1)$. By using different exploration vectors $p^{(0)}$, PR can be adapted to different graph-based applications. For example, for webpages ranking~\citep{page1999pagerank}, $p^{(0)}$ may be a uniformly at random chosen vertex. For finding a local graph partitioning around a vertex $i$~\citep{andersen2006local}, $p^{(0)}$ is typically set to $e_i$, i.e., a zero-one vector with a single $1$ in its $i$-th component. 

It is easy to verify that the fixed point $p$ is a solution to the problem
\begin{align}\label{PReq}
\min_p\frac{\alpha}{1-\alpha} \|p - p^{(0)}\|_{D^{-1}}^2 + (D^{-1} p)^T(D -A)(D^{-1} p)  =\|x - a\|_{W}^2 + \langle x, L(x)\rangle,
\end{align}
where $x = D^{-1} p$, $a = D^{-1}p^{(0)}$, $W = \frac{\alpha}{1-\alpha} D$ and $L = D- A$.  

Recent work exploring the role of high-order functional relations among vertices in networks~\citep{benson2016higher, li2017inhomogeneous, li2017motif} brought forward the need for studying PR-type of problems for hypergraphs~\citep{zhou2017local,yin2017local}.\footnote{Hypergraphs are natural extensions of graphs in which edges are replaced by hyperedges, representing subsets of vertices in $V$ of size $\geq 2$.} There exist several definitions of the PR algorithm for hypergraphs. For example, the approach in~\citep{zhou2017local} relies on what is termed a \emph{multilinear PR}~\citep{gleich2015multilinear} and is limited to the setting of uniform hypergraphs, in which all hyperedges have the same cardinality. Computing a multilinear PR requires tensor multiplication and the latter method has complexity exponential in the size of the hyperedges. The PR formulation in~\citep{yin2017local} requires one to first \emph{project hypergraphs onto graphs} and then run standard graph-based PR. The projection step usually causes large distortions when the hyperedges are large and leads to low-quality hypergraph partitionings~\citep{hein2013total, li2018submodular}. 

Using equation~\eqref{PReq}, we introduce a new formulation for PR on hypergraphs that leverages nonlinear Laplacian operators $L(\cdot)$ for (un)directed hypergraphs~\citep{chan2017diffusion,chan2018spectral} and submodular hypergraphs~\citep{li2018submodular,yoshida2019cheeger}. The new PR may be applied to both uniform and non-uniform hypergraphs with arbitrarily large hyperedges. The stationary point is the solution to a QDSFM problem, where the term $\langle x, L(x)\rangle$ in~\eqref{PReq} is represented by a sum of Lov\'{a}sz extensions of submodular functions defined over hyperedges (See Table~\ref{tab:TLexamples}). 

\begin{table}[t] 
\begin{tabular}{|p{4.5cm}<{\centering}|p{4.8cm}<{\centering}|p{4.3cm}<{\centering}|}
\hline
A single component of $\langle x, L(x)\rangle$ & The combinatorial structure & The submodular function \\
\hline
$w_{r} (x_i - x_j)^2$, $S_r = \{i,j\}$ & A graph~\citep{zhou2004learning,zhu2003combining}  & $F_{r}(S) = \sqrt{w_{ij}}$ if $|S\cap \{i,j\}|=1$  \\
\hline
$w_{r} \max_{i,j \in S_r} (x_i - x_j)^2$ & A hypergraph~\citep{hein2013total} & {$F_r(S) = \sqrt{w_{r}}$ if $|S\cap S_r| \in [1, |S_r| -1]$}  \\
\hline
$w_{r}\max\limits_{(i,j)\in H_r \times T_r}(x_i -x_j)_+^2$ & A directed hypergraph~\citep{zhang2017re} & {$F_r(S) = \sqrt{w_{r}}$ if $|S\cap H_r| \geq 1$, $| ([N]/S)\cap T_r| \geq 1$}  \\
\hline
General $[f_r(x)]^2$ & A submodular hypergraph~\citep{li2018submodular,yoshida2019cheeger} & Any symmetric submodular function\\
\hline
\end{tabular}
\vspace{0.1in}
\centering 
\caption{The term $\langle x, L(x) \rangle$ for different combinatorial structures. In the third column, whenever the stated conditions are not satisfied, it is assumed that $F_r=0$. For directed hypergraphs, $H_r$ and $T_r$ are subsets of $S_r$ which we subsequently term the \emph{head} and the \emph{tail set}. For $H_r = T_r = S_r$, one recovers the setting for undirected hypergraphs.}\label{tab:TLexamples}
\end{table}

To demonstrate the utility of this particular form of PageRank, we show that the PR vector obtained as a solution of the QDSFM can be used to find a cut of a directed hypergraph (as well as undirected hypergraph) with small conductance, which is an important invariant for local partitioning and community detection~\citep{zhou2017local,yin2017local}. The results essentially generalize the mixing results for PR over graphs derived in~\citep{andersen2006local} and~\citep{andersen2008local}. 
Our main technical contribution is handling new technical challenges that arise from the non-linearity of the operators involved. In what follows, we mainly focus on directed hypergraphs. Tighter and more general results may be obtained for submodular hypergraphs~\citep{li2018submodular,yoshida2019cheeger}, which will be reported in a companion paper.

\subsection{Terminology: Boundary, Volume, Conductance} 

Let $G= (V, E)$ be a hypergraph with $V=[N]$ and hyperedge in $E$ equated with the incidence sets $\{S_r\}_{r\in [R]}$. More precisely, each hyperedge represents an incidence set $S_r$ associated with a triplet $(w_r, H_r, T_r)$, where $w_r$ is a scalar weight and $H_r$ and $T_r \subseteq S_r$ are the head and tail sets defined in Table~\ref{tab:TLexamples}. The pair $(H_r, T_r)$ defines a submodular function $F_r$ with incidence set $S_r$ according to $F_r(S) = 1$ if $|S\cap H_r|\geq 1$ and $|\bar{S} \cap T_r|\geq 1$ (where $\bar{S}$ stands for the complement of $S$), and $F_r(S) =0$ otherwise. The corresponding Lov\'{a}sz extension reads as $f_r(x) =  \max_{(i, j)\in H_r\times T_r} (x_i -x_j)_{+}$. If $S_r$ contains only two vertices for all values $r\in[R]$, the hypergraph clearly reduces to a graph; and similarly, if $H_r = T_r = S_r$, directed hypergraphs reduce to undirected hypergraphs. We also define the degree of a vertex $i$ as 
$$d_i = \sum_{r: i\in S_r} w_r,$$ 
and let $D$ denote the diagonal matrix of degree values. The volume of a set $S\subseteq V$ is defined as
\begin{align*}
\text{vol}(S) = \sum_{i\in S} d_i. 
\end{align*}
Furthermore, setting $m = \sum_{i\in V} d_i = \text{vol}(V)$ we define the \emph{boundary} of a set $S$ and its \emph{volume} as 
\begin{align*}
\partial S = \{r\in[R]| S\cap H_r \neq \emptyset, \bar{S}\cap T_r \neq \emptyset\}, \quad \text{vol}(\partial S) = \sum_{r\in \partial S} w_r = \sum_{r\in [R]}w_r F_r(S). 
\end{align*}
We also define a distribution $\pi_S$ over $V$ according to $(\pi_S)_i  = \frac{d_i}{\text{vol}(S)}$. Furthermore, using the boundary and volume, we also define the \emph{conductance} of the set $S$ as
 \begin{align}\label{conductanceset}
\Phi(S) = \frac{\text{vol}(\partial S)}{\min\{\text{vol}(S), \text{vol}(\bar{S})\}}.
\end{align}
The boundary, volume and conductance are all standard and well-studied graph-theoretic concepts~\citep{bollobas2013modern}, generalized above to accommodate hypergraphs. The \emph{hypergraph conductance} is defined as the minimal value of the conductance over all possible subsets of $V$, that is, 
\begin{align}\label{conductance}
\Phi^* = \min_{S\subset V, S\neq \emptyset, V} \Phi(S).
\end{align} 
The hypergraph conductance is an important objective function used for detecting communities of good quality in networks with higher-order functional units~\citep{benson2016higher}. Solving the optimization~\eqref{conductance} naturally yields a two-way partition of $V$ so that the corresponding procedure is also referred to as \emph{hypergraph partitioning}. \\

\subsection{The PageRank Process as QDSFM Optimization}
Next, we formally define the PR process for directed hypergraphs and show how it relates to the QDSFM problem. We borrow the definition of a diffusion process (DP) over directed hypergraphs proposed in~\citep{chan2017diffusion,chan2018spectral} to describe our PageRank (PR) process. To this end, we first need to define a Markov operator as follows. 

Suppose that $p\in \mathbb{R}^N$ is a potential over $[N]$ and let $x = D^{-1}p$. For all hyperedges $S_r$, we let $S_r^{\downarrow}(x) = \arg\min_{i\in T_r} x_i$ and $S_r^{\uparrow}(x) = \arg\max_{j\in H_r} x_j$. A hyperedge $S_r$ is said to be active if  $\arg\max_{j\in H_r} x_j > \arg\min_{i\in T_r} x_i$, and inactive otherwise.
\begin{enumerate}
\item For each active hyperedge $S_r$ and for each pair of vertices $(i,j) \in S_r^{\uparrow}(x) \times S_r^{\downarrow}(x)$, we introduce a non-negative scalar $a_r^{(ij)}$ such that $\sum\nolimits_{(i,j) \in S_r^{\uparrow}(x) \times S_r^{\downarrow}(x)} a_r^{(ij)} = w_r.$ For pairs of vertices $(i,j)\notin S_r^{\uparrow}(x) \times S_r^{\downarrow}(x)$  or belonging to inactive hyperedges, we set $a_r^{(i j)}=0$.
\item Using the above described pairwise weights, we create a symmetric matrix $A$ such that $A_{ij} = \sum_{r\in [R]} a_r^{(ij)}$ and $A_{ii} = d_i - \sum_{i,j\in V, j\neq i} A_{ij}$.
\end{enumerate}    
For undirected graphs for which $|S_r| = |T_r| = |H_r| = 2$ for all $r\in [R]$, the matrix $A$ is an adjacency matrix of a graph. Using the terminology of~\citep{chan2018spectral}, we define the Markov operator $M(p)$ based on $A$ according to 
$$M(p) = Ax = AD^{-1}p.$$ 
The Markov operator $M(p)$ is closely related to the Lov\'{a}sz extension $f_r$, which may be seen as follows. 
Let $\nabla f_r(x)$  be the subdifferential set of $f_r$ at $x$. We once again recall that as $F_r$ is submodular, $ f_r(x) = \arg\max_{y\in B_r} \langle y, x\rangle$~\citep{bach2013learning}. This gives rise to the following result.
\begin{lemma}\label{lem:PR1}
For any $p\in \mathbb{R}^N$ and $x= D^{-1}p$, it holds that $p - M(p) \in \sum_{r\in [R]} w_r f_r(x)\nabla f_r(x)$.
\end{lemma}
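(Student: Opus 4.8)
The plan is to unwind the definitions of the Markov operator $M(p)$ and of the subdifferential $\nabla f_r$, and to exhibit, for each $r$, an explicit element $y_r$ of the subdifferential of the weighted Lov\'asz term $w_r f_r(x) \nabla f_r(x)$ — more precisely an element of $w_r f_r(x)\,\partial f_r(x)$ — whose sum over $r$ equals $p - M(p)$. First I would recall that $p - M(p) = (D - A)x$, since $M(p) = AD^{-1}p = Ax$, and decompose $D - A$ using the construction of $A$: by definition $A_{ij} = \sum_r a_r^{(ij)}$ for $i\neq j$ and the diagonal entries are chosen so that each row of $A$ sums to $d_i$. Hence $D - A = \sum_{r\in[R]} L_r$, where $L_r$ is the (weighted, signed) Laplacian-type matrix built only from the pairwise weights $a_r^{(ij)}$ of hyperedge $r$; each $L_r$ is symmetric, has zero row sums, and $L_r x$ is supported on the active pairs $(i,j)\in S_r^{\uparrow}(x)\times S_r^{\downarrow}(x)$. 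Set $y_r = L_r x$; then $\sum_r y_r = (D-A)x = p - M(p)$, and it remains to show $y_r \in w_r f_r(x)\,\partial f_r(x)$.

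The heart of the argument is this last membership. For the directed-hypergraph cut function, $f_r(x) = \max_{(i,j)\in H_r\times T_r}(x_i - x_j)_+$, and the base polytope is $B_r = \mathrm{conv}\{e_i - e_j : (i,j)\in H_r\times T_r\} \cup \{0\}$ (the $0$ coming from $F_r(\emptyset)=F_r([N])=0$ and the zero lower bound). The subdifferential $\partial f_r(x)$ at an active $x$ is exactly the convex hull of those $e_i - e_j$ with $i\in S_r^{\uparrow}(x)$, $j\in S_r^{\downarrow}(x)$, i.e. the face of $B_r$ attaining $\langle y, x\rangle = f_r(x)$; at an inactive $x$, $f_r(x)=0$ and $0\in\partial f_r(x)$, consistent with $y_r = 0$. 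Now write $y_r = L_r x = \sum_{(i,j)} a_r^{(ij)}(x_i - x_j)(e_i - e_j)$; using $\sum_{(i,j)} a_r^{(ij)} = w_r$ (for active $S_r$) and $x_i - x_j = f_r(x)$ for every active pair $(i,j)\in S_r^{\uparrow}(x)\times S_r^{\downarrow}(x)$, this becomes $y_r = w_r f_r(x) \sum_{(i,j)} \frac{a_r^{(ij)}}{w_r}(e_i - e_j)$, a convex combination of the generators of the optimal face, hence an element of $w_r f_r(x)\,\partial f_r(x)$. Summing over $r$ gives $p - M(p) \in \sum_{r\in[R]} w_r f_r(x)\,\nabla f_r(x)$.

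I expect the main obstacle to be the careful bookkeeping in the active/inactive case split together with the boundary case where $S_r^{\uparrow}(x)$ or $S_r^{\downarrow}(x)$ is not a singleton (ties in the $\arg\max$/$\arg\min$): one must check that the coefficients $a_r^{(ij)}$, which are only constrained to be nonnegative and sum to $w_r$, still yield a point of the correct face of $B_r$, and that $x_i - x_j$ is constant and equal to $f_r(x)$ across all pairs in $S_r^{\uparrow}(x)\times S_r^{\downarrow}(x)$ — this last identity is immediate since $x_i = \max_{H_r} x$ for $i\in S_r^{\uparrow}(x)$ and $x_j = \min_{T_r} x$ for $j\in S_r^{\downarrow}(x)$. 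A secondary point needing a clean statement is the precise description of $\partial f_r(x)$ as a face of $B_r$; this follows from the general fact that for a submodular $F_r$ with Lov\'asz extension $f_r(x) = \max_{y\in B_r}\langle y,x\rangle$, the subdifferential is the set of maximizers, but it is worth spelling out for the specific cut function so that the identification of $y_r$ with a subgradient is transparent.
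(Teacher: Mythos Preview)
Your proposal is correct and follows essentially the same approach as the paper. The paper computes $(p-M(p))_i$ component-by-component and then, for each active $r$, exhibits the vector $v$ with entries $\sum_{j\in S_r^\downarrow} a_r^{(ij)}/w_r$ on $S_r^\uparrow$ and $-\sum_{j\in S_r^\uparrow} a_r^{(ij)}/w_r$ on $S_r^\downarrow$, checking directly that $v\in B_r$ and $\langle v,x\rangle = f_r(x)$; your $y_r = L_r x = w_r f_r(x)\sum_{(i,j)}\frac{a_r^{(ij)}}{w_r}(e_i-e_j)$ is the same object written globally, and your verification that the convex combination lies in the maximizing face of $B_r$ is the same check. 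One small cosmetic point: your description of $B_r$ as $\mathrm{conv}\{e_i-e_j\}\cup\{0\}$ is not quite the base polytope (and $0$ is already a convex combination of $e_i-e_j$ and $e_j-e_i$ when $H_r\cap T_r\neq\emptyset$), but your argument never actually uses that characterization---only that each $e_i-e_j$ with $(i,j)\in H_r\times T_r$ lies in $B_r$, which is an easy direct check and is all that is needed.
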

\begin{proof}
Consider a vertex $i\in V$. Then,
\begin{align*}
&(p - M(p))_i = p_i - A_{ii}\frac{p_i}{d_i} - \sum_{j:\in V, j \neq i} A_{ij}\frac{p_j}{d_j}  = \sum_{j:\in V, j \neq i} A_{ij}(x_i - x_j)  =  \sum_{r\in[R]}  \sum_{j: \in V, j \neq i} a_r^{(ij)}(x_i - x_j) \\
& \stackrel{1)}{=}   \sum_{r: \in[R],\,S_r \,\text{active},\, i\in S_r^{\uparrow}} w_rf_r(x)    \sum_{j:\in  S_r^{\downarrow}} a_r^{(ij)}/w_r +  \sum_{r: \in[R],\, S_r \,\text{active},\, i\in S_r^{\downarrow}} w_rf_r(x)   \sum_{j:\in S_r^{\uparrow}} (-a_r^{(ij)})/w_r 
\end{align*}
where 1) holds due to the definitions of $a_r^{(ij)}$ and $f_r(x)$. It only remains to show that for a $r\in [R]$, active $S_r$, the vector $v$  
\begin{equation*}
v_i =\left\{
\begin{array}{cc}
\sum_{j:\in  S_r^{\downarrow}} a_r^{(ij)}/w_r & i\in S_r^{\uparrow},\\
 -\sum_{j:\in  S_r^{\uparrow}} a_r^{(ij)}/w_r & i\in S_r^{\downarrow}, \\
 0& \text{otherwise}
 \end{array}\right.
\end{equation*}
satisfies $v\in \nabla f_r(x)$. Since $\sum_{(i,j)\in S_r^{\uparrow}\times S_r^{\downarrow}} a_{r}^{(ij)} = w_r$ and $a_{r}^{(ij)}\geq 0$, we know that $v\in B_r$. Moreover, because $v\in \nabla f_r(x)$, we have $\langle v, x\rangle = f_r(x)$ and thus the claim clearly holds.
\end{proof}
The statement of Lemma~\ref{lem:PR1} describes a definition of a Laplacian operator for submodular hypergraphs consistent with previous definitions~\citep{li2018submodular}. We write the underlying Laplacian operator as $L(x) = (D - A)x = p - M(p)$. Note that when $p=\pi_V$, the components of $x=D^{-1}p$ are equal and hence $M(p) = AD^{-1}p =  \frac{1}{\text{vol}(S)}A\mathbf{1}  =  \frac{1}{\text{vol}(S)}D\mathbf{1} = p$. Therefore, $\pi_V$ may be viewed as an analogue of a stationary distribution.

Given an initial potential $p_0\in \mathbb{R}^N$ and an $\alpha\in (0,1]$, the PR process is described by the following ordinary differential equation:
\begin{align}\label{eqn:pr-process}
\textbf{The PR process} \quad \frac{dp_t}{dt} = \alpha (p_0- p_t) + (1-\alpha)(M(p_t)-p_t).  
\end{align}
The choice of $\alpha = 0$ reduces the PR process to the DP defined in~\citep{chan2017diffusion,chan2018spectral}. Tracking the PR process at every time point $t$, i.e., specifying $a_{r}^{(ij)}$,  is as difficult as tracking DP, which in turn requires solving a densest subset problem\footnote{To track the diffusion of $p(t)$~\eqref{eqn:pr-process}, one has to be careful in terms of deciding which components of $p(t)$ are allowed to change within an infinitesimal time. The selection procedure should guarantee that after an infinitesimal time $p(t)\rightarrow p(t) + dp(t)$, $M(p(t))$ and $M(p(t)+ dp(t))$ satisfy $M(p(t))=M'p(t)$ and $M(p(t) + dp(t)) = M'p(t) + M'dp(t)$ for some shared choice of the matrix $M'$. Here, $M'$ captures the active diffusion network that remains unchanged in $[t, t+dt)$. In general, $M'$ will transition through discrete states in a continuous-time domain. Determining which components of $p(t)$ are allowed to change is equivalent to solving a densest set problem.}~\citep{chan2017diffusion,chan2018spectral}. However, we only need to examine and evaluate the stationary point which corresponds to the PR vector. The PR vector $pr(\alpha, p_0)$ satisfies the equation
\begin{align} \label{eq:PRpoint}
 \alpha (p_0- pr(\alpha, p_0)) + (1-\alpha)(M(pr(\alpha, p_0))-pr(\alpha, p_0))   = 0.
\end{align}
From Lemma~\ref{lem:PR1}, we know that $pr(\alpha, p_0)$ can be obtained by solving a QDSFM. Let $x_{pr}= D^{-1} pr(\alpha, p_0)$. Then  
\begin{align}\label{eq:PRopt}
 &\alpha (p_0- Dx_{pr}) -(1-\alpha)\sum_{r\in[R]} w_r f_r(x_{pr})\nabla f_r(x_{pr})  \ni 0 \nonumber \\
 & \Leftrightarrow x_{pr} = \arg\min_{x} \|x - x_0\|_{W}^2 + \sum_{r\in[R]} [f_r'(x)]^2,  
\end{align}
where $x_0 = D^{-1} p_0, W=\frac{\alpha}{1-\alpha}D$ and $f_r' = \sqrt{w_r} f_r$. 

\subsection{Complexity of Computing PageRank over Directed Hypergraphs}\label{appsec:exactprojection}

We now analyze the computation complexity of computing PageRank over directed hypergraphs using QSDFM solvers. First, due to the specific structure of the Lov\'{a}sz extension $f_r'$ in~\eqref{eq:PRopt}, we devise an algorithm of complexity $O(|S_r|\log |S_r|)$ that \emph{exactly} computes the required projections onto the induced cones. This projection algorithm is much more efficient than the conic FW and MNP methods designed for general polytopes. We focus on projections with respect to one particular value $r$; since the scaling $w_r$ does not affect the analysis, we henceforth omit the subscript $r$ and the superscript $'$ and simply use $f$ instead of $f_r'$.

The key idea is to revert the projection $\Pi_{C}(a; \tilde{W})$ back to its primal form. In the primal setting, optimization becomes straightforward and only requires careful evaluation of the gradient values. First, following a similar strategy as described in Lemma~\ref{dualform}, it is easy to show that 
\begin{align*}
\min_{z} \frac{1}{2}\|z - b\|_{W}^2 + \frac{1}{2}[f(z)]^2
\end{align*}
is the dual of the problem~\eqref{projection}, where $W = \tilde{W}^{-1}$, $b = \frac{1}{2}W^{-1}a$, and  $f$ is the Lov{\'a}sz extension corresponding to the base polytope $B$. Then, one has $y = a - 2Wz$, $\phi= 2\langle y, z \rangle_{W^{-1}}$. 

Next, recall that for a directed hyperedge, we introduced head and tail sets $H, T$, respectively, and $f(z) = (\max_{i\in H}z_j -  \min_{j\in T}z_j)_+$. Clearly, when $H,\,T$ both equal to the corresponding incidence set, the directed hypergraph reduces to an undirected hypergraph. To solve this optimization problem, define two intermediate variables $\gamma = \max_{i\in H}z_i $ and $\delta = \min_{j\in T}z_j$. Denote the derivatives with respect to $\gamma$ and $\delta$ as $\triangle_\gamma$ and $\triangle_\delta$, respectively. 
Hence, $\triangle_\gamma$ and $\triangle_\delta$ saitsfy
\begin{align*}
\triangle_\gamma = (\gamma - \delta)_+ + \sum_{i\in S_H(\gamma)} W_{ii}(\gamma - b_i), \quad \triangle_\delta = -(\gamma - \delta)_+ + \sum_{j\in S_T(\delta) } W_{jj}(\delta - b_j),
\end{align*}
where $S_H(\gamma) = \{i| i\in H, b_i \geq \gamma\}$ and $S_T(\delta) = \{j| j\in T, b_j \leq \delta\}$. The optimal values of $\gamma$ and $\delta$ are required to simultaneously satisfy $\triangle_\gamma=0$ and $\triangle_\delta=0$. 
Algorithm 5 is designed to find such values of $\gamma$ and $\delta$. The ``search'' for $(\gamma, \delta)$ starts from $(\max_{i\in H} b_i, \min_{j\in T} b_j)$, and one gradually decreases $\gamma$ and increases $\delta$ while ensuring $\triangle_\gamma = -\triangle_\delta$ (see Steps 5-10 in Algorithm 5). The reason for imposing the constraint $\triangle_\gamma = -\triangle_\delta$ during the procedure is to decrease the number of tuning parameters. As a result, the constraint implicitly transforms the procedure into simple line search.
The complexity of Algorithm 5 is dominated by the sorting step (Step 1), which requires $O(|S_r|\log |S_r|)$ operations, as $w_H$, $w_T$, $\triangle_\gamma$, and $\triangle_\delta$ can all be efficiently tracked within the inner loops.  

\begin{table}[t]
\centering
\begin{tabular}{l}
\hline
\label{generalalg}
\textbf{Algorithm 5: } \textbf{An Exact Projection Algorithm for a Directed Hyperedge} \\
\hline
\textbf{Input}:  $W$, $b$, $H$, $T$\\
1.\; Sort $\{b_i\}_{i\in H}$ and $\{b_j\}_{j\in T}$. \\
2.\; Initialize $\gamma \leftarrow \max_{i\in H} b_i$ and $\delta \leftarrow \min_{j\in T} b_j$. \\
3.\; \textbf{If} $\gamma \leq \delta$, \textbf{return} $z = b$. \\
4.\; Iteratively execute: \\
5.  \;\;\;\;  $w_H\leftarrow \sum_{i\in S_H(\gamma)}W_{ii}$, $w_T \leftarrow \sum_{j\in S_T(\delta)}W_{jj}$ \\
6.  \;\;\;\;  $\gamma_1 \leftarrow \max_{i\in H/S_H(\gamma)} b_v$, $\delta_1 \leftarrow \delta + (\gamma - \gamma_1)w_H/w_T $\\
7.  \;\;\;\;  $\delta_2 \leftarrow \min_{j\in T/S_T(\delta)} b_v$, $\gamma_2 \leftarrow \gamma - (\delta_2 - \delta)w_T/w_H  $ \\
8.  \;\;\;\;  $k^* \leftarrow \arg\min_{k\in\{1,2\}} \delta_k $ \\
9.  \;\;\;\;  \textbf{If} $\gamma_{k^*} \leq \delta_{k^*}$ or $\triangle_{\gamma_{k^*}} \leq 0$, \textbf{break} \\
10.\;\;\;\;  $(\gamma, \delta) \leftarrow (\gamma_{k^*}, \delta_{k^*})$ \\
11.\;$(\gamma, \delta) \leftarrow  ( w_T,  w_H)\frac{\triangle_{\gamma}}{w_Tw_H + w_T + w_H}$ \\
12.\;Set $z_i$  to $\gamma$, if $i\in S_H(\gamma)$, to $\delta$, if $i\in S_T(\delta)$, and to $b_i$ otherwise. \\
\hline
\end{tabular}
\end{table}

Combining the projection method in Algorithm 5 and Corollary~\ref{specialcase} with parameter choice $\beta=\frac{\alpha}{1-\alpha}$, we arrive at the following result summarizing the overall complexity of running the PageRank process on (un)directed hypergraphs. 
\begin{corollary}
The PageRank problem on (un)directed hypergraphs can be obtained by solving~\eqref{eq:PRopt}. A combination of Algorithm 1 (RCD) and Algorithm 5 returns an $\epsilon$-optimal solution in expectation with total computation complexity: 
$$O\left(N^2\max\left\{1,9\frac{1-\alpha}{\alpha}\right\}\max_{i,j\in[N]}\frac{D_{ii}}{D_{jj}}\sum_{r\in[R]} |S_r|\log|S_r|\log\frac{1}{\epsilon}\right).$$ 
A combination of Algorithm 2 (AP) and Algorithm 5 returns an $\epsilon$-optimal solution with total computation complexity:  
$$O\left(N^2\max\left\{1,9\frac{1-\alpha}{\alpha}\right\}\max_{i,j\in[N]}\frac{\Psi_{jj}D_{ii}}{D_{jj}}\sum_{r\in[R]} |S_r|\log|S_r|\log\frac{1}{\epsilon}\right).$$
\end{corollary}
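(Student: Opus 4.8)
The plan is to obtain the statement as a bookkeeping corollary: instantiate the abstract parameters of Corollary~\ref{specialcase} for the PageRank problem \eqref{eq:PRopt} and for Algorithm 5, and then multiply the iteration counts by the per-iteration projection cost.

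First I would check that \eqref{eq:PRopt} is an instance of the QDSFM problem \eqref{QDSFM} in exactly the form Corollary~\ref{specialcase} assumes. By \eqref{eq:PRopt} the weight matrix is $W = \frac{\alpha}{1-\alpha}D$, so the hyper-parameter in Corollary~\ref{specialcase} is $\beta = \frac{\alpha}{1-\alpha}$, which gives $\max\{1,9\beta^{-1}\} = \max\{1,\,9\frac{1-\alpha}{\alpha}\}$. The constituent submodular functions are $F_r' = \sqrt{w_r}F_r$ with $F_r(S)\in\{0,1\}$, hence $\max_{S\subseteq V}[F_r'(S)]^2 = w_r$; therefore the diagonal matrix appearing in Corollary~\ref{specialcase}, with entries $\sum_{r:\,i\in S_r}\max_S[F_r'(S)]^2 = \sum_{r:\,i\in S_r}w_r = d_i$, is precisely the hypergraph degree matrix $D$. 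Finally, each $F_r'$ is the (scaled) submodular function of a directed hyperedge, so the conic projections needed in Step 3 of Algorithm 1 (with matrix $W^{-1}$) and in Step 5 of Algorithm 2 (with matrix $\Psi W^{-1}$) are covered by Algorithm 5, which computes them \emph{exactly} in $O(|S_r|\log|S_r|)$ time; the primal output is converted back to $(y_r,\phi_r)$ in $O(|S_r|)$ further operations, as in Section~\ref{appsec:exactprojection}.

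Next I would quote Corollary~\ref{specialcase}: with the substitutions above, RCD produces $(y,\phi)$ with $d((y,\phi),\Xi)\le\epsilon$ after an expected $O\!\big(N^2R\max\{1,9\frac{1-\alpha}{\alpha}\}\max_{i,j}\frac{D_{ii}}{D_{jj}}\log\frac1\epsilon\big)$ iterations, and AP after $O\!\big(N^2\max\{1,9\frac{1-\alpha}{\alpha}\}\max_{i,j}\frac{\Psi_{jj}D_{ii}}{D_{jj}}\log\frac1\epsilon\big)$ iterations. It then remains to multiply by the per-iteration cost. An RCD iteration touches only the randomly chosen cone $C_r$; maintaining the running sum $\sum_{r'}y_{r'}$ (updated in $O(|S_r|)$ time after each projection, since $y_r$ is supported on $S_r$), the expected cost of one iteration is $\frac1R\sum_{r\in[R]}O(|S_r|\log|S_r|)$, so the factor $R$ cancels and the total becomes $O\!\big(N^2\max\{1,9\frac{1-\alpha}{\alpha}\}\max_{i,j}\frac{D_{ii}}{D_{jj}}\sum_{r}|S_r|\log|S_r|\log\frac1\epsilon\big)$. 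An AP iteration instead projects onto \emph{every} cone, at cost $\sum_{r}O(|S_r|\log|S_r|)$, and multiplying by the AP iteration count gives the second bound. Because Algorithm 5 returns exact projections, no inner-loop error accumulates, and $x_{pr} = D^{-1}pr(\alpha,p_0)$ is read off from the dual optimum via $x = a - \frac12 W^{-1}\sum_r y_r$ (Lemma~\ref{dualform}) at negligible extra cost.

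The only points needing real care are the identification of Corollary~\ref{specialcase}'s diagonal matrix with the hypergraph degree matrix $D$, and matching the $R$ (resp.\ the absence of $R$) in the RCD (resp.\ AP) iteration count against the one-cone (resp.\ all-cone) per-iteration projection cost so that the factors of $R$ cancel correctly; the remainder is direct substitution and involves no genuinely hard step.
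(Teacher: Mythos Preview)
Your proposal is correct and follows exactly the approach the paper takes: the paper simply states that one combines Corollary~\ref{specialcase} with the choice $\beta=\frac{\alpha}{1-\alpha}$ and the $O(|S_r|\log|S_r|)$ cost of Algorithm~5, without spelling out details. In fact you are more careful than the paper, since you explicitly verify that the diagonal matrix in Corollary~\ref{specialcase} coincides with the degree matrix $D$ (via $\max_S[F_r'(S)]^2=w_r$) and you make precise how the factor $R$ in the RCD iteration count cancels against the expected per-iteration projection cost $\frac{1}{R}\sum_r O(|S_r|\log|S_r|)$.
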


\subsection{Analysis of Partitions Based on PageRank}
We are now ready to analyze hypergraph partitioning procedures~\eqref{conductance} based on our definition of PageRank. The optimization problem~\eqref{conductance} is NP-hard even for graphs so we only seek approximate solutions. Consequently, an important application of our version of PageRank is to provide an approximate solution for~\eqref{conductance}. We start by proving that our version of PageRank, similarly to the standard PageRank over graphs~\citep{andersen2006local}, satisfies a mixing result derived based on so-called Lov\'{a}sz-Simonovits curves~\citep{lovasz1990mixing}. This result may be further used to prove that \emph{Personalized} PageRank leads to partitions with small conductance\footnote{Recall that personalized PageRank is a PR process with initial distribution $p_0 = 1_i$, for some vertex $i\in V$.}. The main components of our proof are inspired by the proofs for PageRank over graphs~\citep{andersen2006local}. The novelty of our approach is that we need several specialized steps to handle the nonlinearity of the Markov operator $M(\cdot)$, which in the standard graph setting is linear. We postpone the proofs of all our results to Section~\ref{proof:PR}. 

We first introduce relevant concepts and terminology necessary for presenting our results. 

\emph{Sweep cuts} are used to partition hypergraphs based on the order of components of a distribution $p$ and are defined as follows. Set $x= D^{-1} p$ and sort the components of $x$ so that
\begin{align}\label{eq:PRorder}
x_{i_1} \geq x_{i_2} \geq \cdots \geq x_{i_N}.
\end{align}
Let $\mathcal{S}_j^p = \{x_{i_1}, x_{i_2}, ..., x_{i_j}\}$. Recall the definition of the conductance of a set $\Phi(\cdot)$~\eqref{conductanceset} and evaluate 
$$j^* = \arg\min_{j\in [N]} \Phi(\mathcal{S}_j^p).$$ 
The pair $(\mathcal{S}_{j^*}^p, \bar{\mathcal{S}}_{j^*}^p)$ is referred to as a \emph{sweep cut}. Furthermore, define $\Phi_p = \Phi(\mathcal{S}_{j^*}^p)$; in words, $\Phi_p$ is the smallest conductance that may be achieved by sweep cuts of a potential $p$. Our target is to show that the conductance based on a Personalized PR vector, i.e., $\Phi_{pr(\alpha, 1_v)}$, may approximate $\Phi^*$. 

Intuitively, the conductance of a hypergraph for a given $p_0$ and $\alpha$ captures how ``close'' a PR vector $pr(\alpha, p_0)$ is to the stationary distribution $\pi_V$: a large conductance results in mixing the potentials of different vertices and thus makes $pr(\alpha, p_0)$ closer to $\pi_V$. To characterize the degree of ``closeness'', we introduce the Lov\'{a}sz-Simonovits curve. Given a potential vector $p\in \mathbb{R}^N$ and $x = D^{-1}p$, suppose that the order of the components in $x$ follows 
equation~\eqref{eq:PRorder}. Let $\text{vol}(\mathcal{S}_0^p) = 0$. Define a piecewise linear function $I_p(\cdot): [0, m]\rightarrow [0,1]$ according to
\begin{align*}
I_p (z) &= p(\mathcal{S}_{j-1}^p) + \frac{z - \text{vol}(\mathcal{S}_{j-1}^p)}{d_{v_{j}}} p_{v_{j}}, \;  \text{for}\; \text{vol}(\mathcal{S}_{j-1}^p) \leq  k \leq \text{vol}(\mathcal{S}_{j}^p), \; j\in[N]. 
\end{align*}
Note that the Lov\'{a}sz-Simonovits curve of $\pi_V$ contains only one segment by its very definition. It is easy to check that $I_p(z)$ is continuous and concave in $z$. Moreover, for any set $S\subseteq [N]$, we have $p(S) \leq I_p(\text{vol}(S))$. We further write $V_p = \{j: x_{i_j}>x_{i_{j+1}}\}$ and refer to $\{\text{vol}(\mathcal{S}_j^p)\}_{j\in V_p}$ as the (exact) break points of $I_p$. The relevance of the Lov\'{a}sz-Simonovits curve in the aforementioned context may be understood as follows. Although the  Lov\'{a}sz-Simonovits curve is piecewise linear, the changes of the slopes at the break points may be used to describe its ``curvatures''. A large (small) curvature corresponds to the case that the distribution of the potential is far from (close to) the stationary distribution, which further implies a small (large) conductance of the graphs/hypergraphs (see Theorem~\ref{thm:PRmixing}). Therefore, analyzing the Lov\'{a}sz-Simonovits curvature is of crucial importance in the process of bounding the conductance. The first step of this analysis is presented in Lemma~\ref{lem:RPmixing1}, which establishes the ``curvatures'' at the break points.
\begin{lemma}\label{lem:RPmixing1}
Let $p = pr(\alpha, p_0)$, $x= D^{-1}p$ and $j\in V_p$. Then, 
\begin{align*}
I_p(\text{vol}(\mathcal{S}_j^p)) \leq \frac{\alpha}{2-\alpha} p_0(\mathcal{S}_j^p) + \frac{1- \alpha}{2-\alpha}[I_p(\text{vol}(\mathcal{S}_j^p) - \text{vol}(\partial \mathcal{S}_j^p) )+I_p(\text{vol}(\mathcal{S}_j^p) + \text{vol}(\partial \mathcal{S}_j^p) ].
\end{align*}
Furthermore, for $k\in[0, m]$,
\begin{align*}
I_p(k) \leq  p_0(k) \leq  I_{p_0}(k).
\end{align*}
\end{lemma}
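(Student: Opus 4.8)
The plan is to translate the stationarity equation~\eqref{eq:PRpoint} for $p = pr(\alpha, p_0)$ into a statement about the mass $p(\mathcal{S}_j^p)$ on a level set, and then control the Markov-operator term $M(p)(\mathcal{S}_j^p)$ using the combinatorial structure of the active hyperedges crossing $\mathcal{S}_j^p$. First I would rewrite~\eqref{eq:PRpoint} as
\[
pr(\alpha,p_0) = \frac{\alpha}{2-\alpha}\, p_0 + \frac{2(1-\alpha)}{2-\alpha}\cdot\frac{M(pr(\alpha,p_0)) + pr(\alpha,p_0)}{2},
\]
i.e. the PR vector is a convex combination of $p_0$ and the ``lazy'' Markov step $\tfrac12(I+M)$ applied to itself (the laziness factor $\tfrac12$ is exactly what makes the Lov\'asz--Simonovits argument go through, as in~\citep{andersen2006local}). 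Summing the $i$-th coordinate over $i\in\mathcal{S}_j^p$ gives
\[
p(\mathcal{S}_j^p) = \frac{\alpha}{2-\alpha}\,p_0(\mathcal{S}_j^p) + \frac{1-\alpha}{2-\alpha}\Big[p(\mathcal{S}_j^p) + M(p)(\mathcal{S}_j^p)\Big].
\]
Since $j\in V_p$, the set $\mathcal{S}_j^p$ is exactly a level set of $x=D^{-1}p$ (no ties at the boundary), so $p(\mathcal{S}_j^p) = I_p(\mathrm{vol}(\mathcal{S}_j^p))$, which handles the left-hand side and the first term on the right.

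The crux is then to show
\[
p(\mathcal{S}_j^p) + M(p)(\mathcal{S}_j^p) \le I_p(\mathrm{vol}(\mathcal{S}_j^p) - \mathrm{vol}(\partial\mathcal{S}_j^p)) + I_p(\mathrm{vol}(\mathcal{S}_j^p) + \mathrm{vol}(\partial\mathcal{S}_j^p)).
\]
Here I would use the definition of $A$: write $p(\mathcal{S}_j^p) + M(p)(\mathcal{S}_j^p) = \sum_{i\in\mathcal{S}_j^p}(p_i + (AD^{-1}p)_i)$ and expand $(AD^{-1}p)_i = \sum_k A_{ik}x_k$. Splitting the double sum by whether the edge $(i,k)$ stays inside $\mathcal{S}_j^p$ or crosses its boundary, the diagonal-plus-internal contributions telescope into $p(\mathcal{S}_j^p)$ worth of mass that can be ``shifted'' and the only net effect comes from boundary edges $r\in\partial\mathcal{S}_j^p$. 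For each such active hyperedge, the weights $a_r^{(ik)}$ are supported only on pairs in $S_r^{\uparrow}(x)\times S_r^{\downarrow}(x)$ with $S_r^{\uparrow}(x)\subseteq\mathcal{S}_j^p$ and $S_r^{\downarrow}(x)\subseteq\bar{\mathcal{S}}_j^p$ (by the ordering and activity), and $\sum a_r^{(ik)} = w_r$; so this hyperedge transports exactly $w_r$ units of ``$D$-normalized'' mass across the cut in the direction of decreasing $x$. Summing, the term $p(\mathcal{S}_j^p)+M(p)(\mathcal{S}_j^p)$ equals twice the mass $p(\mathcal{S}_j^p)$ adjusted by moving $\mathrm{vol}(\partial\mathcal{S}_j^p)$ units of volume out of the set and $\mathrm{vol}(\partial\mathcal{S}_j^p)$ units in; concavity of $I_p$ and the facts $p(S)\le I_p(\mathrm{vol}(S))$ and that $I_p$ upper-bounds the mass function for every subset then yield the two-point bound $I_p(\mathrm{vol}(\mathcal{S}_j^p)-\mathrm{vol}(\partial\mathcal{S}_j^p)) + I_p(\mathrm{vol}(\mathcal{S}_j^p)+\mathrm{vol}(\partial\mathcal{S}_j^p))$. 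Substituting back and solving the resulting inequality for $p(\mathcal{S}_j^p)=I_p(\mathrm{vol}(\mathcal{S}_j^p))$ gives the first claim.

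For the second claim, $I_p(k)\le I_{p_0}(k)$ is immediate once $I_p(k)\le p_0(k)$ is shown for all $k\in[0,m]$, because $I_{p_0}$ is by definition the concave upper envelope built from $p_0$-mass on level sets, so $p_0(k)\le I_{p_0}(k)$; I am slightly abusing notation writing $p_0(k)$ for the value obtained by including fractional vertices in decreasing $x_0$-order, but this is exactly the standard piecewise-linear interpolant. To get $I_p(k)\le p_0(k)$ I would induct on the break points: at a break point $k=\mathrm{vol}(\mathcal{S}_j^p)$ the first part of the lemma bounds $I_p(k)$ by a convex combination of $\frac{\alpha}{2-\alpha}p_0(\mathcal{S}_j^p)$ and (concave, hence sub-averaged) values of $I_p$ at $k\pm\mathrm{vol}(\partial\mathcal{S}_j^p)$; using the inductive hypothesis that $I_p\le I_{p_0}$ at smaller arguments (and monotonicity/concavity at the larger one) plus $p_0(\mathcal{S}_j^p)\le I_{p_0}(k)$ collapses the right-hand side to $I_{p_0}(k)$, hence $\le p_0(\cdot)$ interpolated; piecewise linearity extends the bound between break points. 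I expect the main obstacle to be the boundary-edge bookkeeping in the middle paragraph: carefully justifying that the nonlinear operator $M$ moves exactly $\mathrm{vol}(\partial\mathcal{S}_j^p)$ units of volume across the sweep cut, using that the $a_r^{(ik)}$ are concentrated on the extremal head/tail vertices of each active hyperedge and that those extremal vertices lie on the correct sides of the cut — this is precisely where the argument departs from the linear graph case of~\citep{andersen2006local} and requires invoking the structure of $S_r^{\uparrow}(x), S_r^{\downarrow}(x)$ from the definition of the diffusion process.
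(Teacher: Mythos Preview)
Your argument for the first inequality is essentially the paper's proof, just phrased through the matrix $A$ rather than via Lemma~\ref{lem:PR1}. The paper writes $p-M(p)\in\sum_r w_r f_r(x)\nabla f_r(x)$, uses that $j\in V_p$ forces $\langle\nabla f_r(x),1_{\mathcal{S}_j^p}\rangle=F_r(\mathcal{S}_j^p)$, and then splits $2p(\mathcal{S}_j^p)-\sum_r w_rF_r(\mathcal{S}_j^p)f_r(x)$ as $\bigl(I_p(k)-\sum_r w_rF_r(\mathcal{S}_j^p)\max_{S_r^{\uparrow}}x\bigr)+\bigl(I_p(k)+\sum_r w_rF_r(\mathcal{S}_j^p)\min_{S_r^{\downarrow}}x\bigr)$, bounding each piece by the corresponding shifted $I_p$ value. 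Your edge-by-edge bookkeeping of $A_{ik}$ leads to exactly the same identity $p(\mathcal{S}_j^p)+M(p)(\mathcal{S}_j^p)=2p(\mathcal{S}_j^p)-\sum_{r\in\partial\mathcal{S}_j^p}w_rf_r(x)$, and your observation that $S_r^{\uparrow}\subseteq\mathcal{S}_j^p$, $S_r^{\downarrow}\subseteq\bar{\mathcal{S}}_j^p$ for $r\in\partial\mathcal{S}_j^p$ (which does use $j\in V_p$ to avoid ties) is precisely what makes the $I_p$ bounds go through.

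Your treatment of the second claim is overcomplicated. You propose an induction on break points, invoking an inductive hypothesis $I_p\le I_{p_0}$ at ``smaller arguments''; but $k\pm\mathrm{vol}(\partial\mathcal{S}_j^p)$ are not smaller break points in any natural ordering, so the induction as sketched does not close. The paper dispatches this in one line: by concavity of $I_p$, the average $\tfrac12\bigl[I_p(k-c)+I_p(k+c)\bigr]\le I_p(k)$, so the first inequality already gives $I_p(k)\le\frac{\alpha}{2-\alpha}p_0(\mathcal{S}_j^p)+\frac{2(1-\alpha)}{2-\alpha}I_p(k)$, hence $I_p(k)\le p_0(\mathcal{S}_j^p)\le I_{p_0}(k)$ at each break point; piecewise linearity of $I_p$ then extends the bound $I_p\le I_{p_0}$ to all of $[0,m]$. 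No induction or appeal to $I_{p_0}$ at intermediate steps is needed.
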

\begin{remark}
In comparison with the undirected graph case (Lemma 5~\citep{andersen2006local}), where the 
result holds for arbitrary $S\subseteq V$, our claim is true only for $\mathcal{S}_j^p$ for which $j\in V_p$. 
However, this result suffices to establish all relevant PR results.  
\end{remark}
Using the upper bound on the break points in $I_p$, we can now construct a curve that uniformly bounds $I_p$ using $\Phi_p$.
\begin{theorem}\label{thm:PRmixing}
Let $p = pr(\alpha, p_0)$ be the previously defined PR vector, and let $\Phi_p$ be the minimal conductance 
achieved by a sweep cut. For any integer $t\geq 0$ and any $k\in [0, m]$, the following bound holds:
\begin{align*}
I_p(k) \leq \frac{k}{m} + \frac{\alpha}{2-\alpha}t + \sqrt{\frac{\min\{k, m-k\}}{\min_{i: (p_0)_i> 0} d_i}} \left(1- \frac{\Phi_p^2}{8}\right)^t.
\end{align*} 
\end{theorem}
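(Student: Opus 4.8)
I would prove the bound by induction on $t$, with Lemma~\ref{lem:RPmixing1} supplying the one-step contraction at the break points of $I_p$ and the concavity of $I_p$ promoting a break-point estimate to a uniform one. Write $d_{\min}=\min_{i:(p_0)_i>0}d_i$ and introduce the candidate upper curve
\[
R_t(k)=\frac{k}{m}+\frac{\alpha}{2-\alpha}\,t+\sqrt{\frac{\min\{k,m-k\}}{d_{\min}}}\Bigl(1-\tfrac{\Phi_p^2}{8}\Bigr)^{t}.
\]
Since $k\mapsto k/m$ is linear and $k\mapsto\sqrt{\min\{k,m-k\}}=\min\{\sqrt{k},\sqrt{m-k}\}$ is a minimum of two concave functions and hence concave, each $R_t$ is concave on $[0,m]$, with $R_t(0)\ge 0=I_p(0)$ and $R_t(m)\ge 1=I_p(m)$. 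The goal is to show $I_p(k)\le R_t(k)$ for every $k\in[0,m]$ and every integer $t\ge 0$, which is exactly the claim.

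For the base case $t=0$, the plan is to use the second assertion of Lemma~\ref{lem:RPmixing1}, namely $I_p(k)\le I_{p_0}(k)$, together with the facts that $I_{p_0}$ is concave, $I_{p_0}(k)\le 1$, and its right-derivative at $0$ is at most $1/d_{\min}$ (so $I_{p_0}(k)\le k/d_{\min}$). A short case split ($k\le d_{\min}$: bound $k/d_{\min}\le\sqrt{k/d_{\min}}$; $d_{\min}<k\le m/2$: bound $1\le\sqrt{k/d_{\min}}$; $k>m/2$: play $k/m$ against $1$) then yields $I_{p_0}(k)\le R_0(k)$. For the inductive step, assume $I_p\le R_t$ on $[0,m]$. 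Because $I_p$ is piecewise linear with vertices exactly at $\{\mathrm{vol}(\mathcal{S}_j^p):j\in V_p\}$ (plus the endpoints $0,m$) and $R_{t+1}$ is concave, it suffices to verify $I_p\le R_{t+1}$ at those finitely many points: a piecewise-linear function lying below a concave function at all its vertices lies below it on every sub-interval, by comparing the chord of $R_{t+1}$ with the corresponding linear piece of $I_p$. Thus fix $j\in V_p$ and abbreviate $x=\mathrm{vol}(\mathcal{S}_j^p)$, $b=\mathrm{vol}(\partial\mathcal{S}_j^p)$. One checks $0\le b\le\min\{x,m-x\}$ (since $\Phi(\mathcal S_j^p)\le1$, so $x\pm b\in[0,m]$) and $b\ge\Phi_p\min\{x,m-x\}$ (minimality of $\Phi_p$ over sweep cuts). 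Lemma~\ref{lem:RPmixing1}, $p_0(\mathcal S_j^p)\le1$, and the inductive hypothesis then give
\[
I_p(x)\le\frac{\alpha}{2-\alpha}+\frac{1-\alpha}{2-\alpha}\bigl[R_t(x-b)+R_t(x+b)\bigr].
\]

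It remains to estimate $R_t(x-b)+R_t(x+b)$. The linear and constant parts of $R_t$ contribute $2x/m+2\alpha t/(2-\alpha)$, and the square-root part contributes $\frac{(1-\Phi_p^2/8)^t}{\sqrt{d_{\min}}}\bigl[\sqrt{\min\{x-b,m-x+b\}}+\sqrt{\min\{x+b,m-x-b\}}\bigr]$. Using the elementary inequalities $\sqrt{1-u}+\sqrt{1+u}\le 2\sqrt{1-u^2/4}$ and $\sqrt{1-v}\le 1-v/2$ with $u=b/\min\{x,m-x\}\in[\Phi_p,1]$, the bracketed sum is at most $2\sqrt{\min\{x,m-x\}}\,(1-\Phi_p^2/8)$. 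Substituting and using $\tfrac{2(1-\alpha)}{2-\alpha}\le 1$ to absorb the coefficient into each nonnegative term collapses the bound to $I_p(x)\le \frac{x}{m}+\frac{\alpha(t+1)}{2-\alpha}+\sqrt{\frac{\min\{x,m-x\}}{d_{\min}}}(1-\Phi_p^2/8)^{t+1}=R_{t+1}(x)$, closing the induction. The step I expect to be most delicate is the estimate of $\sqrt{\min\{x-b,m-x+b\}}+\sqrt{\min\{x+b,m-x-b\}}$ when $x$ or $x+b$ straddles the midpoint $m/2$: there the clean identity $\sqrt{x-b}+\sqrt{x+b}=\sqrt{x}\,(\sqrt{1-b/x}+\sqrt{1+b/x})$ no longer applies verbatim, and one must handle the flipped inner minima by a routine but careful case analysis, still extracting a factor $(1-\Phi_p^2/8)$ against $\sqrt{\min\{x,m-x\}}$. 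A secondary point needing care is that Lemma~\ref{lem:RPmixing1} holds only at break points $j\in V_p$ (unlike the graph case in~\citep{andersen2006local}); this is precisely what the concavity/piecewise-linearity pairing accommodates, so it causes no loss.
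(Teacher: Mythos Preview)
Your proposal is correct and follows essentially the same route as the paper's proof: induction on $t$, with the base case handled via $I_p\le I_{p_0}$ from Lemma~\ref{lem:RPmixing1} and a short case split on $k$, and the inductive step verified at the break points $\mathrm{vol}(\mathcal{S}_j^p)$ using the first assertion of Lemma~\ref{lem:RPmixing1}, the inductive hypothesis, the elementary bound $\sqrt{1-u}+\sqrt{1+u}\le 2(1-u^2/8)$ (the paper phrases this as a ``Taylor-series'' step), and then promoted to all $k$ by concavity/piecewise-linearity. The ``straddling $m/2$'' issue you flag is exactly what the paper disposes of in its step~3) by the observation $\overline{k\pm\bar k\Phi}\le \bar k\pm\bar k\Phi$ in each of the two cases $k\le m/2$ and $k\ge m/2$, which reduces the square-root sum to the symmetric form before applying the elementary inequality.
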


For graphs~\citep{andersen2006local}, Theorem~\ref{thm:PRmixing} may be used to characterize the graph partitioning property of sweep cuts induced by a personalized PageRank vector. Similarly, for general directed hypergraphs, we establish a similar result in Theorem~\ref{thm:PRpartition}.
\begin{theorem}\label{thm:PRpartition}
Let $S$ be a set of vertices such that $\text{vol}(S) \leq \frac{1}{2} M$ and $\Phi(S) \leq \frac{\alpha}{c},$ for some constants $\alpha, c$. If there exists a distribution $P$ for sampling vertices $v\in S$ such that $\mathbb{E}_{i\sim P}[pr(\alpha, 1_i)(\bar{S})] \leq \frac{c}{8} \, pr(\alpha, \pi_S)(\bar{S})$, then with probability at least $\frac{1}{2}$, 
$$\Phi_{pr(\alpha, 1_i)} = O(\sqrt{\alpha\log \frac{\text{vol}(S)}{d_i}}),$$ 
where $i$ is sampled according to $P$.
\end{theorem}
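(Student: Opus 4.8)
\textbf{Proof proposal for Theorem~\ref{thm:PRpartition}.}

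The plan is to mimic the classical argument of~\citep{andersen2006local} for graphs, using Theorem~\ref{thm:PRmixing} as the key mixing estimate, but with care taken at the points where the nonlinearity of $M(\cdot)$ enters. First I would set up the averaging argument: since $\mathbb{E}_{i\sim P}[pr(\alpha,1_i)(\bar S)] \le \frac{c}{8}\,pr(\alpha,\pi_S)(\bar S)$, Markov's inequality shows that with probability at least $\tfrac12$ over $i\sim P$ we have $pr(\alpha,1_i)(\bar S) \le \frac{c}{4}\,pr(\alpha,\pi_S)(\bar S)$. The next step is to upper bound $pr(\alpha,\pi_S)(\bar S)$. Using the stationarity-type identity~\eqref{eq:PRpoint} together with the hypothesis $\Phi(S)\le \alpha/c$, one shows that the ``leakage'' of mass of the personalized-from-$\pi_S$ PR vector into $\bar S$ is $O(\text{vol}(\partial S)/(\alpha\,\text{vol}(S))) = O(1/c)$; this is the hypergraph analogue of the bound $pr(\alpha,\pi_S)(\bar S)\le \Phi(S)/\alpha$ in~\citep{andersen2006local}, and it goes through because the Laplacian identity from Lemma~\ref{lem:PR1} still lets us control $\sum_r w_r F_r(S)$ by $\text{vol}(\partial S)$. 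Combining, for the ``good'' set of $i$'s we get $p(\bar S)=pr(\alpha,1_i)(\bar S)$ bounded by a small absolute constant, hence $p(S)\ge 1-O(1/c)$ is close to $1$, while $\text{vol}(S)\le \tfrac12 m$.

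The heart of the argument is then to feed this into Theorem~\ref{thm:PRmixing} to force $\Phi_p$ to be small. Concretely, since $p(S)\le I_p(\text{vol}(S))$ trivially holds in one direction, I would instead use $p(S)$ being large together with the upper bound from Theorem~\ref{thm:PRmixing}: apply the theorem with $k=\text{vol}(S)$, so that
\begin{align*}
1 - O(1/c) \;\le\; p(S) \;\le\; I_p(\text{vol}(S)) \;\le\; \frac{\text{vol}(S)}{m} + \frac{\alpha}{2-\alpha}t + \sqrt{\frac{\text{vol}(S)}{\min_{j:(p_0)_j>0} d_j}}\Bigl(1-\frac{\Phi_p^2}{8}\Bigr)^t.
\end{align*}
Here $\min_{j:(p_0)_j>0}d_j = d_i$ because $p_0 = 1_i$. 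Since $\text{vol}(S)/m\le \tfrac12$ and $\alpha t/(2-\alpha)$ can be made a small fraction of the gap by choosing $t = \Theta(1/\alpha)$ (more precisely $t$ a suitable constant times $1/\alpha$, depending on $c$), the inequality forces
\begin{align*}
\sqrt{\frac{\text{vol}(S)}{d_i}}\Bigl(1-\frac{\Phi_p^2}{8}\Bigr)^t \;\ge\; \Omega(1),
\end{align*}
and taking logarithms yields $t\Phi_p^2 = O\bigl(\log(\text{vol}(S)/d_i)\bigr)$, i.e.\ $\Phi_p^2 = O\bigl(\alpha\log(\text{vol}(S)/d_i)\bigr)$, which is the claimed bound $\Phi_{pr(\alpha,1_i)} = O(\sqrt{\alpha\log(\text{vol}(S)/d_i)})$. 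One must check that the constant $c$ is large enough that all the ``$O(1/c)$'' error terms and the $\alpha t$ term together leave a constant-order gap below $1$; this dictates the hidden constants.

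I expect the main obstacle to be the second step: establishing the hypergraph version of $pr(\alpha,\pi_S)(\bar S) = O(\Phi(S)/\alpha)$. In the graph case this follows from a short computation with the linear operator $AD^{-1}$ and the fact that $\pi_S$ restricted to $S$ loses mass across the cut at rate $\text{vol}(\partial S)/\text{vol}(S)$ per step. In our setting $M(\cdot)$ is only piecewise linear and the ``amount of $x_{pr}$-mass flowing across $\partial S$'' must be argued through the subgradient characterization of Lemma~\ref{lem:PR1} and the structure of the sets $H_r,T_r$ relative to $S$ — essentially one needs that for each $r\in\partial S$ the contribution of $w_r f_r(x_{pr})\nabla f_r(x_{pr})$ to the indicator-test $\langle \mathbf 1_S,\cdot\rangle$ is bounded in absolute value by $w_r$, while for $r\notin\partial S$ it vanishes. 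Once this monotone/flow-type inequality is in place, the rest is the standard Lov\'asz--Simonovits bookkeeping. A secondary but routine obstacle is tracking the probability-$\tfrac12$ statement cleanly through the averaging argument, but that is a direct application of Markov's inequality as sketched above.
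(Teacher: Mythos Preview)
Your proposal is correct and follows essentially the same route as the paper: bound $pr(\alpha,\pi_S)(\bar S)\le \Phi(S)/\alpha$ via Lemma~\ref{lem:PR1} and the Lov\'asz--Simonovits curve (the paper uses the second claim of Lemma~\ref{lem:RPmixing1}, $I_p\le I_{\pi_S}$, at this step), apply Markov to get $pr(\alpha,1_i)(S)\ge 3/4$ with probability $\ge 1/2$, and then feed this into Theorem~\ref{thm:PRmixing} at $k=\text{vol}(S)$. The only cosmetic difference is the choice of $t$: the paper sets $t=\lceil \tfrac{8}{\Phi_p^2}\ln 8\sqrt{\text{vol}(S)/d_i}\rceil$ so that the geometric term drops below $1/8$ and then reads off a lower bound on $t$ (hence an upper bound on $\Phi_p$), whereas you fix $t=\Theta(1/\alpha)$ to control the linear term and then bound $\Phi_p$ from the surviving geometric term; both choices yield the same $O(\sqrt{\alpha\log(\text{vol}(S)/d_i)})$ conclusion.
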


\begin{remark}
Note that in Theorem~\ref{thm:PRpartition}, we require that the sampling distribution $P$ satisfy $\mathbb{E}_{i\sim P}[pr(\alpha, 1_i)(\bar{S})] \leq \frac{c}{8} \, pr(\alpha, \pi_S)(\bar{S})$. For graphs, when we sample a vertex $i$ with probability proportional to its degree $d_i$, this condition is naturally satisfied, with $c=8$. However, for general (un)directed hypergraphs, the sampling procedure is non-trivial and harder to handle due to the non-linearity of the random-walk operator $M(\cdot)$. We relegate a more in-depth study of this topic to a companion paper.  
\end{remark}

\section{Applications to Semi-supervised Learning and Numerical Experiments}\label{sec:ssl}

Another important application of QDSFM is semi-supervised learning (SSL). SSL is a learning paradigm that allows one to utilize the underlying structure or distribution of unlabeled samples whenever the information provided by labeled samples is insufficient for learning an inductive predictor~\citep{gammerman1998learning, joachims2003transductive}. A standard setup for a $K$-class transductive learner is as follows: given $N$ data points $\{z_i\}_{i\in[N]},$ along with labels for the first $l$ ($\ll N$) samples $\{y_i| y_i \in [K] \ \}_{i\in[l]}$, the learner is asked to infer the labels of all the remaining data points $i\in[N]/[l]$. The widely-used SSL problem with least square loss requires one to solve $K$ regularized problems. For each class $k\in [K]$, one sets the scores of data points within the class to
$$\hat{x}^{(k)} = \arg\min_{x^{(k)}} \beta \|x^{(k)} - a^{(k)}\|^2 + \Omega(x^{(k)}),$$
where $a^{(k)}$ describes the information provided by the known labels, i.e.,  $a_i^{(k)} = 1$ if $y_i = k$, and $0$ otherwise, $\beta$ denotes a hyperparameter and $\Omega$ is a smoothness regularizer. The labels of the data points are estimated as 
$$\hat{y}_{i} = \arg\max_{k} \{\hat{x}_i^{(k)}\}.$$ 
In typical graph and hypergraph learning problems, $\Omega$ is chosen to be a Laplacian regularizer constructed using $\{z_i\}_{i\in[N]}$; this regularization term ties the above learning problems to PageRank (again, refer to Table~\ref{tab:TLexamples}). With the Laplacian regularization, each edge/hyperedge corresponds to one decomposed part $f_r$ in the QDSFM problem. The variables themselves may be normalized with respect to their degrees, in which case the normalized Laplacian is used instead. For example, in graph learning, one of the terms in $\Omega$ assumes the form $w_{ij} (x_i/\sqrt{d_i} - x_j/\sqrt{d_j})^2,$ where $d_i$ and $d_j$ correspond to the degrees of the vertex variables $i$ and $j$, respectively. Using some simple algebra, it can be shown that the normalization term is embedded into the matrix $W$ used in the definition of the QDSFM problem~\eqref{QDSFM}.

\subsection{Experimental Setup}\label{sec:exp}

We numerically evaluate our SSL learning framework for hypergraphs on both real and synthetic datasets. For the particular problem at hand, the QDSFM problem can be formulated as follows: 
\begin{align}\label{expobj}
\min_{x\in \mathbb{R}^N} \beta \|x - a\|^2 + \sum_{r\in [R]}\max_{i,j\in S_r}(\frac{x_i}{\sqrt{W_{ii}}} - \frac{x_j}{\sqrt{W_{jj}}})^2,
\end{align}
where $a_i \in\{-1, 0, 1\}$ indicates if the corresponding variable $i$ has a negative, missing, or positive label, respectively, $\beta$ is a parameter that balances out the influence of observations and the regularization term,  $\{W_{ii}\}_{i\in[N]}$ defines a positive measure over the variables and may be chosen to be the degree matrix $D$, with $D_{ii} = |\{r\in [R]: i \in S_r\}|$. Each part in the decomposition corresponds to one hyperedge. We compare eight different solvers falling into three categories: (a) our proposed general QDSFM solvers, \emph{QRCD-SPE, QRCD-MNP, QRCD-FW and QAP-SPE};  (b) alternative specialized solvers for the given problem~\eqref{expobj}, including \emph{PDHG}~\citep{hein2013total}  and \emph{SGD}~\citep{zhang2017re}; (c) SSL solvers that do not use QDSFM as the objective, including \emph{DRCD}~\citep{ene2015random} and  \emph{InvLap}~\citep{zhou2007learning}. The first three methods all have outer-loops that execute RCD, but with different inner-loop projections computed via the \emph{exact projection algorithm} for undirected hyperedges (see Algorithm 5 in Section~\ref{appsec:exactprojection}), or the generic MNP and FW. As it may be time consuming to find the precise projections via MNP and FW, we always fix the number of MAJOR loops of the MNP and the number of iterations of the FW method to $100|S_r|$ and $100|S_r|^2,$ respectively. Empirically, these choices provide an acceptable trade-off between accuracy and complexity. The QAP-SPE method uses AP in the outer-loop and exact inner-loop projections (see Algorithm 5 in Section~\ref{appsec:exactprojection}). PDHG and SGD are the only known solvers for the specific objective~\eqref{expobj}. PDHG and SGD depend on certain parameters that we choose in standard fashion: for PDHG, we set $\sigma = \tau = \frac{1}{\sqrt{1+ \max_{i} D_{ii}}}$ and for SGD, we set $\eta_k = \frac{1}{k\beta\max_{i} W_{ii}}$. 

DRCD is a state-of-the-art solver for DSFM and also uses a combination of outer-loop RCD and inner-loop projections. InvLap first transforms hyperedges into cliques and then solves a Laplacian-based linear problem. All the aforementioned methods, except InvLap, are implemented in C++ in a nonparallel fashion. InvLap is executed via matrix inversion operations in Matlab, and may be parallelized. The CPU times of all methods are recorded on a 3.2GHz Intel Core i5. The results are summarized for $100$ independent tests. When reporting the results, we use the primal gap (``gap'')~\footnote{We place a high accuracy requirement for QDSFM-MNP by ignoring the CPU time needed to guarantee that it achieves a duality gap as low as $1e-14$. Note that the duality gap is computed according to $\|x^{(k)} - a\|_W^2 + \sum_{r\in[R]}\left[f_r(x^{(k)})\right]^2 - g(y^{(k)}, \phi^{(k)})/4$, where $x^{(k)} = a - \frac{1}{2}W^{-1}\sum_{r\in[R]} y_r^{(k)}$, which is an upper bound on the primal gap. Therefore, the achieved value of the objective has high accuracy and thus can be used as the optimal primal value.} to characterize the convergence properties of different solvers. 

\subsection{Synthetic Data} 
We generated a hypergraph with $N = 1000$ vertices that belong to two equal-sized clusters. We uniformly at random generated $500$ hyperedges within each cluster and $1000$ hyperedges across the two clusters. Note that in higher-order clustering, we do not need to have many hyperedges within each cluster to obtain good clustering results. Each hyperedge includes $20$ vertices. We also uniformly at random picked $l=1,2,3,4$ vertices from each cluster to represent labeled datapoints. With the vector $x$ obtained by solving~\eqref{expobj}, we classified the variables based on the Cheeger cut rule~\citep{hein2013total}: suppose that 
$\frac{x_{i_1}}{\sqrt{W_{i_1i_1}}}\geq \frac{x_{i_2}}{\sqrt{W_{i_2i_2}}}\geq \cdots \geq \frac{x_{i_N}}{\sqrt{W_{i_Ni_N}}},$
and let $\mathcal{S}_j = \{i_1, i_2, ..., i_j\}$. We partitioned $[N]$ into two sets $(\mathcal{S}_{j^*}, \bar{\mathcal{S}}_{j^*}),$ where 
$$j^* = \arg\min_{j\in[N]} \Phi(\mathcal{S}_j)\triangleq \frac{| S_r \cap \mathcal{S}_j\neq \emptyset, S_r \cap \bar{\mathcal{S}}_j \neq \emptyset\}|}{\max\{\sum_{r\in[R]} |S_r \cap \mathcal{S}_j|, \sum_{r\in[R]} |S_r \cap \bar{\mathcal{S}}_j|\}}.$$ 
The classification error is defined as (\# of incorrectly classified vertices)$/N$. In the experiment, we used $W_{ii} = D_{ii}$, $\forall\,i,$ and tuned 
$\beta$ to be nearly optimal for different objectives with respect to the classification error rates: for QDSFM as the objective, using QRCD-SPE, QAP-SPE, PDHG, and SGD as the methods of choice, we set $\beta = 0.02$; for DSFM as the objective, including the DRCD method, we set $\beta = 1$; for InvLap, we set $\beta = 0.001$. 

The left subfigure of Figure~\ref{fig:synthetic1} shows that QRCD-SPE  converges much faster than all other methods when solving the problem~\eqref{expobj} according to the gap metric (we only plotted the curve for $l=3$ as all other values of $l$ produce similar patterns). To avoid clutter, we moved the results for QRCD-MNP and QRCD-FW to the right two subfigures in Figure~\ref{fig:synthetic1}, as these methods are typically $100$ to $1000$ times slower than QRCD-SPE. In Table~\ref{tab:synthetic2}, we described the performance of different methods with comparable CPUtimes. Note that when QRCD-SPE converges (with primal gap $10^{-9}$ achieved after $0.83$s), the obtained $x$ leads to a much smaller classification error than other methods. QAP-SPE, PDHG and SGD all have large classification errors as they do not converge within short CPU time-frames. QAP-SPE and PDHG perform only a small number of iterations, but each iteration computes the projections for all the hyperedges, which is more time-consuming. The poor performance of DRCD implies that the DFSM is not a good objective for SSL. 
InvLap achieves moderate classification errors, but still does not match the performance of QRCD-SPE. 
Furthermore, note that InvLap uses Matlab, which is optimized for matrix operations, thus fairly efficient.
However, for experiments on real datasets with fewer but significantly larger hyperedges, InvLap does not offer as good  performance as the one for synthetic data. The column ``Average $100c(\mathcal{S}_{j^*})$'' also illustrates that the QDSFM objective is a good choice for finding approximate balanced cuts of hypergraphs.  
\begin{figure*}[t]
\centering
\includegraphics[trim={0cm 0cm 0cm 0},clip,width=.32\textwidth]{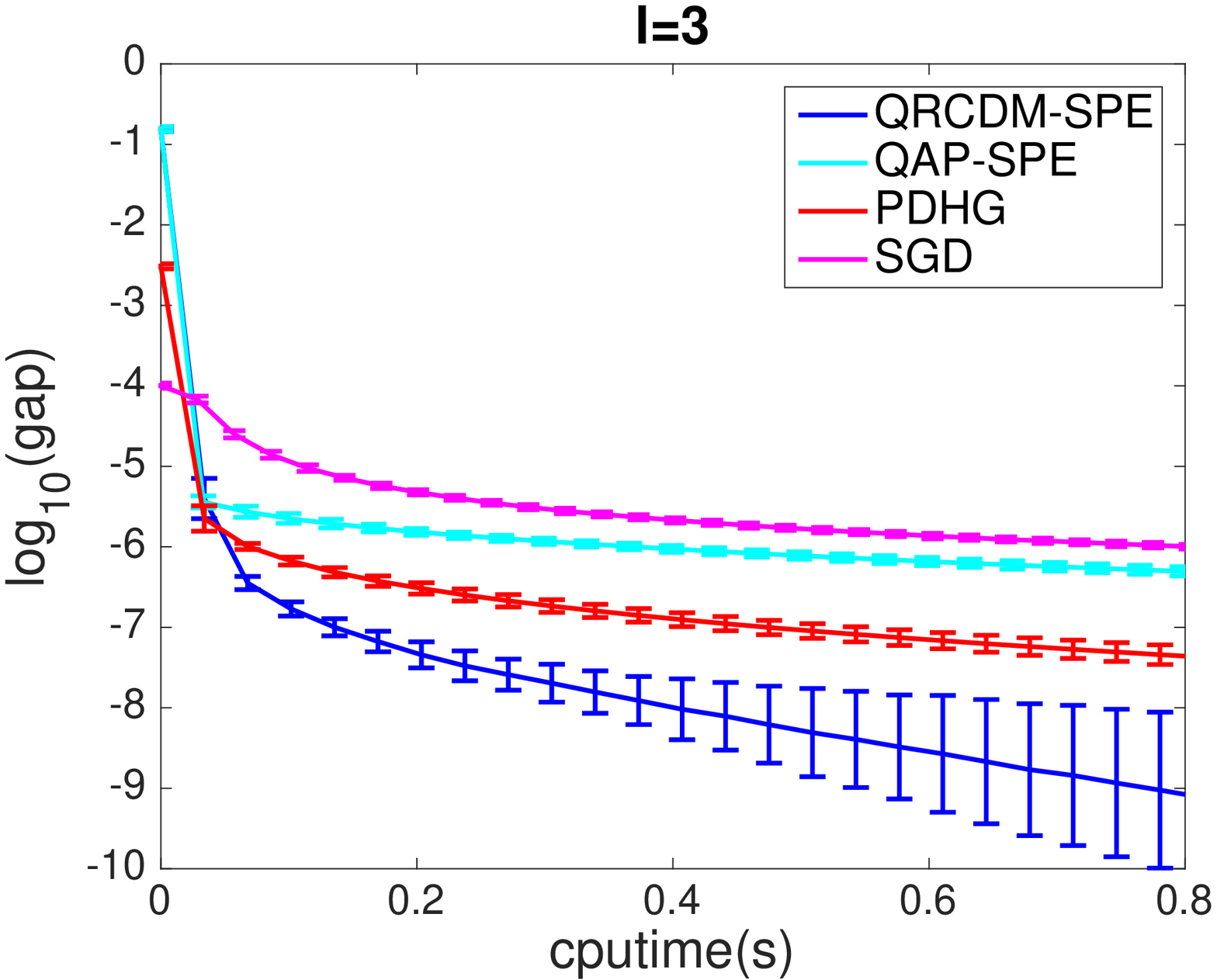}
\includegraphics[trim={0cm 0cm 0cm 0},clip,width=.32\textwidth]{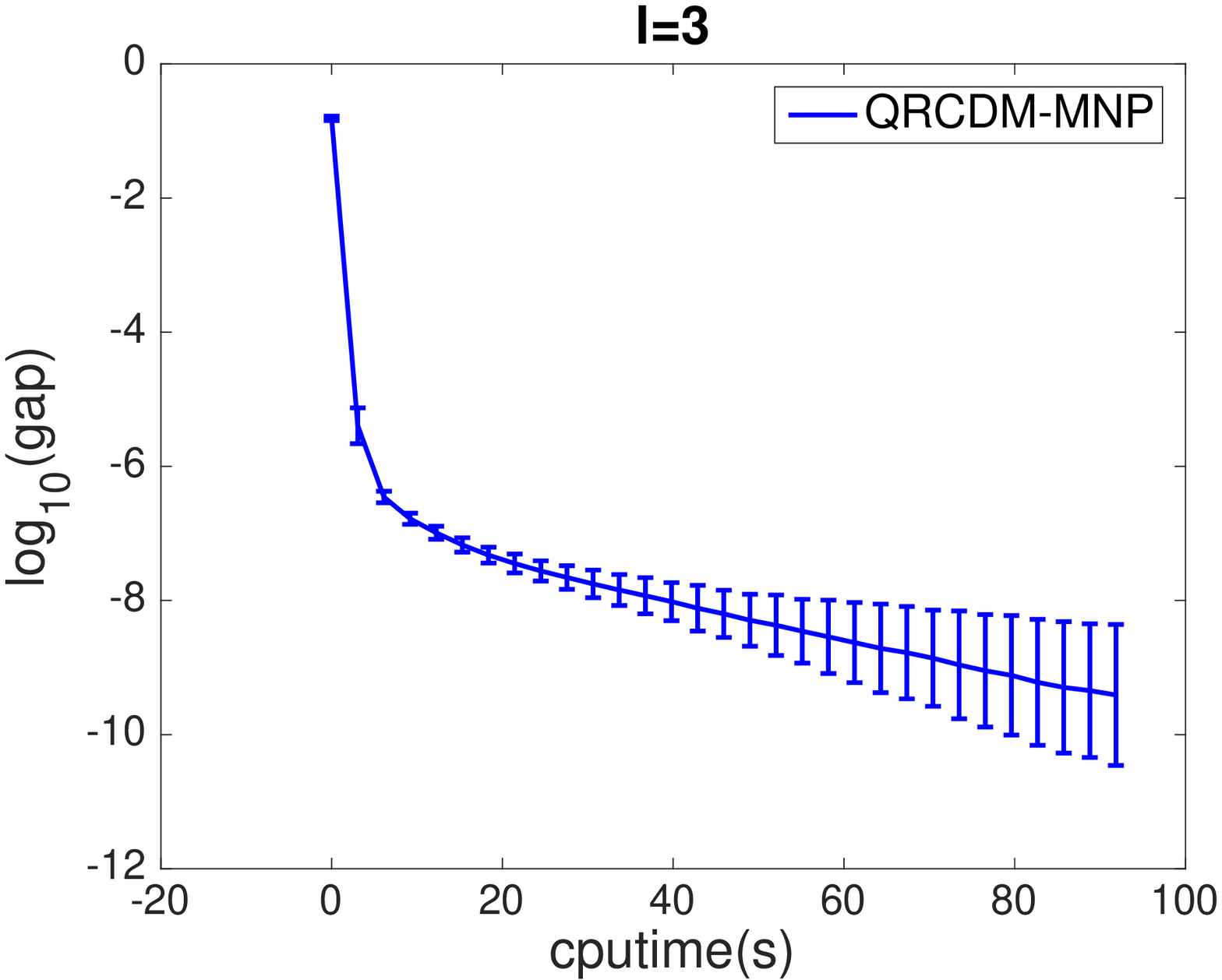}
\includegraphics[trim={0cm 0cm 0cm 0},clip,width=.32\textwidth]{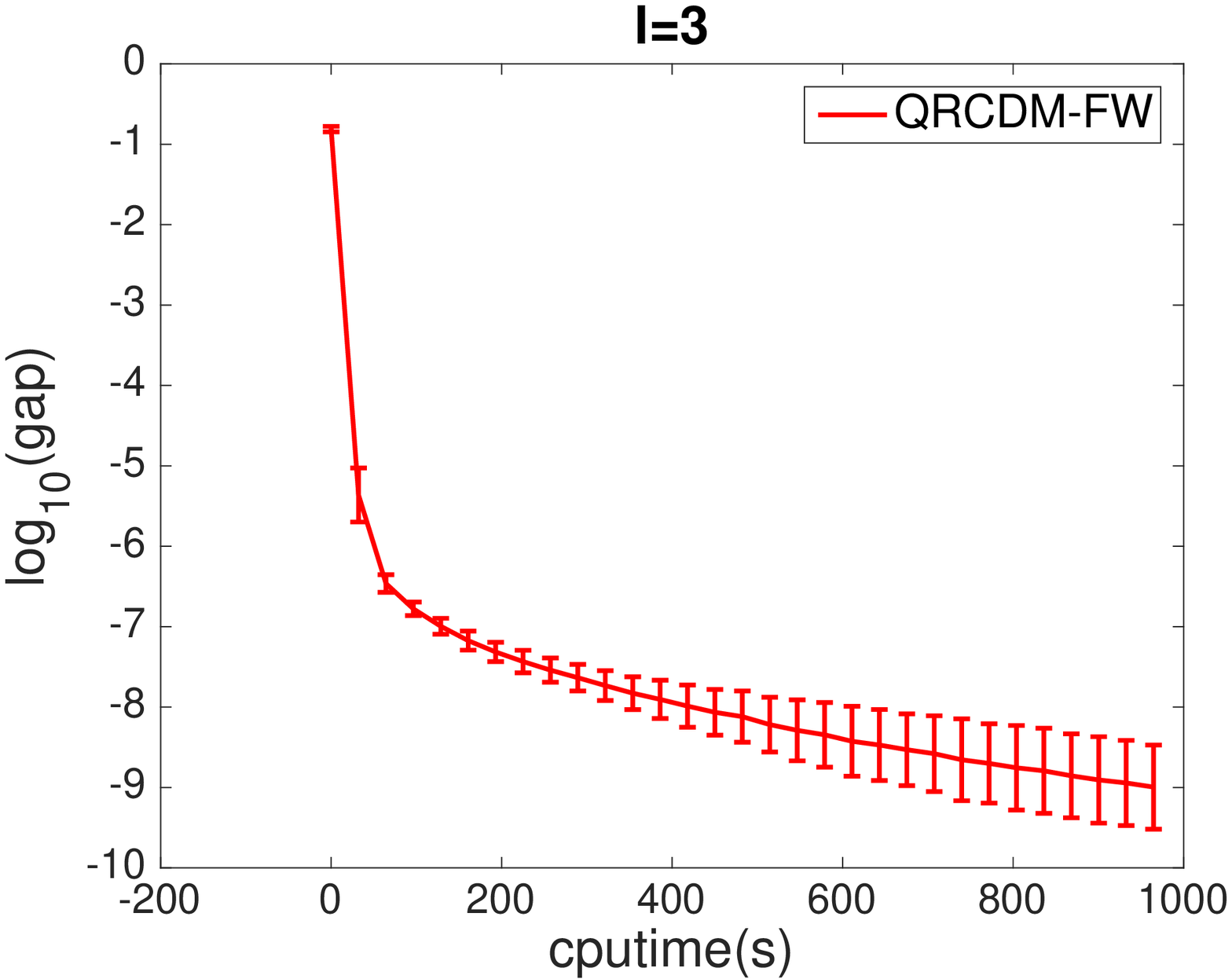}
\caption{Convergence results on synthetic datasets: gap vs CPU time of different QDSFM solvers (with an average $\pm$ standard deviation).}
\label{fig:synthetic1}
\end{figure*}
\begin{table*}[t]
\centering
\tiny{
\begin{tabular}{p{0.3cm}<{\centering}|p{1.35cm}<{\centering}| p{0.35cm}<{\centering}| p{0.35cm}<{\centering}| p{0.35cm}<{\centering}| p{0.35cm}<{\centering}| p{0.35cm}<{\centering}| p{0.35cm}<{\centering}| p{0.35cm}<{\centering}| p{0.35cm}<{\centering}|  p{0.35cm}<{\centering}| p{0.35cm}<{\centering}| p{0.35cm}<{\centering}| p{0.35cm}<{\centering}| p{1.1cm}<{\centering}| p{0.9cm}<{\centering}}
\hline
\multirow{3}{*}{Obj.} &\multirow{3}{*}{Solvers}& \multicolumn{8}{c|}{Classification error rate (\%)}  & \multicolumn{4}{c|}{Average 100$c(\mathcal{S}_{j^*})$} &  \multirow{3}{*}{\#iterations} &  \multirow{3}{*}{cputime(s)} \\
\cline{3-14}
&&  \multicolumn{2}{c|}{l=1} & \multicolumn{2}{c|}{l=2} &\multicolumn{2}{c|}{l=3} & \multicolumn{2}{c|}{l=4} & \multirow{2}{*}{l=1} & \multirow{2}{*}{l=2} & \multirow{2}{*}{l=3} & \multirow{2}{*}{l=4}  & & \\
\cline{3-10}
&& MN & MD & MN & MD & MN & MD & MN & MD & & & & & &    \\
\hline
\parbox[t]{2mm}{\multirow{4}{*}{\rotatebox[origin=c]{90}{QDSFM}}} &QRCD-SPE & \textbf{2.93} & \textbf{2.55}  & \textbf{2.23} & \textbf{0.00} & \textbf{1.47} & \textbf{0.00} & \textbf{0.78} & \textbf{0.00} & \textbf{6.81} & \textbf{6.04} & \textbf{5.71} & \textbf{5.41} & $4.8\times 10^5$ & 0.83  \\
&QAP-SPE & 14.9 & 15.0  & 12.6 & 13.2 & 7.33 & 8.10 & 4.07 & 3.80 & 9.51 & 9.21 & 8.14 & 7.09 & $2.7\times 10^2$ & 0.85  \\
&PDHG & 9.05 & 9.65 & 4.56 & 4.05 & 3.02 & 2.55 & 1.74 & 0.95  & 8.64 & 7.32 & 6.81 & 6.11 & $3.0\times 10^2$ & 0.83  \\
&SGD & 5.79 & 4.15 & 4.30 & 3.30 & 3.94 & 2.90 & 3.41 & 2.10 & 8.22 & 7.11 & 7.01 & 6.53 & $1.5 \times 10^4$ &0.86  \\
\cline{1-16}
\multirow{2}{*}{\rotatebox[origin=c]{90}{\tiny{Oth.}}}  &DRCD & 44.7 &44.2 & 46.1 & 45.3 & 43.4 & 44.2 & 45.3 & 44.6 & 9.97 & 9.97 & 9.96 & 9.97 & $3.8 \times 10^6$  &0.85   \\
&InvLap & 8.17 & 7.30 & 3.27  & 3.00 & 1.91 & 1.60 & 0.89 &  0.70 & 8.89 & 7.11 & 6.18 & 5.60 & --- &\textbf{0.07} \\
\hline
\end{tabular}
}
\caption{Prediction results on synthetic datasets: classification error rates $\&$ Average 100 $c (\mathcal{S}_{j^*})$ for different solvers (MN: mean, MD: median).}
\label{tab:synthetic2}
\end{table*}

\begin{figure*}[t]
\centering
\includegraphics[trim={0cm 0cm 0.0cm 0.0cm},clip,width=.4\textwidth]{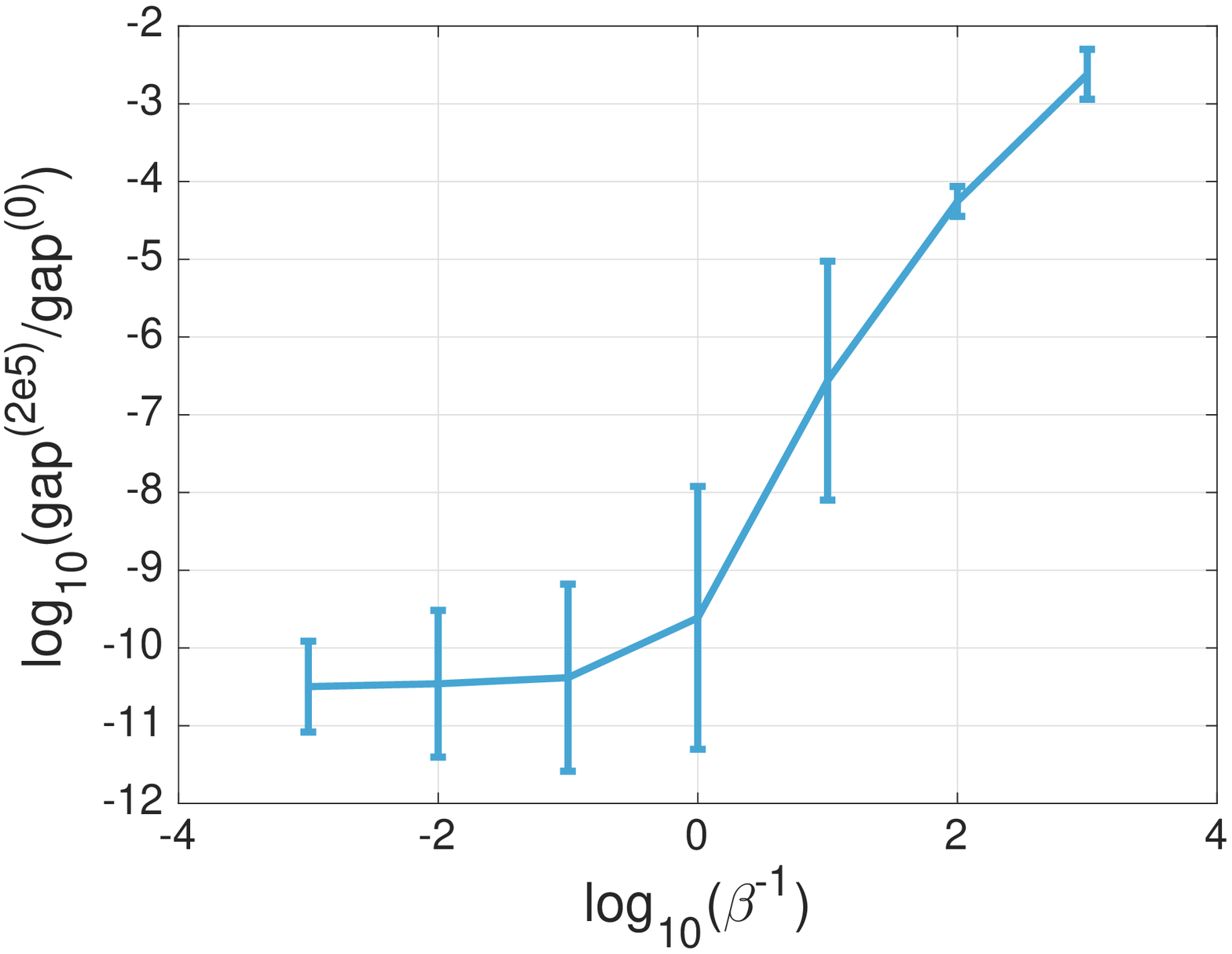}
\includegraphics[trim={0cm 0cm 0.0cm 0.0cm},clip, width=.4\textwidth]{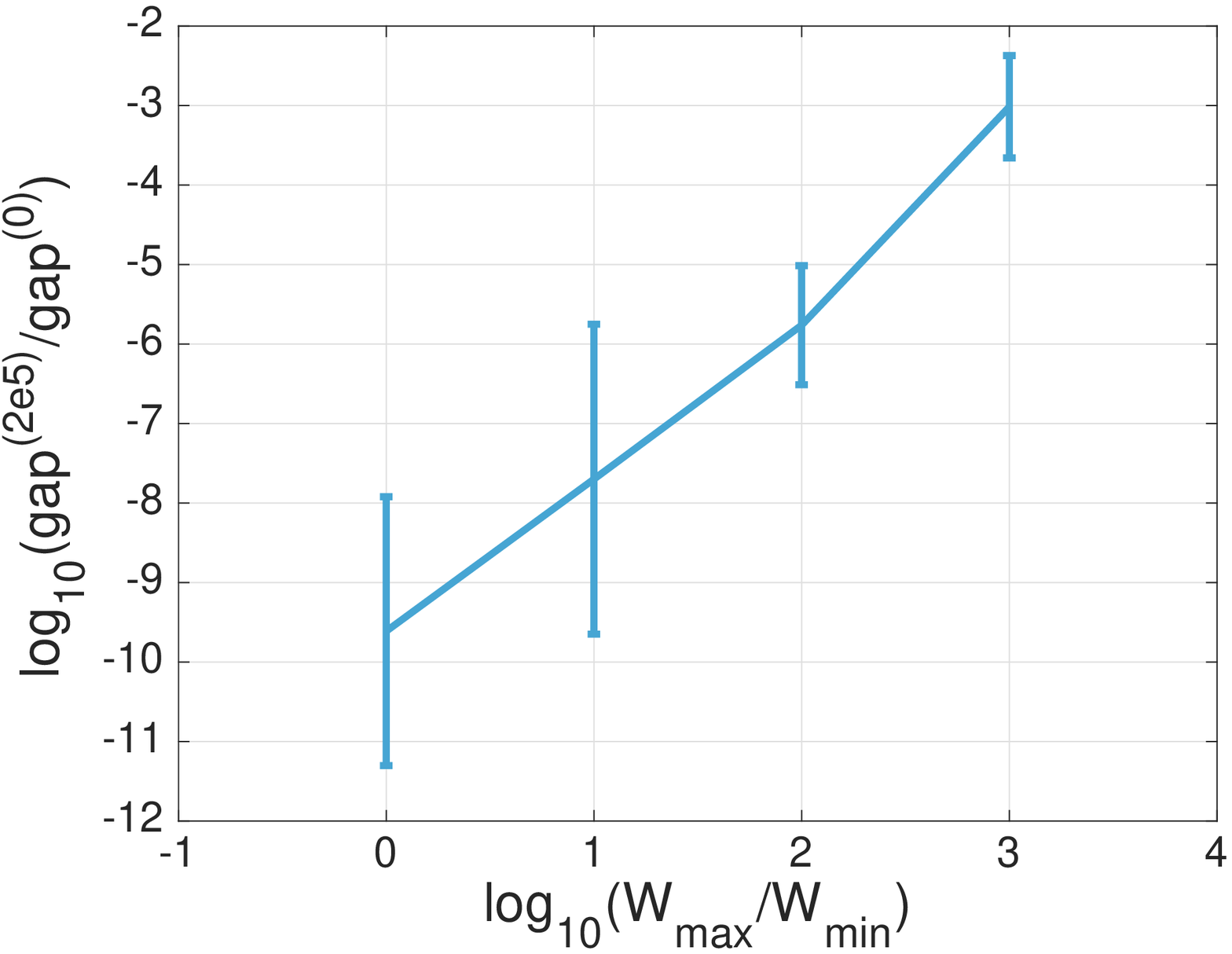}
\caption{Parameter sensitivity:  the rate of a primal gap of QRCD after $2\times 10^{5}$ iterations with respect to different choices of the parameters $\beta$ $\&$ $\max_{i,j\in [N]}W_{ii}/W_{jj}$.}
\label{fig:synthetic3}
\end{figure*}

We also evaluated the influence of parameter choices on the convergence of QRCD methods. According to Theorem~\ref{linearconv}, the required number of RCD iterations for achieving an $\epsilon$-optimal solution for~\eqref{expobj} is roughly $O(N^2R \max(1, 9/(2\beta))\max_{i, j\in[N]}W_{ii}/W_{jj}\log 1/\epsilon)$ (see Section~\ref{sec:para-dep}). We mainly focus on testing the dependence on the parameters $\beta$ and $\max_{i, j\in[N]}W_{ii}/W_{jj}$, as the term $N^2R$ is also included in the iteration complexity of DSFM and was shown to be necessary given certain submodular structures~\citep{li2018revisiting}. To test the effect of $\beta$, we fix $W_{ii} = 1$ for all $i$, and vary $\beta \in [10^{-3}, 10^{3}]$. To test the effect of $W$, we fix $\beta = 1$ and randomly choose half of the vertices and set their $W_{ii}$ values to lie in $\{1, 0.1, 0.01, 0.001\}$, and set the remaining ones to $1$. Figure~\ref{fig:synthetic3} shows our results. The logarithm of gap ratios is proportional to $\log\beta^{-1}$ for small $\beta,$ and $\log \max_{i, j\in[N]}W_{ii}/W_{jj}$, which is not as sensitive as predicted by Theorem~\ref{linearconv}. Moreover, when $\beta$ is relatively large ($>1$), the dependence on $\beta$ levels out. 
\begin{figure*}[t]
\centering
\includegraphics[trim={0cm 0cm 0cm 0},clip,width=.32\textwidth]{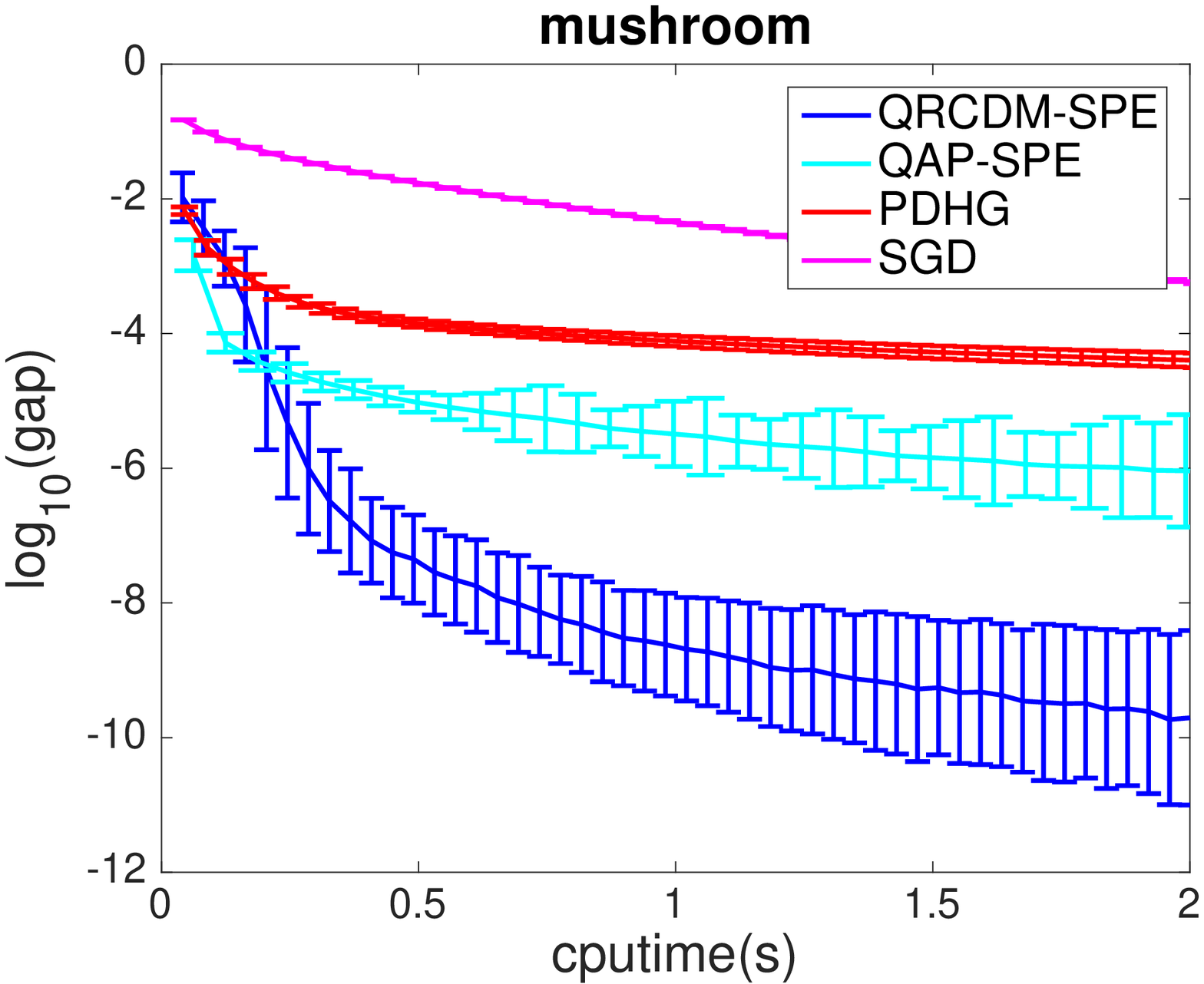}
\includegraphics[trim={0cm 0cm 0cm 0},clip,width=.32\textwidth]{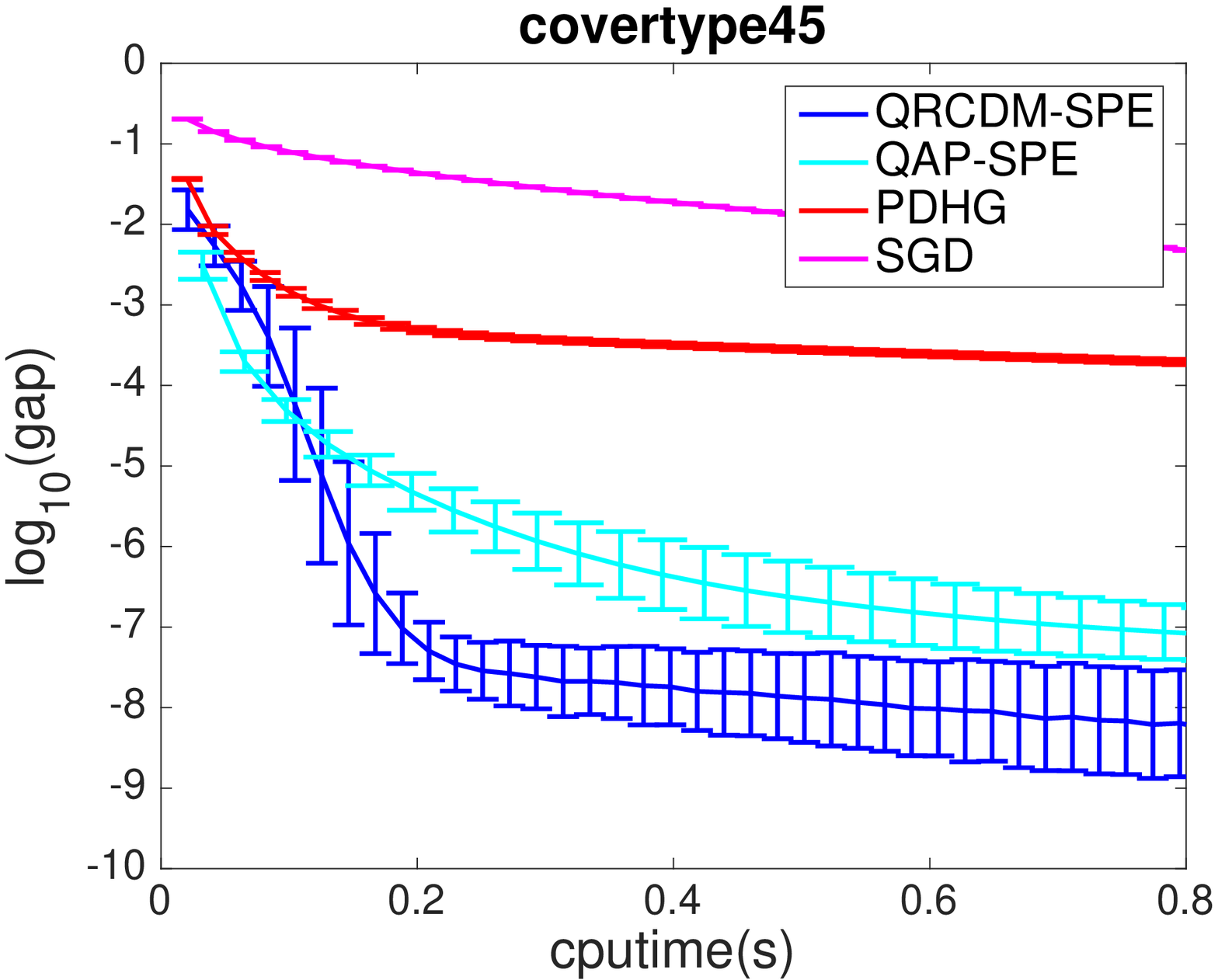}
\includegraphics[trim={0cm 0cm 0cm 0},clip,width=.32\textwidth]{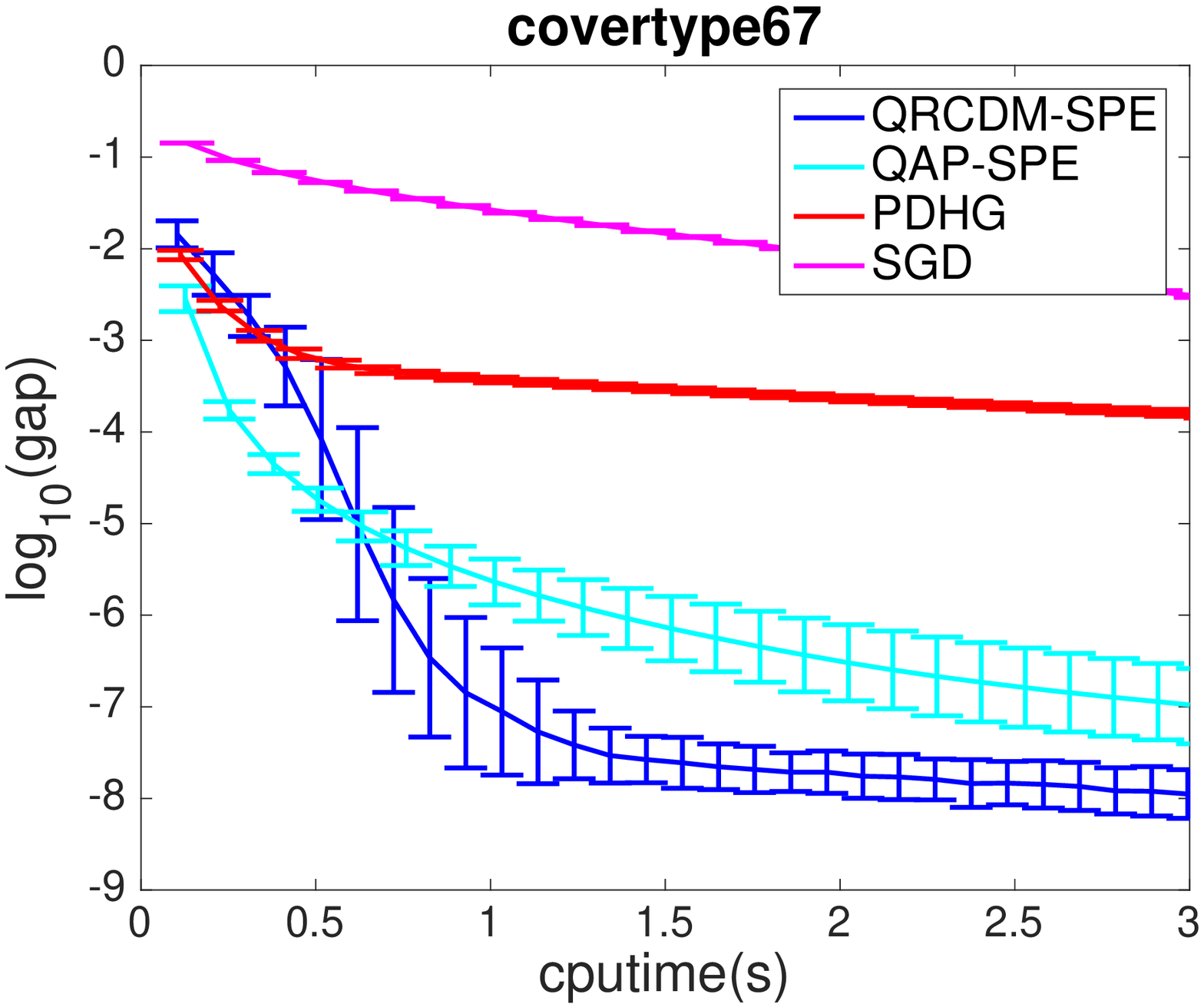}
\caption{Convergence of different solvers for QDFSM over three different real datasets.}
\label{fig:realdata}
\end{figure*}

\subsection{Real Data} 
We also evaluated the proposed algorithms on three UCI datasets: \emph{Mushroom}, \emph{Covertype45}, \emph{Covertype67}, used as standard test datasets for SSL on hypergraphs~\citep{zhou2007learning, hein2013total, zhang2017re}. 
Each dataset corresponds to a hypergraph model described in~\citep{hein2013total}: Entries correspond to vertices, while each categorical feature is modeled as one hyperedge; numerical features are first quantized into $10$ bins of equal size, and then mapped to hyperedges. Compared to synthetic data, in the real datasets, the size of most hyperedges is significantly larger ($\geq$ 1000) while the number of hyperedges is small ($\approx 100$). See Table~\ref{tab:dataset} for a detailed description of the parameters of the generated hypergraphs. Previous work has shown that smaller classification errors can be achieved by using QDSFM as the objective instead of DSFM or InvLap~\citep{hein2013total}. In our experiment, we focused on comparing the convergence of different solvers for QDSFM. We set $\beta = 100$ and $W_{ii} = 1,$ for all $i,$ and set the number of observed labels to $100$, which is a proper setting according to~\citep{hein2013total}. Figure~\ref{fig:realdata} shows the results. Again, the proposed QRCD-SPE and QAP-SPE methods both converge faster than PDHG and SGD, while QRCD-SPE performs the best. Note that we did not plot the results for QRCD-MNP and QRCD-FW as these methods converge extremely slowly due to the large sizes of the hyperedges. InvLap requires $22$, $114$ and $1802$ seconds to run on the Mushroom, Covertype45, and Covertype67 datasets, respectively. Hence, the method does not scale well with the problem size. 
\begin{table*}[t] 
\begin{tabular}{|p{1.6cm}<{\centering}|p{1.6cm}<{\centering}|p{2.6cm}<{\centering}|p{2.6cm}<{\centering}|}
\hline
Dataset & Mushroom & Covertype$45$ & Covertype$67$ \\
\hline
$N$ &  8124 & 12240 & 37877 \\
$R$ &  112 &  127 & 136 \\
$\sum_{r\in [R]} |S_r|$ & 170604 & 145999 & 451529 \\
\hline
\end{tabular}
\centering 
\caption{The UCI datasets used for experimental testing.}
\label{tab:dataset}
\end{table*}

\section{Detailed Proofs}
\subsection{Proofs of the Results Pertaining to the Dual Formulation}
%

\subsubsection{Proof of Lemma~\ref{dualform}}\label{proof:DF}
In all derivations that follow, we may exchange the order of minimization and maximization (i.e., $\min \max = \max \min$) due to Proposition 2.2~\citep{ekeland1999convex}. Plugging equation~\eqref{conjugate} into \eqref{QDSFM}, we obtain 
\begin{align*}
 &\min_x    \sum_{r\in [R]} [f_r(x)]^2 + \| x- a\|_{W}^2 \\
 =&  \min_x  \max_{\phi_r \geq 0, y_r\in \phi_r B_r} \sum_{r\in [R]}  \left[\langle y_r, x\rangle - \frac{\phi_r^2}{4} \right] + \| x- a\|_{W}^2 \\
  = &   \max_{\phi_r \geq 0, y_r\in \phi_r B_r} \min_x  \sum_{r\in [R]}  \left[\langle y_r, x\rangle - \frac{\phi_r^2}{4} \right] + \| x- a\|_{W}^2 \\
 = &\max_{\phi_r \geq 0, y_r\in \phi_r B_r}   -\frac{1}{4}\|\sum_{r\in [R]}  y_r - 2Wa\|_{W^{-1}}^2- \frac{1}{4} \sum_{r}\phi_r^2 + \|a\|_W^2.  
 \end{align*}
 By eliminating some constants, one obtains the dual~\eqref{CDform}. 
 
Next, we prove that the problem~\eqref{APform} is equivalent to~\eqref{CDform} when removing $\lambda_r$. 

First,~\eqref{APform} is equivalent to
 \begin{align*}
 & \min_{\phi_r \geq 0, y_r\in \phi_r B_r, \lambda_r} \max_{\lambda}  \sum_{r\in [R]} \left[\| y_r - \frac{\lambda_r}{\sqrt{R}}\|_{W^{-1}}^2 + \phi_r^2\right] + \left\langle \lambda, \sum_{r\in [R]} \lambda_r - 2Wa \right\rangle \\
= &   \min_{\phi_r \geq 0, y_r\in \phi_r B_r} \max_{\lambda} \min_{\lambda_r} \sum_{r\in [R]}\left[\| y_r -\frac{\lambda_r}{\sqrt{R}}\|_{W^{-1}}^2 +  \phi_r^2\right] + \left\langle \lambda, \sum_{r\in [R]} \lambda_r - 2Wa \right\rangle\\
= &  \min_{\phi_r \geq 0, y_r\in \phi_r B_r}  \max_{\lambda} \sum_{r\in [R]} \left[\frac{1}{4}\|\lambda\|_{W}^2+ \phi_r^2\right] + \left\langle \lambda, \sqrt{R}\sum_{r\in [R]} (y_r - \frac{1}{2} W\lambda_r)- 2Wa \right\rangle  \\
=& \min_{\phi_r \geq 0, y_r\in \phi_r B_r} \max_{\lambda} -\frac{R}{4} \|\lambda\|_{W}^2 + \sqrt{R}\left\langle \lambda, \sum_{r\in [R]} y_r - 2Wa\right\rangle + \sum_{r\in [R]}\phi_r^2\\
=& \min_{\phi_r \geq 0, y_r\in \phi_r B_r}   \|\sum_{r\in [R]} y_r - 2Wa\|_{W^{-1}}^2 + \sum_{r\in [R]}\phi_r^2,
   \end{align*}
which is equivalent to~\eqref{CDform}. 

\subsubsection{Proof of Lemma~\ref{submodularcone}}\label{proof:GS}

We start by recalling the following lemma from~\citep{li2018revisiting} that characterizes the geometric structure of the product base polytope.  
\begin{lemma}[\cite{li2018revisiting}]\label{submodularprop}
Assume that $W \in \mathbb{R}^{N\times N}$ is a positive diagonal matrix. Let $y \in \bigotimes_{r\in [R]} \phi'_r B_r$ and let 
$s$ be in the base polytope of the submodular function $\sum_{r} \phi'_r F_r $. Then, there exists a point 
$y'\in \bigotimes_{r\in [R]} \phi'_r B_r$ such that $\sum_{r\in [R]} y_r' = s$ and $\|y'-y\|_{I(W)}\leq \sqrt{\frac{\sum_{i=1}^{N}W_{ii}}{2}} \|\sum_{r\in[R]} y_r - s\|_1$. 
\end{lemma}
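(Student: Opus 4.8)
Throughout I write $\bar y = \sum_{r\in[R]} y_r$, let $G = \sum_{r\in[R]}\phi'_r F_r$ denote the aggregate submodular function, and set $v = \bar y - s$ (so $\|v\|_1 = \|\sum_r y_r - s\|_1$, the quantity on the right of the claim). The plan is to first record the two elementary facts that pin down the constant, then construct $y'$ by a transportation argument built on the exchange property of base polytopes, and finally estimate the energy $\|y'-y\|_{I(W)}^2 = \sum_{r}\sum_i W_{ii}(y'_{r,i}-y_{r,i})^2$.

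First I would observe that $v$ has zero coordinate sum. Since the base polytope of a sum of submodular functions is the Minkowski sum of the summands' base polytopes \citep{fujishige2005submodular}, we have $B(G)=\sum_r \phi'_r B_r$, so $\bar y\in B(G)$; as $s\in B(G)$ as well, both $\bar y$ and $s$ evaluate to $G([N])$ on the full ground set, whence $\sum_i v_i=0$. For any zero-sum vector this forces $\|v\|_\infty \le \tfrac12\|v\|_1$ (the positive and the negative parts each carry mass $\tfrac12\|v\|_1$, so no single entry exceeds $\tfrac12\|v\|_1$). This is exactly the estimate that will eventually produce the factor $\tfrac12$.

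Second I would construct $y'$ by moving the aggregate from $\bar y$ to $s$ inside $B(G)$ along a monotone path and realizing each step inside the product $\bigotimes_r \phi'_r B_r$. Whenever the current aggregate differs from $s$, the exchange property of base polytopes guarantees a coordinate $j$ lying above its target and a coordinate $i$ lying below its target between which a positive amount of mass can be transferred within $B(G)$ without overshooting either target \citep{fujishige2005submodular}; via $B(G)=\sum_r\phi'_rB_r$, each such aggregate transfer lifts to a feasible update of $(y_r)$ inside $\bigotimes_r\phi'_rB_r$. Iterating until the aggregate equals $s$ yields the required $y'\in\bigotimes_r\phi'_rB_r$ with $\sum_r y'_r=s$, and the no-overshoot rule makes the aggregate at each coordinate move a total of exactly $|v_i|$.

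Finally I would bound the energy. The clean part is the chain $\sum_i W_{ii}v_i^2 \le \big(\sum_i W_{ii}\big)\|v\|_\infty^2 \le \tfrac14\big(\sum_i W_{ii}\big)\|v\|_1^2$, which is what a routing-free realization would deliver. The main obstacle, and the reason the stated constant is $\tfrac12$ rather than $\tfrac14$, is that the Minkowski sum can admit an exchange direction that no single summand supports (a two-component, three-coordinate example makes this concrete): realizing an aggregate transfer from $j$ to $i$ may force mass to be routed through intermediate coordinates inside the individual components, so the componentwise displacement at a coordinate can strictly exceed $|v_i|$ and the energy can exceed $\sum_i W_{ii}v_i^2$. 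The technical heart is therefore to account for this routed energy, charging the total weighted squared componentwise displacement against $\big(\sum_i W_{ii}\big)\|v\|_1^2$ with the extra routing absorbed precisely by doubling the constant to $\tfrac12$. I would organize this accounting by tracking, for each monotone exchange, the coordinates it touches and summing the induced weighted squared increments, then invoking the zero-sum estimate $\|v\|_\infty\le\tfrac12\|v\|_1$; controlling the routing contribution is where the care is required and where I would concentrate the effort.
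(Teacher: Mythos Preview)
The paper does not prove this lemma; it is quoted from the cited reference \citep{li2018revisiting} and used as a black box in the proof of Lemma~\ref{submodularcone}. So there is no in-paper argument to compare your proposal against.

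On the proposal itself: the first two ingredients are sound. The zero-sum identity $\sum_i v_i=0$ (both $\bar y$ and $s$ lie in $B(G)$) and the consequence $\|v\|_\infty\le\tfrac12\|v\|_1$ are correct, and an exchange-based transportation scheme is the natural way to move the aggregate from $\bar y$ to $s$ inside $B(G)$ while staying in the product.

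The gap is entirely in the third step. You correctly diagnose that lifting an aggregate exchange $j\to i$ to $\bigotimes_r\phi'_rB_r$ may force routing through intermediate coordinates, so that $\sum_{r,i}|y'_{r,i}-y_{r,i}|$ can strictly exceed $\|v\|_1$. But you then assert that the routing overhead is ``absorbed precisely by doubling the constant to $\tfrac12$,'' with no mechanism offered. This is backwards reasoning from the target constant: for Minkowski sums of arbitrary polytopes there is no a priori bound on the routing overhead, and it is the specific polymatroid structure of the $B_r$ that must be invoked to control it. Your outline never names which property of base polytopes---beyond the single-polytope exchange axiom you already used---will cap the routing, nor how weighted squared increments at routed-through coordinates $k$ with $v_k=0$ are to be charged against $\|v\|_1^2$.

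Concretely, the chain $\sum_i W_{ii}v_i^2\le(\sum_iW_{ii})\|v\|_\infty^2\le\tfrac14(\sum_iW_{ii})\|v\|_1^2$ bounds the wrong quantity: it controls the energy of the \emph{aggregate} displacement $v$, whereas the lemma asks for $\|y'-y\|_{I(W)}^2=\sum_r\|y'_r-y_r\|_W^2$. Passing from the former to the latter is the entire content of the result. Until you specify a construction of $y'$ that prevents the componentwise displacements from blowing up---and verify it remains in each $\phi'_rB_r$---the proposal is a description of the obstacle rather than a plan to overcome it.
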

Lemma~\ref{submodularprop} cannot be used directly to prove Lemma~\ref{submodularcone}, since $y,\,y'$ 
are in different product base polytopes, $ \bigotimes_{r\in [R]} \phi_r B_r$ and $ \bigotimes_{r\in [R]} \phi_r' B_r$, respectively. 
However, the following lemma shows that one can transform $y$ to lie in the same base polytopes that contains $y'$.
\vspace{-0.2in}
\begin{lemma}\label{rescale}
For a given feasible point $(y,\phi)\in  \bigotimes_{r\in [R]} C_r$, and a nonnegative vector  $\phi' = (\phi_r) \in \bigotimes_{r\in [R]}\mathbb{R}_{\geq 0}$, one has 
\begin{align*}
\|\sum_{r\in [R]} y_r - s\|_{1} +  \frac{\rho}{2}\|\phi'-\phi\| \geq \|\sum_{r\in [R]} \frac{\phi_r'}{\phi_r}y_r - s\|_{1}.
\end{align*}
\end{lemma}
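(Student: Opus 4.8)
\textbf{Proof plan for Lemma~\ref{rescale}.}

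The plan is to reduce the claimed inequality to a statement that holds coordinate by coordinate (or, more precisely, block by block over $r\in[R]$), exploiting that $y_r\in\phi_rB_r$ forces control on $\|y_r\|_1$. First I would handle the degenerate cases: if $\phi_r=0$ for some $r$, then feasibility $(y_r,\phi_r)\in C_r$ forces $y_r=0$ (since $y_r\in\phi_rB_r=\{0\}$ when $B_r$ is bounded and $F_r(\emptyset)=0$), so the term $\frac{\phi_r'}{\phi_r}y_r$ should be read as $0$ and that block contributes nothing problematic; I would note this convention explicitly. For the main case $\phi_r>0$, the idea is to write
\[
\sum_{r\in[R]}\frac{\phi_r'}{\phi_r}y_r - s = \Bigl(\sum_{r\in[R]} y_r - s\Bigr) + \sum_{r\in[R]}\Bigl(\frac{\phi_r'}{\phi_r}-1\Bigr)y_r,
\]
and then apply the triangle inequality for $\|\cdot\|_1$ to get
\[
\Bigl\|\sum_{r\in[R]}\frac{\phi_r'}{\phi_r}y_r - s\Bigr\|_1 \le \Bigl\|\sum_{r\in[R]} y_r - s\Bigr\|_1 + \sum_{r\in[R]}\Bigl|\frac{\phi_r'}{\phi_r}-1\Bigr|\,\|y_r\|_1.
\]

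Next I would bound the second sum by $\frac{\rho}{2}\|\phi'-\phi\|$. Since $y_r\in\phi_rB_r$, writing $y_r=\phi_r\hat y_r$ with $\hat y_r\in B_r$ gives $\bigl|\frac{\phi_r'}{\phi_r}-1\bigr|\,\|y_r\|_1 = |\phi_r'-\phi_r|\,\|\hat y_r\|_1$. So the sum equals $\sum_{r}|\phi_r'-\phi_r|\,\|\hat y_r\|_1$, and by Cauchy--Schwarz this is at most $\|\phi'-\phi\|\cdot\sqrt{\sum_r\|\hat y_r\|_1^2} \le \|\phi'-\phi\|\cdot\rho$ by the definition $\rho=\max_{y_r\in B_r,\forall r}\sqrt{\sum_r\|y_r\|_1^2}$. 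This yields the bound with constant $\rho$ rather than $\rho/2$. To recover the stated $\rho/2$, I would instead symmetrize: use $\frac{\phi_r'}{\phi_r}-1 = \frac{\phi_r'-\phi_r}{\phi_r}$ but pair it against the average, i.e. split $y_r = \frac{\phi_r+\phi_r'}{2}\hat y_r + \frac{\phi_r-\phi_r'}{2}\hat y_r$ is not quite what helps; more promisingly I would observe that the factor $\|y_r\|_1=\phi_r\|\hat y_r\|_1$ can be replaced by a bound using $\min(\phi_r,\phi_r')$ or an averaging trick. Actually the cleanest route: note $\bigl|\tfrac{\phi_r'}{\phi_r}-1\bigr|\|y_r\|_1 = |\phi_r'-\phi_r|\|\hat y_r\|_1$ is exact, so the $\rho/2$ must come from a sharper reading of $\rho$ — I suspect the intended argument uses that the relevant $\hat y_r$ can be taken in the symmetrized polytope or that only half the ``mass'' is moved. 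I would check the companion reference~\citep{li2018revisiting} for the exact normalization; if the factor genuinely needs to be $\rho$ not $\rho/2$, the stronger inequality with $\rho$ still suffices for every downstream use, so I would prove that version and remark on the constant.

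The main obstacle I anticipate is precisely this constant-chasing: getting $\rho/2$ rather than $\rho$, which hinges on how $\rho$ interacts with the scaling $y_r\mapsto\frac{\phi_r'}{\phi_r}y_r$ and possibly on a more careful geometric argument (e.g. moving mass only ``halfway'' between the two scalings, or using that $B_r$ is centrally structured for the symmetric submodular functions of interest). A secondary technical point is making the $\phi_r=0$ convention rigorous and ensuring it is consistent with how $\frac{\phi_r'}{\phi_r}y_r$ is used in the proof of Lemma~\ref{submodularcone}. Everything else — the triangle inequality split and the Cauchy--Schwarz step — is routine. I would therefore structure the final write-up as: (i) dispose of $\phi_r=0$; (ii) perform the additive split and triangle inequality; (iii) rewrite each error term exactly as $|\phi_r'-\phi_r|\,\|\hat y_r\|_1$; (iv) apply Cauchy--Schwarz and the definition of $\rho$, inserting the factor $1/2$ via the symmetrization/halfway argument.
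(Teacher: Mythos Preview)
Your approach is essentially the same as the paper's: both arguments fix $\hat y_r=y_r/\phi_r\in B_r$, view the left and right sides as values of the function $h(\psi)=\|\sum_r\psi_r\hat y_r-s\|_1$ at $\psi=\phi$ and $\psi=\phi'$, and bound the variation via the Lipschitz estimate $|\partial_{\psi_r}h|\le\|\hat y_r\|_1$ together with Cauchy--Schwarz and the definition of $\rho$. The paper phrases this as a line integral $h(\phi')=h(\phi)+\int_0^1\langle\nabla h|_{\phi+t(\phi'-\phi)},\phi'-\phi\rangle\,dt$, whereas you do the direct triangle-inequality split; these are equivalent.

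On the constant: your instinct is right and you should stop chasing the factor $1/2$. The paper obtains $\rho/2$ only because an extra factor $t$ appears inside the integrand of the line integral, which is a slip (the fundamental theorem gives $\int_0^1\langle\nabla h,\phi'-\phi\rangle\,dt$, not $\int_0^1\langle\nabla h,t(\phi'-\phi)\rangle\,dt$). In fact the statement with $\rho/2$ is false as written: take $R=1$, $s=0$, $\phi_1=1$, $\phi_1'=2$, and $y_1=\hat y_1\in B_1$ attaining $\rho=\|\hat y_1\|_1$; then the inequality would read $\|\hat y_1\|_1+\tfrac{\rho}{2}\ge 2\|\hat y_1\|_1$, i.e.\ $\rho/2\ge\rho$. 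So prove the version with $\rho$ exactly as you outlined (steps (i)--(iv), dropping the ``halfway'' symmetrization), and note that downstream this only changes the constant $9/4$ in the definition of $\mu(W^{(1)},W^{(2)})$ to $4$, leaving every convergence statement intact up to absolute constants. Your handling of the $\phi_r=0$ case is also correct.
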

\begin{proof}
For all $r$, let $\tilde{y}_r = y_r/ \phi_r \in B_r$, and define a function that depends on $\phi$,
\begin{align*}
h(\phi) = \|\sum_{r\in [R]} y_r - s\|_{1} = \|\sum_{r\in [R]} \phi_r\tilde{y}_r  - s\|_{1}. 
\end{align*}
For all $\phi$ and $r$, $|\nabla_{\phi_r} h(\phi)|  \leq \|\tilde{y}_r\|_{1}.$
Therefore,
\begin{align*}
h(\phi') &= h(\phi) + \int_{t=0}^1 \langle \nabla h|_{\phi+t(\phi'-\phi)}, t(\phi'-\phi)\rangle dt \\
& \geq h(\phi) - \frac{\max_{t\in[0,1]}\|\nabla h|_{\phi+t(\phi'-\phi)}\|}{2} \|\phi'-\phi\| \\
& \geq h(\phi) - \frac{\max_{\tilde{y}_r\in B_r, \forall r}\sqrt{\sum_{r\in[R]} \|\tilde{y}_r\|_{1}^2} }{2}\|\phi'-\phi\| = h(\phi) - \frac{\rho}{2}\|\phi'-\phi\| .
\end{align*}
\end{proof}

Combining Lemma~\ref{rescale} with Lemma~\ref{submodularprop}, we can establish the claim of Lemma~\ref{submodularcone}. 

First, let $\rho(W^{(1)}) = \max_{y\in\bigotimes_{r\in[R]}B_r} \sqrt{\sum_{r\in[R]}\|y_r\|_{W_{r}}^2}$. Suppose that $y'\in \bigotimes_{r\in [R]} \phi'_rB_r$ is such that $\sum_{r\in [R]} y_r'= s$ and it minimizes $\sum_{r\in [R]}\|\frac{\phi_r'}{\phi_r}y_r - y_r'\|_{W^{(1)}}^2$. As $s$ lies in the base polytope of $\sum_{r\in[R]}\phi_r'F_r$, we know that such an $y'$ exists. Moreover, we have  
\begin{eqnarray*}
&&\|y - y'\|_{I(W^{(1)})} \leq \sum_{r\in [R]}\|y_r'-\frac{\phi_r'}{\phi_r} y_r\|_{W^{(1)}} +  \sum_{r\in [R]}\|y_r - \frac{\phi_r'}{\phi_r} y_r\|_{W^{(1)}} \\
&\stackrel{1)}{\leq}&\sqrt{\frac{\sum_{i\in[N]}W_{ii}^{(1)}}{2}}\|\sum_{r\in [R]} \frac{\phi_r'}{\phi_r} y_r- s\|_{1} + \rho(W^{(1)})\|\phi'-\phi\| \\
&\stackrel{2)}{\leq}& \sqrt{\frac{\sum_{i\in[N]}W_{ii}^{(1)}}{2}}\left[\|\sum_{r\in [R]} y_r- s\|_{1} + \frac{\rho}{2} \|\phi'-\phi\| \right] +  \rho(W^{(1)})\|\phi'-\phi\| \\
& =& \sqrt{\frac{\sum_{i\in[N]}W_{ii}^{(1)}\sum_{j\in[N]}1/W_{jj}^{(2)}}{2}} \|\sum_{r\in [R]} y_r- s\|_{W^{(2)}}  \\
&& \quad \quad + \left[\sqrt{\frac{\sum_{i\in [N]}W_{ii}^{(1)}}{2}}\frac{\rho}{2} + \rho(W^{(1)}) \right]\|\phi'-\phi\|  \\
& \stackrel{3)}{\leq} &\sqrt{\frac{\sum_{i\in [N]}W_{ii}^{(1)}\sum_{j\in[N]}1/W_{jj}^{(2)}}{2}} \|\sum_{r\in[R]} y_r- a\|_{W^{(2)}} + \frac{3}{2}\sqrt{\frac{\sum_{i\in[N]}W_{ii}^{(1)}}{2}}\rho\|\phi'-\phi\|, 
\end{eqnarray*}
where 1) follows from Lemma~\ref{submodularprop} and the definition of $\rho(W^{(1)})$, 2) is a consequence of Lemma~\ref{rescale} and 3) holds because 
\begin{align*}
\sum_{i\in [N]}W_{ii}^{(1)}\sum_{r\in[R]}\|y_r\|_1^2 \geq \sum_{r\in[R]}\|y_r\|_{W_{r}}^2.
\end{align*}

\subsection{Proof for the Linear Convergence Rate of the RCD Algorithm}\label{proof:RCD}

\subsubsection{Proof of Lemma~\ref{strongconv}}
If $(y^*, \phi^*)$ is the optimal solution, then it must hold that $\sum_{r\in [R]} y^*_r  = 2W(a - x^*)$ because of the duality between the primal and dual variables. Moreover, we also know that there must exist a nonempty collection $\mathcal{Y}$ of points $y'\in\bigotimes_{r\in [R]} \phi_r^* B_r$ such that  $\sum_{r\in [R]} y'_r =\sum_{r\in [R]} y^*_r$. Using Lemma~\ref{submodularcone}, and setting $\phi'=\phi^*, s = 2W(a - x^*),\;W^{(1)}, W^{(2)} = W^{-1}$, we can show that there exists some $y' \in \mathcal{Y}$ such that  
\begin{align*}
\|y-y'\|_{I(W^{-1})}^2 + \|\phi-\phi^*\|^2 \leq\mu(W^{-1}, W^{-1})\left[\|\sum_{r\in [R]}(y_r- y_r')\|_{W^{-1}}^2+  \|\phi-\phi^*\|^2\right].
\end{align*}
 According to the definition of $y^*$, one has $\|y-y^*\|_{I(W^{-1})}^2\leq\|y-y'\|_{I(W^{-1})}^2$ for $y'\in\mathcal{Y}$. This concludes the proof. 

\subsubsection{Proof of Theorem~\ref{linearconv}}

First, suppose that $(y^*, \phi^*) = \arg\min_{(y', \phi')\in \Xi} \|y^{(k)}-y'\|_{I(W^{-1})}^2 + \|\phi^{(k)}-\phi'\|^2$. Throughout the proof, for simplicity, we use $\mu$ to denote $\mu(W^{-1}, W^{-1})$. We start by establishing the following results.
\begin{lemma} It can be shown that the following inequalities hold:
\begin{align}
&\quad \langle \nabla g(y^{(k)}, \phi^{(k)}), (y^*- y^{(k)}, \phi^* - \phi^{(k)})\rangle\nonumber  \\
&\stackrel{1)}{\leq} g(y^*, \phi^*) - g(y^{(k)}, \phi^{(k)}) -\frac{1}{\mu}\left(\|y^{(k)}-y^*\|_{I(W^{-1})}^2 + \|\phi^{(k)}-\phi^*\|^2\right)  \nonumber \\
& \stackrel{2)}{\leq} \frac{2}{\mu + 1}\left[ g(y^*, \phi^*) - g(y^{(k)}, \phi^{(k)})- \|y^{(k)}-y^*\|_{I(W^{-1})}^2 - \|\phi^{(k)}-\phi^*\|^2\right].\label{sc4}
\end{align} 
\end{lemma}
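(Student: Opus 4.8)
The plan is to recognize inequality 1) as a strong-convexity-type estimate obtained by combining the convexity of $g$ with the weak strong convexity of Lemma~\ref{strongconv}, and inequality 2) as a purely algebraic rearrangement. First I would note that $g(y,\phi) = \|\sum_r y_r - 2Wa\|_{W^{-1}}^2 + \sum_r \phi_r^2$ is a convex quadratic in $(y,\phi)$; writing $u = (y,\phi)$ and $u^{(k)} = (y^{(k)},\phi^{(k)})$, $u^* = (y^*,\phi^*)$, the exact second-order Taylor expansion gives
\[
g(u^*) = g(u^{(k)}) + \langle \nabla g(u^{(k)}), u^* - u^{(k)}\rangle + \big\|\textstyle\sum_r(y_r^{(k)} - y_r^*)\big\|_{W^{-1}}^2 + \|\phi^{(k)} - \phi^*\|^2,
\]
since the Hessian quadratic form of $g$ at the difference $u^{(k)} - u^*$ is exactly $\|\sum_r(y_r^{(k)} - y_r^*)\|_{W^{-1}}^2 + \|\phi^{(k)} - \phi^*\|^2$. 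Rearranging,
\[
\langle \nabla g(u^{(k)}), u^* - u^{(k)}\rangle = g(u^*) - g(u^{(k)}) - \Big(\big\|\textstyle\sum_r(y_r^{(k)} - y_r^*)\big\|_{W^{-1}}^2 + \|\phi^{(k)} - \phi^*\|^2\Big).
\]
Now I would invoke Lemma~\ref{strongconv}: since $(y^*,\phi^*)$ is by hypothesis the projection of $(y^{(k)},\phi^{(k)})$ onto $\Xi$, we have $d^2((y^{(k)},\phi^{(k)}),\Xi) = \|y^{(k)}-y^*\|_{I(W^{-1})}^2 + \|\phi^{(k)}-\phi^*\|^2$, and Lemma~\ref{strongconv} states exactly that $\|\sum_r(y_r^{(k)}-y_r^*)\|_{W^{-1}}^2 + \|\phi^{(k)}-\phi^*\|^2 \ge \frac1\mu\big(\|y^{(k)}-y^*\|_{I(W^{-1})}^2 + \|\phi^{(k)}-\phi^*\|^2\big)$. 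Substituting this lower bound into the bracketed term being subtracted yields inequality 1).

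For inequality 2), the idea is to use 1) again to bound the ``$\sum_r$'' Hessian term from below in terms of $g$-values and the full $I(W^{-1})$-distance. Concretely, rewrite 1) as
\[
\langle \nabla g(u^{(k)}), u^* - u^{(k)}\rangle + \tfrac1\mu D \le g(u^*) - g(u^{(k)}) - D,
\]
where $D := \|y^{(k)}-y^*\|_{I(W^{-1})}^2 + \|\phi^{(k)}-\phi^*\|^2$. Equivalently, combining with the Taylor identity, one gets $\|\sum_r(y_r^{(k)}-y_r^*)\|_{W^{-1}}^2 + \|\phi^{(k)}-\phi^*\|^2 \ge \frac1\mu D$; but also the left-hand inner product equals $g(u^*) - g(u^{(k)}) - (\|\sum_r(\cdot)\|^2 + \|\phi\text{-diff}\|^2)$. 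The cleanest route is: from 1),
\[
\langle \nabla g(u^{(k)}), u^*-u^{(k)}\rangle \le g(u^*) - g(u^{(k)}) - \tfrac1\mu D,
\]
and trivially $\langle \nabla g(u^{(k)}), u^*-u^{(k)}\rangle \le g(u^*) - g(u^{(k)}) - D + (D - \tfrac1\mu D)$ — no, better: add the inequality $0 \le (1-\tfrac1\mu)\big[g(u^*)-g(u^{(k)}) - D\big]$? That sign is wrong since $g(u^*)-g(u^{(k)})-D \le \langle\nabla g, u^*-u^{(k)}\rangle \le 0$ by optimality-type reasoning. So instead I would take the convex combination: multiply 1) by $\frac{2}{\mu+1}$ and the Taylor identity (rearranged as $\langle\nabla g, u^*-u^{(k)}\rangle = g(u^*)-g(u^{(k)}) - (\text{Hessian term}) \le g(u^*)-g(u^{(k)}) - \tfrac1\mu D$, again by Lemma~\ref{strongconv}) by $\frac{\mu-1}{\mu+1}$ and add; the coefficients are chosen so the resulting coefficient of $D$ becomes exactly $-1$: indeed $\frac{2}{\mu+1}\cdot\frac1\mu + \frac{\mu-1}{\mu+1} = \frac{2 + \mu(\mu-1)}{\mu(\mu+1)} = \frac{(\mu-1)(\mu+?)...}$ — I would verify $\frac{2}{\mu+1}\cdot\frac1\mu + \frac{\mu-1}{\mu+1}\cdot 1 = \frac{2/\mu + \mu - 1}{\mu+1} = \frac{2 + \mu^2 - \mu}{\mu(\mu+1)}$, which is not obviously $1$, so the exact bookkeeping needs care. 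The honest statement: inequality 2) follows from 1) together with the elementary fact that $g(u^*) - g(u^{(k)}) - D \le 0$ (which holds because $g(u^*)-g(u^{(k)}) - (\text{Hessian term}) = \langle\nabla g(u^{(k)}), u^*-u^{(k)}\rangle \le 0$ by convexity of $g$ and optimality of $u^*$ over $\Xi$ — more precisely one uses that moving from $u^{(k)}$ toward $u^*$ within the feasible cone decreases $g$, hence the directional derivative is nonpositive), combined with the Hessian term being at least $\tfrac1\mu D$.

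The main obstacle I anticipate is getting the exact arithmetic of step 2) right: one must carefully identify the correct convex combination of inequality 1) with an auxiliary inequality (either $g(u^*)-g(u^{(k)})-D\le 0$, or a second application of the Taylor identity plus Lemma~\ref{strongconv}) so that the coefficient $\frac{2}{\mu+1}$ emerges and the coefficient of the distance term $D$ comes out to exactly $-1$ rather than something $\mu$-dependent. Everything else — the Taylor identity for a quadratic, the substitution from Lemma~\ref{strongconv}, the identification of $d^2(\cdot,\Xi)$ with $D$ using the projection hypothesis — is routine. I would organize the write-up as: (i) state the exact Taylor identity; (ii) apply Lemma~\ref{strongconv} to get 1); (iii) do the algebraic combination to get 2), explicitly displaying the coefficients.
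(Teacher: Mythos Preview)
Your proof of inequality 1) is correct and matches the paper exactly: the Taylor identity for the quadratic $g$ plus Lemma~\ref{strongconv} gives it immediately.

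For inequality 2), however, there is a genuine gap. You correctly sense that an auxiliary inequality is needed, but the one you propose, $g(u^*)-g(u^{(k)}) - D \le 0$, is too weak, and your derivation of it is from the wrong side. From your own Taylor identity at $u^{(k)}$ you only get $g(u^*)-g(u^{(k)}) \le \text{(Hessian term)}$; combining this with $\text{(Hessian term)} \ge \tfrac1\mu D$ gives you nothing useful in the right direction. If you trace through your own reduction, 2) is equivalent (when $\mu>1$) to
\[
g(u^*)-g(u^{(k)}) \;\le\; -\tfrac{1}{\mu}\,D,
\]
and this cannot be obtained from the expansion centered at $u^{(k)}$.

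The paper obtains exactly this stronger inequality (its (\ref{sc3})) by expanding at $u^*$ instead: since $g$ is quadratic,
\[
g(u^{(k)}) = g(u^*) + \langle \nabla g(u^*),\, u^{(k)}-u^*\rangle + \Big(\big\|\textstyle\sum_r(y_r^{(k)}-y_r^*)\big\|_{W^{-1}}^2 + \|\phi^{(k)}-\phi^*\|^2\Big).
\]
Now use that $(y^*,\phi^*)\in\Xi$ minimizes $g$ over the convex feasible set $\mathcal{C}$ and $(y^{(k)},\phi^{(k)})\in\mathcal{C}$, so the first-order optimality condition gives $\langle \nabla g(u^*),\, u^{(k)}-u^*\rangle \ge 0$. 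Together with Lemma~\ref{strongconv} on the Hessian term this yields $g(u^{(k)})-g(u^*)\ge \tfrac1\mu D$, i.e.\ $g(u^*)-g(u^{(k)})\le -\tfrac1\mu D$. Plugging this into 1) and doing the algebra (your own computation shows 1) combined with $G+\tfrac1\mu D\le 0$ is exactly what is required) gives 2). So the missing idea is: expand at the optimum, not at the iterate, and invoke first-order optimality of $u^*$ over $\mathcal{C}$.
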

\begin{proof}
From Lemma~\ref{strongconv}, we can infer that
\begin{align}
&\|\sum_{r\in [R]} (y_r- y_r^*)\|_{W^{-1}}^2+  \|\phi-\phi^*\|^2\geq  \frac{1}{\mu}\left[\|y-y^*\|_{I(W^{-1})}^2 + \|\phi-\phi^*\|^2\right] \Rightarrow \nonumber  \\
& g(y^*, \phi^*) \geq g(y^{(k)}, \phi^{(k)})+ \langle \nabla g(y^{(k)},  \phi^{(k)}), (y^*- y^{(k)}, \phi^* - \phi^{(k)})\rangle \nonumber \\
& \quad\quad\quad\quad\quad\quad\quad\quad\quad\quad+\frac{1}{\mu}\left[\|y-y^*\|_{I(W^{-1})}^2 + \|\phi-\phi^*\|^2\right],\label{sc1} \\
& g(y^{(k)}, \phi^{(k)})  \geq g(y^*, \phi^*) +\langle \nabla g(y^{*}, \phi^*), (y^{(k)}- y^{*}, \phi^{(k)} - \phi^*)\rangle\nonumber \\
& \quad\quad\quad\quad\quad\quad\quad\quad\quad\quad+\frac{1}{\mu}\left[\|y-y^*\|_{I(W^{-1})}^2 + \|\phi-\phi^*\|^2\right]. \label{sc2} 
\end{align} 
As $\langle \nabla g(y^{*}, \phi^*), (y^{(k)}- y^{*}, \phi^{(k)} - \phi^*)\rangle \geq 0$, \eqref{sc2} gives 
\begin{align}
g(y^*, \phi^*) -  g(y^{(k)}, \phi^{(k)})\leq  -\frac{1}{\mu}\left[\|y-y^*\|_{I(W^{-1})}^2 + \|\phi^{(k)}-\phi^*\|^2\right]. \label{sc3}
\end{align} 
Inequality~\eqref{sc1} establishes Claim 1) in~\eqref{sc4}. Claim 2) in~\eqref{sc4} follows from~\eqref{sc3}.
\end{proof}

The following lemma is a direct consequence of the optimality of $y_r^{(k+1)}$ as the projection $\Pi_{C_r}$. 
\begin{lemma}\label{optcond} One has
\begin{align*}
&\langle \nabla_r g((y^{(k)}, \phi^{(k)})), (y_r^{(k+1)} - y_r^*, \phi_r^{(k+1)} - \phi_r^*)\rangle \\
& \quad\quad \leq 2\langle y_r^{(k)} - y_r^{(k+1)}, y_r^{(k+1)} - y_r^*\rangle_{W^{-1}} + 2 \langle \phi_r^{(k)} - \phi_r^{(k+1)}, \phi_r^{(k+1)} - \phi_r^*\rangle. 
\end{align*}
\end{lemma}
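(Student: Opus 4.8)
The plan is to obtain Lemma~\ref{optcond} as an immediate consequence of the first-order optimality (variational inequality) characterization of the conic projection performed in Step~3 of Algorithm~1. First I would note that, with all blocks $r'\neq r$ frozen at their iteration-$k$ values, the function $g(y,\phi)$ from~\eqref{CDform}, viewed only as a function of the $r$-th block, equals $\|y_r - b\|_{W^{-1}}^2 + \phi_r^2$ up to an additive constant, where $b = 2Wa - \sum_{r'\neq r} y_{r'}^{(k)}$ is exactly the point being projected; hence by construction $(y_r^{(k+1)},\phi_r^{(k+1)})$ minimizes this strongly convex quadratic over the convex cone $C_r$. The additive constant does not affect the gradient, so $\nabla_{y_r}g(y^{(k)},\phi^{(k)}) = 2W^{-1}\big(\sum_{r'\in[R]} y_{r'}^{(k)} - 2Wa\big) = 2W^{-1}(y_r^{(k)} - b)$ and $\nabla_{\phi_r}g(y^{(k)},\phi^{(k)}) = 2\phi_r^{(k)}$.

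Next I would write the standard optimality condition for minimizing a differentiable convex function over a convex set: for every $(u,v)\in C_r$,
\begin{align*}
\langle 2W^{-1}(y_r^{(k+1)} - b),\, u - y_r^{(k+1)}\rangle + \langle 2\phi_r^{(k+1)},\, v - \phi_r^{(k+1)}\rangle \ge 0.
\end{align*}
The one structural fact needed is that $(y_r^*,\phi_r^*)$ is admissible in this inequality: since $(y^*,\phi^*)\in\Xi$ and $\Xi\subseteq\mathcal{C} = \bigotimes_{r\in[R]} C_r$ (the solution set of~\eqref{CDform} is feasible), we have $(y_r^*,\phi_r^*)\in C_r$, so we may take $(u,v) = (y_r^*,\phi_r^*)$.

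Finally I would split $y_r^{(k+1)} - b = (y_r^{(k)} - b) - (y_r^{(k)} - y_r^{(k+1)})$ and $\phi_r^{(k+1)} = \phi_r^{(k)} - (\phi_r^{(k)} - \phi_r^{(k+1)})$, substitute into the variational inequality at $(u,v)=(y_r^*,\phi_r^*)$, and move the two cross terms to the right-hand side. Using $2W^{-1}(y_r^{(k)}-b) = \nabla_{y_r}g(y^{(k)},\phi^{(k)})$ and $2\phi_r^{(k)} = \nabla_{\phi_r}g(y^{(k)},\phi^{(k)})$ on the left, and rewriting $\langle 2W^{-1}(\,\cdot\,),\,\cdot\,\rangle = 2\langle\,\cdot\,,\,\cdot\,\rangle_{W^{-1}}$ on the right, produces exactly the claimed bound.

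There is no genuine obstacle here, since the statement is essentially a rephrasing of optimality of the projection; the only things to be careful about are bookkeeping issues — keeping track of the factor $2$, correctly identifying the block gradient $\nabla_r g$ and the pairing used in $\langle \nabla_r g, (\cdot,\cdot)\rangle$ (Euclidean pairing with the Euclidean gradient, equivalently the $W^{-1}$-weighted pairing with the $W^{-1}$-gradient for the $y_r$-component), and fixing the sign convention of the variational inequality so that it yields "$\le$" rather than "$\ge$". Verifying $(y_r^*,\phi_r^*)\in C_r$ via $\Xi\subseteq\mathcal{C}$ is the single substantive ingredient.
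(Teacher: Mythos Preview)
Your proposal is correct and follows exactly the approach indicated by the paper, which simply notes that the lemma is ``a direct consequence of the optimality of $y_r^{(k+1)}$ as the projection $\Pi_{C_r}$'' without spelling out the details. Your derivation via the variational inequality for the projection, the splitting $y_r^{(k+1)}-b=(y_r^{(k)}-b)-(y_r^{(k)}-y_r^{(k+1)})$, and the identification $\nabla_{y_r}g(y^{(k)},\phi^{(k)})=2W^{-1}(y_r^{(k)}-b)$ is precisely what the paper has in mind.
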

The following lemma follows from a simple manipulation of the Euclidean norm.
\begin{lemma}\label{threepoints}
It holds that
\begin{align*} 
&\|y_r^{(k+1)} -  y_r^{(k)}\|_{W^{-1}}^2 + (\phi_r^{(k+1)} - \phi_r^{(k)})^2 \\
& \quad\quad  =\|y_r^{(k+1)} -  y_r^{*}\|_{W^{-1}}^2 + (\phi_r^{(k+1)} - \phi_r^{*})^2 + \|y_r^{(k)} -  y_r^{*}\|_{W^{-1}}^2 + (\phi_r^{(k)} - \phi_r^{*})^2\\
&\quad \quad\quad +  2\langle y_r^{(k+1)} - y_r^{*},  y_r^{*} - y_r^{(k)}\rangle_{W^{-1}} + 2\langle \phi_r^{(k+1)} - \phi_r^{*}, \phi_r^{*} - \phi_r^{(k)}\rangle\nonumber \\
 & \quad\quad =- \|y_r^{(k+1)} -  y_r^{*}\|_{W^{-1}}^2 - (\phi_r^{(k+1)} - \phi_r^{*})^2  +\|y_r^{(k)} -  y_r^{*}\|_{W^{-1}}^2 + (\phi_r^{(k)} - \phi_r^{*})^2\\
  &  \quad\quad\quad +  2\langle y_r^{(k+1)} - y_r^{*},  y_r^{(k+1)} - y_r^{(k)}\rangle_{W^{-1}} +2 \langle \phi_r^{(k+1)} - \phi_r^{*}, \phi_r^{(k+1)} - \phi_r^{(k)}\rangle.
  \end{align*}
  \end{lemma}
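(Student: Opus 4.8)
The plan is to prove both equalities as instances of the elementary three-point (polarization) identity, handling the vector part and the scalar $\phi$ part by exactly the same manipulation. The first fact I would record is that, since $W^{-1}$ is a positive diagonal matrix, $\langle u, v\rangle_{W^{-1}} = \sum_{i} W_{ii}^{-1} u_i v_i$ is a symmetric bilinear form whose induced squared norm is $\|\cdot\|_{W^{-1}}^2$, and that the scalar terms $(\phi_r^{(\cdot)} - \phi_r^{(\cdot)})^2$ are precisely the one-dimensional analogue of this squared norm. Consequently the bilinear expansion $\|u + v\|_{W^{-1}}^2 = \|u\|_{W^{-1}}^2 + 2\langle u, v\rangle_{W^{-1}} + \|v\|_{W^{-1}}^2$ is available and will be applied verbatim, together with its scalar counterpart $(s + t)^2 = s^2 + 2st + t^2$.

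For the first equality, I would insert $\pm y_r^*$ into the difference on the left-hand side, writing $y_r^{(k+1)} - y_r^{(k)} = (y_r^{(k+1)} - y_r^*) + (y_r^* - y_r^{(k)})$, and expand $\|\cdot\|_{W^{-1}}^2$ by the displayed bilinear identity. This yields $\|y_r^{(k+1)} - y_r^*\|_{W^{-1}}^2 + \|y_r^{(k)} - y_r^*\|_{W^{-1}}^2 + 2\langle y_r^{(k+1)} - y_r^*, y_r^* - y_r^{(k)}\rangle_{W^{-1}}$, after using $\|y_r^* - y_r^{(k)}\|_{W^{-1}}^2 = \|y_r^{(k)} - y_r^*\|_{W^{-1}}^2$. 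Performing the identical insertion $\phi_r^{(k+1)} - \phi_r^{(k)} = (\phi_r^{(k+1)} - \phi_r^*) + (\phi_r^* - \phi_r^{(k)})$ for the scalar term and adding the two expansions reproduces the first right-hand side termwise.

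For the second equality, rather than restarting I would rewrite only the cross terms of the first right-hand side. Using $y_r^* - y_r^{(k)} = (y_r^{(k+1)} - y_r^{(k)}) - (y_r^{(k+1)} - y_r^*)$ and bilinearity gives $\langle y_r^{(k+1)} - y_r^*, y_r^* - y_r^{(k)}\rangle_{W^{-1}} = \langle y_r^{(k+1)} - y_r^*, y_r^{(k+1)} - y_r^{(k)}\rangle_{W^{-1}} - \|y_r^{(k+1)} - y_r^*\|_{W^{-1}}^2$, with the analogous scalar relation for $\phi$. Substituting these into the first right-hand side combines the original $+\|y_r^{(k+1)} - y_r^*\|_{W^{-1}}^2$ with the newly produced $-2\|y_r^{(k+1)} - y_r^*\|_{W^{-1}}^2$ to leave $-\|y_r^{(k+1)} - y_r^*\|_{W^{-1}}^2$ (and likewise $-(\phi_r^{(k+1)} - \phi_r^*)^2$), while replacing the cross term by $2\langle y_r^{(k+1)} - y_r^*, y_r^{(k+1)} - y_r^{(k)}\rangle_{W^{-1}}$; this is exactly the second right-hand side.

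There is essentially no genuine obstacle: the statement is a purely algebraic identity, and the only points requiring minor care are the sign bookkeeping when passing from the first form to the second and the observation that the scalar $\phi$ terms obey the same polarization identity as the $W^{-1}$-weighted vector terms. I would therefore carry out the vector and scalar expansions in lockstep so that both equalities can be read off termwise.
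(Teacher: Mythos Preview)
Your proposal is correct and matches the paper's approach: the paper simply states that this lemma ``follows from a simple manipulation of the Euclidean norm'' without giving details, and what you have written is precisely that manipulation carried out explicitly via the three-point expansion.
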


Our next task is to determine how the objective function decreases in each iteration. The following expectation is with respect to uniformly sampled values of $r\in [R]$ in the $k$-th iteration: 
\begin{align}
&\quad \mathbb{E}\left[g(y^{(k+1)}, \phi^{(k+1)})\right] - g(y^{(k)}, \phi^{(k)}) \nonumber\\
& = \mathbb{E}\left[\langle \nabla_{r} g(y^{(k)}, \phi^{(k)}), (y_r^{(k+1)}-y_r^{(k)}, \phi_r^{(k+1)} - \phi_r^{(k)})\rangle + \|y_r^{(k+1)}-y_r^{(k)}\|_{W^{-1}}^2 + (\phi_r^{(k+1)} - \phi_r^{(k)})^2\right]\nonumber \\
& = \mathbb{E}\left[ \langle \nabla_{r} g(y^{(k)}, \phi^{(k)}), (y_r^*- y_r^{(k)}, \phi_r^*- \phi_r^{(k)}  )\rangle+ \langle \nabla_{r} g(y^{(k)}, \phi^{(k)}), (y_r^{(k+1)}-y_r^*, \phi_r^{(k+1)} - \phi_r^*)\rangle  \right.\nonumber \\
&\left. \quad\quad\quad+ \|y_r^{(k+1)}-y_r^{(k)}\|_{W^{-1}}^2 + (\phi_r^{(k+1)} - \phi_r^{(k)})^2\right] \nonumber \\
& \stackrel{1)}{\leq} \mathbb{E} \left[ \langle \nabla_{r} g(y^{(k)}, \phi^{(k)}), (y_r^*- y_r^{(k)}, \phi_r^*- \phi_r^{(k)}  )\rangle - \|y_r^{(k+1)} - y_r^{*}\|_{W^{-1}}^2   +  \|y_r^{*} - y_r^{(k)}\|_{W^{-1}}^2 \right. \nonumber\\
&\left. \quad\quad\quad- (\phi_r^{(k+1)} - \phi_r^{*})^2 +(\phi_r^{*} - \phi_r^{(k)})^2 \right] \nonumber \\
&\stackrel{2)}{\leq} \frac{1}{R}\langle \nabla g(y^{(k)}, \phi^{(k)}), (y^{*}-y^{(k)}, \phi^{*} - \phi^{(k)})\rangle - \mathbb{E}\left[ \|y^{(k+1)} - y^{*}\|_{I(W^{-1})}^2 +\|\phi^{(k+1)} - \phi^{*}\|^2 \right] \nonumber\\ 
&\quad\quad\quad +  \|y^{(k)} - y^{*}\|_{I(W^{-1})}^2 + \|\phi^{(k)} - \phi^{*}\|^2 \\
& \stackrel{3)}{\leq}  \frac{2}{( \mu+ 1)R}\left[ g(y^*, \phi^*) - g(y^{(k)}, \phi^{(k)})\right] + \left(1-\frac{2}{(\mu + 1)R}\right)\left[\|y^{(k)}-y^*\|_{I(W^{-1})}^2 + \|\phi^{(k)}-\phi^*\|^2\right] \nonumber \\
&\quad\quad\quad  - \mathbb{E}\left[ \|y^{(k+1)} - y^{*}\|_{I(W^{-1})}^2 + \|\phi^{(k+1)} - \phi^{*}\|^2\right]. \label{onestep}
\end{align}
Here, $1)$ is a consequence of Lemma~\ref{optcond} and Lemma~\ref{threepoints}, $2)$ is due to $y^{(k+1)}_{r'} = y^{(k)}_{r'},\, \phi^{(k+1)}_{r'} = \phi^{(k)}_{r'}$ for $r'\neq r$, and $3)$ may be established from~\eqref{sc4}.

Equation \eqref{onestep} further leads to  
\begin{align*}
&\mathbb{E}\left[g(y^{(k+1)}, \phi^{k+1})- g(y^*, \phi^*) + d^2((y^{(k+1)}, \phi^{(k+1)}), \Xi) \right] \\
\leq &\quad \mathbb{E}\left[g(y^{(k+1)}, \phi^{k+1})- g(y^*, \phi^*) + \|y^{(k+1)} - y^{*}\|_{I(W^{-1})}^2 +  \|\phi^{(k+1)} - \phi^{*}\|_{I(W^{-1})}^2\right] \\
\leq & \quad \left[1-\frac{2}{(\mu + 1)R}\right] \mathbb{E}\left[g(y^{(k)}, \phi^{k})- g(y^*,\phi^*) + d^2((y^{(k)}, r^{(k)}), \Xi) \right] .
\end{align*}
The claimed proof follows by iterating the above derivations for all values of $k$. 
%

\subsection{Convergence Analysis of the AP Algorithm}
\subsubsection{Proof of Lemma~\ref{APequaldual}}\label{proof:APequaldual}

First, for $r\in [R]$, we define a diagonal matrix $A_r\in \mathbb{R}^{N\times N}$: $(A_r)_{ii} = 1,$ if $i\in S_r,$ and $0$ otherwise. Start with a Lagrangian dual of~\eqref{APcompactform}, and transform it according to
\begin{align*}
&\min_{(y,\phi)\in\bigotimes_{r\in [R]} C_r}\; \min_{ \Lambda: \lambda_{r,i} = 0,\, \forall(i, r)\not\in \mathbb{S}}\;\max_{\alpha\in \mathbb{R}^N}\quad \sum_{r\in[R]}\left[\| y_r - \lambda_r\|_{\Psi W^{-1}}^2 + \phi_r^2\right] + \langle \alpha, \sum_{r\in[R]} \lambda_r - 2Wa \rangle \\
=&\min_{(y,\phi)\in\bigotimes_{r\in [R]} C_r}\;\max_{\alpha\in \mathbb{R}^N}\; \min_{ \Lambda: \lambda_{r,i} = 0,\, \forall(i, r)\not\in \mathbb{S}}\quad \sum_{r\in[R]}\left[\| y_r - \lambda_r\|_{\Psi W^{-1}}^2 + \phi_r^2\right] + \langle \alpha, \sum_{r\in[R]} \lambda_r - 2Wa \rangle \\
\stackrel{1)}{=}& \min_{(y,\phi)\in\bigotimes_{r\in [R]} C_r}\; \max_{\alpha\in \mathbb{R}^N} \quad \sum_{r\in[R]}\left[ \frac{1}{4}\|A_r\Psi^{-1}W\alpha\|_{\Psi W^{-1}}^2 + \phi_r^2 \right] \\
& \quad\quad\quad\quad\quad\quad\quad\quad\quad\quad\quad\quad\quad\quad\quad\quad\quad\quad+ \left\langle \alpha, \sum_{r\in[R]} \left(y_r - \frac{1}{2}A_r\Psi^{-1}W\alpha\right) - 2Wa \right\rangle \\
\stackrel{2)}{=}&  \min_{(y,\phi)\in\bigotimes_{r\in [R]} C_r}\; \max_{\alpha\in \mathbb{R}^N} \quad - \frac{1}{4}\|\alpha\|_{W}^2 +\left\langle \alpha, \sum_{r\in[R]}y_r - 2Wa \right\rangle  + \sum_{r\in[R]} \phi_r^2  \\
= & \min_{(y,\phi)\in\bigotimes_{r\in [R]} C_r} \quad \|\sum_{r\in[R]}y_r - 2Wa \|_{W^{-1}}^2  + \sum_{r\in[R]} \phi_r^2.
\end{align*}
Here, 1) is due to $\lambda_r = y_r - \frac{1}{2} A_r\Psi^{-1}W\alpha$ while 2) is based on the fact that $\Psi = \sum_{r\in[R]} A_r$.
This establishes the claimed result.

\subsubsection{Proof of Lemma~\ref{upperboundkappa}}\label{proof:upperboundkappa}
Suppose that $(y, \phi) \in \mathcal{C}/\Xi$. Then,
\begin{align*}
[d_{\Psi W^{-1}}((y, \phi), \mathcal{Z})]^2 = &\min_{\lambda_r, \forall r\in[R]} \sum_{r}\left[\| y_r - \lambda_r\|_{\Psi W^{-1}}^2 + (\phi_r - \phi_r^*)^2\right]  \\
& \text{s.t.  $\sum_{r\in[R]} \lambda_r = 2W(a - x^*)$, $\lambda_{r,i} = 0, \forall r\in[R], i\not\in S_r$}.
\end{align*}
By eliminating $\lambda_r$, we arrive at 
$$[d_{\Psi W^{-1}}((y, \phi), \mathcal{Z})]^2 = \|\sum_{r}y_r - 2W(a- x^*) \|_{W^{-1}}^2  + \sum_{r}  (\phi_r - \phi_r^*)^2.$$ 
Based on Lemma~\ref{submodularcone}, we know that there exists a $(y', \phi') \in \Xi$ such that 
\begin{align*}
\mu(\Psi W^{-1},W^{-1}) \left[\|\sum_{r}(y_r - y_r')  \|_{W^{-1}}^2  + \sum_{r} (\phi_r - \phi_r')^2 \right]\geq \|y - y'\|_{\Psi W^{-1}}^2 + \sum_{r} (\phi_r - \phi_r')^2.
\end{align*}
As $\phi_r^*$ is the unique optima, it follows that $\phi_r^* = \phi_r'$. Also, $\sum_r y_r' = 2W(a-x^*)$. Moreover, as 
$$\|y - y'\|_{\Psi W^{-1}}^2 + \sum_{r} (\phi_r - \phi_r')^2 \geq [d_{\Psi W^{-1}}((y, \phi), \Xi)]^2,$$ 
according to the above definition, we have 
\begin{align*}
\frac{ [d_{\Psi W^{-1}}((y, \phi), \Xi)]^2}{[d_{\Psi W^{-1}}((y, \phi), \mathcal{Z})]^2} \leq \mu(\Psi W^{-1},W^{-1}).
\end{align*}
Next, suppose that $(y, \phi) \in \mathcal{Z}/\Xi$ and that 
$$(y', \phi')  = \arg\min_{(z,\psi) \in  \mathcal{C} } \|y-z\|_{I(\Psi W^{-1})}^2 + \|\phi -\psi\|^2,$$ 
$$(y'', \phi'') = \arg\min_{(z,\psi) \in \Xi} \|y'-z\|_{I(\Psi W^{-1})}^2 + \|\phi'- \psi\|^2.$$ 
Again, due to the definition of the distance $d_{\Psi W^{-1}}((y, \phi), \Xi)$, we have 
\begin{align}\label{ap1}
[d_{\Psi W^{-1}}((y, \phi), \Xi)]^2 \leq \|y-y''\|_{I(\Psi W^{-1})}^2 + \|\phi -\phi''\|^2.
\end{align}
Moreover, because of the way we chose $(y',\phi')$ and due to the fact that $\mathcal{C}$ is convex, we have
\begin{align}\label{ap2}
\|y-y''\|_{I(\Psi W^{-1})}^2 + \|\phi -\phi''\|^2 &\leq \|y-y'\|_{I(\Psi W^{-1})}^2 + \|\phi -\phi'\|^2 \nonumber \\ 
 &+ \|y'-y''\|_{I(\Psi W^{-1})}^2 + \|\phi' -\phi''\|^2.
\end{align}
Using Lemma~\ref{submodularcone}, we obtain 
\begin{align}\label{ap3}
\|y'-y''\|_{I(\Psi W^{-1})}^2 + \|\phi' -\phi''\|^2 \leq \mu(\Psi W^{-1}, W^{-1})( \|\sum_{r}(y_r'-y_r'')\|_{W^{-1}}^2 + \|\phi' -\phi''\|^2 ),
\end{align}
and we also have
\begin{align}\label{ap4}
 &\|\sum_{r}(y_r'-y_r'')\|_{W^{-1}}^2 =  \|\sum_{r}y_r' - 2W(a - x^*)\|_{W^{-1}}^2  \nonumber \\
 = & \|\sum_{r}(y_r'- y_r)\|_{W^{-1}}^2 \stackrel{1)}{\leq}  \|(y'- y)\|_{I(\Psi W^{-1})}^2,
\end{align}
where 1) follows from the Cauchy-Schwarz inequality over the entries $y_{r,i}, \, i\in S_r$. 

Finally, we set $\phi = \phi'' = \phi^*$ and combine \eqref{ap1}-\eqref{ap4} to obtain
\begin{align*}
[d_{\Psi W^{-1}}((y, \phi), \Xi)]^2 &\leq (1+ \mu(\Psi W^{-1}, W^{-1}))(\|y-y'\|_{I(\Psi W^{-1})}^2 + \|\phi -\phi'\|^2) \\
&= (1+ \mu(\Psi W^{-1}, W^{-1}))[d_{\Psi W^{-1}}((y, \phi), \mathcal{C})]^2,
\end{align*}
which concludes the proof. 

\subsection{Proof of Corollary~\ref{specialcase}}\label{proof:specialcase}
First, we establish an upper bound on $\rho$.
\begin{lemma}\label{boundrho}
Suppose that $D_{ii} = \sum_{r: r\in[R], i \in C_r} \max_{S\subseteq V} [F_r(S)]^2$. Then 
\begin{align*}
\rho^2 \leq 4\sum_{i\in[N]} D_{ii}.
\end{align*}
\end{lemma}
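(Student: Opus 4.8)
The plan is to bound each term $\|y_r\|_1$ that enters the definition $\rho=\max_{y_r\in B_r,\,\forall r}\sqrt{\sum_{r\in[R]}\|y_r\|_1^2}$ separately, using only the defining (in)equalities of the base polytope $B_r$ together with the standing assumptions that $F_r$ is normalized ($F_r(\emptyset)=0$) and nonnegative. Since the dual variables $y_1,\dots,y_R$ range over the $B_r$'s independently, $\rho^2=\sum_{r\in[R]}\max_{y_r\in B_r}\|y_r\|_1^2$, so it suffices to bound $\max_{y_r\in B_r}\|y_r\|_1^2$ for each fixed $r$.

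First I would fix $r\in[R]$ and an arbitrary $y_r\in B_r$, and split the ground set into the nonnegative part $P=\{i\in[N]\colon y_{r,i}\ge 0\}$ and its complement. Then
\[
\|y_r\|_1 = y_r(P)-y_r([N]\setminus P) = 2\,y_r(P)-y_r([N]).
\]
Now $y_r(P)\le \max_{S\subseteq[N]}F_r(S)$: this is the base-polytope inequality $y_r(P)\le F_r(P)$ when $P\subsetneq[N]$, and the base-polytope equality $y_r([N])=F_r([N])$ when $P=[N]$. Moreover $y_r([N])=F_r([N])\ge 0$ by that same equality together with nonnegativity. Hence $\|y_r\|_1\le 2\max_{S\subseteq[N]}F_r(S)$, and, since $F_r\ge 0$, $\|y_r\|_1^2\le 4\max_{S\subseteq[N]}[F_r(S)]^2$. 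Summing over $r$ yields $\rho^2\le 4\sum_{r\in[R]}\max_{S\subseteq[N]}[F_r(S)]^2$.

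It remains to compare this with $\sum_{i\in[N]}D_{ii}$. Interchanging the order of summation,
\begin{align*}
\sum_{i\in[N]} D_{ii} &= \sum_{i\in[N]}\sum_{r\colon i\in S_r}\max_{S\subseteq V}[F_r(S)]^2 = \sum_{r\in[R]} |S_r|\,\max_{S\subseteq V}[F_r(S)]^2 \\
&\ge \sum_{r\in[R]}\max_{S\subseteq V}[F_r(S)]^2,
\end{align*}
where the last step uses $|S_r|\ge 1$ whenever $F_r$ is not identically zero (if $S_r=\emptyset$, then $F_r\equiv 0$ and that index contributes $0$ to both sides). Combining the two displays gives $\rho^2\le 4\sum_{i\in[N]}D_{ii}$, as claimed. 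The only step requiring genuine care is the per-edge estimate $\|y_r\|_1\le 2\max_S F_r(S)$: the trick is to write $\|y_r\|_1 = 2y_r(P)-y_r([N])$ and exploit the equality $y_r([N])=F_r([N])\ge 0$ to control the negative coordinates, rather than trying to bound $-y_r([N]\setminus P)$ through $F_r$ directly; the remainder is routine.
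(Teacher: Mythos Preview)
Your proof is correct and follows essentially the same strategy as the paper: bound $\|y_r\|_1\le 2\max_{S}F_r(S)$ for each $r$, then compare $\sum_r \max_S[F_r(S)]^2$ with $\sum_i D_{ii}$. The only cosmetic difference is that the paper sorts the coordinates and splits into the cases ``all nonnegative'' versus ``some negative,'' whereas you write $\|y_r\|_1=2y_r(P)-y_r([N])$ directly and bound the two terms; your formulation is slightly slicker but the content is identical.
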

\begin{proof}
For each $r$, consider $y_r \in B_r$. Sort the entries of $y_r$ in descending order. Without loss of generality, assume that the ordering reads as $y_{r, i_1} \geq y_{r, i_2} \geq \cdots \geq y_{r, i_N}$. As $F_r([N]) = \sum_{j=1}^N y_{r, i_j} \geq 0$, we have $y_{r,i_{1}} \geq 0$. If $y_{r, i_N} \geq 0$, then $\|y_r\|_1 = \sum_{k=1}^ N y_{r, i_k} = F_r([N]) \leq \max_{S\subseteq [N]} F_r(S)$. If $y_{r, i_N} < 0$, there exists a $k'$ such that $y_{r, i_{k'}} \geq 0$ and $y_{r, i_{k'+1}} < 0$. Given the definition of $B_r$, we have 
$$\sum_{k=k'+1}^N |y_{r, i_{k}}| \leq \sum_{k=1}^{k'} |y_{r, i_{k}}| \leq  F_r(\{i_1, i_2,...,i_{k'}\}) \leq  \max_{S\subseteq [N]} F_r(S),$$ and thus $\|y_r\|_1 \leq 2 \max_{S\subseteq [N]} F_r(S)$. Moreover, as each variable in $[N]$ is incident to at least one submodular function, we have
\begin{align*}
\sum_{r\in[R]} \max_{S\subseteq [N]} [F_r(S)]^2 \leq  \sum_{i\in[N]} \sum_{r: i\in S_r} \max_{S\subseteq [N]} [F_r(S)]^2 \leq  \sum_{i\in[N]} D_{ii}.
\end{align*} 
Combining all of the above results, we obtain
\begin{align*}
\rho^2 = \sum_{r\in[R]} \max_{y_r\in B_r} \|y_r\|_1^2 \leq  4\sum_{r\in[R]}\max_{S\subseteq [N]} [F_r(S)]^2 \leq 4 \sum_{i\in[N]} D_{ii}.
\end{align*} 
\end{proof}

When $W=\beta D$, we have 
\begin{align*}
\sum_{i\in[N]} W_{ii} \sum_{j\in[N]}1/W_{jj} \leq N^2 \max_{i,j} \frac{W_{ii}}{W_{jj}} = N^2 \max_{i,j} \frac{D_{ii}}{D_{jj}},
\end{align*}
and 
\begin{align*}
\rho^2 \sum_{j\in[N]}1/W_{jj} \stackrel{1)}{\leq} 4 \sum_{i\in[N]} D_{ii} \sum_{j\in[N]}1/W_{jj} \leq  \frac{4}{\beta}N^2 \max_{i,j} \frac{D_{ii}}{D_{jj}},
\end{align*}
where $1)$ follows from Lemma~\ref{boundrho}. According to the definition of $\mu(W^{-1}, W^{-1})$ (see ~\eqref{defmu}),
\begin{align*}
\mu(W^{-1}, W^{-1}) \leq N^2 \max\{1, 9\beta^{-1}\}\max_{i,j} \frac{D_{ii}}{D_{jj}}.
\end{align*}
Similarly, we have 
\begin{align*}
\mu(\Psi W^{-1}, W^{-1}) \leq N^2 \max\{1, 9\beta^{-1}\}\max_{i,j} \frac{\Psi_{jj} D_{ii}}{D_{jj}}.
\end{align*}
This concludes the proof. 

\subsection{Convergence Analysis of the Conic MNP Algorithm}\label{proof:Wolfegaurantee}
\subsubsection{Preliminary Notation and Lemmas}
Given an active set $S = \{q_1, q_2,...\},$ and a collection of coefficients $\lambda= \{\lambda_1, \lambda_2, ...\},$ if $y = \sum_{q_i \in S} \lambda_i q_i$, we simply refer to $(y, S, \lambda)$ as a \emph{triple}. 

Define the following functions that depend on $S$ 
\begin{align*}
\tilde{h}(S, \lambda) &\triangleq h(\sum_{q_i \in S} \lambda_i q_i, \sum_{q_i \in S} \lambda_i), \\
\tilde{h}(S) &\triangleq \min_{\lambda:  \lambda_i \in \mathbb{R}, \forall i} \tilde{h}(S, \lambda), \\
\tilde{h}_+(S) &\triangleq \min_{\lambda: \lambda_i \geq 0, \forall i} \tilde{h}(S, \lambda).
\end{align*}
If the coefficients $\lambda$ ($ \lambda_i \in \mathbb{R}, \forall i$) minimize $\tilde{h}(S, \lambda)$, we call the corresponding triple $(y, S, \lambda)$ a \emph{good} triple. Given a triple $(y, S, \lambda)$, we also define 
$$\triangle(y, q) = - \langle y - a, q \rangle - \sum_{q_i\in S} \lambda_i,$$ 
$$\triangle (y) = \max_{q\in B} \triangle (y) = - \min_{q\in B}\langle y - a, q \rangle - \sum_{q_i\in S} \lambda_i,$$ 
and
$$\text{err}(y) = h(y, \sum_{q_i\in S} \lambda_i) - h^*.$$ 
The following lemma establishes the optimality of a good triple. 
\begin{lemma}\label{perpendicular}
 Given an active set $S$, consider the good triple $(y', S, \lambda')$ and an arbitrary triple $(y, S, \lambda)$. Then, 
 $$\langle y'-a, y\rangle_{\tilde{W}} + \langle\sum_{q_i\in S} \lambda_i', \sum_{q_i\in S} \lambda_i \rangle = \langle y'-a, y' - y\rangle_{\tilde{W}}+ \langle\sum_{q_i\in S} \lambda_i', \sum_{q_i\in S}(\lambda_i'- \lambda_i) \rangle = 0.$$
\end{lemma}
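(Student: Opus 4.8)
The plan is to exploit the defining property of a \emph{good} triple: by definition the coefficients $\lambda'$ of a good triple $(y', S, \lambda')$ minimize the convex quadratic $\tilde h(S, \lambda) = \|\sum_{q_i\in S}\lambda_i q_i - a\|_{\tilde W}^2 + (\sum_{q_i\in S}\lambda_i)^2$ over \emph{all} real vectors $\lambda$ (no nonnegativity is imposed in $\tilde h(S)$). Since $\tilde h(S,\cdot)$ is a convex quadratic in $\lambda$, first-order stationarity is available without any constraint qualification, and differentiating with respect to each coordinate $\lambda_j$ gives, with $y' = \sum_{q_i\in S}\lambda_i' q_i$,
\[
\langle y' - a,\ q_j\rangle_{\tilde W} + \sum_{q_i\in S}\lambda_i' = 0 \qquad \text{for every } q_j\in S .
\]
Equivalently, lifting each $q_j$ to $(q_j,1)$ and $a$ to $(a,0)$, this says the residual vector $(y'-a,\ \sum_{q_i\in S}\lambda_i')$ is orthogonal to the linear span of $\{(q_j,1) : q_j\in S\}$ in the augmented inner product $\langle (u,s),(v,t)\rangle = \langle u,v\rangle_{\tilde W} + st$.

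First I would establish the first of the two claimed equalities. Given an arbitrary triple $(y, S, \lambda)$ with $y = \sum_{q_j\in S}\lambda_j q_j$, I multiply the $j$-th stationarity equation by $\lambda_j$ and sum over $q_j\in S$; bilinearity of $\langle\cdot,\cdot\rangle_{\tilde W}$ collapses the first term into $\langle y'-a,\ y\rangle_{\tilde W}$ and the second into $(\sum_{q_i\in S}\lambda_i')(\sum_{q_j\in S}\lambda_j)$, yielding
\[
\langle y'-a,\ y\rangle_{\tilde W} + \Big\langle \sum_{q_i\in S}\lambda_i',\ \sum_{q_j\in S}\lambda_j\Big\rangle = 0 ,
\]
which is precisely the first expression in the statement.

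For the second equality I would note that $(y', S, \lambda')$ is itself a triple, hence so is $(y'-y,\ S,\ \lambda'-\lambda)$, because $y'-y = \sum_{q_i\in S}(\lambda_i'-\lambda_i)q_i$ and the coefficients of a triple are unrestricted (this is where it matters that $S$ is the \emph{same} active set for both triples). Applying the identity just proved with $(y'-y,\ S,\ \lambda'-\lambda)$ playing the role of $(y, S, \lambda)$ gives
\[
\langle y'-a,\ y'-y\rangle_{\tilde W} + \Big\langle \sum_{q_i\in S}\lambda_i',\ \sum_{q_i\in S}(\lambda_i'-\lambda_i)\Big\rangle = 0 ,
\]
so the middle expression vanishes as well, completing the chain of equalities.

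There is no genuine obstacle: the whole argument is a short computation once the stationarity condition for the quadratic defining a good triple is written down. The only two points deserving a moment's care are (i) that we never need the minimizer $\lambda'$ to be unique — uniqueness can fail when $S$ is linearly dependent, but the orthogonality relation holds for \emph{any} stationary $\lambda'$, and the point $y'$ together with the scalar $\sum_{q_i\in S}\lambda_i'$ that appear in the statement is all we use; and (ii) that differencing two triples over the same active set again produces a triple, which is exactly what lets the second identity be deduced from the first rather than reproved from scratch.
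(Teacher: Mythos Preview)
Your proof is correct and essentially follows the same idea as the paper's: both exploit the first-order optimality of $\lambda'$ for the unconstrained quadratic $\tilde h(S,\cdot)$, and both obtain the second equality by applying the first to the triple $(y'-y,S,\lambda'-\lambda)$. The only cosmetic difference is that you write out the stationarity equations $\langle y'-a,q_j\rangle_{\tilde W}+\sum_i\lambda_i'=0$ explicitly and take a linear combination, whereas the paper argues by contradiction via a perturbation $\lambda'\mapsto\lambda'+\epsilon\lambda$; these are two phrasings of the same first-order condition.
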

\begin{proof} 
Without loss of generality, assume that $\langle y'-a, y\rangle_{\tilde{W}} + \langle\sum_{q_i\in S} \lambda_i', \sum_{q_i\in S} \lambda_i \rangle < 0$. Then, for any $\epsilon>0$, $(y' + \epsilon y, S, \lambda'+ \epsilon \lambda)$ is also a triple. For $\epsilon$ sufficiently small, we have $\tilde{h}(S, \lambda' + \epsilon \lambda) < \tilde{h}(S, \lambda'),$ which contradicts the optimality of $(y', S, \lambda')$. Hence, 
$$\langle y'-a, y\rangle_{\tilde{W}} + \langle\sum_{q_i\in S} \lambda_i', \sum_{q_i\in S} \lambda_i \rangle = 0.$$ 
As $(y' - y , S, \lambda' - \lambda)$ is also a triple, repeating the above procedure we obtain the claimed equality. 
\end{proof}
\begin{lemma}\label{boundonr}
For any $\hat{y}\in B$, 
$$\arg\min_{\phi\geq 0} h(\phi\hat{y}, \phi)\leq \frac{\|a\|_{\tilde{W}}}{2}.$$ 
Moreover, $\phi^* \leq \frac{\|a\|_{\tilde{W}}}{2}$.
\end{lemma}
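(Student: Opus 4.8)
The plan is to observe that the claimed bound is really a one-dimensional optimization statement: for fixed $\hat y\in B$, the function $\phi\mapsto h(\phi\hat y,\phi)=\|\phi\hat y-a\|_{\tilde W}^2+\phi^2$ is a simple quadratic in the scalar $\phi$, so its minimizer over $\phi\ge 0$ can be written in closed form and bounded directly. First I would expand
\[
h(\phi\hat y,\phi)=\phi^2\|\hat y\|_{\tilde W}^2-2\phi\langle a,\hat y\rangle_{\tilde W}+\|a\|_{\tilde W}^2+\phi^2,
\]
which is a convex parabola in $\phi$ with leading coefficient $\|\hat y\|_{\tilde W}^2+1>0$. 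Its unconstrained minimizer is $\phi_0=\dfrac{\langle a,\hat y\rangle_{\tilde W}}{\|\hat y\|_{\tilde W}^2+1}$, and the constrained minimizer over $\phi\ge 0$ is $\max\{\phi_0,0\}$. So it suffices to bound $\phi_0$ from above.

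Next I would apply Cauchy--Schwarz in the $\tilde W$-inner product: $\langle a,\hat y\rangle_{\tilde W}\le \|a\|_{\tilde W}\,\|\hat y\|_{\tilde W}$. Hence
\[
\phi_0\le \frac{\|a\|_{\tilde W}\,\|\hat y\|_{\tilde W}}{\|\hat y\|_{\tilde W}^2+1}
=\|a\|_{\tilde W}\cdot\frac{\|\hat y\|_{\tilde W}}{\|\hat y\|_{\tilde W}^2+1}
\le \frac{\|a\|_{\tilde W}}{2},
\]
where the last inequality is the elementary AM--GM bound $t/(t^2+1)\le 1/2$ for all $t\ge 0$ (equivalently $(t-1)^2\ge 0$). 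Since the constrained minimizer is $\max\{\phi_0,0\}$, which is either $0\le\|a\|_{\tilde W}/2$ or equals $\phi_0\le\|a\|_{\tilde W}/2$, the first claim follows.

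For the second claim, recall from the definition~\eqref{projection} and Lemma~\ref{dualform}-type reasoning that the optimal pair $(y^*,\phi^*)$ satisfies $y^*\in\phi^* B$; writing $y^*=\phi^*\hat y^*$ for some $\hat y^*\in B$ when $\phi^*>0$, optimality of $(y^*,\phi^*)$ forces $\phi^*$ to be exactly the minimizer of $\phi\mapsto h(\phi\hat y^*,\phi)$ over $\phi\ge 0$ (otherwise one could decrease the objective by rescaling), so $\phi^*\le\|a\|_{\tilde W}/2$ by the first part; and if $\phi^*=0$ the bound is trivial. I do not anticipate a genuine obstacle here — the only mild care needed is the degenerate case $\hat y = 0$ (then $\phi_0=0$) and the case $\phi^*=0$, both of which satisfy the bound trivially; the substantive content is just the scalar inequality $t/(t^2+1)\le 1/2$.
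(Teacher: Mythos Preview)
Your proposal is correct and follows essentially the same approach as the paper: compute the closed-form minimizer $\phi_0=\langle a,\hat y\rangle_{\tilde W}/(1+\|\hat y\|_{\tilde W}^2)$, bound it via Cauchy--Schwarz and the AM--GM inequality $t+1/t\ge 2$, and deduce the bound on $\phi^*$ by specializing to $\hat y^*=y^*/\phi^*$. You are slightly more careful than the paper about the constrained minimizer and the degenerate cases, but the substance is identical.
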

\begin{proof}
Given $\hat{y}$, the optimal value of $\phi$ satisfies 
$$\phi= \frac{\langle a, \hat{y} \rangle_{\tilde{W}}}{1+ \|\hat{y}\|_{\tilde{W}}^2} \leq \frac{\|a\|_{\tilde{W}}}{\|\hat{y}\|_{\tilde{W}} + \frac{1}{\|\hat{y}\|_{\tilde{W}}}}\leq \frac{\|a\|_{\tilde{W}}}{2}.$$ This establishes the claimed result.
\end{proof}

\begin{lemma}\label{errorconnect}
If $(y, S, \lambda)$ is a good triple, then $\triangle (y) \geq \frac{\text{err}(y)}{\|a\|_{\tilde{W}}}$,
where we recall that $\text{err}(y) = h(y, \sum_{q_i\in S} \lambda_i) - h^*.$
\end{lemma}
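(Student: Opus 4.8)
The plan is to combine three ingredients that are already available: convexity of $h$, the perpendicularity identity of Lemma~\ref{perpendicular} for good triples, and the a priori bound $\phi^*\le\|a\|_{\tilde{W}}/2$ from Lemma~\ref{boundonr}. Throughout write $\phi=\sum_{q_i\in S}\lambda_i$, so that $\text{err}(y)=h(y,\phi)-h^*$, and recall that the optimal pair $(y^*,\phi^*)$ lies in $C$, hence $y^*=\phi^*\hat{y}^*$ for some $\hat{y}^*\in B$ (any $\hat{y}^*\in B$ works in the degenerate case $\phi^*=0$, where $y^*=0$).

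First I would use convexity of $h$ (a quadratic plus $\phi^2$, hence convex) to pass to the linearization at $(y,\phi)$: since the gradient of $h$ with respect to $y$ in the $\tilde{W}$-metric is $2(y-a)$ and $\partial_\phi h=2\phi$,
\[
\text{err}(y)=h(y,\phi)-h(y^*,\phi^*)\le 2\langle y-a,\,y-y^*\rangle_{\tilde{W}}+2\phi(\phi-\phi^*).
\]
Next, because $(y,S,\lambda)$ is a good triple, Lemma~\ref{perpendicular} applied to the triple $(y,S,\lambda)$ itself gives $\langle y-a,y\rangle_{\tilde{W}}+\phi^2=0$. Substituting $y^*=\phi^*\hat{y}^*$ together with $\langle y-a,y\rangle_{\tilde{W}}=-\phi^2$ into the display, the $\phi^2$ terms cancel and I obtain
\[
\text{err}(y)\le 2\phi^*\bigl(-\langle y-a,\hat{y}^*\rangle_{\tilde{W}}-\phi\bigr)=2\phi^*\,\triangle(y,\hat{y}^*)\le 2\phi^*\,\triangle(y),
\]
where the last step uses $\hat{y}^*\in B$ so that $\triangle(y,\hat{y}^*)\le\max_{q\in B}\triangle(y,q)=\triangle(y)$.

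It then remains to bound $\phi^*$ and to check $\triangle(y)\ge 0$ (so the final inequality points the right way). For the latter, applying Lemma~\ref{perpendicular} to the single-atom triples $(q_j,S,e_j)$, $q_j\in S$, yields $\langle y-a,q_j\rangle_{\tilde{W}}=-\phi$, hence $\min_{q\in B}\langle y-a,q\rangle_{\tilde{W}}\le -\phi$ and therefore $\triangle(y)=-\min_{q\in B}\langle y-a,q\rangle_{\tilde{W}}-\phi\ge 0$. For the former, Lemma~\ref{boundonr} gives $\phi^*\le\|a\|_{\tilde{W}}/2$, so $2\phi^*\triangle(y)\le\|a\|_{\tilde{W}}\triangle(y)$; combining with the previous display gives $\text{err}(y)\le\|a\|_{\tilde{W}}\triangle(y)$, which is the claim after dividing by $\|a\|_{\tilde{W}}$.

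The only care-points are keeping the $\tilde{W}$-metric consistent in the gradient/convexity step (so that it matches Step~2 of Algorithm~3 and the definition of $\triangle(y,q)$), and the small observation $\triangle(y)\ge 0$; everything else is algebra driven by the two cited lemmas. I expect no substantive obstacle: the structurally nontrivial facts — the perpendicularity of good triples and the bound on $\phi^*$ — are exactly Lemmas~\ref{perpendicular} and~\ref{boundonr}, and the degenerate case $\phi^*=0$ is absorbed automatically (it simply forces $\text{err}(y)=0$).
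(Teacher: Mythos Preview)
Your proof is correct and follows essentially the same route as the paper's: both combine the perpendicularity identity of Lemma~\ref{perpendicular} (yielding $\langle y-a,y\rangle_{\tilde W}+\phi^2=0$) with the bound $\phi^*\le \|a\|_{\tilde W}/2$ from Lemma~\ref{boundonr}, and your convexity step is exactly the paper's ``Cauchy--Schwarz'' step in disguise, since for the quadratic $h$ both produce the identical inequality $\langle y-a,y-y^*\rangle_{\tilde W}+\phi(\phi-\phi^*)\ge \tfrac12\,\text{err}(y)$. Your explicit check that $\triangle(y)\ge 0$ and your handling of the degenerate case $\phi^*=0$ are small bonuses the paper leaves implicit.
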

\begin{proof}
Recall that $(y^*, \phi^*)$ denotes the optimal solution. As $y^*/\phi^* \in B$, we have
\begin{align*}
\phi^*\triangle (y) &\geq - \langle y - a, y^* \rangle_{\tilde{W}} - \langle \phi^*, \sum_{q_i\in S} \lambda_i \rangle \\
&\stackrel{1)}{=}  - \langle y - a, y^* \rangle_{\tilde{W}} - \langle \phi^*, \sum_{q_i\in S} \lambda_i \rangle + \langle y-a, y \rangle_{\tilde{W}} + (\sum_{q_i\in S} \lambda_i)^2 \\
& = - \langle y - a, y^*-a \rangle_{\tilde{W}} - \langle \phi^*, \sum_{q_i\in S} \lambda_i \rangle + \langle y-a, y-a \rangle_{\tilde{W}} + (\sum_{q_i\in S} \lambda_i)^2 \\
&\stackrel{2)}{\geq} \frac{1}{2}\left[  \|y-a\|_{\tilde{W}}^2 + (\sum_{q_i\in S} \lambda_i)^2 -  \|y^*-a\|_{\tilde{W}}^2 + (\phi^*)^2\right] \\
& =  \frac{1}{2} \text{err}(y),
\end{align*} 
where $1)$ follows from Lemma~\ref{perpendicular}, while $2)$ is a consequence of the Cauchy-Schwarz inequality. By using the bound for $\phi^*$ described in Lemma~\ref{boundonr}, we arrive at the desired conclusion. 
\end{proof}

\subsubsection{Proof of Theorem~\ref{Wolfegaurantee}}

We only need to prove the following three lemmas which immediately give rise to Theorem~\ref{Wolfegaurantee}. Lemma~\ref{strictly} corresponds to the first statement of Theorem~\ref{Wolfegaurantee}. Combining Lemma~\ref{terminatingcond} and Lemma~\ref{decreaserate}, we can establish the second statement of Theorem~\ref{Wolfegaurantee}. This follows as we may choose $\epsilon = \delta \|a\|_{\tilde{W}}$. 

If Algorithm 2 terminates after less than $O(N\|a\|_{\tilde{W}}^2\max\{Q^2,1\}/\epsilon)$ iterations, then the condition of Lemma~\ref{terminatingcond} is satisfied and thus $\text{err}(y^{(k)})\leq \epsilon = \delta \|a\|_{\tilde{W}}$. If Algorithm 2 does not terminate after $O(N\|a\|_{\tilde{W}}^2\max\{Q^2,1\}/\epsilon)$ iterations, Lemma~\ref{decreaserate} guarantees $\text{err}(y^{(k)})\leq \epsilon = \delta \|a\|_{\tilde{W}}$.

\begin{lemma}\label{strictly}
At any point before Algorithm 2  terminates, one has 
$$h(y^{(k)}, \phi^{(k)}) \geq h(y^{(k+1)}, \phi^{(k+1)});$$ 
moreover, if $(y^{(k)}, \phi^{(k)})$ triggers a MAJOR loop, the claimed inequality is strict. 
\end{lemma}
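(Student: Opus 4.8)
The plan is to use only two ingredients: strict convexity of the quadratic $h(y,\phi)=\|y-a\|_{\tilde{W}}^2+\phi^2$, and the invariant maintained by Steps~10 and~11 of the conic MNP method (Algorithm~3) that every coordinate of $\lambda^{(k)}$ is strictly positive, so that $(y^{(k)},\phi^{(k)})$ has the representation $y^{(k)}=\sum_{q_i\in S^{(k)}}\lambda_i^{(k)}q_i$, $\phi^{(k)}=\sum_{q_i\in S^{(k)}}\lambda_i^{(k)}$ and lies in the relative interior of $\text{cone}(S^{(k)})$. The first step is to observe that the iterate changes in exactly two kinds of steps, and to handle each by convexity.

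For the monotonicity claim: in a MINOR-loop drop step (Steps~9--10) one has $y^{(k+1)}=\theta z^{(k)}+(1-\theta)y^{(k)}$ with $\theta\in(0,1)$, where $z^{(k)}$ is the unconstrained minimizer of $\tilde{h}(S^{(k)},\cdot)$, i.e.\ the minimizer of $h$ over $\text{lin}(S^{(k)})$. Since $y^{(k)}\in\text{lin}(S^{(k)})$, the restriction of $h$ to the line through $y^{(k)}$ and $z^{(k)}$ is a convex quadratic whose minimum over that whole line is attained at $z^{(k)}$, so $h$ is non-increasing as one moves from $y^{(k)}$ toward $z^{(k)}$, giving $h(y^{(k+1)},\phi^{(k+1)})\le h(y^{(k)},\phi^{(k)})$. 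In the completion step (Step~11), $y^{(k+1)}=z^{(k)}$ is literally the minimizer of $h$ over $\text{lin}(S^{(k)})\ni y^{(k)}$, so the same inequality holds. These two observations give the first statement.

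For the strict-decrease claim, suppose $(y^{(k)},\phi^{(k)})$ triggers a MAJOR loop, so the test in Step~3 fails, i.e.\ $\langle y^{(k)}-a,q^{(k)}\rangle_{\tilde{W}}+\phi^{(k)}<-\delta<0$, and $q^{(k)}$ is appended to form $S'=S^{(k)}\cup\{q^{(k)}\}$. I would exhibit a strict descent direction: a short computation gives $\tfrac{d}{dt}h(y^{(k)}+tq^{(k)},\phi^{(k)}+t)\big|_{t=0}=2\bigl(\langle y^{(k)}-a,q^{(k)}\rangle_{\tilde{W}}+\phi^{(k)}\bigr)<-2\delta<0$, and the curve $(y^{(k)}+tq^{(k)},\phi^{(k)}+t)$ lies in $\text{lin}(S')$ for all $t$ (with coefficient $t$ on $q^{(k)}$). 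Hence for sufficiently small $t>0$ this point has $h$-value strictly below $h(y^{(k)},\phi^{(k)})$, and since $z^{(k)}$ from the first MINOR-loop pass minimizes $h$ over $\text{lin}(S')$ we get $h(z^{(k)},\cdot)<h(y^{(k)},\phi^{(k)})$; in particular $z^{(k)}\ne y^{(k)}$, so $h$ along the segment $[y^{(k)},z^{(k)}]$ is a genuinely strictly convex quadratic with minimizer at the endpoint $z^{(k)}$. If the first pass ends with all $\alpha_i\ge 0$ then $y^{(k+1)}=z^{(k)}$ and we are done; otherwise $y^{(k+1)}=\theta z^{(k)}+(1-\theta)y^{(k)}$ with $\theta\in(0,1)$ lies strictly inside $[y^{(k)},z^{(k)}]$, on which $h$ strictly decreases toward $z^{(k)}$, so $h(y^{(k+1)},\phi^{(k+1)})<h(y^{(k)},\phi^{(k)})$ in all cases.

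The step I expect to be the main obstacle is exactly this last one: the algorithm does not jump to the minimizer of $h$ over $\text{cone}(S')$, and the very first MINOR-loop pass after inserting an atom may already be a drop step, so one cannot simply argue ``we optimized over a larger cone.'' The way around it is to compare against the linear-span minimizer $z^{(k)}$ instead, which can only improve on the feasible perturbation $(y^{(k)}+tq^{(k)},\phi^{(k)}+t)$, and then propagate strictness along the segment $[y^{(k)},z^{(k)}]$. A secondary nuisance is bookkeeping with the iteration counter, since both MINOR-loop drop steps and MAJOR-loop completions advance $k$; the two-case monotonicity argument above is set up to cover precisely these two update types.
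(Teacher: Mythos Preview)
Your proposal is correct and follows essentially the same route as the paper's own proof: both use the failed termination test to produce a strict descent direction $t\mapsto(y^{(k)}+tq^{(k)},\phi^{(k)}+t)$, compare against the linear-span minimizer $z^{(k)}$ to get $h(z^{(k)},\cdot)<h(y^{(k)},\phi^{(k)})$, and then propagate (strict) decrease along the segment $[y^{(k)},z^{(k)}]$ via convexity, with the MINOR-loop case handled by the same segment argument without strictness. Your write-up is slightly more explicit about the invariant $\lambda^{(k)}>0$ (hence $\theta\in(0,1)$) and about the two kinds of counter-advancing steps, but the argument is the same.
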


The following lemma characterizes the pair $(y, \phi)$ at the point when the MNP method terminates.
\begin{lemma}\label{terminatingcond}
In the MAJOR loop at iteration $k$, if $\langle y^{(k)} - a, q^{(k)} \rangle_{\tilde{W}} + \phi^{(k)}\geq -\delta$, then $h(y^{(k)}, \phi^{(k)}) \leq h^* + \|a\|_{\tilde{W}}\delta$.
\end{lemma}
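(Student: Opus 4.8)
The plan is to rewrite the stated termination inequality as a bound on the gap quantity $\triangle(\cdot)$ introduced in the preliminary lemmas, and then convert that, via Lemma~\ref{errorconnect}, into the claimed bound on $h(y^{(k)},\phi^{(k)}) - h^*$.

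First I would use the invariant $\phi^{(k)} = \sum_{q_i\in S^{(k)}}\lambda_i^{(k)}$ maintained by the conic MNP algorithm, together with the choice of $q^{(k)}$ in Step~2 (equivalently, $q^{(k)} = \arg\min_{q\in B}\langle y^{(k)} - a, q\rangle_{\tilde{W}}$). By the definition of $\triangle$,
\[
\triangle(y^{(k)}) = -\min_{q\in B}\langle y^{(k)} - a, q\rangle_{\tilde{W}} - \sum_{q_i\in S^{(k)}}\lambda_i^{(k)} = -\bigl(\langle y^{(k)} - a, q^{(k)}\rangle_{\tilde{W}} + \phi^{(k)}\bigr),
\]
so the hypothesis $\langle y^{(k)} - a, q^{(k)}\rangle_{\tilde{W}} + \phi^{(k)} \ge -\delta$ is precisely $\triangle(y^{(k)}) \le \delta$.

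Next I would observe that whenever control reaches Step~3, the current triple $(y^{(k)}, S^{(k)}, \lambda^{(k)})$ is \emph{good}, i.e.\ $\lambda^{(k)}$ minimizes $\tilde{h}(S^{(k)},\cdot)$ over all real coefficients. At $k=0$ this holds because $\lambda_1^{(0)} = \langle a, q_1\rangle_{\tilde{W}}/(1 + \|q_1\|_{\tilde{W}}^2)$ is exactly the unconstrained stationary point of $\tilde{h}(\{q_1\},\lambda_1)$; and it is preserved because any later iterate reaching Step~3 was set in Step~11 to $z^{(k)}$, the unconstrained minimizer of $\tilde{h}$ over $\text{lin}(S)$ for the preceding active set, with the new active set taken to be its support — so it remains a minimizer of $\tilde{h}$ over the (smaller) span $\text{lin}(S^{(k)})$. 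Granting this, Lemma~\ref{errorconnect} applies and yields $\triangle(y^{(k)}) \ge \text{err}(y^{(k)})/\|a\|_{\tilde{W}}$, whence $\text{err}(y^{(k)}) \le \|a\|_{\tilde{W}}\,\triangle(y^{(k)}) \le \|a\|_{\tilde{W}}\delta$. Since $\text{err}(y^{(k)}) = h\bigl(y^{(k)}, \sum_{q_i\in S^{(k)}}\lambda_i^{(k)}\bigr) - h^* = h(y^{(k)},\phi^{(k)}) - h^*$, the claimed bound $h(y^{(k)},\phi^{(k)}) \le h^* + \|a\|_{\tilde{W}}\delta$ follows.

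The one genuinely delicate point is the middle step: verifying that the iterate entering the MAJOR-loop test is always a good triple. This requires care with the bookkeeping of the minor loop — in particular, that discarding atoms with zero coefficient does not destroy optimality over the smaller linear hull — but is otherwise a routine induction on the iteration index. Everything else is a one-line rewriting of the termination inequality plus a direct invocation of Lemma~\ref{errorconnect}.
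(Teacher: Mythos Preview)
Your proposal is correct and follows essentially the same approach as the paper: rewrite the termination inequality as $\triangle(y^{(k)})\le\delta$, note that $(y^{(k)},S^{(k)},\lambda^{(k)})$ is a good triple at the start of each MAJOR loop, and invoke Lemma~\ref{errorconnect}. The paper's proof is a terse two-line version of this; your inductive justification that the iterate entering Step~3 is always a good triple is more explicit than what the paper provides (it simply asserts this fact), but the logic is identical.
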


\begin{lemma}\label{decreaserate}
If Algorithm 2 does not terminate, then for any $\epsilon > 0$, one can guarantee that after $O(N\|a\|_{\tilde{W}}^2\max\{Q^2,1\}/\epsilon)$ iterations, Algorithm 2 generates a pair $(y, \phi)$ that satisfies $\text{err(y)} \leq \epsilon$.
\end{lemma}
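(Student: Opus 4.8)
The plan is to follow the classical convergence analysis of Wolfe's min-norm-point algorithm, as in~\citep{chakrabarty2014provable}, adapted to the conic lift $(y,\phi)$. The key is to split the iteration counter into \emph{MAJOR-loop steps} and \emph{MINOR-loop steps}, to control their relative numbers through the size of the active set, and to show that each MAJOR loop decreases $h$ by an amount controlled by $\text{err}(y)^2$. For the counting I would use the monovariant $|S^{(k)}|$: each MAJOR loop (Step 4) adds exactly one atom, while each MINOR-loop iteration (Steps 9--10, where $\theta<1$ because all currently active coefficients are positive and $\alpha_{i^*}<0$) forces at least the coefficient indexed by $i^*$ to vanish and hence strictly decreases $|S^{(k)}|$; since $|S^{(k)}|\ge 1$ throughout, the number of MINOR-loop iterations is at most the number of MAJOR loops (and, crudely, at most $N$ per MAJOR loop), so the total iteration count is of the same order as the number of MAJOR loops up to the factor $N$ in the statement.

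For the per-MAJOR-loop progress, fix an iteration $k$ triggering a MAJOR loop. By construction $(y^{(k)},S^{(k)},\lambda^{(k)})$ is a good triple with $\phi^{(k)}=\sum_{q_i\in S^{(k)}}\lambda_i^{(k)}$, and since the Step~3 break test failed, $q^{(k)}=\arg\min_{q\in B}\langle y^{(k)}-a,q\rangle_{\tilde W}$ satisfies $\triangle(y^{(k)}):=-\langle y^{(k)}-a,q^{(k)}\rangle_{\tilde W}-\phi^{(k)}>\delta>0$. I would then restrict $h$ to the half-line $\gamma\mapsto(y^{(k)}+\gamma q^{(k)},\,\phi^{(k)}+\gamma)$, $\gamma\ge 0$, all of whose points lie in $\text{cone}(S^{(k)}\cup\{q^{(k)}\})$ since $y^{(k)}\in\text{cone}(S^{(k)})$. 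Along this line $h$ is a one-dimensional quadratic with slope $-2\triangle(y^{(k)})$ at $\gamma=0$ and curvature $2(\|q^{(k)}\|_{\tilde W}^2+1)\le 2(Q^2+1)$, so minimizing over $\gamma\ge 0$ already produces a point with objective value at most $h(y^{(k)},\phi^{(k)})-\triangle(y^{(k)})^2/(Q^2+1)$. Since by the end of the MAJOR loop the pair $(y^{(k+1)},\phi^{(k+1)})$ is the minimizer of $h$ over $\text{cone}(S^{(k)}\cup\{q^{(k)}\})$ — exactly the property the MINOR loop (Steps 5--11) enforces, and along the way $h$ never increases — we obtain
\begin{align*}
h(y^{(k)},\phi^{(k)})-h(y^{(k+1)},\phi^{(k+1)})\ \ge\ \frac{\triangle(y^{(k)})^2}{Q^2+1}.
\end{align*}

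Next I would invoke Lemma~\ref{errorconnect}: as $(y^{(k)},S^{(k)},\lambda^{(k)})$ is a good triple, $\triangle(y^{(k)})\ge \text{err}(y^{(k)})/\|a\|_{\tilde W}$, so each MAJOR loop decreases the objective by at least $\text{err}(y^{(k)})^2/(\|a\|_{\tilde W}^2(Q^2+1))$. By Lemma~\ref{strictly} the objective is non-increasing over the whole run and $h^*$ is fixed, hence $\text{err}$ is non-increasing; writing $e_j$ for its value at the start of the $j$-th MAJOR loop gives the recursion $e_{j+1}\le e_j - e_j^2/(\|a\|_{\tilde W}^2(Q^2+1))$. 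The elementary fact that $e_{j+1}\le e_j-c\,e_j^2$ with $c>0$ implies $e_j\le 1/(cj)$ then shows $\text{err}$ drops below $\epsilon$ after $O(\|a\|_{\tilde W}^2\max\{Q^2,1\}/\epsilon)$ MAJOR loops; combining with the counting bound above (and $Q^2+1\le 2\max\{Q^2,1\}$) yields the claimed $O(N\|a\|_{\tilde W}^2\max\{Q^2,1\}/\epsilon)$ total iterations.

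The main obstacle I anticipate is not the analytic recursion but the two structural facts feeding it: that the MINOR-loop machinery, as modified for cones in Steps 2, 6 and the termination test in Step~3, indeed halts with $(y^{(k+1)},\phi^{(k+1)})$ equal to the conic minimizer over $\text{cone}(S^{(k)}\cup\{q^{(k)}\})$ \emph{and} leaves a good triple, and that every MINOR-loop iteration strictly removes an atom so the active-set monovariant is legitimate. These are exactly the points where the conic lift (the extra coordinate $\phi$ and the constraint $\phi\ge 0$) departs from Wolfe's original algorithm, and establishing them requires adapting the case analysis of~\citep{chakrabarty2014provable}; Lemmas~\ref{perpendicular} and~\ref{boundonr} supply the conic analogues of the tools needed for this part.
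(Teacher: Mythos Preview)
Your line-search computation and the recursion $e_{j+1}\le e_j - e_j^2/(\|a\|_{\tilde W}^2(Q^2+1))$ are fine, and your active-set monovariant is correct. The gap is the sentence ``Since by the end of the MAJOR loop the pair $(y^{(k+1)},\phi^{(k+1)})$ is the minimizer of $h$ over $\text{cone}(S^{(k)}\cup\{q^{(k)}\})$ --- exactly the property the MINOR loop (Steps 5--11) enforces.'' This is not what the MINOR loop enforces. The MINOR loop terminates with a \emph{good triple} over some $S^{(k')}\subseteq S^{(k)}\cup\{q^{(k)}\}$, i.e.\ the minimizer over $\text{lin}(S^{(k')})$, not over the cone of the original expanded set. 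Atoms are dropped according to which coefficient hits zero first along the segment to the current affine minimizer, and there is no guarantee that the surviving face contains the conic minimizer of the full expanded set. You flag this as an anticipated obstacle, but it is not a detail to be filled in: this claim is precisely what the Chakrabarty--Jain--Kothari analysis (and the paper) \emph{avoid} because it is not available.

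The paper's route is different: it proves the decrease $\text{err}(y)-\text{err}(y')\ge \triangle(y)^2/(Q^2+1)$ only for MAJOR loops containing at most one MINOR iteration (Lemmas~\ref{Wolfeconv3} and~\ref{Wolfeconv4}, via the geometric Lemma~\ref{Wolfeconv2}), and then invokes the combinatorial fact from~\citep{chakrabarty2014provable} that any $3N+1$ consecutive iterations contain at least one such MAJOR loop. The one-MINOR-loop case requires tracking the intermediate point $y^{(k+1)}=\theta y^{(k)}+(1-\theta)z^{(k)}$, showing $q^{(k)}$ survives into $S^{(k+1)}$, and combining two applications of Lemma~\ref{Wolfeconv2} with the identity $\triangle(y^{(k+1)},q^{(k)})=\theta\,\triangle(y^{(k)})$. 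This is where the factor $N$ in the iteration bound actually comes from --- not from your cruder counting of MINOR loops, but from the $3N+1$ spacing between guaranteed-progress MAJOR loops. Indeed, if your cone-minimizer claim held, your own monovariant would give progress at \emph{every} MAJOR loop and hence a bound without the factor $N$, stronger than the lemma; that this does not happen is a signal that the claim fails.
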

The proofs of Lemma~\ref{strictly} and Lemma~\ref{terminatingcond} are fairly straightforward, while the proof of Lemma~\ref{decreaserate} is significantly more involved and postponed to the next section.  

\textbf{Proof of Lemma~\ref{strictly}}.
Suppose that $(y^{(k)}, \phi^{(k)})$ starts a MAJOR loop. As 
$$\langle y^{(k)} - a, q^{(k)} \rangle_{\tilde{W}} + \phi^{(k)}< -\delta,$$ 
we know that there exists some small $\varepsilon$ such that 
$$h(y^{(k)} + \varepsilon q^{(k)}, \phi^{(k)}+ \varepsilon) < h(y^{(k)}, \phi^{(k)}).$$ 
Consider next the relationship between $(z^{(k)}, \sum_{q_i\in S^{(k)}\cup\{q^{(k)}\}}\alpha_i)$ and  $(y^{(k)}, \phi^{(k)})$. 
Because of Step 6, we know that 
$$h(z^{(k+1)}, \sum_{q_i\in S^{(k)}\cup\{q^{(k)}\}}\alpha_i) = \tilde{h}(S^{(k)}\cup\{q^{(k)}\}) \leq h(y^{(k)} + \varepsilon q^{(k)}, \phi^{(k)}+ \varepsilon) < h(y^{(k)}, \phi^{(k)}).$$ 
If $(y^{(k+1)}, \phi^{(k+1)})$ is generated in some MAJOR loop, then 
$$(y^{(k+1)}, \phi^{(k+1)}) = (z^{(k+1)}, \sum_{q_i\in S^{(k)}\cup\{q^{(k)}\}}\alpha_i),$$ 
which naturally satisfies the claimed condition. If $(y^{(k+1)}, \phi^{(k+1)})$ is generated in some MINOR loop, then $(y^{(k+1)}, \phi^{(k+1)})$ lies strictly within the segment between $(z^{(k+1)}, \sum_{q_i\in S^{(k)}\cup\{q^{(k)}\}}\alpha_i)$ and $(y^{(k)}, \phi^{(k)})$ (because $\theta > 0$). Therefore, we also have $h(y^{(k+1)}, \phi^{(k+1)}) < h(y^{(k)}, \phi^{(k)})$. If $(y^{(k)}, \phi^{(k)})$ starts a MINOR loop, then we have 
$$h(z^{(k+1)}, \sum_{q_i\in S^{(k)}\cup\{q^{(k)}\}}\alpha_i) =\tilde{h}(S^{(k)}) \leq h(y^{(k)}, \phi^{(k)}),$$ 
once again due to Step 6. As $(y^{(k+1)}, \phi^{(k+1)})$ still lies within the segment between \\$(z^{(k+1)}, \sum_{q_i\in S^{(k)}\cup\{q^{(k)}\}}\alpha_i)$ and $(y^{(k)}, \phi^{(k)}),$ we have $h(y^{(k+1)}, \phi^{(k+1)}) \leq h(y^{(k)}, \phi^{(k)})$.

\textbf{Proof of Lemma~\ref{terminatingcond}}. Lemma~\ref{terminatingcond} is a corollary of Lemma~\ref{errorconnect}. To see why this is the case, observe that in a MAJOR loop, $(y^{(k)}, S^{(k)}, \lambda^{(k)})$ is always a good triple. Since 
$$\triangle(y^{(k)}) = -( \langle y^{(k)} - a, q^{(k)} \rangle_{\tilde{W}} + \phi^{(k)}) \leq \delta,$$ 
we have $\text{err}(y)\leq \delta \|a\|_{\tilde{W}}$. 

\subsubsection{Proof of Lemma~\ref{decreaserate}} The outline of the proof is similar to that of the standard case described in~\citep{chakrabarty2014provable}, and some results therein can be directly reused. The key step is to show that in every MAJOR loop $k$ with no more than one MINOR loop, the objective achieved by $y^{(k)}$ decreases sufficiently, as precisely described in Theorem~\ref{descent}. 
\begin{theorem}\label{descent}
For a MAJOR loop with no more than one MINOR loop, if the starting point is $y$, the starting point $y'$ of the next MAJOR loop satisfies 
\begin{align*}
\text{err}(y') \leq \text{err}(y) \left(1 - \frac{ \text{err}(y) }{\|a\|_{\tilde{W}}(Q^2+1)}\right).
\end{align*}
\end{theorem}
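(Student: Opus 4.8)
The plan is to track how the objective $h$ decreases in a MAJOR loop that contains at most one MINOR loop, mimicking the Frank–Wolfe-style descent argument in~\citep{chakrabarty2014provable} but carrying the extra coordinate $\phi$ (equivalently, the extra ``$+1$'' coming from the conic lift) through the bookkeeping. Write the starting good triple as $(y, S, \lambda)$ with $\phi = \sum_{q_i\in S}\lambda_i$, and let $q = q^{(k)}$ be the newly added atom from Step~2, so that $\triangle(y) = -(\langle y-a,q\rangle_{\tilde W} + \phi) > \delta$ since the MAJOR loop did not terminate. First I would consider the one-parameter family $(y+\gamma q,\, \phi+\gamma)$ for $\gamma\ge 0$ and compute
\begin{align*}
h(y+\gamma q, \phi+\gamma) = h(y,\phi) + 2\gamma\big(\langle y-a,q\rangle_{\tilde W} + \phi\big) + \gamma^2\big(\|q\|_{\tilde W}^2 + 1\big) = h(y,\phi) - 2\gamma\,\triangle(y) + \gamma^2(Q^2+1),
\end{align*}
using $\|q\|_{\tilde W}\le Q$. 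Minimizing the quadratic upper bound over $\gamma\ge 0$ at $\gamma^\star = \triangle(y)/(Q^2+1)$ gives the one-step decrease $h(y+\gamma^\star q,\phi+\gamma^\star) \le h(y,\phi) - \triangle(y)^2/(Q^2+1)$.

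Next I would argue that the point actually produced at the end of the MAJOR loop does at least as well. Since the loop has at most one MINOR loop, the output $y'$ is either the unconstrained minimizer $z$ of $\tilde h$ over $\mathrm{lin}(S\cup\{q\})$ (when all $\alpha_i\ge 0$), or a point on the segment joining $z$ to $y$ obtained after a single MINOR step; in either case $h(y',\phi') \le \tilde h(S\cup\{q\}) \le h(y + \gamma^\star q, \phi + \gamma^\star)$ because $(y+\gamma^\star q, \phi+\gamma^\star)$ is itself a feasible point of the minimization defining $\tilde h(S\cup\{q\})$ (it is a conic combination with nonnegative coefficients, $\gamma^\star\ge 0$ and the $\lambda_i\ge 0$). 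Combining, $\text{err}(y') \le \text{err}(y) - \triangle(y)^2/(Q^2+1)$. Finally I would invoke Lemma~\ref{errorconnect}, which applies because $(y,S,\lambda)$ is a good triple at the start of a MAJOR loop, to get $\triangle(y) \ge \text{err}(y)/\|a\|_{\tilde W}$, hence $\triangle(y)^2/(Q^2+1) \ge \text{err}(y)^2/(\|a\|_{\tilde W}^2(Q^2+1))$. Substituting yields
\begin{align*}
\text{err}(y') \le \text{err}(y) - \frac{\text{err}(y)^2}{\|a\|_{\tilde W}^2(Q^2+1)} = \text{err}(y)\left(1 - \frac{\text{err}(y)}{\|a\|_{\tilde W}^2(Q^2+1)}\right),
\end{align*}
which is the claimed bound (matching the statement up to the placement of the $\|a\|_{\tilde W}$ factor, which I would reconcile with the normalization used in Lemma~\ref{errorconnect} and the definition of $\text{err}$).

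The step I expect to be the main obstacle is the second one: justifying rigorously that after adding $q$ and running \emph{at most one} MINOR loop, the objective value at $y'$ is bounded by $\tilde h(S\cup\{q\})$, i.e., that the MINOR loop — which may delete atoms and move along a segment back toward $y$ — never increases $h$ above the value we would obtain by the full correction. This requires the monotonicity facts from Lemma~\ref{strictly} together with a careful case analysis of whether $y'$ is generated inside the MINOR loop or at Step~11, and an argument that in the ``one MINOR loop'' regime the intermediate point still dominates the $\gamma^\star$-point. The rest (the quadratic expansion, the $Q$ bound, and the application of Lemma~\ref{errorconnect}) is routine. One should also double-check that a MAJOR loop with \emph{no} MINOR loop is covered, which is the degenerate case $y' = z$ and is handled by the same inequality $h(y',\phi')\le\tilde h(S\cup\{q\})$.
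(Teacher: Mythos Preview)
Your argument for the zero-MINOR-loop case is fine and essentially equivalent to the paper's Lemma~\ref{Wolfeconv3} (you expand the quadratic directly; the paper goes through the perpendicularity Lemma~\ref{perpendicular} and Cauchy--Schwarz in Lemma~\ref{Wolfeconv2}, but both give the same $\triangle(y)^2/(Q^2+1)$ drop).

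The one-MINOR-loop case, however, has a genuine gap. Your key claim there is $h(y',\phi') \le \tilde h(S\cup\{q\})$, and this is false in general. After one MINOR step the algorithm produces $y^{(k+1)}$ on the segment between $y^{(k)}$ and the affine minimizer $z^{(k)}$ over $\mathrm{lin}(S\cup\{q\})$; since $z^{(k)}$ is the \emph{minimum} of $h$ on that line, every interior point of the segment has $h$-value strictly larger than $\tilde h(S\cup\{q\})$, not smaller. The actual output $y'$ of the MAJOR loop is then $y^{(k+2)}$, the affine minimizer over the \emph{reduced} set $S^{(k+1)}\subsetneq S\cup\{q\}$, and minimizing over a smaller linear span can only increase the optimal value, so again $h(y^{(k+2)}) \ge \tilde h(S\cup\{q\})$ in general. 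Your inequality therefore goes the wrong way, and the ``one quadratic step suffices'' heuristic cannot be rescued by monotonicity from Lemma~\ref{strictly} alone.

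The paper's fix (Lemma~\ref{Wolfeconv4}) is to split the decrease into two pieces. From $y^{(k)}$ to $y^{(k+1)}$ one gets $(1-\theta^2)A$ with $A\ge \triangle^2(y^{(k)})/(Q^2+1)$. For the second piece the paper uses two nontrivial facts you are missing: (i) the newly added atom $q^{(k)}$ survives the deletion, i.e.\ $q^{(k)}\in S^{(k+1)}$ (this is Lemma~4 of \citep{chakrabarty2014provable}, which still holds here because $h$ strictly decreased), and (ii) by linearity of $\triangle(\cdot,q)$ together with Lemma~\ref{perpendicular} one has $\triangle(y^{(k+1)},q^{(k)}) = \theta\,\triangle(y^{(k)})$. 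Applying Lemma~\ref{Wolfeconv2} to the step $y^{(k+1)}\to y^{(k+2)}$ (now with $q^{(k)}\in\mathrm{lin}(S^{(k+1)})$) yields an additional drop of at least $\theta^2\triangle^2(y^{(k)})/(Q^2+1)$, and summing the two pieces recovers the full $\triangle^2(y^{(k)})/(Q^2+1)$. That is the missing idea.
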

Based on this theorem, it is easy to establish the result in Lemma~\ref{decreaserate} by using the next lemma and the same approach as described in~\citep{chakrabarty2014provable}.   
\begin{lemma}[Lemma 1~\citep{chakrabarty2014provable}]
In any consecutive $3N+1$ iteratons, there exists at least one MAJOR loop with not more than one MINOR loop.
\end{lemma}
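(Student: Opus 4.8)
The plan is to reconstruct, in the conic setting, the counting argument of~\citep{chakrabarty2014provable}. First I would fix terminology for a single run of Algorithm 3: each increment of the counter $k$ (Steps 10 and 11) is one \emph{iteration}, and the iterations generated within a single pass of the MAJOR loop consist of some number of \emph{dropping} MINOR steps (Steps 9--10, each strictly shrinking the active set) followed by exactly one \emph{terminating} step (Step 11). I call a MAJOR loop \emph{good} if it executes at most one dropping MINOR step and \emph{bad} otherwise (at least two). The target reduces to showing that a window of $3N+1$ consecutive iterations cannot consist solely of iterations belonging to bad MAJOR loops.

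The three structural facts I would isolate are: (i) the active set obeys $1 \le |S^{(k)}| \le N+1$ throughout, because the atoms retained by Algorithm 3 stay affinely independent — the conic analogue of the invariant of Wolfe's method, the property already invoked in Section~\ref{innerloop}; (ii) every MAJOR loop inserts exactly one new atom at Step 4, so a block of $L$ consecutive MAJOR loops adds exactly $L$ atoms; and (iii) each dropping MINOR step deletes at least one atom, since the line $\theta = \min_{i:\alpha_i<0}\lambda_i^{(k)}/(\lambda_i^{(k)}-\alpha_i)$ forces some coefficient to $0$ before $S^{(k+1)} \leftarrow \{i : \lambda_i^{(k+1)}>0\}$. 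Writing $c_j$ for the number of dropping MINOR steps in the $j$-th MAJOR loop and $P_j \ge c_j$ for the number of atoms it removes, the number of iterations contributed by that loop is exactly $c_j+1$.

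I would then argue by contradiction. Suppose some $3N+1$ consecutive iterations all lie in bad MAJOR loops, and let $j_1,\dots,j_L$ be the contiguous, complete MAJOR loops these iterations touch. Conservation of the active-set size across these $L$ loops gives $\sum_{l} P_{j_l} = |S_{\text{start}}| - |S_{\text{end}}| + L \le (N+1)-1+L = N+L$, hence $\sum_l c_{j_l} \le \sum_l P_{j_l} \le N+L$. Since each loop is bad, $c_{j_l}\ge 2$, so $2L \le \sum_l c_{j_l} \le N+L$, forcing $L \le N$. The total number of iterations spanned by these loops is $\sum_l (c_{j_l}+1) = L + \sum_l c_{j_l} \le L + (N+L) = N+2L \le 3N$, contradicting the assumption that the window holds $3N+1$ iterations. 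Therefore at least one MAJOR loop meeting the window is good, i.e. has no more than one MINOR loop.

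The step needing the most care is the bookkeeping that separates the two roles of a dropping step: it costs exactly one iteration (driving the $c_j+1$ count) but may evict several atoms simultaneously (so only $P_j \ge c_j$ enters the size balance). Keeping these quantities distinct is precisely what makes the inequalities $2L \le \sum_l c_{j_l}$ and $\sum_l c_{j_l} \le N+L$ oppose each other and collapse to $L \le N$. A secondary technical point is the boundary effect: a $3N+1$-iteration window may begin or end inside a MAJOR loop, so I would run the size balance over the complete loops $j_1,\dots,j_L$ that the window intersects — whose combined iteration count already dominates $3N+1$ — rather than over the raw window, and I would record the invariant $|S^{(k)}|\le N+1$ as the single external fact being imported.
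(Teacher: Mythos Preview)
The paper does not give its own proof of this lemma: it is imported verbatim from \citep{chakrabarty2014provable} (Lemma~1 there) and used as a black box in deriving Lemma~\ref{decreaserate}. Your reconstruction is the standard counting argument of that reference and is correct; the only cosmetic discrepancy is that the paper records the active-set bound as $|S|\le N$ (Section~\ref{innerloop}) rather than your $|S|\le N+1$, but either bound yields the stated constant $3N+1$.
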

We now focus on the proof of Theorem~\ref{descent}. The next geometric lemma is the counterpart of Lemma 2~\citep{chakrabarty2014provable} for the conic case. 
\begin{lemma}\label{Wolfeconv2}
Given an active set $S$, consider a good triple $(y', S, \lambda')$ and an arbitrary triple $(y, S, \lambda)$. Then, for any $q\in\text{lin}(S)$ such that $\triangle(y,q) >0$, we have
   \begin{align*}
   \|y-a\|_{\tilde{W}}^2 + (\sum_{q_i\in S}\lambda_i)^2 -  \|y'-a\|_{\tilde{W}}^2 -  (\sum_{q_i\in S}\lambda_i')^2 \geq \frac{\triangle^2(y,q)}{\|q\|_{\tilde{W}}^2 + 1}.
   \end{align*}
\end{lemma}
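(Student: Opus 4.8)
The plan is to recast the quadratic $h(y,\phi)=\|y-a\|_{\tilde W}^2+\phi^2$ as a squared distance in the lifted space $\mathbb{R}^N\times\mathbb{R}$ equipped with the inner product $\langle(u,s),(v,t)\rangle_\star:=\langle u,v\rangle_{\tilde W}+st$ and induced norm $\|\cdot\|_\star$, so that $h(y,\phi)=\|(y,\phi)-(a,0)\|_\star^2$. Set $V_S:=\mathrm{span}\{(q_i,1):q_i\in S\}$. Every triple $(z,S,\nu)$ has $(z,\sum_i\nu_i)\in V_S$, and conversely $V_S$ is exactly this set of pairs; in particular $(y,\phi):=(y,\sum_i\lambda_i)$ and $(y',\phi'):=(y',\sum_i\lambda_i')$ both lie in $V_S$, and because $q\in S$ (this is the only case invoked, in Theorem~\ref{descent}; more generally it suffices that $q$ have a representation over $S$ with coefficients summing to one) the pair $(q,1)$ also lies in $V_S$.

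First I would use Lemma~\ref{perpendicular} to record that $(y',\phi')$ is the $\|\cdot\|_\star$-orthogonal projection of $(a,0)$ onto $V_S$: applying that lemma to the good triple $(y',S,\lambda')$ against an arbitrary triple $(z,S,\nu)$ gives $\langle (y',\phi')-(a,0),(z,\textstyle\sum_i\nu_i)\rangle_\star=0$, and letting $\nu$ range over $\mathbb{R}^{|S|}$ shows $(y',\phi')-(a,0)\perp V_S$. Since $(y,\phi)-(y',\phi')\in V_S$, the Pythagorean identity then yields
\[
h(y,\phi)-h(y',\phi')=\|(y,\phi)-(a,0)\|_\star^2-\|(y',\phi')-(a,0)\|_\star^2=\|(y,\phi)-(y',\phi')\|_\star^2 ,
\]
so the claim reduces to the lower bound $\|(y,\phi)-(y',\phi')\|_\star^2\ge \triangle^2(y,q)/(\|q\|_{\tilde W}^2+1)$.

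To obtain this, I would test the vector $(y,\phi)-(y',\phi')$ against the fixed direction $(q,1)\in V_S$. Using $(y',\phi')-(a,0)\perp(q,1)$ once more,
\[
\langle (y,\phi)-(y',\phi'),(q,1)\rangle_\star=\langle (y,\phi)-(a,0),(q,1)\rangle_\star=\langle y-a,q\rangle_{\tilde W}+\phi=-\triangle(y,q),
\]
while $\|(q,1)\|_\star^2=\|q\|_{\tilde W}^2+1$. Cauchy--Schwarz gives $|\triangle(y,q)|\le\|(y,\phi)-(y',\phi')\|_\star\sqrt{\|q\|_{\tilde W}^2+1}$; squaring (here $\triangle(y,q)>0$) produces the needed inequality, which together with the Pythagorean identity above is exactly the statement of the lemma.

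The argument is essentially bookkeeping in the space $\mathbb{R}^N\times\mathbb{R}$ and carries no genuine obstacle; the only point that needs care is the claim $(q,1)\in V_S$. The definition $\mathrm{lin}(S)=\{\sum_i\alpha_iq_i:\alpha_i\in\mathbb{R}\}$ only guarantees $q=\sum_i\mu_iq_i$ for some reals, whereas the computation above needs $\sum_i\mu_i=1$ so that $(q,1)$, not $(q,\sum_i\mu_i)$, lies in $V_S$ and pairs with the definition of $\triangle(y,q)$. This is automatic in the sole application, where $S=S^{(k)}\cup\{q^{(k)}\}$ and $q=q^{(k)}\in S$, so I would either state the lemma for $q\in S$ or flag this normalization explicitly when invoking Lemma~\ref{perpendicular}.
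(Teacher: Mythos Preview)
Your proof is correct and uses the same two ingredients as the paper: the Pythagorean identity obtained from Lemma~\ref{perpendicular}, followed by a Cauchy--Schwarz bound. The paper tests against the vector $(y-\phi q,\ \sum_i\lambda_i-\phi)$ and sends $\phi\to\infty$, which in the limit is exactly your direct test against $(q,1)$ in the lifted space $(\mathbb{R}^N\times\mathbb{R},\langle\cdot,\cdot\rangle_\star)$; your packaging is cleaner but the argument is the same. Your caveat that the computation needs $(q,1)\in V_S$---i.e., $q$ in the affine hull of $S$ rather than merely $\mathrm{lin}(S)$---is well-spotted and applies equally to the paper's own step 2) when it invokes Lemma~\ref{perpendicular} on $(y-\phi q,\ \sum_i\lambda_i-\phi)$; as you note, the only invocation in Theorem~\ref{descent} has $q\in S$, where the condition holds trivially.
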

\begin{proof}
First, we have
\begin{align*}
&\|y-a\|_{\tilde{W}}^2 + (\sum_{q_i\in S}\lambda_i)^2 -  \|y'-a\|_{\tilde{W}}^2 -  (\sum_{q_i\in S}\lambda_i')^2  \\
=& \|y- y'\|_{\tilde{W}}^2 + [\sum_{q_i\in S}(\lambda_i- \lambda_i')]^2 + 2\langle y - y', y' -a \rangle_{\tilde{W}} + 2 \langle \sum_{q_i\in S}(\lambda_i-\lambda_i'), \sum_{q_i\in S}\lambda_i'\rangle \\
\stackrel{1)}{=} & \|y- y'\|_{\tilde{W}}^2 + [\sum_{q_i\in S}(\lambda_i- \lambda_i')]^2,
\end{align*}
where 1) follows from Lemma~\ref{perpendicular}. Next, for any $\phi\geq 0$, 
\begin{align}
 &\|y- y'\|_{\tilde{W}}^2 + [\sum_{q_i\in S}(\lambda_i- \lambda_i')]^2 \nonumber \\
\stackrel{1)}{\geq}  & \frac{\left[\langle y- y',  y- \phi q\rangle_{\tilde{W}}+ \langle \sum_{q_i\in S}(\lambda_i- \lambda_i'),  \sum_{q_i\in S}\lambda_i - \phi\rangle\right]^2 }{\|y- \phi q\|_{\tilde{W}}^2 + (\sum_{q_i\in S}\lambda_i - \phi)^2} \nonumber\\
= & \frac{\left[\langle y -a,  y- \phi q\rangle_{\tilde{W}}+ \langle \sum_{q_i\in S}\lambda_i,  \sum_{q_i\in S}\lambda_i - \phi\rangle- \langle y' -a,  y- \phi q\rangle_{\tilde{W}}- \langle \sum_{q_i\in S}\lambda_i',  \sum_{q_i\in S}\lambda_i - \phi\rangle\right]^2 }{\|y- rq\|_{\tilde{W} }^2 + (\sum_{q_i\in S}\lambda_i - \phi)^2}  \nonumber\\ 
\stackrel{2)}{=} &\frac{\left[\langle y -a,  y- \phi q\rangle_{\tilde{W}}+ \langle \sum_{q_i\in S}\lambda_i,  \sum_{q_i\in S}\lambda_i - \phi\rangle\right]^2 }{\|y- \phi q\|_{\tilde{W} }^2 + (\sum_{q_i\in S}\lambda_i - \phi)^2}, \label{geo1}
 \end{align}
where $1)$ follows from the Cauchy-Schwarz inequality and $2)$ is due to Lemma~\ref{perpendicular}. Since $\triangle(y,q) > 0$, letting $\phi \rightarrow \infty$ reduces equation~\eqref{geo1} to $\frac{\triangle^2(y,q)}{\|q\|_{\tilde{W}}^2 + 1}$. 
\end{proof}

Next, using Lemma~\ref{Wolfeconv2}, we may characterize the decrease of the objective function for one MAJOR loop with no MINOR loop. As $(y^{(k+1)}, S^{(k+1)}, \lambda^{(k+1)})$ is a good triple and $y^{(k)}$ also lies in $\text{lin}(S)$, we have the following result.
\begin{lemma}\label{Wolfeconv3}
Consider some MAJOR loop $k$ without MINOR loops. Then 
\begin{align*}
\text{err}(y^{(k)})  - \text{err}(y^{(k+1)}) \geq\frac{\triangle^2(y^{(k)}, q^{(k)})}{Q^2 + 1} = \frac{\triangle^2(y^{(k)})}{Q^2 + 1}.
\end{align*}
\end{lemma}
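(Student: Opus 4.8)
The plan is to obtain Lemma~\ref{Wolfeconv3} as an almost immediate consequence of the geometric inequality in Lemma~\ref{Wolfeconv2}, once the various objects are matched up correctly. First I would dispose of the claimed equality $\triangle(y^{(k)},q^{(k)}) = \triangle(y^{(k)})$. By Step~2 of Algorithm~3, $q^{(k)} = \arg\min_{q\in B}\langle \nabla_y h(y^{(k)},\phi^{(k)}), q\rangle_{\tilde{W}}$, and since $\langle \nabla_y h(y^{(k)},\phi^{(k)}), q\rangle_{\tilde{W}}$ is a positive multiple of $\langle y^{(k)}-a, q\rangle_{\tilde{W}}$, this is the same as $\arg\max_{q\in B}\bigl(-\langle y^{(k)}-a, q\rangle_{\tilde{W}}\bigr) = \arg\max_{q\in B}\triangle(y^{(k)},q)$, because the additive term $-\sum_{q_i\in S^{(k)}}\lambda_i^{(k)} = -\phi^{(k)}$ in $\triangle(y^{(k)},q)$ is independent of $q$. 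Hence $\triangle(y^{(k)},q^{(k)}) = \max_{q\in B}\triangle(y^{(k)},q) = \triangle(y^{(k)})$.

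For the main inequality I would work with the augmented active set $S := S^{(k)}\cup\{q^{(k)}\}$. Because the MAJOR loop under consideration contains no MINOR loop, the algorithm leaves the inner loop immediately through Step~8, so $y^{(k+1)} = z^{(k)} = \sum_i \alpha_i q_i$, where $\alpha$ is the unconstrained minimizer of $\tilde{h}(S,\cdot)$ over real coefficient vectors and all its entries happen to be nonnegative; thus $(y^{(k+1)}, S, \alpha)$ is a \emph{good} triple in the sense defined before Lemma~\ref{perpendicular}, with coefficient sum $\phi^{(k+1)}$. On the other hand, $y^{(k)} = \sum_{q_i\in S^{(k)}}\lambda_i^{(k)}q_i + 0\cdot q^{(k)}$ exhibits $(y^{(k)}, S, (\lambda^{(k)},0))$ as a (generally non-good) triple over the same active set $S$, still with coefficient sum $\phi^{(k)}$, and $q^{(k)}\in\text{lin}(S)$. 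Moreover the MAJOR loop was entered only because the test in Step~3 failed, i.e.\ $\langle y^{(k)}-a, q^{(k)}\rangle_{\tilde{W}} + \phi^{(k)} < -\delta < 0$, which is exactly $\triangle(y^{(k)},q^{(k)}) > \delta > 0$. So all hypotheses of Lemma~\ref{Wolfeconv2} hold with $(y',S,\lambda') = (y^{(k+1)},S,\alpha)$, $(y,S,\lambda) = (y^{(k)},S,(\lambda^{(k)},0))$ and $q = q^{(k)}$.

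Applying Lemma~\ref{Wolfeconv2} then yields
\[
\|y^{(k)}-a\|_{\tilde{W}}^2 + (\phi^{(k)})^2 - \|y^{(k+1)}-a\|_{\tilde{W}}^2 - (\phi^{(k+1)})^2 \;\geq\; \frac{\triangle^2(y^{(k)},q^{(k)})}{\|q^{(k)}\|_{\tilde{W}}^2 + 1}.
\]
The left-hand side equals $h(y^{(k)},\phi^{(k)}) - h(y^{(k+1)},\phi^{(k+1)})$, and subtracting $h^*$ from each term shows it is precisely $\text{err}(y^{(k)}) - \text{err}(y^{(k+1)})$. Since $q^{(k)}\in B$ we have $\|q^{(k)}\|_{\tilde{W}}^2 \leq Q^2$, which only weakens the right-hand side, giving $\text{err}(y^{(k)}) - \text{err}(y^{(k+1)}) \geq \triangle^2(y^{(k)},q^{(k)})/(Q^2+1)$; combined with $\triangle(y^{(k)},q^{(k)}) = \triangle(y^{(k)})$ from the first step, this is exactly the claim.

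I expect the only delicate point to be bookkeeping rather than any genuine difficulty: one must be careful that $\triangle(\cdot,\cdot)$ is defined relative to a triple, so the value $\triangle(y^{(k)},q^{(k)})$ appearing in Lemma~\ref{Wolfeconv2} must be read with respect to the augmented active set $S$ (the coefficient sum is unchanged, $\phi^{(k)}$, so this causes no harm), and that $(y^{(k+1)},S,\alpha)$ really is a good triple, which relies on the ``no MINOR loop'' hypothesis guaranteeing that the unconstrained minimizer $\alpha$ over $\text{lin}(S)$ coincides with the minimizer over $\text{cone}(S)$. Everything else is a direct specialization of the argument for the polytope case in~\citep{chakrabarty2014provable}.
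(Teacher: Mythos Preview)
Your proposal is correct and follows essentially the same route as the paper: the paper's entire ``proof'' is the one-sentence remark preceding the lemma, namely that $(y^{(k+1)}, S^{(k+1)}, \lambda^{(k+1)})$ is a good triple and $y^{(k)}\in\text{lin}(S)$, so Lemma~\ref{Wolfeconv2} applies. You have simply filled in the bookkeeping (working over the augmented set $S^{(k)}\cup\{q^{(k)}\}$, verifying $\triangle(y^{(k)},q^{(k)})>0$ from the failed termination test, and bounding $\|q^{(k)}\|_{\tilde W}^2\le Q^2$) that the paper leaves implicit.
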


Next, we characterize the decrease of the objective function for one MAJOR loop with one single MINOR loop.
\begin{lemma}\label{Wolfeconv4}
Consider some MAJOR loop $k$ with only one MINOR loop. Then 
\begin{align*}
\text{err}(y^{(k)})  - \text{err}(y^{(k+2)}) \geq\frac{\triangle^2(y^{(k)})}{Q^2 + 1}.
\end{align*}
\end{lemma}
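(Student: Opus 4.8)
The plan is to reduce the one-MINOR-loop case to the no-MINOR-loop case already handled in Lemma~\ref{Wolfeconv3}, by tracking how the objective $h$ evolves across the three iterates $y^{(k)}, y^{(k+1)}, y^{(k+2)}$. In this MAJOR loop, after adding $q^{(k)}$ to the active set, Step~6 computes $z^{(k)}$ minimizing $\tilde h$ over $\mathrm{lin}(S^{(k)}\cup\{q^{(k)}\})$; since we are told exactly one MINOR loop executes, $z^{(k)}$ has at least one negative coefficient, a single point is dropped via the line-search parameter $\theta$, producing $y^{(k+1)}$ as a strict convex combination of $y^{(k)}$ and $z^{(k)}$, and then the next iteration $k+1$ is a MAJOR loop whose MINOR loop terminates immediately (so $y^{(k+2)}$ is the minimizer of $\tilde h$ over $\mathrm{cone}(S^{(k+1)})$, hence over $\mathrm{lin}(S^{(k+1)})$ as well, since all coefficients are nonnegative by construction). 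The key point is that $y^{(k+1)}, S^{(k+1)}, \lambda^{(k+1)}$ need not be a good triple, but $y^{(k+2)}$ is.

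First I would observe that $q^{(k)}\in\mathrm{lin}(S^{(k)})\subseteq\mathrm{lin}(S^{(k+1)})$: dropping a point from $S^{(k)}\cup\{q^{(k)}\}$ still leaves $q^{(k)}$ in the active set, because $\theta$ is chosen among indices $i$ with $\alpha_i<0$, and $q^{(k)}$, being freshly added with the direction that strictly decreases $h$, has a positive coefficient in $z^{(k)}$ (this is exactly the reasoning used in the proof of Lemma~\ref{strictly} showing $h$ strictly decreases). So $q^{(k)}\in\mathrm{lin}(S^{(k+1)})$ and moreover $y^{(k)}\in\mathrm{lin}(S^{(k)})\subseteq\mathrm{lin}(S^{(k+1)})$. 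Then I would apply Lemma~\ref{Wolfeconv2} with the good triple $(y^{(k+2)}, S^{(k+1)}, \lambda^{(k+2)})$, the arbitrary triple $(y^{(k)}, S^{(k+1)}, \lambda^{(k)})$ (padding $\lambda^{(k)}$ with zeros on the newly-added coordinate), and the direction $q = q^{(k)}\in\mathrm{lin}(S^{(k+1)})$. Since $y^{(k)}$ still triggered the original MAJOR loop, $\triangle(y^{(k)}, q^{(k)}) = -\langle y^{(k)}-a, q^{(k)}\rangle_{\tilde W} - \phi^{(k)} > \delta > 0$, so the hypothesis $\triangle(y,q)>0$ of Lemma~\ref{Wolfeconv2} is met. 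This yields
\begin{align*}
\|y^{(k)}-a\|_{\tilde W}^2 + (\phi^{(k)})^2 - \|y^{(k+2)}-a\|_{\tilde W}^2 - (\phi^{(k+2)})^2 \geq \frac{\triangle^2(y^{(k)}, q^{(k)})}{\|q^{(k)}\|_{\tilde W}^2 + 1} \geq \frac{\triangle^2(y^{(k)})}{Q^2+1},
\end{align*}
where the last inequality uses $q^{(k)} = \arg\min_{q\in B}\langle\nabla_y h(y^{(k)},\phi^{(k)}),q\rangle_{\tilde W}$ so that $\triangle(y^{(k)},q^{(k)}) = \triangle(y^{(k)})$, and $\|q^{(k)}\|_{\tilde W}\le Q$. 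Since the left side equals $\mathrm{err}(y^{(k)}) - \mathrm{err}(y^{(k+2)})$, this is exactly the claim.

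The main obstacle is the bookkeeping around whether $q^{(k)}$ survives the single MINOR loop and lands in $S^{(k+1)}$, and whether $y^{(k)}\in\mathrm{lin}(S^{(k+1)})$ — one must be careful that the MINOR loop drops a point from $S^{(k)}$, not $q^{(k)}$ itself, and that after dropping, $y^{(k)}$ (the previous good triple's point, which lies in $\mathrm{lin}(S^{(k)})$) is still expressible over the reduced active set together with $q^{(k)}$. In fact $\mathrm{lin}(S^{(k+1)})$ with $S^{(k+1)}\subseteq S^{(k)}\cup\{q^{(k)}\}$ may drop one point of $S^{(k)}$, so $y^{(k)}\in\mathrm{lin}(S^{(k)})$ need not a priori sit in $\mathrm{lin}(S^{(k+1)})$; the resolution is that $y^{(k+1)}$, not $y^{(k)}$, should be used as the "arbitrary triple" in Lemma~\ref{Wolfeconv2}, since $y^{(k+1)}\in\mathrm{lin}(S^{(k+1)})$ by construction and $\mathrm{err}(y^{(k+1)})\le\mathrm{err}(y^{(k)})$ by Lemma~\ref{strictly}. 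Using $y^{(k+1)}$ requires re-deriving the bound $\triangle(y^{(k+1)}, q^{(k)})$ or, more cleanly, noting that Lemma~\ref{Wolfeconv2} applied to the pair $(y^{(k+2)}, y^{(k+1)})$ with direction $q^{(k)}$ gives $\mathrm{err}(y^{(k+1)}) - \mathrm{err}(y^{(k+2)}) \ge \triangle^2(y^{(k+1)},q^{(k)})/(Q^2+1)$, and separately $\mathrm{err}(y^{(k)})-\mathrm{err}(y^{(k+1)})\ge 0$, but this loses the clean $\triangle(y^{(k)})$ on the right. The cleanest route — and the one I would actually pursue, mirroring~\citep{chakrabarty2014provable} — is to verify directly that $q^{(k)}\in\mathrm{lin}(S^{(k+1)})$ and that $y^{(k)}$, being a convex combination expressed over $S^{(k)}\cup\{q^{(k)}\}$ with the dropped coordinate having coefficient $0$ in $y^{(k)}$ (it only became negative in $z^{(k)}$, hence in $y^{(k)}$ it was nonnegative — actually zero or positive), can be re-expressed; if the dropped coordinate has strictly positive coefficient in $y^{(k)}$ this fails, so one instead observes that $h(y^{(k)},\phi^{(k)}) \ge h(y^{(k+1)},\phi^{(k+1)}) \ge \tilde h(S^{(k+1)}) = h(y^{(k+2)},\phi^{(k+2)})$ combined with the bound from applying Lemma~\ref{Wolfeconv2} to $(y^{(k+2)},S^{(k+1)})$ against the triple representing $y^{(k)} + \varepsilon q^{(k)}$ for small $\varepsilon$, whose limiting Cauchy–Schwarz gives precisely $\triangle^2(y^{(k)})/(Q^2+1)$ as in the proof of Lemma~\ref{Wolfeconv2} with $\phi\to\infty$.
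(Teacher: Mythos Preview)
Your proposal correctly identifies the main obstacle but does not resolve it. The single-shot application of Lemma~\ref{Wolfeconv2} with the good triple $(y^{(k+2)}, S^{(k+1)}, \lambda^{(k+2)})$ and the ``arbitrary'' triple $(y^{(k)}, S^{(k+1)}, \lambda^{(k)})$ fails precisely because $y^{(k)}$ need not lie in $\mathrm{lin}(S^{(k+1)})$: the dropped atom $q_j\in S^{(k)}$ may have a strictly positive coefficient in $\lambda^{(k)}$, so $y^{(k)}$ is generally not expressible over $S^{(k+1)}$. You see this and retreat to $y^{(k+1)}$, but as you note, that only yields $\mathrm{err}(y^{(k+1)})-\mathrm{err}(y^{(k+2)})\ge \triangle^2(y^{(k+1)},q^{(k)})/(Q^2+1)$, and since $\triangle(y^{(k+1)},q^{(k)})$ can be strictly smaller than $\triangle(y^{(k)})$, the bound is lost. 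Your final paragraph does not recover the argument.

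The missing idea is a two-piece decomposition through the line-search parameter $\theta$. The paper first applies Lemma~\ref{Wolfeconv2} with the good triple $(z^{(k)}, S^{(k)}\cup\{q^{(k)}\},\alpha)$ and the arbitrary triple $y^{(k)}$ (which \emph{does} lie in $\mathrm{lin}(S^{(k)}\cup\{q^{(k)}\})$), getting $A := h(y^{(k)},\phi^{(k)}) - h(z^{(k)},\cdot)\ge \triangle^2(y^{(k)})/(Q^2+1)$. Then, because $y^{(k+1)} = \theta y^{(k)} + (1-\theta)z^{(k)}$ and $z^{(k)}$ is the orthogonal minimizer (Lemma~\ref{perpendicular}), a direct Pythagorean computation gives $h(y^{(k)},\phi^{(k)}) - h(y^{(k+1)},\phi^{(k+1)}) = (1-\theta^2)A$. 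For the second piece, linearity of $\triangle(\cdot,q^{(k)})$ together with $\triangle(z^{(k)},q^{(k)})=0$ (again Lemma~\ref{perpendicular}) gives $\triangle(y^{(k+1)},q^{(k)}) = \theta\,\triangle(y^{(k)})$; now Lemma~\ref{Wolfeconv2} applied to the good triple $y^{(k+2)}$ over $S^{(k+1)}$ and the arbitrary triple $y^{(k+1)}$ (which does lie in $\mathrm{lin}(S^{(k+1)})$, and $q^{(k)}\in S^{(k+1)}$ by the Chakrabarty--Jain--Kothari argument) gives $h(y^{(k+1)},\phi^{(k+1)}) - h(y^{(k+2)},\phi^{(k+2)}) \ge \theta^2\triangle^2(y^{(k)})/(Q^2+1)$. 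Summing, the total decrease is at least $(1-\theta^2)A + \theta^2\triangle^2(y^{(k)})/(Q^2+1)\ge \triangle^2(y^{(k)})/(Q^2+1)$. The crux you are missing is exactly this $\theta$-split and the identity $\triangle(y^{(k+1)},q^{(k)}) = \theta\,\triangle(y^{(k)})$.
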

\begin{proof}
Suppose that the active sets associated with $y^{(k)}, y^{(k+1)}, y^{(k+2)}$ are $S^{(k)}, S^{(k+1)}, S^{(k+2)},$ respectively. We know that within the MINOR loop, $(z^{(k)}, S^{(k)}\cup\{q^{(k)}\}, \alpha)$ is a good triple and $y^{(k+1)} = \theta y^{(k)} + (1- \theta) z^{(k)},$ for some $\theta\in [0,1]$. Let
\begin{align}\label{gap1}
A = \|y^{(k)}-a\|_{\tilde{W}}^2 + (\sum_{q_i\in S^{(k)}}\lambda_i^{(k)})^2 -  \|z^{(k)}-a\|_{\tilde{W}}^2 -  (\sum_{q_i\in S^{(k)}\cup\{q^{(k)}\}}\alpha_i \;)^2.
\end{align}
From Lemma~\ref{Wolfeconv2}, we have
\begin{align}\label{gap2}
A \geq \frac{\triangle^2(y^{(k)},q^{(k)})}{\|q^{(k)}\|_{\tilde{W}}^2 + 1} \geq \frac{\triangle^2(y^{(k)})}{Q^2 + 1}.
\end{align}
Note that both $S^{(k)}$ and $S^{(k+1)}$ are subsets of $S^{(k)}\cup\{q^{(k)}\}$. As $(z^{(k)}, S^{(k)}\cup\{q^{(k)}\}, \alpha)$ is a good triple, using Lemma~\ref{perpendicular}, we obtain 
$$\langle z^{(k)}- a,  z^{(k)} - y^{(k)} \rangle_{\tilde{W}}+\langle \sum_{q_i\in S^{(k)}\cup\{q^{(k)}\}} \alpha_i, \sum_{q_i\in S^{(k)}\cup\{q^{(k)}\}}  (\alpha_i- \lambda_i^{(k)}) \rangle = 0.$$ 
Furthermore, as $y^{(k+1)} = \theta y^{(k)} + (1- \theta) z^{(k)}= z^{(k)} - \theta(z^{(k)} - y^{(k)}) $ and $\lambda^{(k+1)} = \alpha  - \theta ( \alpha -\lambda^{(k)})$, we have
\begin{align*}
\|y^{(k)}-a\|_{\tilde{W}}^2 + (\sum_{i\in S^{(k)}}\lambda_i^{(k)})^2 -  \|y^{(k+1)}-a\|_{\tilde{W}}^2- (\sum_{i\in S^{(k+1)}}\lambda_i^{(k+1)})^2 = (1 - \theta^2)A.
\end{align*}
Moreover, we have 
\begin{align}\label{onestep}
\triangle(y^{(k+1)}, q^{(k)}) = \theta\triangle(y^{(k)}, q^{(k)}) + (1- \theta)\triangle(z^{(k)}, q^{(k)}) = \theta\triangle(y^{(k)}, q^{(k)}) = \theta\triangle(y^{(k)}),
\end{align}
which holds because $\triangle(y, q)$ is linear in $y$ and Lemma~\ref{perpendicular} implies $\triangle(z^{(k)}, q^{(k)}) = 0$. 

Since according to Lemma~\ref{strictly}, $h(y^{(k)}, \phi^{(k)})> h(y^{(k+1)}, \phi^{(k+1)})$, Lemma 4 in~\citep{chakrabarty2014provable} also holds in our case, and thus $q^{(k)} \in S^{(k+1)}$. To obtain $y^{(k+2)}$ and $S^{(k+2)}$, one needs to remove active points with a zero coefficients from $S^{(k+1)}$, so that $y^{(k+2)}$ once again belongs to a good triple with corresponding $S^{(k+1)}$. 
Based on~\ref{Wolfeconv2} and equation~\eqref{onestep}, we have the following result.
\begin{align}\label{gap3}
\|y^{(k+1)}-a\|_{\tilde{W}}^2 + (\sum_{q_i\in S^{(k+1)}}\lambda_i^{(k+1)})^2 -  &\|y^{(k+2)}-a\|_{\tilde{W}}^2 - (\sum_{q_i\in S^{(k+2)}}\lambda_i^{(k+2)})^2\\
 \geq& \frac{\triangle^2(y^{(k+1)}, q^{(k)})}{Q^2 + 1} = \frac{\theta^2\triangle^2(y^{(k)}) }{Q^2 + 1}.
\end{align}
Consequently, combining equations~\eqref{gap1},~\eqref{gap2} and~\eqref{gap3}, we arrive at
\begin{align*}
\text{err}(y^{(k)})  - \text{err}(y^{(k+2)})  \geq \frac{\triangle^2(y^{(k)})}{Q^2 + 1}.
\end{align*}
\end{proof}
And, combining Lemma~\ref{Wolfeconv3}, Lemma~\ref{Wolfeconv4} and Lemma~\ref{errorconnect} establishes Theorem~\ref{descent}.

\subsection{Convergence Analysis of the Conic Frank-Wolfe Algorithm} \label{proof:FWalgCR}
\subsubsection{Proof of Theorem~\ref{FWalgCR}} 

Using the same strategy as in the proof of Lemma~\ref{boundonr}, we may prove the following lemma. It hinges on the optimality assumption for $\gamma_1^{(k)} \phi^{(k)}  + \gamma_2^{(k)}$ in Step 3 of Algorithm 3.  
\begin{lemma}
In Algorithm 3, for all $k$, $\phi^{(k+1)}\leq \frac{\|a\|_{\tilde{W}}}{2}$.
\end{lemma}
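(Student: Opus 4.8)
The plan is to follow the strategy used for Lemma~\ref{boundonr}: write the pair $(y^{(k+1)},\phi^{(k+1)})$ produced by the conic FW update in the form $(\phi^{(k+1)}\hat{y},\phi^{(k+1)})$ for a fixed direction $\hat{y}\in B$, and show that $\phi^{(k+1)}$ is exactly the minimizer of $\phi\mapsto h(\phi\hat{y},\phi)$ over $\phi\ge 0$. The bound $\phi^{(k+1)}\le\|a\|_{\tilde{W}}/2$ then follows at once by applying Lemma~\ref{boundonr} to this $\hat{y}$.

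If $\phi^{(k+1)}=0$ the claim is trivial, so assume $\phi^{(k+1)}>0$. First I would record the cone membership $y^{(k+1)}\in\phi^{(k+1)}B$ by induction on $k$: the base case holds because $y^{(0)}=0$, and the inductive step follows since $y^{(k)}\in\phi^{(k)}B$, $q^{(k)}\in B$, and the convexity of $B$ together imply $\gamma_1^{(k)}y^{(k)}+\gamma_2^{(k)}q^{(k)}\in(\gamma_1^{(k)}\phi^{(k)}+\gamma_2^{(k)})B$ whenever $\gamma_1^{(k)},\gamma_2^{(k)}\ge 0$. Consequently $\hat{y}:=y^{(k+1)}/\phi^{(k+1)}$ is a well-defined element of $B$.

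The key step is a rescaling argument. For every $t\ge 0$ the pair $(t\gamma_1^{(k)},\,t\gamma_2^{(k)})$ is feasible for the optimization problem that defines $(\gamma_1^{(k)},\gamma_2^{(k)})$, and its objective value equals $h\big(t(\gamma_1^{(k)}y^{(k)}+\gamma_2^{(k)}q^{(k)}),\,t(\gamma_1^{(k)}\phi^{(k)}+\gamma_2^{(k)})\big)=h\big((t\phi^{(k+1)})\hat{y},\,t\phi^{(k+1)}\big)$. Since $(\gamma_1^{(k)},\gamma_2^{(k)})$ is a minimizer of that objective over the whole nonnegative orthant, $t=1$ minimizes $t\mapsto h((t\phi^{(k+1)})\hat{y},t\phi^{(k+1)})$ over $t\ge 0$; because $\phi^{(k+1)}>0$, the substitution $\phi=t\phi^{(k+1)}$ shows that $\phi=\phi^{(k+1)}$ minimizes $\phi\mapsto h(\phi\hat{y},\phi)$ over $\phi\ge 0$. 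Invoking Lemma~\ref{boundonr} with this $\hat{y}$ then gives $\phi^{(k+1)}=\arg\min_{\phi\ge 0}h(\phi\hat{y},\phi)\le\|a\|_{\tilde{W}}/2$.

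I do not anticipate any real difficulty here; the two places that need a little care are verifying the cone membership $y^{(k+1)}\in\phi^{(k+1)}B$ (which is what licenses the use of Lemma~\ref{boundonr}) and making the rescaling reduction precise, so that constrained optimality of $(\gamma_1^{(k)},\gamma_2^{(k)})$ over the nonnegative orthant transfers to constrained optimality in $\phi$. If one prefers not to quote Lemma~\ref{boundonr}, the same conclusion follows by writing the first-order optimality condition $\langle y^{(k+1)}-a,\,y^{(k+1)}\rangle_{\tilde{W}}+(\phi^{(k+1)})^2=0$ at $t=1$, dividing by $\phi^{(k+1)}$, and combining the Cauchy--Schwarz inequality with $\|\hat{y}\|_{\tilde{W}}^2+1\ge 2\|\hat{y}\|_{\tilde{W}}$.
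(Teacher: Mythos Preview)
Your argument is correct and follows essentially the same approach as the paper: the paper's proof is a one-line remark that the claim follows ``using the same strategy as in the proof of Lemma~\ref{boundonr}'', relying on the optimality of $\gamma_1^{(k)}\phi^{(k)}+\gamma_2^{(k)}$ in the FW step. Your proposal simply makes explicit the rescaling reduction that connects the two-variable optimality of $(\gamma_1^{(k)},\gamma_2^{(k)})$ to the one-dimensional minimization in Lemma~\ref{boundonr}, together with the (routine) cone-membership verification $y^{(k+1)}\in\phi^{(k+1)}B$.
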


Now, we prove Theorem~\ref{FWalgCR}. We write $y = \phi \hat{y},$ where $\hat{y}\in B$, so that
\begin{align*}
h(y,\phi)= h( \phi \hat{y}, \phi )  \geq h( y^{(k)}, \phi^{(k)})  + 2\langle y^{(k)} -a , \phi\hat{y} - y^{(k)}\rangle_{\tilde{W}} + \phi^2 - (\phi^{(k)})^2. 
\end{align*}
For both sides, minimize $(\hat{y},\phi)$ over $B \times [0, \frac{\|a\|_{\tilde{W}}}{2}]$, which contains $(y^*, \phi^*)$. Since $q^{(k)} = \arg\min_{q\in B} ( y^{(k)} -a , q \rangle_{\tilde{W}}$, we know that the optimal solutions satisfy $\hat{y} = q^{(k)} $ and $\phi =\tilde{\phi}^{(k)} = \min\{\max\{0, -\langle  y^{(k)}-a,  q^{(k)} \rangle \}, \frac{\|a\|_{\tilde{W}}}{2}\}$ of the RHS
\begin{align}\label{FWalgproof1}
h^*  =h(y^*, \phi^*)   \geq h( y^{(k)}, \phi^{(k)})   + 2\langle  y^{(k)} -a , \tilde{\phi}^{(k)}q^{(k)} - y^{(k)}\rangle_{\tilde{W}} + (\tilde{\phi}^{(k)})^2 - (\phi^{(k)})^2. 
\end{align}
Moreover, because of the optimality of $(\gamma_1^{(k)}\times\gamma_2^{(k)})\in\mathbb{R}_{\geq0}^2$, for arbitrary $\gamma \in [0,1]$ we have 
\begin{align*}
 &\quad h(\gamma_1^{(k)} y^{(k)} + \gamma_2^{(k)} q^{(k)},  \gamma_1^{(k)}\phi^{(k)}+ \gamma_2^{(k)})  \\
& \leq h((1-\gamma) y^{(k)}+ \gamma\tilde{\phi}^{(k)} q^{(k)}, (1-\gamma)\phi^{(k)} + \gamma\tilde{\phi}^{(k)}) \\
&= h(y^{(k)}, \phi^{(k)}) + 2\gamma \langle y^{(k)} - a,  \tilde{\phi}^{(k)}q^{(k)} - y^{(k)} \rangle_{\tilde{W}} \\
& +\gamma [(\tilde{\phi}^{(k)})^2 - (\phi^{(k)})^2] + \gamma^2 \|\tilde{\phi}^{(k)}q^{(k)} - y^{(k)}\|_{\tilde{W}}^2 + (\gamma^2 - \gamma) (\tilde{\phi}^{(k)} - \phi^{(k)})^2 \\
&\stackrel{1)}{\leq}  h(y^{(k)},  \phi^{(k)}) + \gamma ( h^* - h(y^{(k)},  \phi^{(k)})) + \gamma^2 \|\tilde{\phi}^{(k)}q^{(k)} - y^{(k)}\|_{\tilde{W}}^2 \\
&\stackrel{2)}{\leq} h(y^{(k)},  \phi^{(k)}) + \gamma ( h^* -  h(y^{(k)},  \phi^{(k)})) + \gamma^2 \|a\|_{\tilde{W}}^2 Q^2,
\end{align*}
where $1)$ follows from \eqref{FWalgproof1} and $\gamma^2 -\gamma \leq 0$, and $2)$ follows from 
$$\|\tilde{\phi}^{(k)}q^{(k)}- y^{(k)} \|_{\tilde{W}}^2  \leq 4\frac{\|a\|_{\tilde{W}}^2}{4} \max_{q\in B}\|q\|_{\tilde{W}}^2 =\|a\|_{\tilde{W}}^2 Q^2.$$ 

The claimed result now follows by induction. First, let $\hat{y}^* = y^*/\phi^{*},$ where $\phi^{*} = \frac{\langle \hat{y}^*, a \rangle_{\tilde{W}}}{1+\|\hat{y}^*\|_{\tilde{W}}^2}$. Then, 
\begin{align*}
h(y^{(0)},  \phi^{(0)}) - h^* \leq 2\langle y^*, a\rangle - ( y^*)^2 - (\phi^{*})^2 = \frac{\langle \hat{y}^*, a \rangle_{\tilde{W}}^2}{1+\|\hat{y}^*\|_{\tilde{W}}^2} \leq \|a\|_{\tilde{W}}^2 Q^2 .
\end{align*}
Suppose that $h(y^{(k)},  r^{(k)}) - h^* \leq \frac{2\|a\|_{\tilde{W}}^2 Q^2 }{k+2}.$ In this case, for all $\gamma\in[0,1]$, we have 
\begin{align*}
 h(y^{(t+1)},  \phi^{(t+1)}) - h^* \leq  (1-\gamma) [h(y^{(k)},  \phi^{(k)}) - h^*]  +  \gamma^2 \|a\|_{\tilde{W}}^2 Q^2 .
 \end{align*}
 By choosing $\gamma = \frac{1}{k+2}$, we obtain $h(y^{(k+1)},  \phi^{(k+1)}) - h^* \leq  \frac{2\|a\|_{\tilde{W}}^2 Q^2}{k+3},$ which concludes the proof. 

\subsection{Proofs for the Partitioning Properties of PageRank}\label{proof:PR}
\subsubsection{Proof of Lemma~\ref{lem:RPmixing1}}
Based on the definitions of $\mathcal{S}_j^p$ and $j\in V_p$, we have $\langle \nabla f_r(x),  1_{\mathcal{S}_j^p}\rangle = F_r(\mathcal{S}_j^p)$. 
Consequently, 
\begin{align}
& 2p(\mathcal{S}_j^p) - \sum_{r\in [R]} w_r f_r(x)\langle \nabla f_r(x),  1_{\mathcal{S}_j^p}\rangle  \nonumber \\
= & 2p(\mathcal{S}_j^p) - \sum_{r\in [R]} w_r f_r(x)F_r(\mathcal{S}_j^p) \nonumber \\
= & 2p(\mathcal{S}_j^p) - \sum_{r\in [R]} w_r  F_r(S_j^p)\max_{(i,j)\in S_r^{\uparrow}\times  S_r^{\downarrow}}(x_i -x_j) \nonumber\\
= & \left(I_p(\text{vol}(\mathcal{S}_j^p))  - \sum_{r\in [R]}  w_r  F_r(\mathcal{S}_j^p) \max_{i\in S_r^{\uparrow} } x_i  \right)  + \left( I_p(\text{vol}(\mathcal{S}_j^p)) +  \sum_{r\in [R]}  w_r F_r(\mathcal{S}_j^p)\min_{i\in S_r^{\downarrow}} x_i  \right) \nonumber\\
\leq & I_p\left(\text{vol}(\mathcal{S}_j^p) - \sum_{r\in [R]}  w_r F_r(\mathcal{S}_j^p)  \right) +  I_p\left(\text{vol}(\mathcal{S}_j^p) + \sum_{r\in [R]}  w_r F_r(\mathcal{S}_j^p)\right) \nonumber\\ 
= & I_p\left(\text{vol}(\mathcal{S}_j^p) - \text{vol}(\partial \mathcal{S}_j^p)  \right) +  I_p\left(\text{vol}(\mathcal{S}_j^p) + \text{vol}(\partial \mathcal{S}_j^p). \right) \label{eq:PRedge}
\end{align}
Using equation~\eqref{eq:PRpoint}, we have 
\begin{align*}
p(\mathcal{S}_j^p) &= \frac{\alpha}{2-\alpha}  p_0(\mathcal{S}_j^p) + \frac{2- 2\alpha}{2-\alpha}\left\{\frac{1}{2}\left[M(p) - p\right](\mathcal{S}_j^p) + p(\mathcal{S}_j^p)  \right\}\\
&  \stackrel{1)}{=} \frac{\alpha}{2-\alpha}  p_0(\mathcal{S}_j^p) + \frac{2- 2\alpha}{2-\alpha}\left[-\frac{1}{2} \sum_{r\in[R]} f_r(x)\langle \nabla f_r(x), 1_{\mathcal{S}_j^p}\rangle  + p(\mathcal{S}_j^p) \right] \\
&  \stackrel{2)}{\leq} \frac{\alpha}{2-\alpha}  p_0(\mathcal{S}_j^p) +  \frac{1- \alpha}{2-\alpha} \left[I_p\left(\text{vol}(\mathcal{S}_j^p) - \text{vol}(\partial \mathcal{S}_j^p)  \right) +  I_p\left(\text{vol}(\mathcal{S}_j^p) + \text{vol}(\partial \mathcal{S}_j^p) \right)\right],
\end{align*}
where 1) is due to Lemma~\ref{lem:PR1} and 2) is due to equation~\eqref{eq:PRedge}. This proves the first inequality. By using the concavity of $I_p(\cdot)$, we also have 
\begin{align*}
I_p(\text{vol}(\mathcal{S}_j^p)) \leq  p_0(\mathcal{S}_j^p) \leq  I_{p_0}(\text{vol}(\mathcal{S}_j^p)).
\end{align*}
Moreover, as $I_p$ is piecewise linear, the proof follows. 

\subsubsection{Proof of Theorem~\ref{thm:PRmixing}}
This result can be proved in a similar way as the corresponding case for graphs~\citep{andersen2006local}, by using induction. 

Define $\bar{k} = \min\{k, m-k\}$, $d_{\min} = \min_{i: (p_0)_i > 0}d_i$ and 
\begin{align*}
I^{(t)}(k)= \frac{k}{m} + \frac{\alpha}{2-\alpha}t + \sqrt{\frac{\bar{k}}{d_{\min}}} \left(1- \frac{\Phi_p^2}{8}\right)^t.
\end{align*}
When $t = 0$, $I_p(k)\leq I_{p_0}(k)$ holds due to Lemma~\ref{lem:RPmixing1}. As $I^{(0)}(m)=I^{(0)}(d_{\min})=1$, for $k\in[d_{\min}, m]$, we have
\begin{align*}
I_{p_0}(k) \leq 1\leq I^{(0)}(k).
\end{align*} 
Since for $k\in [0, d_{\min}]$, $I_{p_0}(k)$ is linear, we also have $I_{p_0}(0) = 0 \leq  I^{(t)}(0)$. Hence, for $k\in [0, d_{\min}]$, it also holds that $I_{p_0}(k)\leq I^{(0)}(k)$.

Next, suppose that for step $t$, $I_p(k)\leq I^{(t)}(k)$ holds. We then consider the case $t+1$. For $k= \text{vol}(\mathcal{S}_j^p)$, Lemma~\ref{lem:RPmixing1} indicates that
\begin{align*}
I_p(k)&\leq \frac{\alpha}{2-\alpha} I_{p_0}(k) + \frac{1- \alpha}{2-\alpha}[I_p(k -  \bar{k}\Phi(\mathcal{S}_j^p) )+I_p(k + \bar{k}\Phi(\mathcal{S}_j^p)) ] \\
& \stackrel{1)}{\leq}  \frac{\alpha}{2-\alpha} I_{p_0}(k) + \frac{1- \alpha}{2-\alpha}\left[\frac{2k}{m} + \frac{2\alpha}{2-\alpha}t + \left(\sqrt{\frac{\overline{k-\bar{k}\Phi(\mathcal{S}_j^p)}}{d_{\min}}} +  \sqrt{\frac{\overline{k+\bar{k}\Phi(\mathcal{S}_j^p)}}{d_{\min}}} \right) \left(1- \frac{\Phi_p^2}{8}\right)^t\right] \\
& \stackrel{2)}{\leq} \frac{k}{m} +  \frac{\alpha}{2-\alpha}(t+1) + \frac{1- \alpha}{2-\alpha}\left(\sqrt{\frac{\overline{k- \bar{k}\Phi(\mathcal{S}_j^p)}}{d_{\min}}} +  \sqrt{\frac{\overline{k+ \bar{k}\Phi(\mathcal{S}_j^p)}}{d_{\min}}} \right) \left(1- \frac{\Phi_p^2}{8}\right)^t \\
&  \stackrel{3)}{\leq} \frac{k}{m} +  \frac{\alpha}{2-\alpha}(t+1) + \frac{1- \alpha}{2-\alpha}\left(\sqrt{\frac{\bar{k}- \bar{k}\Phi(\mathcal{S}_j^p)}{d_{\min}}} +  \sqrt{\frac{\bar{k}+ \bar{k}\Phi(\mathcal{S}_j^p)}{d_{\min}}} \right) \left(1- \frac{\Phi_p^2}{8}\right)^t  \\
&  \stackrel{4)}{\leq}  \frac{k}{m} +  \frac{\alpha}{2-\alpha}(t+1) + \frac{2- 2\alpha}{2-\alpha}\sqrt{\frac{\bar{k}}{d_{\min}}} \left(1- \frac{\Phi_p^2}{8}\right)^{t+1} \leq I^{(t+1)}(k),
\end{align*} 
where 1) follows from Lemma~\ref{lem:RPmixing1}; 2) is due to $I_{p_0}(k)\leq 1$; 3) can be verified by considering two cases separately, namely $k\leq \frac{m}{2}$ and $k\geq \frac{m}{2}$; and 4) follows from $\Phi(\mathcal{S}_j^p) \geq \Phi_p$ and the Taylor-series expansion of $\sqrt{x \pm \phi x}$.

At this point, we have shown that $I_p(k)\leq I^{(t)}(k)$ for all $k = \text{vol}(\mathcal{S}_{j}^p)$. Since $\{\text{vol}(\mathcal{S}_{j}^p)\}_{j\in V_p}$ covers all break points of $I_p$ and since $I^{(t)}$ is concave, we have that for all $k\in [0, m]$, it holds that $I_p(k)\leq I^{(t)}(k)$. This concludes the proof. 

\subsubsection{Proof of Theorem~\ref{thm:PRpartition}}
First, we prove that under the assumptions for the vertex-sampling probability $P$ in the statement of the theorem, $pr(\alpha, 1_i)(S)$ can be lower bounded as follows. 
\begin{lemma}\label{lem:PRlowerbound}
If a vertex $v\in S$ is sampled according to a distribution $P$ such that 
$$\mathbb{E}_{i\sim P}[pr(\alpha, 1_i)(\bar{S})] \leq c \; pr(\alpha, \pi_S)(\bar{S}),$$
where $c$ is a constant, then with probability at least $\frac{1}{2}$, one has
\begin{align*}
pr(\alpha, 1_i)(S) \geq  1- \frac{c\, \Phi(S)}{4\alpha}.
\end{align*}
\end{lemma}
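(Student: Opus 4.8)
The plan is to reduce the statement, via mass conservation and a Markov inequality, to a single deterministic \emph{escape-mass estimate} for the PageRank vector seeded at $\pi_S$, and then to prove that estimate directly from the stationarity relation~\eqref{eq:PRpoint} and Lemma~\ref{lem:PR1}.

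\emph{Step 1 (reduction).} Since $A$ has row sums $d_i$, the matrix $D^{-1}A$ is nonnegative and row-stochastic, so $M(\cdot)$ preserves total mass; as $1_i$ has unit mass, \eqref{eq:PRpoint} forces $pr(\alpha,1_i)(V)=1$, hence $pr(\alpha,1_i)(S)=1-pr(\alpha,1_i)(\bar S)$. The claimed conclusion is therefore equivalent to $\Pr_{i\sim P}\bigl[pr(\alpha,1_i)(\bar S)>\tfrac{c\,\Phi(S)}{4\alpha}\bigr]\le\tfrac12$, and by the hypothesis $\mathbb E_{i\sim P}[pr(\alpha,1_i)(\bar S)]\le c\cdot pr(\alpha,\pi_S)(\bar S)$ this follows from Markov's inequality once we establish
\[
pr(\alpha,\pi_S)(\bar S)\ \le\ \frac{\Phi(S)}{8\alpha}.
\]
This escape-mass estimate is the crux, and I expect it to be the main obstacle: the analogous bound for graphs (Andersen--Chung--Lang) exploits linearity of the random-walk operator, whereas here $M(\cdot)$ is nonlinear, so the walk-theoretic ingredients must be re-derived from the fixed-point relation.

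\emph{Step 2 (escape-mass estimate).} Put $q=pr(\alpha,\pi_S)$ and $x=D^{-1}q$. Rearranging~\eqref{eq:PRpoint} gives $\alpha(\pi_S-q)=(1-\alpha)(q-M(q))$, and by Lemma~\ref{lem:PR1} there are subgradients $v_r\in\nabla f_r(x)\subseteq B_r$ (the assignment realizing the diffusion) with $q-M(q)=\sum_r w_r f_r(x)\,v_r$, so $\sum_r w_r f_r(x)\,v_r=\tfrac{\alpha}{1-\alpha}(\pi_S-q)$. Pairing with $\mathbf 1_{\bar S}$, using $\pi_S(\bar S)=0$ and $\langle v_r,\mathbf 1_{\bar S}\rangle=-\langle v_r,\mathbf 1_S\rangle$ (valid since $F_r(V)=0$ for a directed-hyperedge function), we obtain
\[
q(\bar S)=\frac{1-\alpha}{\alpha}\sum_r w_r\, f_r(x)\,\langle v_r,\mathbf 1_S\rangle\ \le\ \frac{1-\alpha}{\alpha}\sum_{r}w_r\, f_r(x)\,F_r(S),
\]
using $\langle v_r,\mathbf 1_S\rangle\le\max_{y\in B_r}\langle y,\mathbf 1_S\rangle=f_r(\mathbf 1_S)=F_r(S)$ and $w_r f_r(x)\ge 0$. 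Only $r\in\partial S$ contribute (those with $F_r(S)=1$), and for an active such $r$ one has $f_r(x)=\max_{H_r}x-\min_{T_r}x\le\|x\|_\infty$. A maximum principle bounds $\|x\|_\infty$: from $x=\alpha D^{-1}\pi_S+(1-\alpha)(D^{-1}A)x$ and row-stochasticity of $D^{-1}A$, $\|x\|_\infty\le\alpha/\text{vol}(S)+(1-\alpha)\|x\|_\infty$, i.e. $0\le x_k\le 1/\text{vol}(S)$. Combining with $\sum_{r\in\partial S}w_r=\text{vol}(\partial S)=\Phi(S)\,\text{vol}(S)$ (valid since $\text{vol}(S)\le\text{vol}(\bar S)$) yields the crude bound $q(\bar S)\le\tfrac{1-\alpha}{\alpha}\Phi(S)$; the sharper constant $\tfrac1{8\alpha}$ is then obtained by replacing the blunt estimate $f_r(x)\le\|x\|_\infty$ with the level-set bookkeeping of Lemma~\ref{lem:RPmixing1}, bounding $\sum_{r\in\partial S}w_r f_r(x)$ against the concave curve $I_q$ rather than against $\|x\|_\infty$.

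\emph{Step 3 (conclusion).} Plugging the escape-mass estimate into the hypothesis gives $\mathbb E_{i\sim P}[pr(\alpha,1_i)(\bar S)]\le\tfrac{c\,\Phi(S)}{8\alpha}$, so Markov's inequality gives $\Pr_{i\sim P}\bigl[pr(\alpha,1_i)(\bar S)>\tfrac{c\,\Phi(S)}{4\alpha}\bigr]\le\tfrac12$; on the complementary event, $pr(\alpha,1_i)(S)=1-pr(\alpha,1_i)(\bar S)\ge1-\tfrac{c\,\Phi(S)}{4\alpha}$, which is the claim. The delicate part throughout is transporting mass conservation, the maximum principle, and the flow/Green identity over to the nonlinear operator $M(\cdot)$; once the identity $q(\bar S)=\tfrac{1-\alpha}{\alpha}\sum_r w_r f_r(x)\langle v_r,\mathbf 1_S\rangle$ is in place, the argument runs parallel to the undirected graph analysis.
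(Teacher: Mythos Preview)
Your argument follows the same skeleton as the paper's: derive the escape-mass identity from the fixed-point relation~\eqref{eq:PRpoint} and Lemma~\ref{lem:PR1}, bound $\langle v_r,\mathbf 1_S\rangle\le F_r(S)$ via the base polytope, obtain an upper bound on $pr(\alpha,\pi_S)(\bar S)$, and then apply Markov's inequality. Your maximum-principle step $\|x\|_\infty\le 1/\text{vol}(S)$ is a valid and slightly more elementary substitute for what the paper does at that point, namely $\sum_r w_r F_r(S)\max_{i\in S_r^\uparrow}x_i\le I_p(\text{vol}(\partial S))\le I_{\pi_S}(\text{vol}(\partial S))=\Phi(S)$ via the second inequality of Lemma~\ref{lem:RPmixing1}; both routes yield exactly the same estimate $pr(\alpha,\pi_S)(\bar S)\le\Phi(S)/\alpha$.

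One correction, though: your assertion that the Lov\'asz--Simonovits bookkeeping sharpens the constant to $\Phi(S)/(8\alpha)$ is not correct. The paper's LS computation gives precisely $\alpha\, pr(\alpha,\pi_S)(\bar S)\le\Phi(S)$, no better than your ``crude'' bound. The missing factor of $8$ does not come from the analysis at all: the paper's proof silently uses the hypothesis $\mathbb E_{i\sim P}[pr(\alpha,1_i)(\bar S)]\le\frac{c}{8}\,pr(\alpha,\pi_S)(\bar S)$ from Theorem~\ref{thm:PRpartition} rather than the constant $c$ written in the lemma statement. With the lemma's hypothesis as stated, Markov only delivers $pr(\alpha,1_i)(S)\ge 1-2c\,\Phi(S)/\alpha$ with probability at least $1/2$. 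So the gap you tried to close by invoking Lemma~\ref{lem:RPmixing1} is in fact a constant mismatch between the lemma's statement and its proof in the paper, not a deficiency of your argument; your Step~2 already matches what the paper actually proves.
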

\begin{proof}
Let $p = pr(\alpha, \pi_S)$. Then,
\begin{align*}
\alpha p(\bar{S}) &\stackrel{1)}{=}  \alpha  \pi_S(\bar{S}) +(1-\alpha)\left[M(p) - p\right](\bar{S}) \nonumber\\
&  \stackrel{2)}{=}- \sum_{r\in[R]} w_r f_r(x)\langle \nabla f_r(x), 1_{\bar{S}}\rangle   \nonumber\\
& \stackrel{3)}{\leq}  \sum_{r\in[R]}w_r f_r(x) F(S) \leq  \sum_{r\in[R]}w_r F(S)  \max_{i\in S_r^{\uparrow}}x_i \nonumber\\
& \leq I_p(\sum_{r\in[R]}w_r F(S)) = I_p(\text{vol}(\partial S))\nonumber \\
&\stackrel{4)}{\leq} I_{\pi_S}(\text{vol}(\partial S)) = \Phi(S), 
\end{align*}
where 1) is a consequence of Equation~\eqref{eq:PRpoint}; 2) follows from Lemma~\ref{lem:PR1}; 3) holds because for any $x\in \mathbb{R}^N$, $\langle \nabla f_r(x), 1_{\bar{S}}\rangle \geq - F(S)$; and 4) is a consequence of Lemma~\ref{lem:RPmixing1}. Hence, 
\begin{align*}
\mathbb{E}_{v\sim P}[pr(\alpha, 1_i)(\bar{S})] \leq \frac{c}{8} \, pr(\alpha, \pi_S)(\bar{S}) \leq  \frac{c}{8}\, \frac{\Phi(S)}{\alpha}.
\end{align*}
Moreover, sampling according to $v\sim P$ and using Markov's inequality, we have 
\begin{align*}
\mathbb{P}\left[pr(\alpha, 1_i)(\bar{S}) \geq \frac{c}{4}\, \frac{\Phi(S)}{\alpha}\right] \leq \frac{\mathbb{E}_{v\sim P}[pr(\alpha, 1_i)(\bar{S})] }{\frac{c}{4}\frac{\Phi(S)}{\alpha}} \leq \frac{1}{2},
\end{align*}
which concludes the proof.
\end{proof}
As $\Phi(S)\leq \frac{\alpha}{c}$, we have $\mathbb{P}\left[pr(\alpha, 1_i)(S) \geq \frac{3}{4}\right] \geq \frac{1}{2}$. By combining this lower bound on $pr(\alpha, 1_i)(S)$ with the upper bound of Theorem~\ref{thm:PRmixing}, we have 
\begin{align*}
\frac{3}{4} \leq pr(\alpha, 1_i)(S)  \leq I_{pr(\alpha, 1_i)}(\text{vol}(S)) \leq \frac{1}{2} + \frac{\alpha}{2-\alpha} t + \sqrt{\frac{\text{vol}(S)}{d_i}}\left(1- \frac{\Phi_{pr(\alpha, 1_i)}^2}{8}\right)^t. 
\end{align*}
Next, we choose $t= \lceil \frac{8}{\Phi_{pr(\alpha, 1_i)}^2} \ln 8\sqrt{\frac{\text{vol}(S)}{d_i}}\rceil$. Then, the above inequality may be transformed into
\begin{align*}
\frac{3}{4} \leq \frac{1}{2} + \frac{\alpha}{2-\alpha} \lceil \frac{8}{\Phi_{pr(\alpha, 1_i)}^2} \ln 8\sqrt{\frac{\text{vol}(S)}{d_i}}\rceil + \frac{1}{8} \leq \frac{5}{8} + \frac{\alpha}{2-\alpha}\frac{8}{\Phi_{pr(\alpha, 1_i)}^2} \ln 10\sqrt{\frac{\text{vol}(S)}{d_i}}.
\end{align*}
Therefore,
\begin{align*}
\Phi_{pr(\alpha, 1_i)} \leq \sqrt{32\alpha \ln \frac{100\text{vol}(S)}{d_i} }.
\end{align*}

\subsection{Analysis of the Parameter Choices for Experimental Verification}\label{sec:para-dep}
Let $x' = W^{-1/2}x$ and $a' = W^{-1/2}a$. We can transform the objective~\eqref{expobj} into a standard QDSFM problem,
\begin{align*}
\beta \|x'- a'\|_W^2 + \sum_{r\in[R]}\max_{i,j\in S_r}(x_i' - x_j')^2.
\end{align*}
From Theorem~\ref{linearconv}, to achieve an $\epsilon$-optimal solution, one requires $O(R\mu(\beta^{-1}W^{-1}, \beta^{-1}W^{-1})\log \frac{1}{\epsilon})$ iterations in the RCD algorithm (Algorithm 1). According to the particular settings for the experiment (undirected unweighted hypergraphs), we have 
\begin{align}\label{exp-rho}
\rho^2 = \max_{y_r\in B_r, \forall r} \sum_{r\in [R]}\|y_r\|_1^2=  \max_{y_r\in B_r, \forall r}\sum_{r\in [R]}2= 2R.
\end{align}
From the definition of $\mu$~\eqref{defmu}, we may rewrite $\mu(\beta^{-1}W^{-1}, \beta^{-1}W^{-1})$ as
\begin{align}
\mu(\beta^{-1}W^{-1}, \beta^{-1}W^{-1}) &\stackrel{1)}{=} \max \left\{ \frac{N^2}{2}\left(\max_{i,j\in[N]} \frac{W_{ii}}{W_{jj}} + 1\right),  \frac{9}{4}\rho^2 N\beta^{-1}\max_{j\in[N]}\frac{1}{W_{jj}}\right\} \nonumber \\
& \stackrel{2)}{=} \max \left\{ \frac{N^2}{2} \left(\max_{i,j\in[N]} \frac{W_{ii}}{W_{jj}} + 1\right),  \frac{9}{2}\beta^{-1} NR\max_{j\in[N]}\frac{1}{W_{jj}}\right\}\nonumber \\
& \stackrel{3)}{=} \max \left\{ \frac{N^2}{2} \left(\max_{i,j\in[N]} \frac{W_{ii}}{W_{jj}} + 1\right),  \frac{9}{2}\beta^{-1} N^2\max_{i,j\in[N]}\frac{W_{ii}}{W_{jj}}\right\}, \label{exp-para}
\end{align}
where $1)$ holds because half of the values of $W_{ii}$ are set to 1 and the other half to a value in $\{1, 0.1, 0.01, 0.001\}$, $2)$ follows from \eqref{exp-rho} and $3)$ is due to the particular setting $N=R$ and $\max_{i\in[N]}W_{ii} = 1$. Equation~\eqref{exp-para} may consequently be rewritten as $$O(N^2 \max(1, 9/(2\beta))\max_{i, j\in[N]}W_{ii}/W_{jj}),$$ which establishes the claimed statement. 

\section{Acknowledgement}
The authors would like to thank the reviewers for their insightful suggestions to improve the quality of the manuscript. The authors also gratefully acknowledge funding from the NSF CIF 1527636  and the NSF Science and Technology Center (STC) at Purdue University, Emerging Frontiers of Science of Information, 0939370. 


\end{document}